\newtheorem{theorem}{Theorem}[section]
\newtheorem{corollary}[theorem]{Corollary}
\newtheorem{lemma}[theorem]{Lemma}
\newtheorem{proposition}[theorem]{Proposition}
\newtheorem{property}{Property}
\newtheorem{definition}{Definition}[section]
\newtheorem{remark}{Remark}[section]
\newtheorem{observation}{Observation}[section]
\newcommand{\bsmat}{\begin{bmatrix} }
\newcommand{\esmat}{\end{bmatrix} }
\begin{document}

\title{\bf Quantization Algorithms for Random Fourier Features}

\author{\textbf{Xiaoyun Li} \\
Department of Statistics\\
Rutgers University\\
110 Frelinghuysen Rd.
Piscataway, NJ 08854\\
  \texttt{xiaoyun.li@stat.rutgers.edu}
\and
\textbf{Ping Li} \\
Cognitive Computing Lab\\
Baidu Research\\
10900 NE 8th St. Bellevue, WA 98004\\
  \texttt{liping11@baidu.com}
}
\date{\vspace{0.1in}}
\maketitle

\begin{abstract}

\noindent The method of random projection (RP) is the standard technique in machine learning and many other areas, for dimensionality reduction, approximate near neighbor search, compressed sensing, etc. Basically, RP provides a simple and effective scheme for approximating pairwise inner products and Euclidean distances in massive data.  Closely related to RP, the method of random Fourier features (RFF) has also become popular, for approximating the Gaussian kernel.  RFF applies a specific nonlinear transformation on the projected data  from random projections. In practice, using the (nonlinear) Gaussian kernel often leads to better performance than the linear kernel (inner product), partly due to the tuning parameter $(\gamma)$ introduced in the Gaussian kernel. Recently, there has been a surge of interest in studying~properties~of~RFF. \\

\noindent After random projections, quantization is an important step for efficient data storage, computation, and transmission. Quantization for RP has also been extensive studied in the literature. In this paper, we focus on developing quantization algorithms for RFF. The task is in a sense challenging due to the tuning parameter $\gamma$ in the Gaussian kernel. For example, the quantizer and the quantized data might be tied to each specific tuning parameter $\gamma$. Our contribution begins with an interesting discovery, that the marginal distribution of RFF is actually free of the Gaussian kernel parameter $\gamma$. This small finding significantly simplifies the design of the Lloyd-Max (LM) quantization scheme for RFF in that there would be only one LM quantizer for RFF (regardless of $\gamma$). We also develop a variant named LM$^2$-RFF quantizer, which in certain cases is more accurate. Experiments confirm that the proposed quantization schemes perform well.

\end{abstract}

\section{Introduction}
In recent years, machine learning with extremely large-scale (and high-dimensional) datasets has become increasingly important given the rapid development of modern technologies. Many industrial applications involve massive data collected from a wide range of sources, e.g., internet and mobile devices. Designing efficient large-scale learning algorithms and feature engineering techniques, in terms of both speed and memory, has been an important topic in the machine learning \& data mining community. The method of Random Projection (RP) is a popular strategy to deal with massive data, for example, for efficient data processing, computations,  storage, or transmissions. The theoretical merit of RP is highlighted by the celebrated Johnson-Lindenstrauss Lemma~\citep{Article:JL84}, which states that with high probability the Euclidean distance between data points is approximately preserved in the projected space provided that the number of projections is sufficiently large. In the past two decades or so, RP has been used extensively in dimensionality reduction, approximate near neighbor search, compressed sensing, computational biology, etc. See some examples of relatively  early works on RP~\citep{Proc:Dasgupta_UAI00,Proc:Bingham_KDD01,Article:Buher_01,Article:Achlioptas_JCSS03,Proc:Fern_ICML03,Proc:Datar_SCG04,Article:Candes_IT06,Article:Donoho_IT06,Proc:Li_Hastie_Church_COLT06,Proc:Frund_NIPS07,Proc:Li_KDD07}. In this paper, we continue the line of research on random projections and focus on studying quantization schemes for using random Fourier features (RFF), which are nonlinear transformations of random projections, to accurately approximate the (nonlinear) Gaussian kernel.

\subsection{Linear Kernel and Gaussian Kernel}

Let $u,v\in \mathcal X \subseteq \mathbb R^d$ denote  two $d$-dimensional data vectors. The linear kernel is simply the inner product $\langle u, v\rangle = u^Tv$. For training large-scale linear learning algorithms such as linear support vector machine (SVM) and linear logistic regression, highly efficient (in both memory and time) training algorithms have been available and widely used in practice~\citep{Proc:Joachims_KDD06,Article:Shalev-Shwartz_MP11,Article:Fan_JMLR08}. Despite their high efficiency,  the drawback of linear learning methods is that they often do not provide a good accuracy as they neglect the nonlinearity of data. This motivates researchers to find an efficient training algorithms for nonlinear kernels such as the Gaussian kernel~\citep{Book:Hastie_Tib_Friedman,Book:Scholkopf_02,Book:Bottou_07}, which is defined through a real-valued kernel function
\begin{align*}
    K_\gamma(u,v)=\langle  \xi(u),\xi(v)\rangle= e^{-\frac{\gamma^2 \Vert u-v \Vert^2}{2}},
\end{align*}
where  $\xi(\cdot):\mathcal X \mapsto \mathbb H$ is the implicit feature map and $\gamma$ is a hyper-parameter.  $\mathbb H$ denotes the Reproducing Kernel Hilbert Space (RKHS) associated with the kernel. It is well-known that Gaussian kernel is shift-invariant  and positive definite. Throughout this paper, we will assume that $\mathcal X$ lies on the unit sphere, i.e., all the data points are normalized to have unit $l_2$ norm. This will save us from book-keeping the sample norms in our calculations. Note that normalizing each data vector to unit $l_2$ norm before feeding the data to classifiers is a fairly standard (or recommended) procedure in practice. In this case, denoting the correlation coefficient $\rho=\cos(u,v)=u^Tv$, the Gaussian kernel can be formulated as
\begin{align}
    K_\gamma(u,v)=e^{-\frac{\gamma^2(2-2\rho)}{2}}=e^{-\gamma^2(1-\rho)}. \label{def:Gaussian kernel}
\end{align}
In the rest of the paper, we omit the subscript ``$\gamma$'' and only use $K$ to denote the kernel.

There are two major general issues with large-scale nonlinear kernel learning (not limited to the Gaussian kernel). Firstly, storing/materializing a kernel matrix for a dataset of $n$ samples would need $n^2$ entries, which may not be realistic even just for medium datasets (e.g., $n=10^6$). To avoid this problem, the entries of the kernel matrix are computed on the fly from the original dataset. This however will increase the computation time, plus storing the original high-dimensional dataset for on-demand distance computations can also be costly. Secondly, the training procedure for nonlinear kernel  algorithms is also well-known to be expensive~\citep{Proc:Platt_NIPS98,Book:Bottou_07}. Therefore, it has been an active area of research to speed up kernel machines, and using various types of random projections has become popular.

\subsection{Random Projections (RP) and Random Fourier Features (RFF)}

Again, consider two data vectors $u,v\in \mathbb R^d$. Further, we assume they are normalized to have unit $l_2$ norm and we denote $\rho = \langle u, v\rangle$.  We generate a random Gaussian vector $w\in\mathbb{R}^d$ with i.i.d. entries in $N(0,1)$.
\begin{align}\notag
\mathbb E[\langle w^Tu, w^Tv\rangle]=\langle u, v\rangle = \rho.
\end{align}
This is the basic idea of using random projections to approximate inner product. See~\cite{Proc:Li_Hastie_Church_COLT06} for the theoretical analysis (such as exact variance calculations) of this approximation scheme.

We can also use random projections to approximate the (nonlinear) Gaussian kernel with an additional step. The Random Fourier Feature (RFF)~\citep{Book:Rudin_90,Proc:Rahimi_NIPS07} is defined as
\begin{align}\label{def:RFF-I}
    \textbf{\textrm{RFF:}}\quad &F(u)=\sqrt 2 \cos(\gamma w^Tu+\tau),
\end{align}
where $\tau\sim uniform(0,\ 2\pi)$, the  uniform random variable.  Some basic probability calculations reveal that
\begin{align}\notag
\mathbb E\left[F(u)F(v)\right] &= K(u,v) = e^{-\gamma^2(1-\rho)}.
\end{align}
In other words, the inner product between the RFFs of two data samples provides an unbiased estimate of the Gaussian kernel. The simulations need to be repeated for a sufficient number of times in order to obtain reliable estimates.  That is, we generate $m$ independent RFFs using i.i.d. $w_1,...,w_m$ and $\tau_1,...,\tau_m$, and approximate the kernel $K(u,v)$ by the following unbiased estimator:
\begin{align}\label{est:full RFF-I}
&\hat K(u,v)=\frac{1}{m}\sum_{i=1}^m F_{i}(u) F_{i}(v),
\end{align}
where $F_i$ denotes the RFF generated by $w_i,\tau_i$. Furthermore, \cite{Proc:Li_KDD17} showed that one can actually reduce the estimation variances by normalizing the RFFs.

In large-scale learning, using above estimator simply requires taking the inner product between the RFF vectors of $u$ and $v$. Therefore, feeding the RFFs into a linear machine will approximate training a non-linear kernel machine, known as \textit{kernel linearization}, which may significantly accelerate training and alleviate memory burden for storing the kernel~matrix. This strategy has become  popular in the literature, e.g.,~\citep{Proc:Raginsky_NIPS09,Proc:Yang_NIPS12,Proc:Affandi_NIPS13,Proc:Hern_NIPS14,Proc:Dai_NIPS14,Proc:Yen_NIPS14,Proc:Hsieh_NIPS14,Proc:Shah_NIPS15,Chwialkowski_NIPS15,Richard_NIPS15,Proc:Sutherland_UAI15,Proc:Li_KDD17,Proc:Avron_ICML17,Proc:Sun_NeurIPS18,Proc:Tompkins_aaai18,Proc:Li_ECAI20}.

\subsection{Quantized Random Projections (QRP)}

One can further compress the projected data by quantization, into discrete integer values, or even binary values in the extreme case. The so-called quantized random projection (QRP) has found useful in many problems, e.g., theory, similarity search, quantized compressed sensing, classification and regression~\citep{Article:Goemans_JACM95,Proc:Charikar_STOC02,Proc:Datar_SCG04,Article:zymnis_SPL10,Article:Jacques_JIT13,Proc:Leng_SIGIR14,Proc:Li_ICML14,Proc:Li_NIPS17,Article:Slawski_IT18,Proc:Li_NeurIPS19_asymmetric,Proc:Li_NIPS19}.  The motivation is straightforward. If one can represent each RP (or RFF) using (e.g.,) 4 bits and still achieve similar accuracy as using the full-precision (e.g., 32 or 64 bits), it is then a substantial saving in storage space. Typically, savings in storage can directly translate into savings in data transmissions and subsequent computations. In addition to space (computation) savings, there is  another motivation for QRP. That is, quantization also provides the capability of indexing due to the integer nature of quantized data, which can be used to build hash tables for approximate near neighbor search~\citep{Proc:Indyk_Motwani_STOC98}.

The simplest quantization scheme is the 1-bit (sign) random projections, including sign Gaussian random projections~\citep{Article:Goemans_JACM95,Proc:Charikar_STOC02} and sign Cauchy random projections~\citep{Proc:Li_NIPS13} (for approximating the $\chi^2$ kernel). Basically, one only keeps the signs of projected data. Even though the 1-bit schemes appear to be overly crude and simplistic, in some cases 1-bit random projections can achieve better performance than full-precision RPs in similarity search and nearest neighbor classification tasks. Nevertheless, in general, one would need more than just 1-bit in order to achieve sufficient accuracy. For example,~\cite{Proc:Li_NIPS17,Article:Slawski_IT18,Proc:Li_NeurIPS19_asymmetric} apply the (multi-bit) Lloyd-Max (LM) quantization~\citep{Article:Max60,Article:Lloyd_JIT82} on the projected data.

\subsection{Summary of Contributions}

Since each Lloyd-Max (LM) quantizer is associated with a specific random signal distribution, at the first glance, designing LM quantizers for the random Fourier features and  the Gaussian kernel might appear challenging, due to the tuning parameter $\gamma$, which is a crucial component of the Gaussian kernel. Initially, one might expect that a different LM quantizer would be  needed for a different  $\gamma$ value. In this paper, our contribution begins with an interesting finding that the marginal distribution of the RFF is actually free of the parameter $\gamma$. This result greatly simplifies the design of LM quantization schemes for the RFF, because only one quantizer would be needed for all $\gamma$ values.  Once we have derived the marginal distribution of the RFF, we incorporate the idea of distortion optimal quantization theory to nonlinear random feature compression by providing a thorough study on the theoretical properties and practical performance.  Extensive simulations and machine learning experiments validate the effectiveness of the proposed LM quantization~schemes~for~the~RFF.

\section{The Probability Distributions of RFF} \label{sec:distribution}

We start the introduction to our proposed method by providing analysis on the probability distribution of RFF~(\ref{def:RFF-I}), which is key to the design of quantization schemes in Section~\ref{sec:LM-RFF scheme}. First, we introduce some notations.

\newpage

Let $u,v\in \mathcal X \subseteq \mathbb R^d$ be two normalized data points, and $w\in\mathbb R^d$ be a random vector with i.i.d. $N(0,1)$ entries. The projected data $w^Tu$ and $w^Tv$ follow a bivariate normal distribution:
\begin{align}\notag
\begin{pmatrix}
w^Tu\\
w^Tv
\end{pmatrix}
\sim N\left(0,\begin{pmatrix}
1 & \rho \\
\rho & 1
\end{pmatrix}\right),\hspace{0.2in} \text{where } \rho = u^Tv, \ \|u\| = \|v\| = 1.
\end{align}
Throughout the paper, we will use the following two definitions for $\phi_\sigma(t)$ and $\Phi(t)$:
\begin{align}\notag
&\phi_\sigma(t) = \frac{1}{\sqrt{2\pi}\sigma}e^{-\frac{t^2}{2\sigma^2}},\\\notag
&\Phi(t) = \int_{-\infty}^t \frac{1}{\sqrt{2\pi}}e^{-z^2/2} dz = \int_{-\infty}^t \phi_1(z)dz.
\end{align}
That is, $\Phi(t)$ is the cumulative distribution function (cdf) of the standard normal $N(0,1)$ and $\phi_\sigma(t)$ is the probability density function (pdf) of $N(0,\sigma^2)$.

\subsection{Marginal Distributions}

We first consider the marginal distributions of the RFF, which serve as the foundation of our proposed quantization schemes.  The following Lemma is a  result of the convolution of normal and uniform distributions.
\begin{lemma} \label{lemma1}
Suppose $X\sim N(0,1)$ and $\tau\sim uniform(0,2\pi)$ are independent, $\gamma>0$. Then
\begin{align*}
    \gamma X+\tau&\sim \frac{1}{2\pi}\left[ \Phi(\frac{2\pi-y}{\gamma})-\Phi(-\frac{y}{\gamma}) \right].
\end{align*}
\end{lemma}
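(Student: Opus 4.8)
The natural route is a direct convolution computation, since $Y:=\gamma X+\tau$ is a sum of two independent random variables with known densities. First I would record that $\gamma X\sim N(0,\gamma^2)$, so its density is $\phi_\gamma$, whereas $\tau$ has density $\frac{1}{2\pi}$ on $[0,2\pi]$ and $0$ elsewhere. By independence the density of $Y$ is the convolution of these two, and because the uniform density is supported on $[0,2\pi]$ this is simply
\begin{align*}
f_Y(y)=\frac{1}{2\pi}\int_0^{2\pi}\phi_\gamma(y-s)\,ds .
\end{align*}

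Next I would evaluate this integral by the linear change of variables $z=(y-s)/\gamma$, for which $ds=-\gamma\,dz$ and the endpoints $s=0,2\pi$ map to $z=y/\gamma,(y-2\pi)/\gamma$ respectively; the substitution reverses orientation, which accounts for the sign. Using the elementary identity $\gamma\,\phi_\gamma(\gamma z)=\phi_1(z)$, the integral collapses to a standard-normal integral:
\begin{align*}
f_Y(y)=\frac{1}{2\pi}\int_{(y-2\pi)/\gamma}^{\,y/\gamma}\phi_1(z)\,dz=\frac{1}{2\pi}\left[\Phi\!\left(\frac{y}{\gamma}\right)-\Phi\!\left(\frac{y-2\pi}{\gamma}\right)\right].
\end{align*}
Finally, to match the form displayed in the lemma I would apply the reflection identity $\Phi(-a)=1-\Phi(a)$ to both terms, writing $\Phi(y/\gamma)=1-\Phi(-y/\gamma)$ and $\Phi((y-2\pi)/\gamma)=1-\Phi((2\pi-y)/\gamma)$; the additive constants cancel and the bracket rearranges to $\Phi((2\pi-y)/\gamma)-\Phi(-y/\gamma)$, which is exactly the asserted expression.

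There is essentially no obstacle here: the argument is a one-line convolution followed by a linear substitution, so the only points deserving attention are keeping track of the integration limits under the orientation-reversing change of variables and the cosmetic final rewrite via the normal reflection identity. I would also add a one-line remark anticipating the rest of the paper: this marginal density still depends on $\gamma$, yet that dependence is destined to disappear once the $2\pi$-periodic map $t\mapsto\sqrt2\cos t$ is applied to $\gamma X+\tau$, and it is precisely this feature that makes a single Lloyd--Max quantizer suffice for every value of $\gamma$.
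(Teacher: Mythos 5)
Your argument is correct and is essentially the same as the paper's: both compute the density of $\gamma X+\tau$ as the convolution $\frac{1}{2\pi}\int_0^{2\pi}\phi_\gamma(y-s)\,ds$ and evaluate it as a difference of standard normal cdf values, your version merely making the substitution and the reflection $\Phi(-a)=1-\Phi(a)$ explicit to match the stated form. No gaps; the closing remark about the $\gamma$-dependence vanishing after the cosine map is a fine (optional) addition consistent with Theorem~\ref{theo:density of RFF1}.
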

In the following, we formally give the distribution of the RFF.
\begin{theorem}\label{theo:density of RFF1}
Let $X\sim N(0,1)$, $\tau\sim uniform(0,2\pi)$ be independent. Denote $Z=\cos(\gamma X+\tau)$, and $Z_2=\cos^2(\gamma X+\tau)$. We have the probability density functions
\begin{align}
    f_Z(z)&= \frac{1}{\pi\sqrt{1-z^2}},\quad z\in[-1,1],  \label{eqn:density-RFF1} \\
    f_{Z_2}(z)&=\frac{1}{\pi\sqrt{z-z^2}},\quad z\in[0,1].  \label{eqn:density-RFF1 square}
\end{align}
for any $\gamma>0$. In particular, $Z\overset{d}{\sim} \cos(\tau)$ in distribution.
\end{theorem}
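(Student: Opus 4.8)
The plan is to reduce the entire statement to one $\gamma$-free fact: the wrapped variable $(\gamma X+\tau)\bmod 2\pi$ is again $\mathrm{uniform}(0,2\pi)$. Granting this, since $\cos$ is $2\pi$-periodic we get $Z=\cos(\gamma X+\tau)=\cos\big((\gamma X+\tau)\bmod 2\pi\big)\overset{d}{\sim}\cos(U)$ for $U\sim\mathrm{uniform}(0,2\pi)$, which is exactly the assertion $Z\overset{d}{\sim}\cos(\tau)$ (and likewise $Z_2=Z^2\overset{d}{\sim}\cos^2 U$); both laws are then manifestly independent of $\gamma$. To prove the wrapping fact I would condition on $X=x$: a constant shift is a measure-preserving bijection of the circle $\mathbb R/2\pi\mathbb Z$, so $(x\gamma+\tau)\bmod 2\pi$ is $\mathrm{uniform}(0,2\pi)$ for every $x$; since this conditional law does not depend on $x$, the unconditional law of $(\gamma X+\tau)\bmod 2\pi$ is $\mathrm{uniform}(0,2\pi)$ as well. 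Alternatively one can read it off Lemma~\ref{lemma1}: the density of the wrapped variable at $t\in[0,2\pi)$ is $\sum_{k\in\mathbb Z}\frac{1}{2\pi}\big[\Phi(\tfrac{2\pi-t-2\pi k}{\gamma})-\Phi(\tfrac{-t-2\pi k}{\gamma})\big]$, whose symmetric partial sums telescope to $\Phi(\tfrac{2\pi(N+1)-t}{\gamma})-\Phi(\tfrac{-2\pi N-t}{\gamma})\to 1$, giving the constant density $\tfrac1{2\pi}$.

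With $Z\overset{d}{\sim}\cos U$ in hand, the two densities are elementary changes of variables. For $z\in(-1,1)$ we have $\{\cos U\le z\}=\{U\in[\arccos z,\,2\pi-\arccos z]\}$, hence $F_Z(z)=1-\tfrac{\arccos z}{\pi}$ and $f_Z(z)=\tfrac{1}{\pi\sqrt{1-z^2}}$, which is \eqref{eqn:density-RFF1}. For $Z_2$ and $z\in(0,1)$, $F_{Z_2}(z)=P(|Z|\le\sqrt z)=F_Z(\sqrt z)-F_Z(-\sqrt z)=1-\tfrac{2\arccos\sqrt z}{\pi}$ after using $\arccos(-t)=\pi-\arccos t$; differentiating with the chain rule ($\tfrac{d}{dz}\arccos\sqrt z=-\tfrac{1}{2\sqrt z\,\sqrt{1-z}}$) yields $f_{Z_2}(z)=\tfrac{1}{\pi\sqrt{z-z^2}}$, which is \eqref{eqn:density-RFF1 square}. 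One could instead push $f_Z$ forward through $z\mapsto z^2$ using the evenness of $f_Z$, or note $Z_2=\tfrac{1+\cos(2U)}{2}$ together with the fact that $2U\bmod 2\pi$ is again uniform.

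There is no genuinely hard step; the one place to be careful is the wrapping claim. The clean route is the conditioning argument, which makes the $\gamma$-independence transparent and avoids any series manipulation. If instead one insists on integrating the real-line density from Lemma~\ref{lemma1} directly, the only subtlety is that this density is a difference of two $\Phi$'s each of which, summed over the $2\pi$-shifts, produces a divergent series, so one must telescope the truncated symmetric sums \emph{before} sending the truncation to infinity rather than splitting the sum term by term.
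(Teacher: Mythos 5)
Your proof is correct, and it takes a genuinely different (and more elementary) route than the paper. The paper works entirely through the real-line density of $Y=\gamma X+\tau$ from Lemma~\ref{lemma1}: it writes $P(Z\le z)$ as a sum over the periods $[2k\pi+\cos^{-1}z,\,2(k+1)\pi-\cos^{-1}z]$, differentiates, and then telescopes the resulting series of $\Phi$-differences (summing symmetric partial sums before passing to the limit, exactly the subtlety you flag) to land on $\frac{1}{\pi\sqrt{1-z^2}}$; the claim $Z\overset{d}{\sim}\cos(\tau)$ is then only asserted ``by a similar approach.'' You instead isolate the single $\gamma$-free fact that $(\gamma X+\tau)\bmod 2\pi$ is uniform, proved by conditioning on $X=x$ and using that a deterministic rotation of the circle preserves the uniform law; this makes the $\gamma$-independence transparent, gives $Z\overset{d}{\sim}\cos(\tau)$ as the primary statement rather than an afterthought, and reduces both densities \eqref{eqn:density-RFF1} and \eqref{eqn:density-RFF1 square} to one-line changes of variable (your computation of $f_{Z_2}$ via $P(|Z|\le\sqrt z)$ matches the paper's, modulo an obvious typo in the paper's integration limits). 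What the paper's computational route buys is that it exercises the explicit convolution density of Lemma~\ref{lemma1} and the period-summation/telescoping technique that are reused almost verbatim for the joint density in Lemma~\ref{lemma2} and Theorem~\ref{theo:joint_RFF1}, where no analogue of your clean wrapping argument is available (the joint law of the wrapped pair genuinely depends on $\gamma$ and $\rho$); your route buys brevity, rigor about the series, and conceptual clarity for the marginal result itself.
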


\begin{figure}[h!]
    \centering
    \includegraphics[width=2.5in]{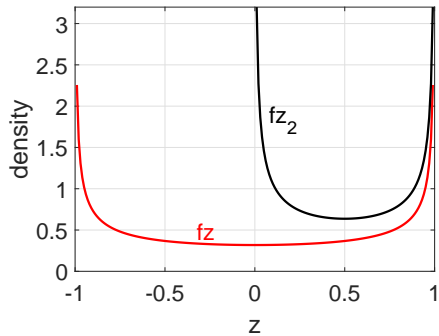}
    \caption{Probability density functions $f_Z$ and $f_{Z_2}$ derived in Theorem~\ref{theo:density of RFF1}. }
    \label{fig:density}
\end{figure}

The density plots can be found in Figure~\ref{fig:density}. Theorem~\ref{theo:density of RFF1} says that for any kernel parameter $\gamma$, the (unscaled) RFF follows the same distribution as the cosine of the uniform noise itself. Intuitively, this is because  cosine  is a $2\pi$-periodic function and normal distribution is symmetric. As will be introduced in Section~\ref{sec:LM-RFF scheme}, each Lloyd-Max (LM) quantizer is associated with a signal distribution. We will characterize two LM-type quantizers w.r.t. density (\ref{eqn:density-RFF1}) and (\ref{eqn:density-RFF1 square}), respectively. This interesting result is favorable for our purpose as it implies we only need to construct one LM quantizer, which covers all the Gaussian kernels with different $\gamma$ value. Thus, the design of LM quantizer for RFF is convenient.

\begin{remark} \label{remark 2}
In Theorem~\ref{theo:density of RFF1} we consider $X\sim N(0,1)$ because we assume data samples are normalized for conciseness. It is easy to see that this result also holds without data normalization (i.e., $X$ is Gaussian with arbitrary variance) since we can offset the variance of $X$ by altering $\gamma$. Therefore, Theorem~\ref{theo:density of RFF1} is a universal result implying that the   LM quantizer also works without data normalization.
\end{remark}

\subsection{Joint Distribution}

In the sequel, we analyze the joint distribution of RFFs of two data samples with correlation $\rho$'s. The joint distribution will play an important role in later theoretical analysis. The following Lemma~\ref{lemma2} leads to the desired result presented in Theorem~\ref{theo:joint_RFF1}.

\begin{lemma} \label{lemma2}
Denote $z_x=\gamma X+\tau$, $z_y=\gamma Y+\tau$ with $(X,Y)\sim N\big(0,\begin{pmatrix}
1 & \rho \\
\rho & 1
\end{pmatrix}\big)$, $\tau\sim uniform(0,2\pi)$. We have the joint distribution
\begin{align*}
    f(z_x,z_y)=&\frac{1}{2\pi}\phi_{\sqrt{2(1-\rho)}\gamma}(z_x-z_y) \Big[\Phi(\frac{4\pi-(z_x+z_y)}{\gamma\sqrt{2(1+\rho)}})-\Phi(-\frac{z_x+z_y}{\gamma\sqrt{2(1+\rho)}}) \Big].
\end{align*}
\end{lemma}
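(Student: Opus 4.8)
The plan is to exploit the rotation that diagonalizes the covariance of $(X,Y)$, since it decouples the sum and the difference of the two Fourier arguments. Set $D = X - Y$ and $S = X + Y$. Because $(X,Y)$ is jointly Gaussian and $\mathrm{Cov}(D,S) = \mathrm{Var}(X) - \mathrm{Var}(Y) = 0$, the pair $(D,S)$ is jointly Gaussian with \emph{independent} components, $D \sim N(0, 2(1-\rho))$ and $S \sim N(0, 2(1+\rho))$. Correspondingly $z_x - z_y = \gamma D$ and $z_x + z_y = \gamma S + 2\tau$, so I would first compute the joint density of the new coordinates $a = z_x - z_y$ and $b = z_x + z_y$, and then transfer back to $(z_x,z_y)$ through the linear map $z_x = (a+b)/2$, $z_y = (b-a)/2$, whose Jacobian contributes a factor $2$, i.e.\ $f_{z_x,z_y}(z_x,z_y) = 2\,f_{a,b}(z_x - z_y,\, z_x + z_y)$.

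Next I would observe that $a$ and $b$ are independent: $a = \gamma D$ depends only on $D$, while $b = \gamma S + 2\tau$ depends only on $(S,\tau)$, and $D$ is independent of $(S,\tau)$ since $D \perp S$ and $\tau$ is independent of $(X,Y)$ by hypothesis. Hence $f_{a,b}(a,b) = f_a(a)\,f_b(b)$. The first factor is immediate, $\gamma D \sim N(0, 2(1-\rho)\gamma^2)$, so $f_a(a) = \phi_{\gamma\sqrt{2(1-\rho)}}(a)$. For the second factor I would invoke the rescaled version of the convolution in Lemma~\ref{lemma1}: writing $\sigma = \gamma\sqrt{2(1+\rho)}$, we have $\gamma S \sim N(0,\sigma^2)$ and $2\tau \sim uniform(0,4\pi)$, so $f_b(b) = \frac{1}{4\pi}\int_0^{4\pi}\phi_\sigma(b - t)\,dt = \frac{1}{4\pi}\big(\Phi(b/\sigma) - \Phi((b-4\pi)/\sigma)\big)$.

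Finally I would assemble the pieces and rewrite in the stated form. Multiplying the two densities and inserting the Jacobian factor gives $f_{z_x,z_y}(z_x,z_y) = \frac{1}{2\pi}\,\phi_{\gamma\sqrt{2(1-\rho)}}(z_x - z_y)\big(\Phi(\tfrac{z_x+z_y}{\sigma}) - \Phi(\tfrac{z_x+z_y-4\pi}{\sigma})\big)$ with $\sigma = \gamma\sqrt{2(1+\rho)}$; applying the reflection identity $\Phi(t) = 1 - \Phi(-t)$ to each of the two cdf terms turns the bracket into $\Phi(\tfrac{4\pi - (z_x+z_y)}{\sigma}) - \Phi(-\tfrac{z_x+z_y}{\sigma})$, which is exactly the expression claimed.

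I do not anticipate a genuine obstacle; the only care needed is in (i) keeping the Jacobian factor and the two variance scalings $2(1\pm\rho)$ straight, and (ii) orienting the reflection identities so that the $\Phi$ arguments match the form written above. As a cross-check, one can avoid the rotation and integrate directly: conditioned on $\tau = t$, the pair $(z_x,z_y)$ is bivariate normal with mean $(t,t)$ and covariance $\gamma^2\begin{pmatrix}1 & \rho\\ \rho & 1\end{pmatrix}$, and $\int_0^{2\pi}\frac{1}{2\pi}$ of this density over $t$ is a one-dimensional Gaussian integral in $t$ that, after completing the square, reproduces the same answer; the rotation is simply the cleaner way to organize that computation.
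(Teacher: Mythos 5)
Your proof is correct, and it takes a cleaner route than the paper's. The paper conditions on $\tau=u$ and integrates the bivariate normal density of $(\gamma X+u,\gamma Y+u)$ over $u\in[0,2\pi]$ directly, completing the square in the exponent so that a Gaussian factor in $t_x-t_y$ pops out and the remaining Gaussian integral in $u$ produces the bracketed difference of $\Phi$'s. You instead pass to the sum/difference coordinates $a=z_x-z_y=\gamma(X-Y)$ and $b=z_x+z_y=\gamma(X+Y)+2\tau$, use that $X-Y$ and $X+Y$ are independent (zero covariance, jointly Gaussian) and that both are independent of $\tau$, so $f_{a,b}$ factors into a pure Gaussian $\phi_{\gamma\sqrt{2(1-\rho)}}$ and a Gaussian--uniform convolution on $(0,4\pi)$ handled by the rescaled Lemma~\ref{lemma1}; the Jacobian factor $2$ and the reflection $\Phi(t)=1-\Phi(-t)$ then give exactly the stated expression. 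I checked the details — the variances $2(1\pm\rho)\gamma^2$, the $\tfrac{1}{4\pi}$ from $2\tau\sim uniform(0,4\pi)$, the Jacobian, and the orientation of the $\Phi$ arguments — and they all come out right. What your approach buys is that the product structure (a function of $z_x-z_y$ times a function of $z_x+z_y$) is explained conceptually by independence rather than emerging from algebra inside an integral; the paper's direct computation, which you correctly sketch as your cross-check, is more mechanical but requires no change of variables.
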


\begin{theorem} \label{theo:joint_RFF1}
Denote $z_x=\cos(\gamma X+\tau)$, $z_y=\cos(\gamma Y+\tau)$ where $X,Y,\tau$ are the same as Lemma~\ref{lemma2}. Then we have the joint density function for $(z_x,z_y)\in [-1,1]^2$,
\begin{align*}
    f(z_x,z_y)&=\frac{1}{\pi\sqrt{1-z_x^2}\sqrt{1-z_y^2}}\sum_{k=-\infty}^\infty \Big[ \phi_{\sqrt{2(1-\rho)}\gamma}(a_x^*-a_y^*+2k\pi)+\phi_{\sqrt{2(1-\rho)}\gamma}(a_x^*+a_y^*+2k\pi) \Big],
\end{align*}
where $a_x^*=\cos^{-1}(z_x),a_y^*=\cos^{-1}(z_y)$. In addition, $\big(\sin(\gamma X+\tau),\ \sin(\gamma Y+\tau)\big)$ follows the same distribution.
\end{theorem}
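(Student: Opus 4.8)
The plan is to derive the joint density of the cosines as the pushforward of the joint density in Lemma~\ref{lemma2} under the coordinatewise map $\theta\mapsto\cos\theta$. Write $(\Theta_x,\Theta_y)=(\gamma X+\tau,\gamma Y+\tau)$ for the pair whose density $f$ is given by Lemma~\ref{lemma2}, so that $(z_x,z_y)=(\cos\Theta_x,\cos\Theta_y)$. For $(z_x,z_y)\in(-1,1)^2$ the preimage of $z_x$ under cosine is $\{\epsilon_x a_x^*+2j\pi:\epsilon_x\in\{\pm1\},\,j\in\mathbb{Z}\}$ with $a_x^*=\cos^{-1}(z_x)\in(0,\pi)$, and likewise for $z_y$; since $\tfrac{d}{dz}\cos^{-1}(z)=-1/\sqrt{1-z^2}$ and $|\sin\theta|=\sqrt{1-\cos^2\theta}$, the change-of-variables formula for many-to-one maps gives
\[
f_{(z_x,z_y)}(z_x,z_y)=\frac{1}{\sqrt{1-z_x^2}\sqrt{1-z_y^2}}\sum_{\epsilon_x,\epsilon_y\in\{\pm1\}}\ \sum_{j,l\in\mathbb{Z}} f\big(\epsilon_x a_x^*+2j\pi,\ \epsilon_y a_y^*+2l\pi\big).
\]
So the whole task reduces to evaluating this quadruple sum in closed form.

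The core computation exploits the structure of $f$ in Lemma~\ref{lemma2}: it is $\tfrac{1}{2\pi}\phi_{\sqrt{2(1-\rho)}\gamma}(\theta_x-\theta_y)$ times a difference of two standard-normal cdf's depending only on $\theta_x+\theta_y$. Fixing $\epsilon_x,\epsilon_y$ and re-indexing by $p=j-l$ and (say) $l$, the $\phi_{\sqrt{2(1-\rho)}\gamma}$-factor depends only on $\epsilon_x a_x^*-\epsilon_y a_y^*+2p\pi$, while $\theta_x+\theta_y$ ranges over $\{\epsilon_x a_x^*+\epsilon_y a_y^*+2n\pi: n\equiv p\ (\mathrm{mod}\ 2)\}$. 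On that arithmetic progression of step $4\pi$ the cdf-difference telescopes: writing $g(n)=\Phi\!\big(\tfrac{-(\epsilon_x a_x^*+\epsilon_y a_y^*)-2n\pi}{\gamma\sqrt{2(1+\rho)}}\big)$, one checks each summand equals $g(n-2)-g(n)$, the summands are nonnegative (so no convergence subtlety), and the partial sums collapse to $g(-\infty)-g(+\infty)=1-0=1$, independently of $p$. Hence the inner double sum over $(j,l)$ equals $\tfrac{1}{2\pi}\sum_{p\in\mathbb{Z}}\phi_{\sqrt{2(1-\rho)}\gamma}(\epsilon_x a_x^*-\epsilon_y a_y^*+2p\pi)$. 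Summing over the four sign patterns and using that $\phi_{\sqrt{2(1-\rho)}\gamma}$ is even together with the reindexing $p\mapsto-p$, the patterns $(+,+),(-,-)$ both contribute $\sum_p\phi_{\sqrt{2(1-\rho)}\gamma}(a_x^*-a_y^*+2p\pi)$ and $(+,-),(-,+)$ both contribute $\sum_p\phi_{\sqrt{2(1-\rho)}\gamma}(a_x^*+a_y^*+2p\pi)$, producing the overall factor $2$ that turns the $\tfrac{1}{2\pi}$ into $\tfrac{1}{\pi}$ and yields exactly the stated formula.

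For the final assertion about $(\sin(\gamma X+\tau),\sin(\gamma Y+\tau))$ I would use $\sin\theta=\cos(\theta-\pi/2)$, so this pair equals $(\cos(\Theta_x-\pi/2),\cos(\Theta_y-\pi/2))$. The law of $(\cos\Theta_x,\cos\Theta_y)$ depends only on the law of $(\Theta_x\bmod 2\pi,\Theta_y\bmod 2\pi)$, whose density (read off from Lemma~\ref{lemma2} after wrapping) depends only on the difference $\Theta_x-\Theta_y$; therefore applying the common shift $(\theta_x,\theta_y)\mapsto(\theta_x-\pi/2,\theta_y-\pi/2)$ leaves this wrapped law invariant, and hence leaves the joint law of the cosines unchanged. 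Equivalently, one observes $a_x^*\mapsto$ (the arccosine evaluated at the shifted point) only changes $a_x^*\pm a_y^*$ by a common $\mathbb{Z}$-multiple of $2\pi$, which is absorbed by the sum over $k$.

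The step I expect to be the main obstacle is the bookkeeping in the quadruple sum — in particular tracking that, once $\theta_x-\theta_y$ is fixed, $\theta_x+\theta_y$ traverses only one residue class modulo $2\pi$ along a step-$4\pi$ progression (the constraint $j-l\equiv j+l\ (\mathrm{mod}\ 2)$), and verifying carefully that the telescoping of the cdf-difference is legitimate with the correct boundary values $g(-\infty)=1$, $g(+\infty)=0$. The remaining pieces (the Jacobian factors $\sqrt{1-z_x^2}\sqrt{1-z_y^2}$ and the even-symmetry pairing of the four sign terms) are routine.
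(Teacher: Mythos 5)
Your proposal is correct, and for the main density formula it follows essentially the paper's route: you push Lemma~\ref{lemma2} through the $2\pi$-shifted preimages of the cosine (your direct change-of-variables over the many-to-one map is just the differentiated form of the paper's CDF computation), re-index by the difference of the two branch indices, and collapse the $\Phi$-differences by telescoping along the step-$4\pi$ progression, with the evenness of $\phi_{\sqrt{2(1-\rho)}\gamma}$ merging the four sign patterns into the two stated sums and the overall factor $\tfrac{1}{\pi}$; this matches the paper's cancellation step $(a)$ with $l=k_x-k_y$. Where you genuinely diverge is the sine assertion: the paper redoes the whole computation for the sine preimages, obtains the explicit density (\ref{eqn:joint-sin}), and matches it to the cosine formula via $\sin^{-1}(x)=\tfrac{\pi}{2}-\cos^{-1}(x)$, whereas you note that the wrapped law of $(\gamma X+\tau,\gamma Y+\tau)$ modulo $2\pi$ depends only on the coordinate difference (a by-product of the same telescoping) and hence is invariant under the diagonal shift by $\pi/2$, which transfers the cosine result to the sine pair in one line. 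Your invariance argument is valid and cleaner (your parenthetical ``equivalently'' remark is looser, but the shift argument stands on its own); the paper's longer version has the side benefit of producing the explicit sine-density formula, which it reuses in the proof of Proposition~\ref{prop:joint inequality}, so if you adopt your shortcut you would need to derive that formula separately when it is needed there.
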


In Figure~\ref{fig:joint density}, we plot the joint density at several $\gamma$ values. We conclude several properties of the joint distribution. Firstly, it is obvious that $z_x$ and $z_y$ are exchangeable, i.e., $f(Z_x,Z_y)=f(Z_y,Z_x)$. Secondly, it is symmetric which means $f(Z_x,Z_y)=f(-Z_x,-Z_y)$. Moreover, we have the following important result, which is helpful for our analysis on the monotonicity  and variance of quantized kernel estimators in Section~\ref{sec:theory}.

\begin{proposition} \label{prop:joint inequality}
Let the density function $f$ be defined as Theorem~\ref{theo:joint_RFF1}. If $\sqrt{2(1-\rho)}\gamma\leq \pi$, then $f(z_x,z_y)>f(z_x,-z_y)$ for $\forall (z_x,z_y)\in (0,1]^2$ or $(z_x,z_y)\in [-1,0)^2$.
\end{proposition}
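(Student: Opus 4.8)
The plan is to cancel the common positive prefactor and reduce the claim to a one-variable inequality for a single ``alternating periodized Gaussian''. Using $\sqrt{1-(-z_y)^2}=\sqrt{1-z_y^2}$ and $\cos^{-1}(-z_y)=\pi-\cos^{-1}(z_y)$, and writing $a=\cos^{-1}(z_x)$, $b=\cos^{-1}(z_y)$ (so $a,b\in[0,\pi/2)$ for $z_x,z_y\in(0,1)$), the formula in Theorem~\ref{theo:joint_RFF1} yields
\[
f(z_x,z_y)-f(z_x,-z_y)=\frac{g(a-b)+g(a+b)}{\pi\sqrt{1-z_x^2}\sqrt{1-z_y^2}},\qquad
g(\theta):=\sum_{k=-\infty}^{\infty}(-1)^{k}\,\phi_{\sigma}(\theta+k\pi),
\]
where $\sigma:=\sqrt{2(1-\rho)}\gamma$ and the alternating sign appears because replacing $b$ by $\pi-b$ shifts the summation index by a half period. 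Since the prefactor is positive, it suffices to show $g(a-b)+g(a+b)>0$ for all $a,b\in[0,\pi/2)$. The quadrant $(z_x,z_y)\in[-1,0)^2$ then follows from the symmetry $f(z_x,z_y)=f(-z_x,-z_y)$ noted after Theorem~\ref{theo:joint_RFF1}, which maps it back to this case.

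Next I would collect the elementary structure of $g$: it is $2\pi$-periodic, even, and $\pi$-antiperiodic, $g(\theta+\pi)=-g(\theta)$, hence $g(\pi-\theta)=-g(\theta)$ and $g(\pi/2)=0$. Grouping even and odd $k$ gives $g(\theta)=W(\theta)-W(\theta+\pi)$, where $W(\theta)=\sum_{k}\phi_{\sigma}(\theta+2k\pi)$ is the wrapped-Gaussian density of $\sigma N(0,1)\bmod 2\pi$; on $[0,\pi]$ this reads $g(\theta)=W(\theta)-W(\pi-\theta)$. The two properties I need on $[0,\pi/2]$ are (i) $g>0$ on $[0,\pi/2)$ and (ii) $g$ strictly decreasing on $[0,\pi/2]$, and both are consequences of the single claim that $W$ is strictly decreasing on $(0,\pi)$ (unimodality of the wrapped Gaussian): (i) holds because $\theta<\pi-\theta$ on $[0,\pi/2)$, and (ii) holds because $g'(\theta)=W'(\theta)+W'(\pi-\theta)$ is a sum of two strictly negative terms for $\theta\in(0,\pi/2)$. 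To prove $W'<0$ on $(0,\pi)$ I would show $\sum_{k}(\theta+2k\pi)\phi_{\sigma}(\theta+2k\pi)>0$; pairing index $k\ge 0$ with $-(k+1)$ rewrites this as $\psi(\theta)+\sum_{k\ge 1}\big[\psi(2k\pi+\theta)-\psi((2k+2)\pi-\theta)\big]$ with $\psi(t)=t\phi_{\sigma}(t)$, and the hypothesis $\sigma=\sqrt{2(1-\rho)}\gamma\le\pi$ is exactly what forces every non-central argument to lie on the strictly decreasing branch $\{\,|t|\ge\sigma\,\}$ of $\psi$, so each bracket is positive and only the central term $\psi(\theta)$ requires a routine lower bound. (Alternatively, one may simply cite that the wrapped normal is unimodal for every concentration.)

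Given (i) and (ii), the conclusion is a short case split. If $a+b\le\pi/2$, then $g(a+b)\ge 0$ while $g(a-b)>0$ because $|a-b|<\pi/2$, so the sum is positive. If $a+b>\pi/2$, antiperiodicity gives $g(a+b)=-g(\pi-a-b)$ with $\pi-a-b\in(0,\pi/2)$, whence $g(a-b)+g(a+b)=g(|a-b|)-g(\pi-a-b)>0$, using $|a-b|<\pi-a-b$ (equivalently $2\max(a,b)<\pi$, true since $a,b<\pi/2$) together with the strict monotonicity (ii). The one genuinely delicate step is (ii)---strict monotonicity of $W$ (equivalently $g$) on the relevant interval---and this is precisely where $\sqrt{2(1-\rho)}\gamma\le\pi$ is used: it keeps every periodic copy of the Gaussian on the decreasing tail, so the sign of $W'$ can be read off by pairing terms; everything else is bookkeeping with periodicity and parity, and the statement in fact persists for all $\sigma>0$ since the wrapped normal is always unimodal.
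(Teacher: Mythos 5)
Your reduction is attractive and genuinely different from the paper's argument: cancelling the common prefactor, writing the difference as $g(a-b)+g(a+b)$ with $g(\theta)=\sum_k(-1)^k\phi_\sigma(\theta+k\pi)=W(\theta)-W(\pi-\theta)$ for the wrapped Gaussian $W$, and then settling everything by antiperiodicity plus the two facts (i) $g>0$ on $[0,\pi/2)$ and (ii) $g$ strictly decreasing on $[0,\pi/2]$, with the case split $a+b\le\pi/2$ versus $a+b>\pi/2$ (where $|a-b|<\pi-a-b$ since $a,b<\pi/2$). That skeleton is correct, and it is cleaner than the paper's route, which instead manipulates the four-term combination $\phi_\sigma(d)-\phi_\sigma(s)+\phi_\sigma((k+1)\pi-s)-\phi_\sigma((k+1)\pi-d)$ directly and controls it with piecewise concavity/convexity bounds on $\phi_\sigma$.

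The genuine gap is in the one step you yourself flag as delicate: the proof that $W'<0$ on $(0,\pi)$, i.e.\ $S(\theta)=\sum_k(\theta+2k\pi)\phi_\sigma(\theta+2k\pi)>0$. Your displayed regrouping $\psi(\theta)+\sum_{k\ge1}[\psi(2k\pi+\theta)-\psi((2k+2)\pi-\theta)]$ is not equal to $S(\theta)$: the $k=-1$ image contributes $-\psi(2\pi-\theta)$, which is missing. With the pairing you announce ($k\ge0$ with $-(k+1)$) the $k=0$ bracket is $\psi(\theta)-\psi(2\pi-\theta)$, and this bracket is \emph{negative} for small $\theta$ (as $\theta\to0^+$, $\psi(\theta)\to0$ while $\psi(2\pi-\theta)\to2\pi\phi_\sigma(2\pi)>0$), so the claim that ``each bracket is positive and only the central term requires a routine lower bound'' fails exactly where the statement is tight. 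Indeed, pairing $k$ with $-k$ gives $S(\theta)=\psi(\theta)-\sum_{k\ge1}[\psi(2k\pi-\theta)-\psi(2k\pi+\theta)]$, and for $\sigma$ near $\pi$ and $\theta$ small the subtracted $k=1$ term is roughly $2\theta|\psi'(2\pi)|\approx 0.8\,\theta/(\sqrt{2\pi}\sigma)$ against $\psi(\theta)\approx\theta/(\sqrt{2\pi}\sigma)$; the inequality is true but only by a modest margin, so a genuine quantitative estimate is required --- this is precisely the content of the paper's inequalities and its final ratio check $\frac{\pi}{\pi+d}e^{2\pi d/\sigma^2}\ge1$, and it is also exactly where the hypothesis $\sqrt{2(1-\rho)}\gamma\le\pi$ earns its keep in your framework. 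Your fallback of ``simply citing'' unimodality of the wrapped normal would close the gap (it is a true classical fact, provable e.g.\ from the negativity of the logarithmic derivative of the Jacobi triple product, and it would even make the hypothesis on $\gamma,\rho$ unnecessary), but as written you neither prove it nor reference a proof, and the in-line argument you sketch for it is incorrect; so the proposal as it stands does not constitute a complete proof.
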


In Proposition~\ref{prop:joint inequality}, the quantity $\sqrt{2(1-\rho)}\gamma$ will be reduced if we either increase $\rho$ or decrease $\gamma$. In Figure~\ref{fig:joint density}, we see how this term characterizes the joint density of RFF. In particular, smaller $\sqrt{2(1-\rho)}\gamma$ reinforces the dependency between $z_x$ and $z_y$. The density around $(1,1)$ and $(-1,-1)$ reaches the highest when $\rho=0.9$ and $\gamma=1$. As $\rho$ decreases or $\gamma$ increases, the density is ``flattened''.

\begin{figure}[b!]
    \begin{center}
        \mbox{
        \includegraphics[width=2.2in]{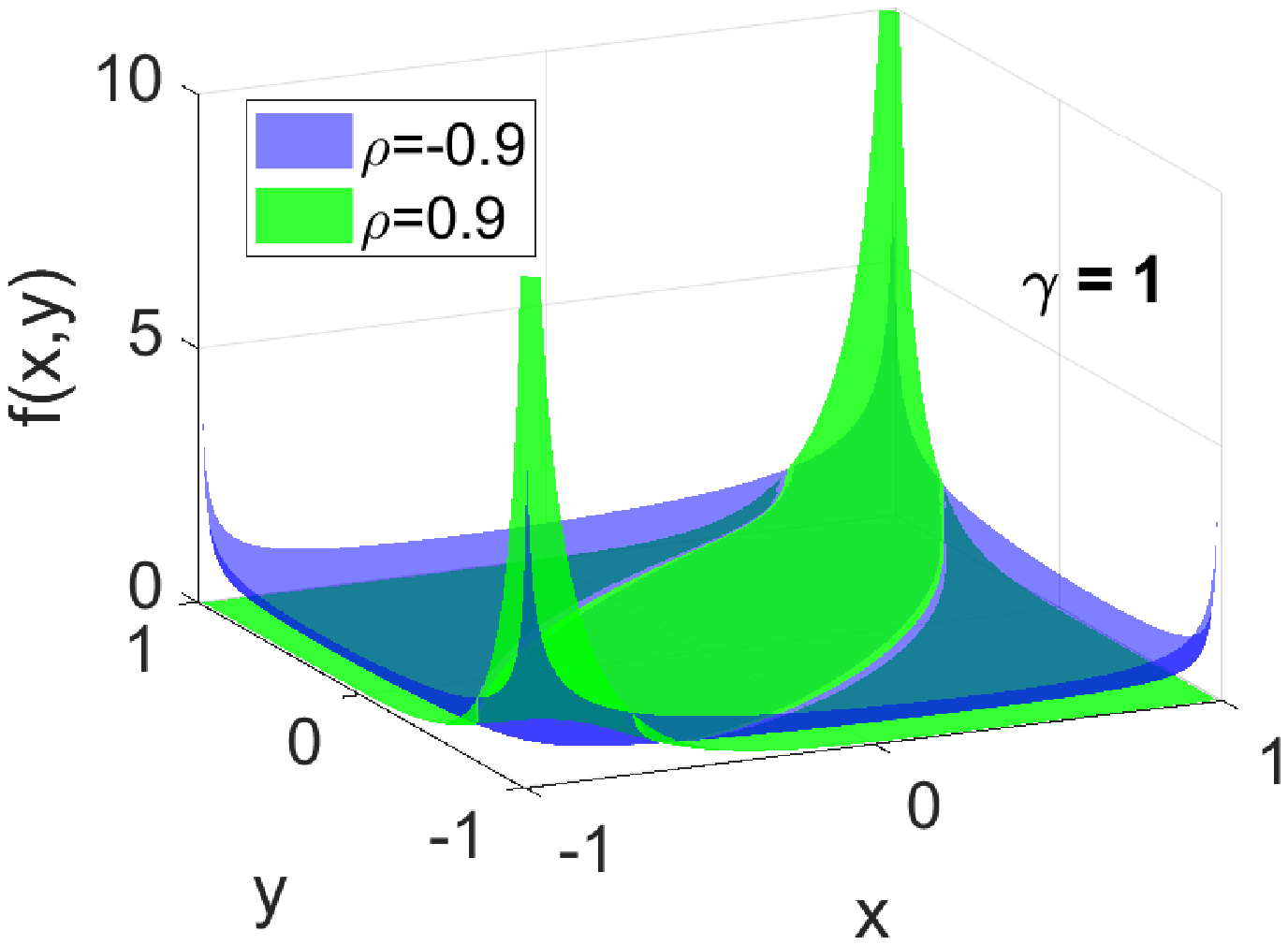}
        \includegraphics[width=2.2in]{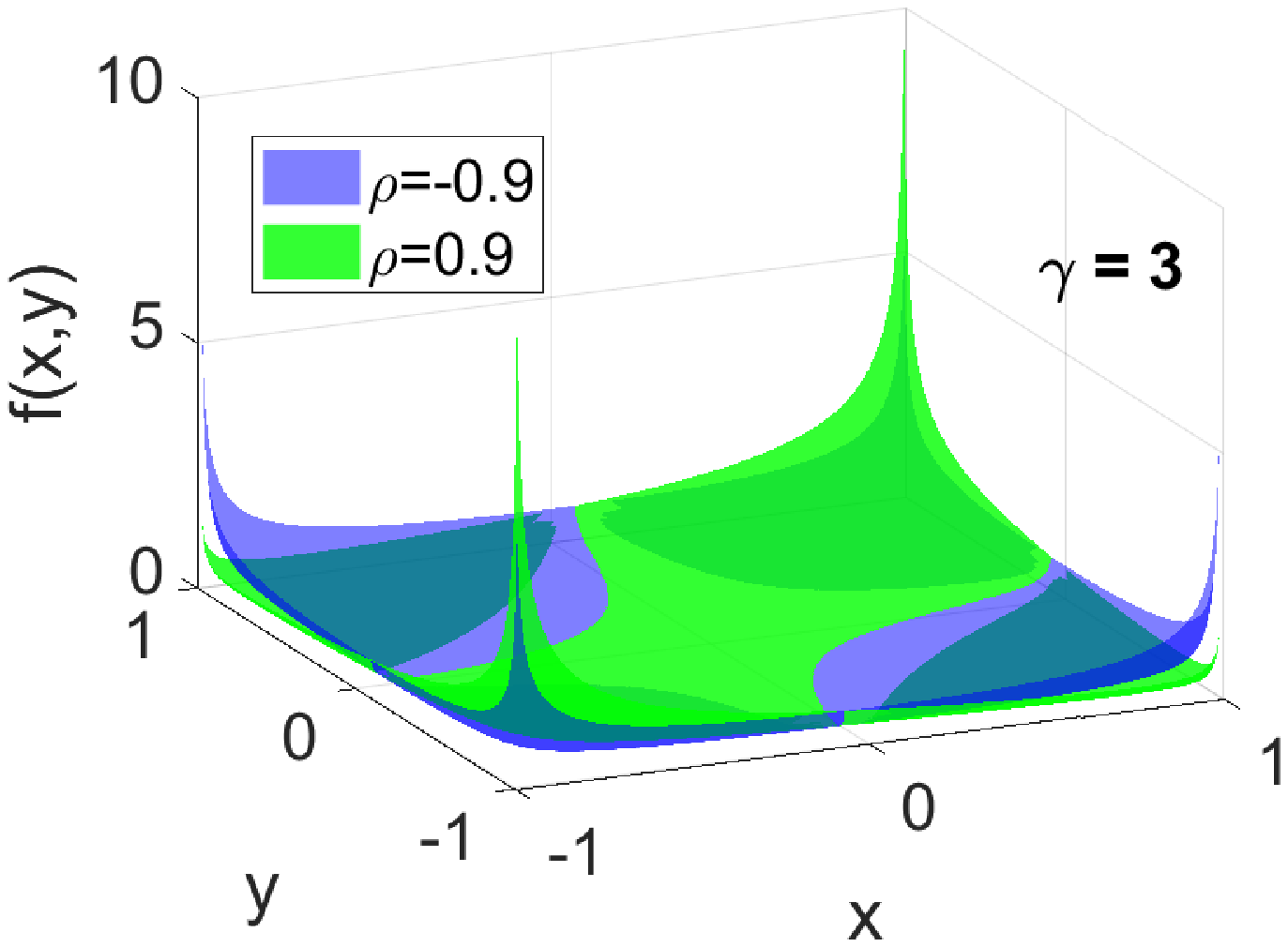}
        \includegraphics[width=2.2in]{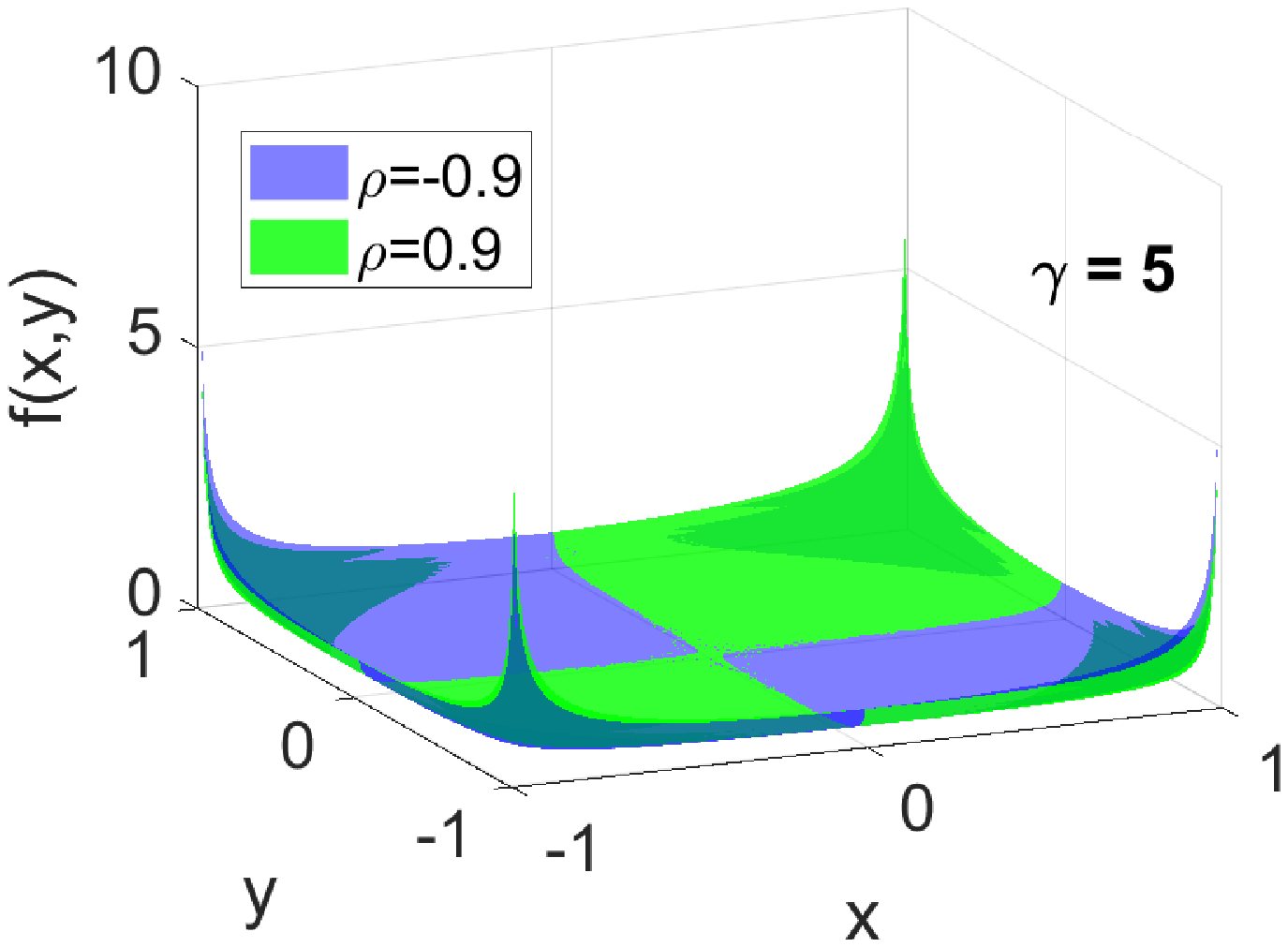}
        }
    \end{center}
    \vspace{-0.2in}
	\caption{The joint density of RFF (Theorem~\ref{theo:joint_RFF1}) with example $\rho=-0.9, 0.9$, and $\gamma=1,3,5$.}
	\label{fig:joint density}\vspace{-0.2in}
\end{figure}

\section{Quantization Schemes for RFF}  \label{sec:LM-RFF scheme}

Quantization is, to a good extent, an ancient topic in information theory and signal processing~\citep{Book:Widrow_2008}. On the other hand, many interesting research works appear in the literature even very recently, for achieving better efficiency in data storage, data transmission,  computation, and energy consumption. Quantization for random projections has been heavily studied. In this paper, we focus on developing quantization schemes for random Fourier features, in particular, based on the Lloyd-Max~(LM)~framework.

This article \url{https://www.eetimes.com/an-introduction-to-different-rounding-algorithms/}, published in {\em EETimes} 2006, provides a good summary of common quantization (rounding) schemes: ``rounding in decimal'', ``round-toward-nearest'',``round-half-up'', ``round-half-down'',``round-half-even'', ``Round-half-odd'', ``round-alternate'', ``round-random'', ``round-ceiling'', ``round-floor'', ``round-toward-zero'', ``round-away-from-zero'', ``round-up'', ``round-down'', ``truncation''. For random projections, quantization methods such as~\cite{Proc:Datar_SCG04,Proc:Li_ICML14,Proc:Li_AISTATS17} belong to those categories.  One should expect that, as long as we use a sufficient number (such as 32 or  64) of bits, any reasonable quantization scheme should achieve~a~good~accuracy.

In this paper, we mainly focus on quantization for RFF with a small number (such as 1, 2, 3, or 4) bits.  We consider general multi-bit quantizers, with 1-bit quantization as a special case, for the cosine feature in RFF bounded in $[-1,1]$. Here, ``$b$-bit'' means the quantizer has $2^b$ levels. For simplicity, we will denote $z=\cos(\gamma w^T u+\tau)$ as the item to be quantized. In particular, we focus on two algorithms: ``round-random'' (which we refer to as the ``stochastic quantization (StocQ)'') and ``Lloyd-Max (LM) quantization''.

\subsection{Stochastic Quantization (StocQ)}

A $b$-bit StocQ quantizer, also known as stochastic rounding, splits $[-1,1]$ into ($2^b-1$) intervals, with consecutive borders $-1=t_0<...<t_{2^b-1}=1$. Let $[t_i, t_{i+1}]$ be the interval containing $z$. StocQ pushes $z$ to either $t_i$ or $t_{i+1}$, depending on its distance to the borders. Concretely, denoting $\triangle_i=t_{i+1}-t_i$,
\begin{smallequation}
    P(Q(z)=t_i)=\frac{t_{i+1}-z}{\triangle_i},\ \ P(Q(z)=t_{i+1})=\frac{z-t_i}{\triangle_i}. \label{def:StocQ}
\end{smallequation}
It is not difficult to see that by the sampling procedure, conditional on the full-precision RFF $z$, the quantized value $Q(z)$ by StocQ is unbiased of $z$. On the other hand, also due to the Bernoulli sampling approach, StocQ has the extra variance especially when the number of bits is small (e.g., 1-bit or 2-bit quantization). Note that in~\cite{Proc:Zhang_AISTATS19}, the authors applied StocQ with uniform borders in large-scale machine learning tasks with RFF. Here we consider a more general approach where the borders are not necessarily uniform.

\subsection{Lloyd-Max (LM) Quantization}

In information theory and signal processing, the Lloyd-Max (LM)~\citep{Article:Max60,Article:Lloyd_JIT82} quantization scheme has a long history that also leads to many well-known methods (e.g., the $k$-means clustering). Interestingly, it has not been adopted to RFF in the prior literature. In contrast to StocQ, the proposed LM-RFF constructs a fixed quantizer $Q_b$. We call $[\mu_1,...,\mu_{2^b}]\in\mathbb C$ the reconstruction levels, with $\mathbb C$ the ``codebook'' of $Q_b$. Also, $-1=t_0<...<t_{2^b}=1$ represent the borders of the quantizer. Then, LM-RFF quantizer $Q_b$ defines a mapping: $[-1,1]\mapsto \mathbb C$, where $Q_b(x)=\mu_i$ if $t_{i-1}< x\leq t_i$. By choosing the error function as the squared difference, given an underlying signal distribution $f(z)$ with support $\mathcal S$, the LM quantizer minimizes the \textit{distortion} defined as $$D_Q=\int_{\mathcal S}  (z-Q(z))^2f(z)dz,$$ aiming to keep most amount of information of the original signal. For the signal distribution, we consider two variants. First, it is natural to set the target distribution as the distribution of RFF itself~(\ref{eqn:density-RFF1}). Consequently, the first LM quantizer is subject to the distortion:
\begin{smallequation}
    \textbf{\textrm{LM-RFF:}}\quad D_1\triangleq \int_{[-1,1]} \big( z-Q(z)\big)^2 \frac{1}{\pi\sqrt{1-z^2}}dz. \label{def:distortion}
\end{smallequation}
Conceptually, optimizing (\ref{def:distortion}) minimizes the average difference between RFF $z$ and $Q(z)$. Alternatively, we may also choose to address more on the high similarity region (which is more important sometimes). Denote $z_u$ and $z_v$ as the RFFs of $u$ and $v$. As $\rho\rightarrow 1$, we have $z_x\cdot z_y\rightarrow z_x^2$, with density given by (\ref{eqn:density-RFF1 square}). Thus, our second variant, LM$^2$-RFF, is designed to approximate $z^2$ by $Q(z)^2$ with distortion
\begin{smallequation}
    \textbf{\textrm{LM$^2$-RFF:}}\quad D_2\triangleq \int_{[-1,1]} \big( z^2-Q(z)^2\big)^2 \frac{1}{\pi\sqrt{1-z^2}}dz. \label{def:distortion2}
\end{smallequation}
Minimizing (\ref{def:distortion}) and (\ref{def:distortion2}) leads to our proposed two LM-type quantizers for RFF compression in this paper.~\footnote{Note that, in Eq.~(\ref{def:distortion2}),   $\int_0^1 \frac{1}{\pi\sqrt{1-z^2}}dz = \int_0^1 \frac{1}{\pi\sqrt{1-z}}dz^{1/2} =\frac{1}{2}\int_0^1 \frac{1}{\pi\sqrt{z-z^2}}dz = \frac{1}{2}\int_0^1 f_{Z_2}(z) dz$, where $f_{Z_2}$ is the density (\ref{eqn:density-RFF1 square}).}

\subsection{Optimization for LM Quantizers}

To solve the introduced optimization problems, we exploit classical Lloyd's algorithm, which alternatively updates two parameters until convergence.  By e.g.,~\cite{Article:Wu_IT92}, the algorithm converges to the globally optimal solution since the squared loss is convex and symmetric. The algorithm terminates when the total absolute change in borders and reconstruction levels in two consecutive iterations is smaller than a given threshold (e.g., $10^{-5}$). We provide the concrete steps for LM-RFF and LM$^2$-RFF in Algorithm~\ref{alg:LM-RFF} and Algorithm~\ref{alg:LM-RFF2}, respectively. For LM-RFF, we see that the procedure is standard (exactly the same as above derivation). Denote $z$ as the unscaled RFF, $z=\cos(\gamma X+\tau)$, $X\sim N(0,1)$ and $\tau\sim uniform(0,2\pi)$. For LM$^2$-RFF, recall that our objective is to minimize (by change of random variable)
\begin{align*}
    D_Q=\int_{-1}^1 (Q(z)^2-z^2)^2 f_Z(z) dz=\int_0^1 (\tilde Q(s)-s)^2 f_{Z_2}(s) ds,
\end{align*}
where $f_Z(z)$ is the density in Theorem~\ref{theo:density of RFF1} (\ref{eqn:density-RFF1}) of RFF (supported on $[-1,1]$), and $f_{Z_2}(s)$ is the density in Theorem~\ref{theo:density of RFF1} (\ref{eqn:density-RFF1 square}) of squared RFF $z^2$ (on $[0,1]$). Here we let $\tilde Q(s)=Q(z)^2$ on $[0,1]$. This provides us a way to first optimize the positive part according to the density of $z^2$, then get the complete LM$^2$-RFF quantizer by taking the square root and symmetry. In particular, the \texttt{SymmetricExpand} function in Algorithm~\ref{alg:LM-RFF2} applies reverted expansion with symmetry. For example, the output of $\{\bm t=(0,0.5,1),\bm\mu=(0.2,0.7)\}$ would be $\{\bm t=(-1,-0.5,0,0.5,1),\bm\mu=(-0.7,-0.2,0.2,0.7)\}$. For practitioners to use our quantizers forthrightly, in Table~\ref{tab:LM quantizer} and \ref{tab:LM square quantizer} we summarize precisely the derived LM-RFF and LM$^2$-RFF quantizers for $b=1,2,3,4$.

\begin{algorithm}{
		\textbf{Input:} Density $f_Z(z)$ (Theorem~\ref{theo:density of RFF1}, (\ref{eqn:density-RFF1})), Number of bits $b$

		\textbf{Output:} LM-RFF quantizer $\bm t=[t_0,...,t_{2^b}]$, $\bm\mu=[\mu_1,...,\mu_{2^b}]$
		
		\vspace{0.05in}
		Fix $t_0=-1,t_{2^b}=1$
		
		While \textit{true}
		
		\hspace{0.2in}For $i=1$ to $2^b$
		
		\hspace{0.4in}Update $\mu_i$ by $\mu_i=\frac{\int_{t_{i-1}}^{t_i} z f_Z(z) dz}{\int_{t_{i-1}}^{t_i} f_Z(z) dz}$
		
		\hspace{0.2in}End For
		
		\hspace{0.2in}For $i=1$ to $2^b-1$
		
		\hspace{0.4in}Update $t_i$ by $t_i=\frac{\mu_i+\mu_{i+1}}{2}$
		
		\hspace{0.2in}End For
		
		Until Convergence

	}\caption{Construction of LM-RFF quantizer}
	\label{alg:LM-RFF}
\end{algorithm}

\begin{algorithm}{
		\textbf{Input:} Density $f_{Z_2}(z)$ (Theorem~\ref{theo:density of RFF1}, (\ref{eqn:density-RFF1 square})), Number of bits $b$

		\textbf{Output:} LM-RFF quantizer $\bm t=[t_0,...,t_{2^b}]$, $\bm\mu=[\mu_1,...,\mu_{2^b}]$
		
		\vspace{0.05in}
		Fix $t_{2^{b-1}}=0,t_{2^b}=1$
		
		While \textit{true}
		
		\hspace{0.2in}For $i=2^{b-1}+1$ to $2^b$
		
		\hspace{0.4in}Update $\mu_i$ by $\mu_i=\frac{\int_{t_{i-1}}^{t_i} z f_{Z_2}(z) dz}{\int_{t_{i-1}}^{t_i} f_{Z_2}(z) dz}$
		
		\hspace{0.2in}End For
		
		\hspace{0.2in}For $i=2^{b-1}+1$ to $2^b-1$
		
		\hspace{0.4in}Update $t_i$ by $t_i=\frac{\mu_i+\mu_{i+1}}{2}$
		
		\hspace{0.2in}End For
		
		Until Convergence
		
		$\{\bm t,\bm\mu\}=\texttt{SymmetricExpand}(\sqrt{\bm t},\sqrt{\bm\mu})$

	}\caption{Construction of LM$^2$-RFF quantizer}
	\label{alg:LM-RFF2}
\end{algorithm}

\begin{table}[h]
\centering
\begin{tabular}{V{3}| c|c|c |V{3}}\Xhline{3\arrayrulewidth}
    \textbf{$b$}  &  \textbf{Borders} &  \textbf{Reconstruction Levels}  \tabularnewline \hline
$1$ &    $\{0,1\}$     &     $\{0.637\}$                  \\ \hline
$2$ &   $\{0,0.576,1\}$      &   $\{0.297,0.854 \}$                    \\ \hline
$3$ &     $\{0,0.286,0.563,0.819,1 \}$    &    $\{0.144,0.428,0.699,0.939 \}$                   \\ \hline
$4$ &    $\{0,0.142,0.283,0.421,0.557,0.687,0.811,0.922,1 \}$     &  $\{0.071,0.213,0.353,0.49,0.624,0.751,0.87,0.974 \}$
\\ \Xhline{3\arrayrulewidth}
\end{tabular}
\caption{Constructed borders and reconstruction levels of LM-RFF quantizers, $b=1,2,3,4$, keeping three decimal places. Since the quantizers are symmetric about 0, we only present the positive part for conciseness.}
\label{tab:LM quantizer}
\end{table}

\begin{table}[h]
\centering
\begin{tabular}{V{3}| c|c|c |V{3}}\Xhline{3\arrayrulewidth}
    \textbf{$b$}  &  \textbf{Borders} &  \textbf{Reconstruction Levels}  \tabularnewline \hline
$1$ &    $\{0,1\}$     &     $\{0.707\}$                  \\ \hline
$2$ &   $\{0,0.707,1\}$      &   $\{0.426,0.905 \}$                    \\ \hline
$3$ &     $\{0,0.461,0.707,0.888,1 \}$    &    $\{0.27,0.593,0.805,0.963 \}$                   \\ \hline
$4$ &    $\{0,0.301,0.467,0.596,0.707,0.802,0.884,0.954,1 \}$     &  $\{0.175,0.39,0.535,0.654,0.756,0.845,0.92,0.985 \}$
\\ \Xhline{3\arrayrulewidth}
\end{tabular}
\caption{Constructed borders and reconstruction levels of LM$^2$-RFF quantizers, $b=1,2,3,4$, keeping three decimal places. Since the quantizers are symmetric about 0, we only present the positive part for conciseness.}
\label{tab:LM square quantizer}
\end{table}

In Figure~\ref{fig:quantizer}, we plot the $2$-bit LM-RFF and LM$^2$-RFF quantizer as an example, along with the distortions of LM quantizers and uniform stochastic quantization (StocQ), with various number of bits. We see that both LM methods give non-uniform quantization borders and codes. LM$^2$-RFF ``expands'' more towards two ends since it tries to approximate $z^2$. From the distortion plots, we validate that LM-RFF provides smallest $D_1$ and LM$^2$-RFF gives smallest $D_2$.

As a final remark before ending this subsection, we note that LM quantization is more convenient and faster compared with StocQ in practical implementation, for quantizing the full-precision RFFs. While LM is a fixed quantization approach, StocQ requires generating an extra random number for each sketch and each data point. For large datasets, producing these additional random numbers might be rather slow in practice.

\begin{figure}[h]
        \mbox{
        \includegraphics[width=2.2in]{./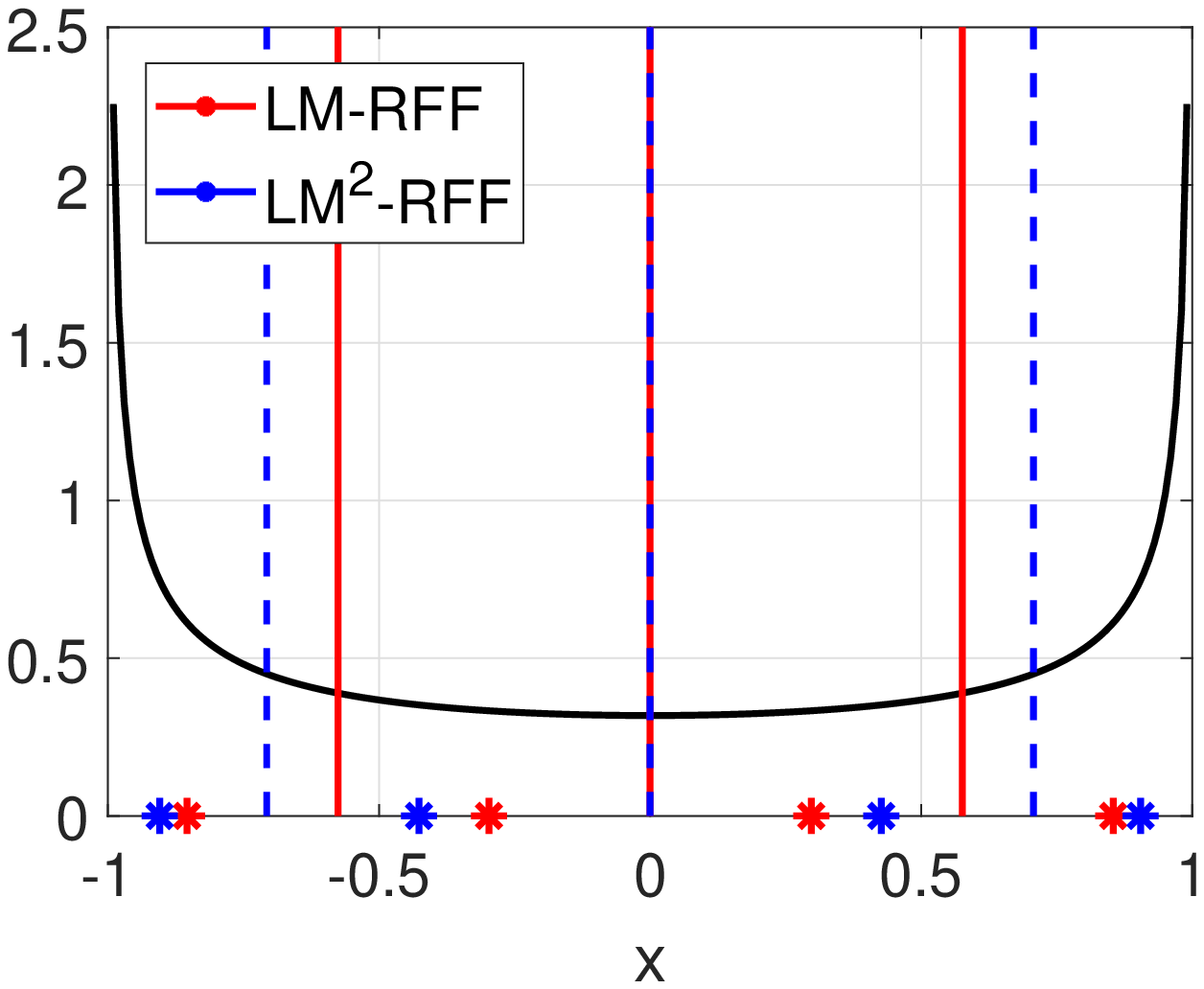}
        \includegraphics[width=2.2in]{./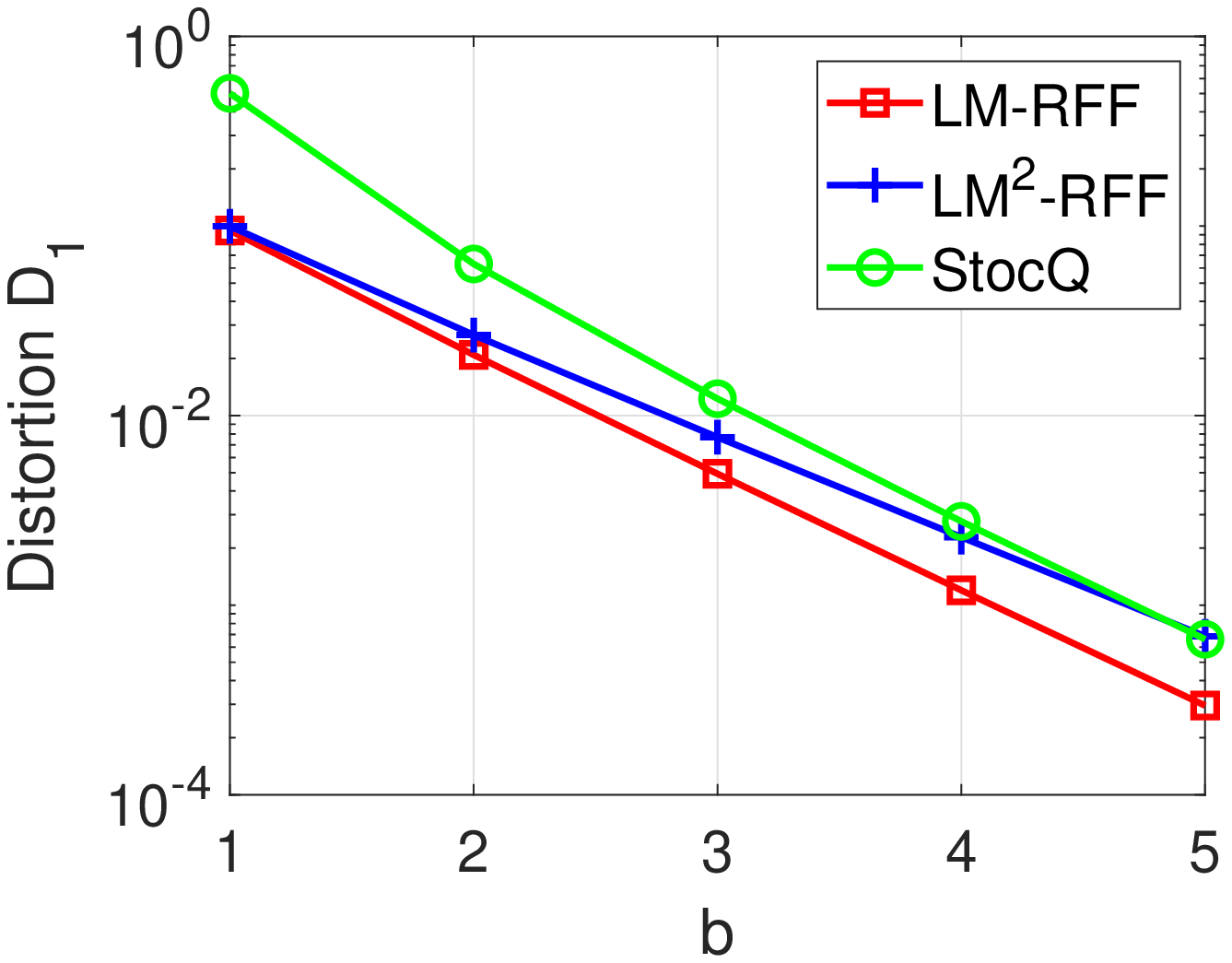}
        \includegraphics[width=2.2in]{./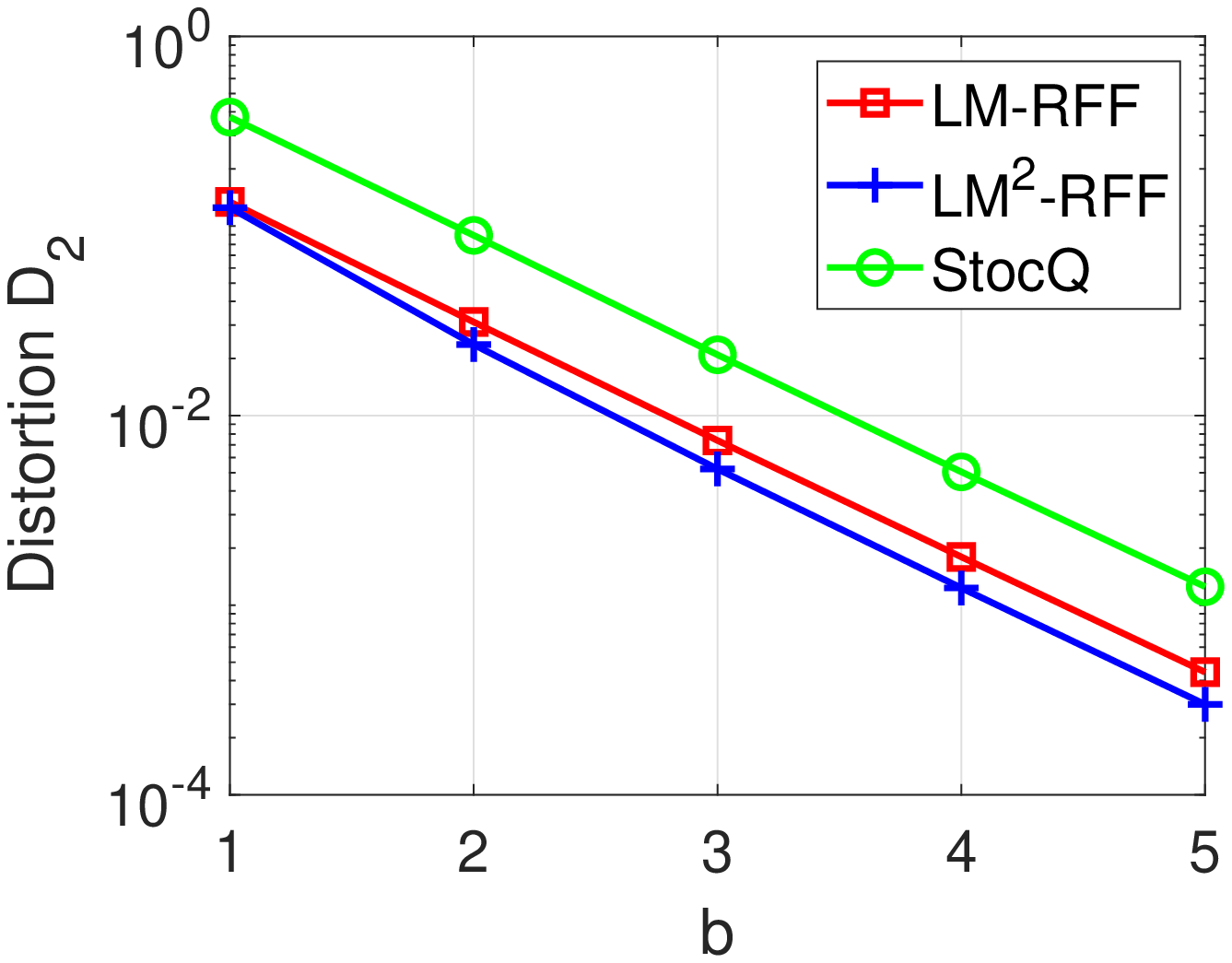}
        }
    \vspace{-0.15in}
	\caption{\textbf{Left}: LM-RFF and LM$^2$-RFF quantizers, $b=2$. Black curve is the RFF marginal density. \textbf{Right two}: Distortion $D_1$ and $D_2$ of LM-RFF, LM$^2$-RFF and StocQ (with uniform borders).}
	\label{fig:quantizer}
\end{figure}

\newpage\clearpage

\subsection{Quantized Kernel Estimators} \label{sec:def-estimators}

In the following, we define the kernel estimators built upon quantized RFFs, which lie at the core of understanding the behavior of different compression schemes. For a fixed $\gamma$ (Gaussian kernel parameter), with a general RFF quantizer $\tilde Q$ (i.e., proper kind introduced above), a simple quantized kernel estimator using $m$ random features can be constructed as
\begin{align}
&\hat K_{\tilde Q}(u,v)=\frac{2}{m}\sum_{i=1}^m \tilde Q(z_{u,i}) \tilde Q(z_{v,i}), \label{est:Q estimator-I}
\end{align}
with $z_{u,i}=\cos(w_i^Tu+\tau_i)$ and $z_{v,i}=\cos(w_i^Tv+\tau_i)$ the $i$-th unscaled RFF of $u$ and $v$, respectively. Moreover, for the proposed LM quantizers, we consider \textit{normalized estimator},
\begin{align}
    \hat K_{n,Q}(u,v)=\frac{\sum_{i=1}^m Q(z_{u,i}) Q(z_{v,i})}{\sqrt{\sum_{1}^m Q(z_{u,i})^2}\sqrt{\sum_{1}^m Q(z_{v,i})^2}}. \label{est:normalize}
\end{align}
This estimator can also be conveniently used, as we only need to normalize the quantized RFFs (per data point) before learning. We will use $\hat K_{Q,(2)}$ and $\hat K_{n,Q,(2)}$ to denote the corresponding estimators using LM$^2$-RFF quantization. We will analyze and compare the estimators using different quantization methods, theoretically and practically, in the remaining sections of the paper.

\section{Theoretical Analysis} \label{sec:theory}

In this section, we first analyze the mean, variance, and monotonicity property of the quantized kernel estimators, then discuss kernel matrix approximation property based on some new evaluation metrics that can well align with the generalization performance. The proofs are deferred to Appendix~\ref{append sec:proof}.

\subsection{StocQ Estimator} \label{sec:StocQ}

We start this section by analyzing the stochastic rounding method for RFF. In~\cite{Proc:Zhang_AISTATS19}, the exact variance of the kernel estimator is not provided. In the following, we establish the precise variance calculation in Theorem~\ref{theo:joint_RFF1}, which is in fact a more general result on any symmetric stochastic quantizer.

\vspace{0.1in}
\begin{theorem}[StocQ] \label{theo:StocQ}
Suppose a $b$-bit StocQ quantizer~(\ref{def:StocQ}) applies stochastic rounding corresponding to arbitrary bin split $-1=t_0<...<t_{2^b-1}=1$ that is symmetric about 0. Denote $S_i=t_{i-1}+t_i$ and $P_i=t_{i-1}t_i$, $i=1,...,2^b-1$. Let $f(z_u,z_v)$ be the RFF joint distribution in Theorem~\ref{theo:joint_RFF1}. Denote $\kappa_{i,j}=\int_{t_{i-1}}^{t_i}\int_{t_{j-1}}^{t_j}z_uz_vf(z_u,z_v)dz_udz_v$, and $p_{i,j}=\int_{t_{i-1}}^{t_i}\int_{t_{j-1}}^{t_j}f(z_u,z_v)dz_udz_v$. Then we have $\mathbb E[\hat K_{StocQ}]=K(u,v)$ and $Var[\hat K_{StocQ}]=\frac{V_{StocQ}}{m}$ with
\begin{align}
    V_{StocQ}&=4 \sum_{i=1}^{2^b-1}\sum_{j=1}^{2^b-1} \Big[ S_iS_j\kappa_{i,j}+P_iP_j p_{i,j} \Big]-K(u,v)^2, \nonumber
\end{align}
which is always greater than $Var[\hat K]$ defined in (\ref{est:full RFF-I}).
\end{theorem}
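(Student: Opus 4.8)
The plan is to reduce everything to one random feature: since the $m$ summands of $\hat K_{StocQ}$ are i.i.d., it suffices to find the first two moments of $2Q(z_u)Q(z_v)$ and divide the variance by $m$. The workhorse is a one-dimensional computation for StocQ. Writing $I_i=(t_{i-1},t_i]$ and $\delta_i=t_i-t_{i-1}$, and conditioning on the full-precision value $z$ falling in $I_i$, the rounding probabilities~(\ref{def:StocQ}) give $\mathbb E[Q(z)\mid z]=t_{i-1}\tfrac{t_i-z}{\delta_i}+t_i\tfrac{z-t_{i-1}}{\delta_i}=z$, so StocQ is conditionally unbiased, and $\mathbb E[Q(z)^2\mid z]=t_{i-1}^2\tfrac{t_i-z}{\delta_i}+t_i^2\tfrac{z-t_{i-1}}{\delta_i}=S_iz-P_i$ after simplification. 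Because the rounding noise used for $u$ is independent of the one used for $v$ (and of $w,\tau$), conditionally on $(z_u,z_v)$ the quantized values $Q(z_u),Q(z_v)$ are independent; hence $\mathbb E[2Q(z_u)Q(z_v)]=2\,\mathbb E[z_uz_v]=K(u,v)$, which gives $\mathbb E[\hat K_{StocQ}]=K(u,v)$.

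For the variance, set $g(z):=\mathbb E[Q(z)^2\mid z]$, so $g(z)=S_iz-P_i$ on $I_i$. Using the same conditional independence, $\mathbb E[Q(z_u)^2Q(z_v)^2]=\mathbb E[g(z_u)g(z_v)]=\sum_{i,j}\int_{I_i}\int_{I_j}(S_iz_u-P_i)(S_jz_v-P_j)\,f(z_u,z_v)\,dz_u\,dz_v$, with $f$ the RFF joint density of Theorem~\ref{theo:joint_RFF1}. Expanding the product on each rectangle gives four pieces whose integrals are $S_iS_j\kappa_{i,j}$, $P_iP_jp_{i,j}$, and two cross terms of the form $S_iP_j\int_{I_i}\int_{I_j}z_u f$. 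The key step is to show that, summed over $i,j$, the cross terms cancel: since the bins are symmetric about $0$, the step functions $z\mapsto S_{I(z)}$ and $z\mapsto P_{I(z)}$ are, respectively, odd and even in $z$, and combining this with the reflection symmetry $f(z_u,z_v)=f(-z_u,-z_v)$ and exchangeability $f(z_u,z_v)=f(z_v,z_u)$ (both recorded after Theorem~\ref{theo:joint_RFF1}) one gets $\sum_{i,j}S_iP_j\int_{I_i}\int_{I_j}z_u f=0$. What remains is $\mathbb E[Q(z_u)^2Q(z_v)^2]=\sum_{i,j}\big(S_iS_j\kappa_{i,j}+P_iP_jp_{i,j}\big)$, and then $V_{StocQ}=4\,\mathbb E[Q(z_u)^2Q(z_v)^2]-\big(\mathbb E[2Q(z_u)Q(z_v)]\big)^2$ is exactly the claimed expression.

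The inequality follows from a pointwise bound: on $I_i$, $g(z)-z^2=S_iz-P_i-z^2=(z-t_{i-1})(t_i-z)\ge 0$, with equality only at the endpoints, so $g(z)\ge z^2\ge 0$ everywhere. Hence $g(z_u)g(z_v)\ge z_u^2z_v^2$ pointwise, so $\mathbb E[Q(z_u)^2Q(z_v)^2]=\mathbb E[g(z_u)g(z_v)]\ge\mathbb E[z_u^2z_v^2]$, with strict inequality since $g>z^2$ on a set of positive $f$-measure. Multiplying by $4$ and subtracting $K(u,v)^2$ shows $V_{StocQ}>4\,\mathbb E[z_u^2z_v^2]-K(u,v)^2$, and the right-hand side equals $m\cdot Var[\hat K]$ for the full-precision estimator~(\ref{est:full RFF-I}), whose single feature is $2z_uz_v$; thus $Var[\hat K_{StocQ}]>Var[\hat K]$. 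I expect the cross-term cancellation in the middle step to be the main obstacle — it is the one place where the fine symmetry properties of the RFF joint law and the symmetry of the bins must be used together, and the double-sum bookkeeping is where an argument most easily slips; I would verify the small cases $b=1,2$ explicitly before trusting the general manipulation.
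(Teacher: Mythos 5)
Your route is essentially the paper's. The paper writes $Q(z)=z+\epsilon$ with $\mathbb E[\epsilon\mid z]=0$ and $\mathrm{Var}[\epsilon\mid z]=(t_i-z)(z-t_{i-1})$, expands $4\,\mathbb E[(z_u+\epsilon_u)^2(z_v+\epsilon_v)^2]-K(u,v)^2$ over the bin grid, and discards the cross terms by an appeal to symmetry; your conditional moments $\mathbb E[Q(z)\mid z]=z$ and $\mathbb E[Q(z)^2\mid z]=S_iz-P_i$ are the same computation in different clothing, and your pointwise bound $g(z)-z^2=(t_i-z)(z-t_{i-1})\ge 0$ is exactly the paper's observation that the extra term $A=4\,\mathbb E[z_u^2\epsilon_v^2+z_v^2\epsilon_u^2+\epsilon_u^2\epsilon_v^2]$ is positive. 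So the unbiasedness claim and the comparison with the full-precision variance are fine and match the paper.

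The gap is exactly at the step you flagged: the claimed identity $\sum_{i,j}S_iP_j\int_{I_i}\int_{I_j}z_u f\,dz_u\,dz_v=0$ does not follow from the symmetries you invoke. Writing $I(z)$ for the index of the bin containing $z$: with symmetric bins $z\mapsto S_{I(z)}$ is odd and $z\mapsto P_{I(z)}$ is even, so the integrand $S_{I(z_u)}\,z_u\,P_{I(z_v)}$ is \emph{even} under the joint reflection $(z_u,z_v)\mapsto(-z_u,-z_v)$ (odd times odd times even), hence the reflection symmetry $f(z_u,z_v)=f(-z_u,-z_v)$ gives no cancellation, and exchangeability only shows the two cross terms are equal to each other, not that they vanish. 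What would kill them is invariance of $f$ under flipping a \emph{single} coordinate, $f(z_u,z_v)=f(-z_u,z_v)$, and that is precisely the symmetry the RFF joint density does not possess (Proposition~\ref{prop:joint inequality} is exactly about the discrepancy between $f(z_x,z_y)$ and $f(z_x,-z_y)$). Your proposed sanity check $b=1$ is uninformative here, since then $S_1=0$ and the cross term vanishes trivially; the case to probe is $b=2$ with $\rho$ near $1$, where $z_u=z_v=z$ and the cross term reduces to $\mathbb E[S_{I(z)}P_{I(z)}z]=2t(1+t)\int_t^1 \frac{z}{\pi\sqrt{1-z^2}}dz>0$ for inner border $t$, so the cancellation cannot be a consequence of symmetry alone. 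To be fair, the paper's own proof drops these same terms with only the remark that this is ``due to the symmetry of the density $f$ and the borders,'' so you are reproducing the intended argument; but as written your justification of this key step does not go through, and a complete proof would either have to supply a genuinely different reason for the cancellation or carry the cross term $-8\,\mathbb E[S_{I(z_u)}z_uP_{I(z_v)}]$ through to the final variance expression.
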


The important take-away messages are: 1) the StocQ kernel estimator is unbiased of the Gaussian kernel; 2) the variance is always larger than full-precision RFF estimate. Further, we have the following result for 1-bit StocQ, which is a straightforward consequence of Theorem~\ref{theo:StocQ}

\vspace{0.1in}

\begin{corollary}
With 1-bit, $Var[\hat K_{StocQ}]=4-K(u,v)^2$.
\end{corollary}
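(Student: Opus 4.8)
The plan is to specialize the general formula for $V_{StocQ}$ in Theorem~\ref{theo:StocQ} to the case $b=1$, where everything collapses. With one bit the quantizer has $2^1-1 = 1$ bin, so the only borders are $t_0 = -1$ and $t_1 = 1$, and the double sums over $i,j\in\{1,\dots,2^b-1\}$ reduce to the single term $i=j=1$. For that term, $S_1 = t_0 + t_1 = 0$ and $P_1 = t_0 t_1 = -1$, hence $S_1S_1\kappa_{1,1} = 0$ and $P_1P_1\,p_{1,1} = p_{1,1}$: the $\kappa$-contribution drops out and only the $p$-contribution survives.

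Next I would observe that $p_{1,1} = \int_{-1}^{1}\int_{-1}^{1} f(z_u,z_v)\,dz_u\,dz_v = 1$, since $f$ from Theorem~\ref{theo:joint_RFF1} is the joint density of the pair $\big(\cos(\gamma X+\tau),\cos(\gamma Y+\tau)\big)$, which is supported on exactly $[-1,1]^2$; its total mass is therefore $1$. Substituting into the formula of Theorem~\ref{theo:StocQ} gives $V_{StocQ} = 4\big[0 + p_{1,1}\big] - K(u,v)^2 = 4 - K(u,v)^2$, which is the claimed variance (per feature).

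As an independent sanity check, and an alternative self-contained argument, one can note that for $b=1$ every $Q(z)\in\{-1,1\}$, so $Q(z_{u,i})^2 = Q(z_{v,i})^2 = 1$ and hence $\big(2\,Q(z_{u,i})Q(z_{v,i})\big)^2 = 4$ deterministically; for $m=1$ this makes $\hat K_{StocQ}^2 \equiv 4$, and combined with the unbiasedness $\mathbb E[\hat K_{StocQ}] = K(u,v)$ from Theorem~\ref{theo:StocQ} one obtains $Var[\hat K_{StocQ}] = \mathbb E[\hat K_{StocQ}^2] - K(u,v)^2 = 4 - K(u,v)^2$.

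There is essentially no obstacle here; the statement is a direct substitution. The only things worth handling carefully are the bookkeeping details: confirming that for $b=1$ the index set of the outer sums is the singleton $\{1\}$ (so no cross terms arise), and computing $P_1 = t_0 t_1 = -1$ (not $+1$), since a sign slip there would alter the constant — though because $P_1$ enters squared, even that is harmless. The remaining pieces, namely $S_1 = 0$ killing the $\kappa_{1,1}$ term and $p_{1,1} = 1$, are immediate.
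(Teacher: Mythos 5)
Your proof is correct and is exactly the route the paper intends: the corollary is stated as a direct consequence of Theorem~\ref{theo:StocQ}, obtained by setting $b=1$ so that $S_1=t_0+t_1=0$ kills the $\kappa$-term, $P_1^2=1$, and $p_{1,1}=1$, yielding $V_{StocQ}=4-K(u,v)^2$ (the per-feature factor, consistent with $Var[\hat K_{StocQ}]=V_{StocQ}/m$). Your added sanity check via $Q(z)\in\{-1,1\}$, which makes $\bigl(2Q(z_u)Q(z_v)\bigr)^2\equiv 4$, is a nice independent confirmation but not a different method in any essential sense.
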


\subsection{LM Estimators} \label{sec:mean}

In this subsection, we study the moments of the proposed LM kernel estimators. Since our following results generalize to both LM-RFF and LM$^2$-RFF, we will unify the notation as $Q$ to denote a LM-type quantizer. First, we have the following formulation of the mean estimate of LM quantized estimator (\ref{est:Q estimator-I}) based on Chebyshev functional approximation.

\begin{theorem}[LM] \label{theo:mean-var}
Let $u,v$ be two normalized data samples with correlation $\rho$, and $\hat K_{Q}(u,v)$ be as (\ref{est:Q estimator-I}) with LM quantizer $Q$ (of either kind). Let $z_x=\cos(\gamma X+\tau)$, $z_y=\cos(\gamma Y+\tau)$ where $(X,Y)\sim N\big(0,\begin{pmatrix}
1 & \rho \\
\rho & 1
\end{pmatrix}\big)$, $\tau\sim uniform(0,2\pi)$, and denote $\theta_{s,t}=\mathbb E[z_x^s Q(z_x)^t]$, $\zeta_{s,t}=\mathbb E[Q(z_x)^sQ(z_y)^t]$. Further define $\alpha_i=\frac{2}{\pi}\int_{-1}^1 Q(x)T_i(x)\frac{dx}{\sqrt{1-x^2}}$, $\psi_{i,j}=\mathbb E[T_i(z_x)T_j(z_y)]$, where $T_i(x)$ is the $i$-th Chebyshev polynomial of the first kind. Then we have
\begin{align*}
    &\mathbb E[\hat K_{Q}(u,v)]=4\theta_{1,1}^2 K(u,v)+ \sum_{i=1,odd}^\infty \sum_{j=3,odd}^\infty \alpha_i\alpha_j\psi_{i,j},\\
    &Var[\hat K_{Q}(u,v)]=\frac{4}{m}(\zeta_{2,2}-\zeta_{1,1}^2).
\end{align*}
In particular, $\mathbb E[\hat K_{Q}(u,v)]=4\theta_{1,1}^2 K(u,v)$ when $\rho=0$, and $\mathbb E[\hat K_{Q}(u,v)]=2\theta_{1,1}$ when $\rho=1$.
\end{theorem}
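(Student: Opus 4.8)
The plan is to reduce the two statements to a single pair of random features and then carry out a Chebyshev expansion of the quantizer. Since $\hat K_Q(u,v)=\frac{2}{m}\sum_{i=1}^m Q(z_{u,i})Q(z_{v,i})$ is an average of $m$ i.i.d.\ copies of $Q(z_x)Q(z_y)$, we immediately get $\mathbb E[\hat K_Q]=2\,\mathbb E[Q(z_x)Q(z_y)]=2\zeta_{1,1}$ and
\[
\mathrm{Var}[\hat K_Q]=\frac{1}{m}\mathrm{Var}\big[2Q(z_x)Q(z_y)\big]=\frac{4}{m}\Big(\mathbb E[Q(z_x)^2Q(z_y)^2]-\mathbb E[Q(z_x)Q(z_y)]^2\Big)=\frac{4}{m}\big(\zeta_{2,2}-\zeta_{1,1}^2\big),
\]
which is already the variance claim, so the whole task is to compute $\zeta_{1,1}=\mathbb E[Q(z_x)Q(z_y)]$. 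The structural fact I would build on is that, by Theorem~\ref{theo:density of RFF1}, the marginal law of $z_x$ has density $\tfrac{1}{\pi\sqrt{1-z^2}}$ on $[-1,1]$, which is exactly the weight under which the Chebyshev polynomials of the first kind are orthogonal ($\int_{-1}^1 T_i T_j\,\tfrac{dx}{\sqrt{1-x^2}}=\tfrac{\pi}{2}\delta_{ij}$ for $i,j\ge1$). So I would write $Q(x)=\sum_{i\ge1}\alpha_i T_i(x)$ with the $\alpha_i$ as in the statement, observing that the Lloyd--Max quantizers of Algorithms~\ref{alg:LM-RFF}--\ref{alg:LM-RFF2} are odd, so only odd indices survive.

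The heart of the proof is evaluating $\psi_{i,j}=\mathbb E[T_i(z_x)T_j(z_y)]$, and rather than integrate against the joint density of Theorem~\ref{theo:joint_RFF1} it is much cleaner to use $T_n(\cos\theta)=\cos(n\theta)$, which turns $T_i(z_x)T_j(z_y)$ into $\cos\!\big(i(\gamma X+\tau)\big)\cos\!\big(j(\gamma Y+\tau)\big)$. A product-to-sum identity rewrites this as $\tfrac12\cos\!\big(i\gamma X-j\gamma Y+(i-j)\tau\big)+\tfrac12\cos\!\big(i\gamma X+j\gamma Y+(i+j)\tau\big)$, and averaging over $\tau\sim\mathrm{uniform}(0,2\pi)$ annihilates any term whose net coefficient of $\tau$ is a nonzero integer. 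Since $i+j\ge2$ always and $i-j=0$ only when $i=j$, only the difference term with $i=j$ remains, giving $\psi_{i,j}=\tfrac12\,\mathbb E[\cos(i\gamma(X-Y))]\,\mathbf{1}\{i=j\}$; then $X-Y\sim N(0,2(1-\rho))$ and the Gaussian characteristic function give $\mathbb E[\cos(i\gamma(X-Y))]=e^{-i^2\gamma^2(1-\rho)}=K(u,v)^{i^2}$, so $\psi_{i,j}=\tfrac12 K(u,v)^{i^2}\mathbf{1}\{i=j\}$ and in particular $\psi_{1,1}=\tfrac12 K(u,v)$.

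Assembling the pieces, I would substitute the Chebyshev expansion into $\zeta_{1,1}=\sum_{i,j\ \mathrm{odd}}\alpha_i\alpha_j\psi_{i,j}$, peel off the $i=j=1$ mode, and collect the rest (using exchangeability of $z_x$ and $z_y$) into the double sum $\sum_{i=1,\mathrm{odd}}^\infty\sum_{j=3,\mathrm{odd}}^\infty\alpha_i\alpha_j\psi_{i,j}$. To see that the coefficient of $K$ in the main term is $4\theta_{1,1}^2$, orthogonality gives $\theta_{1,1}=\mathbb E[z_xQ(z_x)]=\mathbb E[T_1(z_x)Q(z_x)]=\tfrac12\alpha_1$, so the first mode contributes $2\cdot\alpha_1^2\cdot\tfrac12 K=4\theta_{1,1}^2 K$. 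The two special cases then follow by specialization: at $\rho=0$ one evaluates the general formula with $K=e^{-\gamma^2}$; at $\rho=1$ we have $X=Y$ a.s., hence $z_x=z_y$ a.s.\ and $\mathbb E[\hat K_Q]=2\mathbb E[Q(z)^2]$, and invoking the Lloyd--Max stationarity condition $\mu_i=\mathbb E[z\mid z\in[t_{i-1},t_i]]$ (which makes $Q(z)$ the $L^2(f_Z)$-orthogonal projection of $z$ onto the $\sigma$-algebra generated by the bins, so $\mathbb E[zQ(z)]=\mathbb E[Q(z)^2]$) gives $\mathbb E[\hat K_Q]=2\mathbb E[zQ(z)]=2\theta_{1,1}$.

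The main obstacle I anticipate is rigor in the term-by-term manipulations: I would need to justify that the (bounded) quantizer $Q$ equals its Chebyshev series in a sense strong enough to interchange the infinite sum with the double expectation, and that the resulting double series converges absolutely — this should follow from Parseval, $\sum_i\alpha_i^2=2\mathbb E[Q(z)^2]<\infty$, together with the uniform bound $|\psi_{i,j}|\le\tfrac12$ and dominated convergence, but it must be written out carefully. A secondary point needing care is the bookkeeping in the product-to-sum / $\tau$-averaging step, so as to be certain no off-diagonal ($i\ne j$) contribution is missed.
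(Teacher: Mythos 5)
Your overall skeleton matches the paper's: reduce to one pair $(z_x,z_y)$ by i.i.d.\ averaging (which gives the variance formula for free), expand the odd quantizer $Q$ in Chebyshev polynomials of the first kind using the fact that the RFF marginal density is exactly the Chebyshev weight, identify $\alpha_1=2\theta_{1,1}$ and $2\alpha_1^2\psi_{1,1}=4\theta_{1,1}^2K$, and treat the remaining modes as the correction term. Where you genuinely depart from the paper is in the evaluation of $\psi_{i,j}$: the paper leaves $\psi_{i,j}=\mathbb E[T_i(z_x)T_j(z_y)]$ abstract (only later bounding $|\psi_{i,j}|\le 1/2$ to argue the correction is small), whereas you use $T_n(\cos\theta)=\cos(n\theta)$, product-to-sum, $\tau$-averaging, and the Gaussian characteristic function of $X-Y\sim N(0,2(1-\rho))$ to get the closed form $\psi_{i,j}=\tfrac12K^{i^2}\mathbf 1\{i=j\}$. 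That computation is correct and is strictly more informative than the paper's treatment: it shows the correction is purely diagonal, $\sum_{i\ge3,\,odd}\alpha_i^2K^{i^2}$, and makes the absolute-convergence/interchange worries you raise essentially trivial. Your $\rho=1$ argument (via $z_x=z_y$ a.s.\ and the Lloyd--Max stationarity identity $\mathbb E[zQ(z)]=\mathbb E[Q(z)^2]$, i.e.\ Property 2 in the paper's appendix) is also fine and a bit more direct than the paper's, which instead sums $\alpha_1^2+\sum_{i\ge3}\alpha_i^2=1-2D$. A cosmetic point: your bookkeeping reproduces the paper's \emph{proof}, which carries a factor $2$ in front of the remainder sum, while the theorem statement drops it; that mismatch is in the paper itself, but you should state explicitly which normalization you are asserting.

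The one substantive gap is the $\rho=0$ special case. You say it ``follows by specialization with $K=e^{-\gamma^2}$,'' but your own closed form contradicts the claimed exact identity: at $\rho=0$ the diagonal terms give $\mathbb E[\hat K_Q]=4\theta_{1,1}^2K+\sum_{i\ge3,\,odd}\alpha_i^2e^{-i^2\gamma^2}$, and the correction, while tiny, is not zero. The paper's proof of this case asserts that all cross-integrals vanish ``by independence,'' but $z_x$ and $z_y$ share the same $\tau$, so they are not independent when $\rho=0$ (take $\gamma$ small: both are close to $\cos\tau$); your computation of $\psi_{i,i}=\tfrac12e^{-i^2\gamma^2}\neq 0$ makes this precise. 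So your route cannot deliver the stated exact equality at $\rho=0$, and mere ``specialization'' does not close that step; you should either flag that the $\rho=0$ statement holds only up to the exponentially small diagonal correction (which your formula quantifies), or give an argument for why those terms vanish — which, by your own (correct) calculation, they do not.
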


\vspace{0.1in}

Note that, for LM-RFF quantizer, we have $4\theta_{1,1}^2=(1-2D_1)^2$ with $D_1$ defined in (\ref{def:distortion}). Since Chebyshev polynomials form an orthogonal basis of the function space on $[-1,1]$ with finite number of discontinuities, we can show that $\alpha_i=\sqrt{2\theta_{1,1}}c_i$ where $c_i$ is the cosine between $Q(x)$ and $T_i(x)$, and $\sum_{i=3,odd}^\infty \alpha_i^2=2(\theta_{1,1}-2\theta_{1,1}^2)$ which is typically very small and decreases as the quantizer has more bits. Also, we have $|\psi_{i,j}|\leq \mathbb E[T_i(z_x)^2]=1/2$. Consequently, in Theorem~\ref{theo:mean-var} the last term approximates zero in most cases. This translates into the following observation.

\vspace{0.1in}
\begin{observation} \label{obv1}
$\mathbb E[\hat K_{Q}(u,v)]\approx 4\theta_{1,1}^2K(u,v)$.
\end{observation}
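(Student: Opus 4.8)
The plan is to split the statement into the (routine) variance identity and the (less immediate) mean expansion, and to carry out the mean computation by expanding the quantizer in first‑kind Chebyshev polynomials, exploiting that the RFF marginal density of Theorem~\ref{theo:density of RFF1} is exactly the Chebyshev weight.

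First I would reduce everything to single‑feature expectations. Since $\hat K_{Q}(u,v)=\frac{2}{m}\sum_{i=1}^m Q(z_{u,i})Q(z_{v,i})$ is an average of $m$ i.i.d.\ copies of $2Q(z_x)Q(z_y)$, linearity gives $\mathbb E[\hat K_Q]=2\mathbb E[Q(z_x)Q(z_y)]=2\zeta_{1,1}$, and independence across features gives $\mathrm{Var}[\hat K_Q]=\frac1m\mathrm{Var}[2Q(z_x)Q(z_y)]=\frac4m\big(\mathbb E[Q(z_x)^2Q(z_y)^2]-\mathbb E[Q(z_x)Q(z_y)]^2\big)=\frac4m(\zeta_{2,2}-\zeta_{1,1}^2)$, which is already the stated variance. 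All the remaining work is to evaluate $\zeta_{1,1}$ and isolate its leading term. For the Chebyshev step, by Theorem~\ref{theo:density of RFF1} the marginal law of $z_x$ has density $\frac1{\pi\sqrt{1-z^2}}$ on $[-1,1]$, which is precisely the orthogonality weight of the polynomials $T_i$, so $\int_{-1}^1 T_i T_j\,\frac{dx}{\pi\sqrt{1-x^2}}=\tfrac12\mathbf 1\{i=j\}$ for $i,j\ge1$; since $Q$ is a bounded step function it lies in $L^2$ of this measure and may be written $Q(x)=\sum_{i\ge0}\alpha_i T_i(x)$ with $\alpha_i$ as in the statement, and symmetry $Q(-x)=-Q(x)$ kills all even‑index coefficients (including $\alpha_0$). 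Two facts from the Lloyd--Max centroid condition — each level is the conditional mean of the signal on its bin — will be used: the Parseval identity $\sum_{i\ge1,odd}\alpha_i^2=2\theta_{1,1}$ (because $\tfrac12\sum_{i\ge1}\alpha_i^2=\|Q\|_{L^2(f_Z)}^2=\mathbb E[Q(z_x)^2]=\theta_{1,1}$) and, since $T_1(x)=x$, the identity $\alpha_1=2\theta_{1,1}$ read off directly from the definitions of $\alpha_1$ and $\theta_{1,1}$.

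Next I would compute $\psi_{i,j}=\mathbb E[T_i(z_x)T_j(z_y)]$. Using $T_n(\cos\theta)=\cos(n\theta)$, $T_i(z_x)=\cos\big(i(\gamma X+\tau)\big)$ and $T_j(z_y)=\cos\big(j(\gamma Y+\tau)\big)$; expanding the product via the product‑to‑sum formula and integrating out $\tau\sim uniform(0,2\pi)$ annihilates every term carrying a nonzero residual multiple of $\tau$, which forces $i=j$ and leaves $\psi_{i,j}=\tfrac12\mathbb E[\cos(i\gamma(X-Y))]\mathbf 1\{i=j\}$. Since $X-Y\sim N(0,2(1-\rho))$, its characteristic function gives $\mathbb E[\cos(i\gamma(X-Y))]=e^{-i^2\gamma^2(1-\rho)}=K(u,v)^{i^2}$, so $\psi_{i,j}=\tfrac12 K^{i^2}\mathbf 1\{i=j\}$; in particular $\psi_{1,1}=\tfrac12 K$ and $\psi_{i,i}=\tfrac12$ at $\rho=1$. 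Substituting the expansion of $Q$ into $\zeta_{1,1}$ and interchanging expectation and sum gives $\zeta_{1,1}=\sum_{i,j}\alpha_i\alpha_j\psi_{i,j}$; multiplying by $2$ and peeling off the $i=j=1$ term, whose contribution is $2\alpha_1^2\psi_{1,1}=4\theta_{1,1}^2 K$, leaves exactly the residual odd‑index remainder recorded in the theorem. The two corollary cases then follow by substitution: at $\rho=1$, $K=1$, so $\mathbb E[\hat K_Q]=\sum_{i\ge1,odd}\alpha_i^2=2\theta_{1,1}$; at $\rho=0$ one substitutes $K=e^{-\gamma^2}$ (equivalently uses $X\perp Y$) into the general formula.

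The one step that genuinely requires care is the termwise interchange $\mathbb E\big[(\sum_i\alpha_i T_i(z_x))(\sum_j\alpha_j T_j(z_y))\big]=\sum_{i,j}\alpha_i\alpha_j\psi_{i,j}$, because the Chebyshev series of the step function $Q$ converges only in $L^2(f_Z)$, not uniformly. I would dispatch this cheaply: the bilinear form $(g,h)\mapsto\mathbb E[g(z_x)h(z_y)]$ is a contraction on $L^2(f_Z)\times L^2(f_Z)$ by Cauchy--Schwarz (both $z_x,z_y$ have marginal $f_Z$), so if $S_N$ denotes the $N$‑th partial sum of $Q$, then $\mathbb E[S_N(z_x)S_M(z_y)]\to\mathbb E[Q(z_x)Q(z_y)]$ as $N,M\to\infty$, while $\mathbb E[S_N(z_x)S_M(z_y)]=\sum_{i\le N,\,j\le M}\alpha_i\alpha_j\psi_{i,j}$; diagonality of $\psi$, $|\psi_{i,i}|\le\tfrac12$ and $\sum_i\alpha_i^2<\infty$ together give absolute convergence of the double series and legitimize the interchange. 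Everything else — the $\tau$‑averaging bookkeeping, the Gaussian characteristic‑function evaluation, and the centroid identities — is routine, so I expect no further real obstacle beyond this analytic housekeeping.
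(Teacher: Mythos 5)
Your argument is correct and follows the same backbone as the paper's: expand $Q$ in first-kind Chebyshev polynomials against the weight $\frac{1}{\pi\sqrt{1-x^2}}$ (which Theorem~\ref{theo:density of RFF1} identifies as the RFF marginal), read off $\alpha_1=2\theta_{1,1}$ from $T_1(x)=x$, and identify the leading term $2\alpha_1^2\psi_{1,1}=4\theta_{1,1}^2K(u,v)$, with the residual controlled through the Parseval/centroid identity $\sum_{i\ge 1,\,odd}\alpha_i^2=2\theta_{1,1}$. Where you genuinely go beyond the paper is the evaluation of $\psi_{i,j}$: by writing $T_i(z_x)=\cos(i(\gamma X+\tau))$, integrating out $\tau$, and using the characteristic function of $X-Y\sim N(0,2(1-\rho))$, you obtain the exact diagonal form $\psi_{i,j}=\tfrac12 K^{i^2}\mathbf 1\{i=j\}$, whereas the paper only invokes the Cauchy--Schwarz bound $|\psi_{i,j}|\le 1/2$ and keeps the cross terms. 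Your version buys a cleaner and quantitatively stronger justification of the Observation, since the residual becomes $\tfrac12\sum_{i\ge 3,\,odd}\alpha_i^2K^{i^2}\le K^{9}(\theta_{1,1}-2\theta_{1,1}^2)$, which is doubly small (small Chebyshev tail mass and $K^{i^2}\ll K$ away from $\rho=1$); your $L^2$ justification of the term-by-term interchange is also more careful than the paper's. Two minor caveats: the identity $\mathbb E[Q(z)^2]=\theta_{1,1}$ behind your Parseval step is the LM-RFF centroid property and does not transfer verbatim to LM$^2$-RFF (whose self-consistency condition lives at the level of $z^2$), though this only affects how you express the residual's size, not the leading term; and your parenthetical at $\rho=0$ ("uses $X\perp Y$") should be handled with care, since $z_x$ and $z_y$ share the same $\tau$ and are therefore not independent at $\rho=0$ --- indeed your own exact formula shows the residual there is $\sum_{i\ge 3,\,odd}\alpha_i^2 e^{-i^2\gamma^2}$, tiny but not exactly zero.
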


Next, we provide an asymptotic analysis on the normalized quantized kernel estimate (\ref{est:normalize}) under LM scheme.

\begin{theorem}[Normalized estimator] \label{theo: mean-var-norm}
Under same setting as Theorem~\ref{theo:mean-var}, as $m\rightarrow \infty$,
\begin{align*}
    &\mathbb E[\hat K_{n,Q}]=\frac{\zeta_{1,1}}{\zeta_{2,0}}+\mathcal O(\frac{1}{m}),\ \ Var[\hat K_{n,Q}]=\frac{V_{n}}{m}+\mathcal O(\frac{1}{m^2}), \\
    &\hspace{0.1in} \textrm{with}\ \ V_n= \frac{\zeta_{2,2}}{\zeta_{2,0}^2}-\frac{2\zeta_{1,1}\zeta_{3,1}}{\zeta_{2,0}^3}+\frac{\zeta_{1,1}^2(\zeta_{4,0}+\zeta_{2,2})}{2\zeta_{2,0}^4}.
\end{align*}
In particular, $\mathbb E[\hat K_{n,Q}(u,v)]=1$ when $\rho=1$.
\end{theorem}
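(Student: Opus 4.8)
The plan is to treat $\hat K_{n,Q}$ as a smooth function of three empirical averages and apply the delta method (a second-order Taylor expansion). Write $A_m = \frac{1}{m}\sum_i Q(z_{u,i})Q(z_{v,i})$, $B_m = \frac{1}{m}\sum_i Q(z_{u,i})^2$, and $C_m = \frac{1}{m}\sum_i Q(z_{v,i})^2$, so that $\hat K_{n,Q} = A_m / \sqrt{B_m C_m} =: g(A_m, B_m, C_m)$. Since the $m$ random feature pairs are i.i.d., $(A_m, B_m, C_m)$ concentrates around its mean $(\mathbb E[A_m], \mathbb E[B_m], \mathbb E[C_m]) = (\zeta_{1,1}, \zeta_{2,0}, \zeta_{0,2})$; and by the exchangeability/symmetry of the RFF joint distribution noted after Theorem~\ref{theo:joint_RFF1}, $z_x$ and $z_y$ are identically distributed, hence $\mathbb E[Q(z_v)^2] = \mathbb E[Q(z_u)^2] = \zeta_{2,0}$ as well. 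First I would record the relevant per-coordinate (co)variances: $\mathrm{Var}(Q(z_u)Q(z_v)) = \zeta_{2,2} - \zeta_{1,1}^2$, $\mathrm{Var}(Q(z_u)^2) = \zeta_{4,0} - \zeta_{2,0}^2$, $\mathrm{Cov}(Q(z_u)Q(z_v), Q(z_u)^2) = \mathbb E[Q(z_u)^3 Q(z_v)] - \zeta_{1,1}\zeta_{2,0} = \zeta_{3,1} - \zeta_{1,1}\zeta_{2,0}$, and $\mathrm{Cov}(Q(z_u)^2, Q(z_v)^2) = \mathbb E[Q(z_u)^2Q(z_v)^2] - \zeta_{2,0}^2 = \zeta_{2,2} - \zeta_{2,0}^2$; each of these is $\mathcal O(1)$, so the covariance matrix of $(A_m,B_m,C_m)$ is $\frac{1}{m}\Sigma$ for a fixed $\Sigma$.

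Next I would carry out the expansion. For the mean, Taylor-expanding $g$ to second order about $(\zeta_{1,1},\zeta_{2,0},\zeta_{2,0})$ gives $\mathbb E[\hat K_{n,Q}] = g(\zeta_{1,1},\zeta_{2,0},\zeta_{2,0}) + \frac{1}{2}\sum_{p,q} \partial^2_{pq} g \cdot \mathrm{Cov}(\cdot) + \mathcal O(m^{-2})$; the leading term is $\zeta_{1,1}/\sqrt{\zeta_{2,0}\zeta_{2,0}} = \zeta_{1,1}/\zeta_{2,0}$, and every second-order correction is $\mathcal O(1/m)$ since the covariances are $\mathcal O(1/m)$ and the Hessian entries of $g$ are finite constants (note $\zeta_{2,0} = \mathbb E[Q(z)^2] > 0$, so $g$ is smooth in a neighborhood of the mean). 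That yields the claimed $\mathbb E[\hat K_{n,Q}] = \zeta_{1,1}/\zeta_{2,0} + \mathcal O(1/m)$. For the variance, the first-order delta method gives $\mathrm{Var}[\hat K_{n,Q}] = \nabla g^\top \big(\tfrac{1}{m}\Sigma\big) \nabla g + \mathcal O(m^{-2})$, where the gradient of $g$ at the mean is $\nabla g = \big(\tfrac{1}{\zeta_{2,0}},\ -\tfrac{\zeta_{1,1}}{2\zeta_{2,0}^{2}},\ -\tfrac{\zeta_{1,1}}{2\zeta_{2,0}^{2}}\big)$ (using $B = C = \zeta_{2,0}$ at the mean). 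Plugging in $\nabla g$ and the entries of $\Sigma$ listed above and collecting terms, the $\zeta_{1,1}^2$ terms collect into $\tfrac{\zeta_{1,1}^2}{2\zeta_{2,0}^4}\big(\mathrm{Var}(Q(z_u)^2) + \mathrm{Cov}(Q(z_u)^2,Q(z_v)^2)\big) + \tfrac{\zeta_{1,1}^2}{\zeta_{2,0}^4}\cdot(\text{something}) $ — more precisely, $\tfrac14\cdot\tfrac{\zeta_{1,1}^2}{\zeta_{2,0}^4}\big[(\zeta_{4,0}-\zeta_{2,0}^2) + 2(\zeta_{2,2}-\zeta_{2,0}^2) + (\zeta_{4,0}-\zeta_{2,0}^2)\big]$ from the two $C$- and $B$-slots plus their cross term, which simplifies; together with $\tfrac{1}{\zeta_{2,0}^2}(\zeta_{2,2}-\zeta_{1,1}^2)$ from the $A$-slot and $-2\cdot\tfrac{1}{\zeta_{2,0}}\cdot\tfrac{\zeta_{1,1}}{2\zeta_{2,0}^2}(\zeta_{3,1}-\zeta_{1,1}\zeta_{2,0})\cdot 2$ from the two $A$–$B$/$A$–$C$ cross terms, the whole thing should collapse to $V_n = \tfrac{\zeta_{2,2}}{\zeta_{2,0}^2} - \tfrac{2\zeta_{1,1}\zeta_{3,1}}{\zeta_{2,0}^3} + \tfrac{\zeta_{1,1}^2(\zeta_{4,0}+\zeta_{2,2})}{2\zeta_{2,0}^4}$ after the constant ($-\zeta_{2,0}^2$-proportional) pieces cancel against the $-2\zeta_{1,1}\zeta_{2,0}$ piece from the $A$–$B$ covariance. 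I would verify this cancellation carefully.

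The special case $\rho = 1$ is immediate: when $\rho = 1$ we have $z_x = z_y$ almost surely, so $Q(z_{u,i}) = Q(z_{v,i})$ for every $i$, hence the numerator of $\hat K_{n,Q}$ equals $\sum_i Q(z_{u,i})^2$ and the denominator equals $\sqrt{(\sum_i Q(z_{u,i})^2)^2} = \sum_i Q(z_{u,i})^2$, giving $\hat K_{n,Q} = 1$ deterministically (so the $\mathcal O(1/m)$ term is exactly $0$ there, consistent with $\zeta_{1,1} = \zeta_{2,0}$).

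The main obstacle I anticipate is not any single step but the bookkeeping in the variance computation: one must track six partial-derivative-weighted (co)variance terms, substitute the correct moment expressions (being careful that $\mathbb E[Q(z_u)^2Q(z_v)^2] = \zeta_{2,2}$ and $\mathbb E[Q(z_u)^3 Q(z_v)] = \zeta_{3,1}$ under the symmetric joint law), and confirm that all the $\zeta_{2,0}^2$-type artifacts cancel so that the answer depends only on $\zeta_{1,1}, \zeta_{2,0}, \zeta_{2,2}, \zeta_{3,1}, \zeta_{4,0}$ as stated. A secondary technical point worth stating explicitly is the justification of the remainder bounds: since $Q$ is bounded (its range is the finite codebook), all moments $\zeta_{s,t}$ are finite and $\zeta_{2,0}$ is bounded away from $0$, so the Taylor remainders in the delta method are genuinely $\mathcal O(1/m)$ and $\mathcal O(1/m^2)$ respectively — this can be made rigorous either via a standard delta-method theorem or by bounding the third-order terms using the $\mathcal O(m^{-3/2})$ third central moments of the empirical averages together with the boundedness of the third derivatives of $g$ on a fixed neighborhood of the mean.
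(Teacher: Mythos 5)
Your proposal is correct and follows essentially the same route as the paper's proof: a delta-method/Taylor expansion of $\hat K_{n,Q}=A_m/\sqrt{B_mC_m}$ around the mean $(\zeta_{1,1},\zeta_{2,0},\zeta_{2,0})$, using exactly the same gradient $\bigl(\tfrac{1}{\zeta_{2,0}},-\tfrac{\zeta_{1,1}}{2\zeta_{2,0}^2},-\tfrac{\zeta_{1,1}}{2\zeta_{2,0}^2}\bigr)$ and covariance entries $\zeta_{2,2}-\zeta_{1,1}^2$, $\zeta_{3,1}-\zeta_{1,1}\zeta_{2,0}$, $\zeta_{4,0}-\zeta_{2,0}^2$, $\zeta_{2,2}-\zeta_{2,0}^2$ that the paper assembles into $Cov(a,b,c)$, and the cancellation you flag does go through to give the stated $V_n$. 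Your explicit treatment of the $\rho=1$ case (the estimator is identically $1$) and of the remainder bounds via boundedness of $Q$ is a minor addition beyond what the paper writes, but the argument is the same.
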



\vspace{0.1in}
\begin{observation} \label{obv3}
$\mathbb E[\hat K_{n,Q}]\approx \frac{2\theta_{1,1}^2}{\zeta_{2,0}}K(u,v)$ as $m\rightarrow \infty$.
\end{observation}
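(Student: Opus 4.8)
The plan is to read Observation~\ref{obv3} off the two preceding results with essentially no fresh computation: Theorem~\ref{theo: mean-var-norm} already pins the limiting mean at $\zeta_{1,1}/\zeta_{2,0}$, so all that remains is to identify the numerator $\zeta_{1,1}=\mathbb E[Q(z_x)Q(z_y)]$ up to the ``negligible'' Chebyshev tail that appears in Theorem~\ref{theo:mean-var}.

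First I would record the exact bookkeeping identity linking $\zeta_{1,1}$ to the unnormalized estimator: since $\hat K_{Q}(u,v)=\frac{2}{m}\sum_{i=1}^m Q(z_{u,i})Q(z_{v,i})$ averages $m$ identically distributed copies of $2\,Q(z_x)Q(z_y)$, one has $\mathbb E[\hat K_{Q}(u,v)]=2\zeta_{1,1}$, i.e.\ $\zeta_{1,1}=\tfrac12\mathbb E[\hat K_{Q}(u,v)]$. Substituting the mean formula of Theorem~\ref{theo:mean-var} gives $\zeta_{1,1}=2\theta_{1,1}^2 K(u,v)+\tfrac12\sum_{i=1,\mathrm{odd}}^\infty\sum_{j=3,\mathrm{odd}}^\infty\alpha_i\alpha_j\psi_{i,j}$. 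As a sanity check on the leading constant, expanding $Q$ in Chebyshev polynomials directly gives $\zeta_{1,1}=\sum_{i,j\,\mathrm{odd}}\alpha_i\alpha_j\psi_{i,j}$, whose $(1,1)$ term is $\alpha_1^2\psi_{1,1}=(2\theta_{1,1})^2\cdot\tfrac12K(u,v)=2\theta_{1,1}^2K(u,v)$, using $\alpha_1=2\theta_{1,1}$ and $\psi_{1,1}=\mathbb E[z_xz_y]=\tfrac12K(u,v)$, the latter being the RFF unbiasedness $\mathbb E[F(u)F(v)]=K(u,v)$ rescaled by $F=\sqrt2\cos$.

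Next I would bound the residual double sum exactly as in the remarks following Theorem~\ref{theo:mean-var}: using $|\psi_{i,j}|=|\mathbb E[T_i(z_x)T_j(z_y)]|\le\mathbb E[T_i(z_x)^2]=\tfrac12$ together with the small high-order energy $\sum_{i=3,\mathrm{odd}}^\infty\alpha_i^2=2(\theta_{1,1}-2\theta_{1,1}^2)$, a Cauchy--Schwarz estimate on the bilinear form shows the residual is of order $\sqrt{(\theta_{1,1}-2\theta_{1,1}^2)\,\theta_{1,1}}$, which is numerically tiny and decreases with the number of quantization bits; this is precisely the content of Observation~\ref{obv1}. Hence $\zeta_{1,1}\approx2\theta_{1,1}^2K(u,v)$, and plugging this into $\mathbb E[\hat K_{n,Q}]=\zeta_{1,1}/\zeta_{2,0}+\mathcal O(1/m)$ from Theorem~\ref{theo: mean-var-norm} yields $\mathbb E[\hat K_{n,Q}]\approx\frac{2\theta_{1,1}^2}{\zeta_{2,0}}K(u,v)$ as $m\to\infty$, which is the claim.

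The only point I would state carefully rather than gloss is what the symbol ``$\approx$'' absorbs: (i) the $\mathcal O(1/m)$ finite-sample term from the delta-method expansion of the ratio estimator in Theorem~\ref{theo: mean-var-norm}, and (ii) the Chebyshev tail $\tfrac12\sum_{i\ge1,\,j\ge3}\alpha_i\alpha_j\psi_{i,j}$ in the mean of $\hat K_{Q}$. Term (i) vanishes in the stated $m\to\infty$ limit; term (ii) does not, but it is uniformly small (it is identically $0$ when $Q$ is the identity map, and shrinks as the bit-depth grows), so beyond the Cauchy--Schwarz bound above no further estimate is needed. I do not expect a substantive obstacle here, since both ingredients are already established; the write-up is simply a matter of chaining Theorems~\ref{theo:mean-var} and~\ref{theo: mean-var-norm} together and being explicit about the approximation.
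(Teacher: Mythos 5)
Your proposal is correct and follows essentially the same route the paper takes: the paper gives no separate proof of Observation~\ref{obv3}, obtaining it exactly as you do by combining $\mathbb E[\hat K_{n,Q}]=\zeta_{1,1}/\zeta_{2,0}+\mathcal O(1/m)$ from Theorem~\ref{theo: mean-var-norm} with $\zeta_{1,1}=\tfrac12\mathbb E[\hat K_Q]\approx 2\theta_{1,1}^2K(u,v)$, i.e.\ the negligible-Chebyshev-tail argument behind Observation~\ref{obv1}. Your explicit identification $\alpha_1=2\theta_{1,1}$, $\psi_{1,1}=\tfrac12 K(u,v)$ and the Cauchy--Schwarz control of the tail only make more precise what the paper asserts heuristically (and checks numerically in Figure~\ref{fig_bias}).
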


Observation~\ref{obv1} and \ref{obv3} says that, $\mathbb E[\hat K_{Q}]$ and $\mathbb E[\hat K_{n,Q}]$ approximately equal to some scaled version of true kernel, which will motivate our discussion in Section~\ref{sec:approximation error} on the robust kernel approximation error metrics.

\begin{figure}[h]
    \begin{center}
        \mbox{
        \includegraphics[width=2.2in]{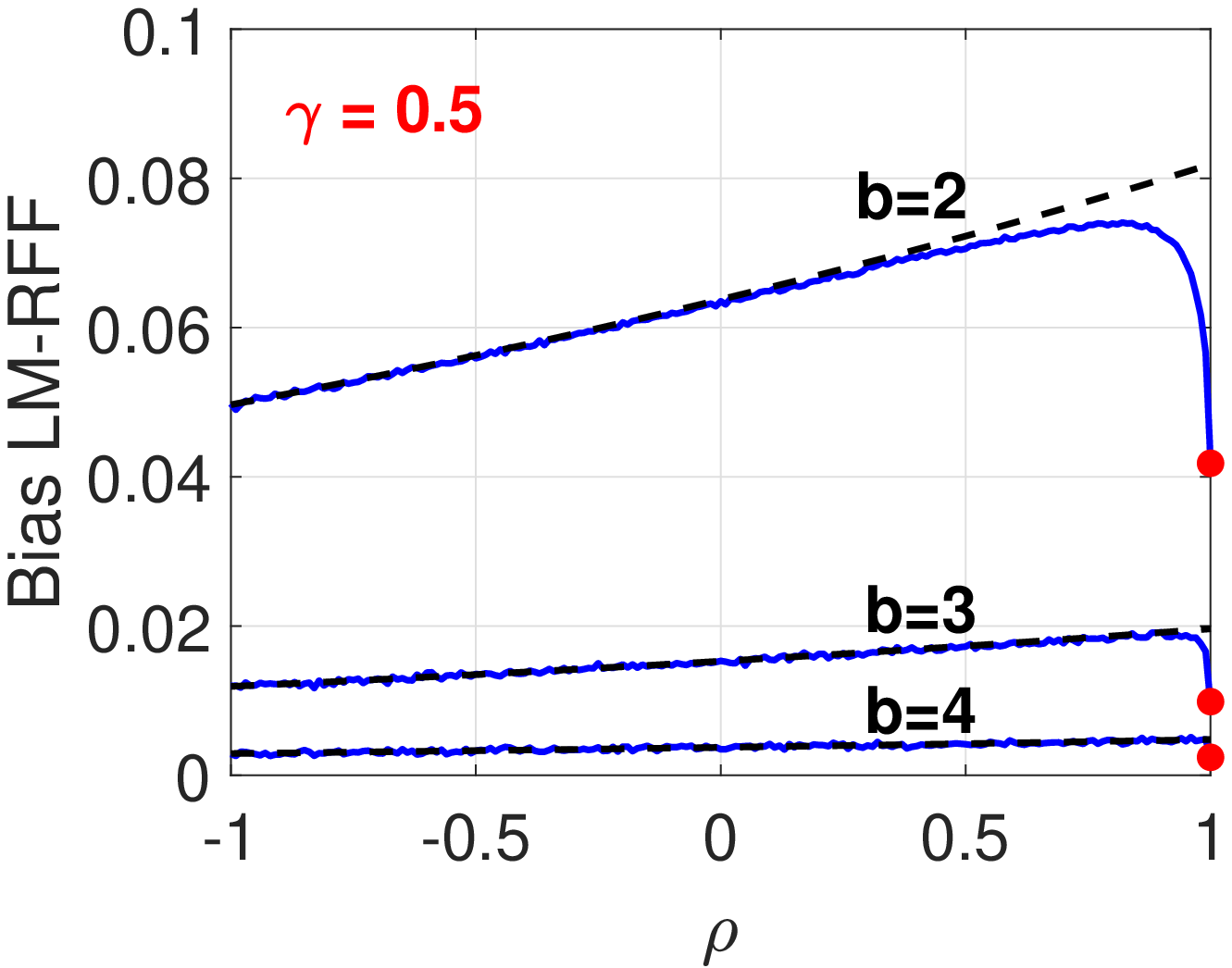}
        \includegraphics[width=2.2in]{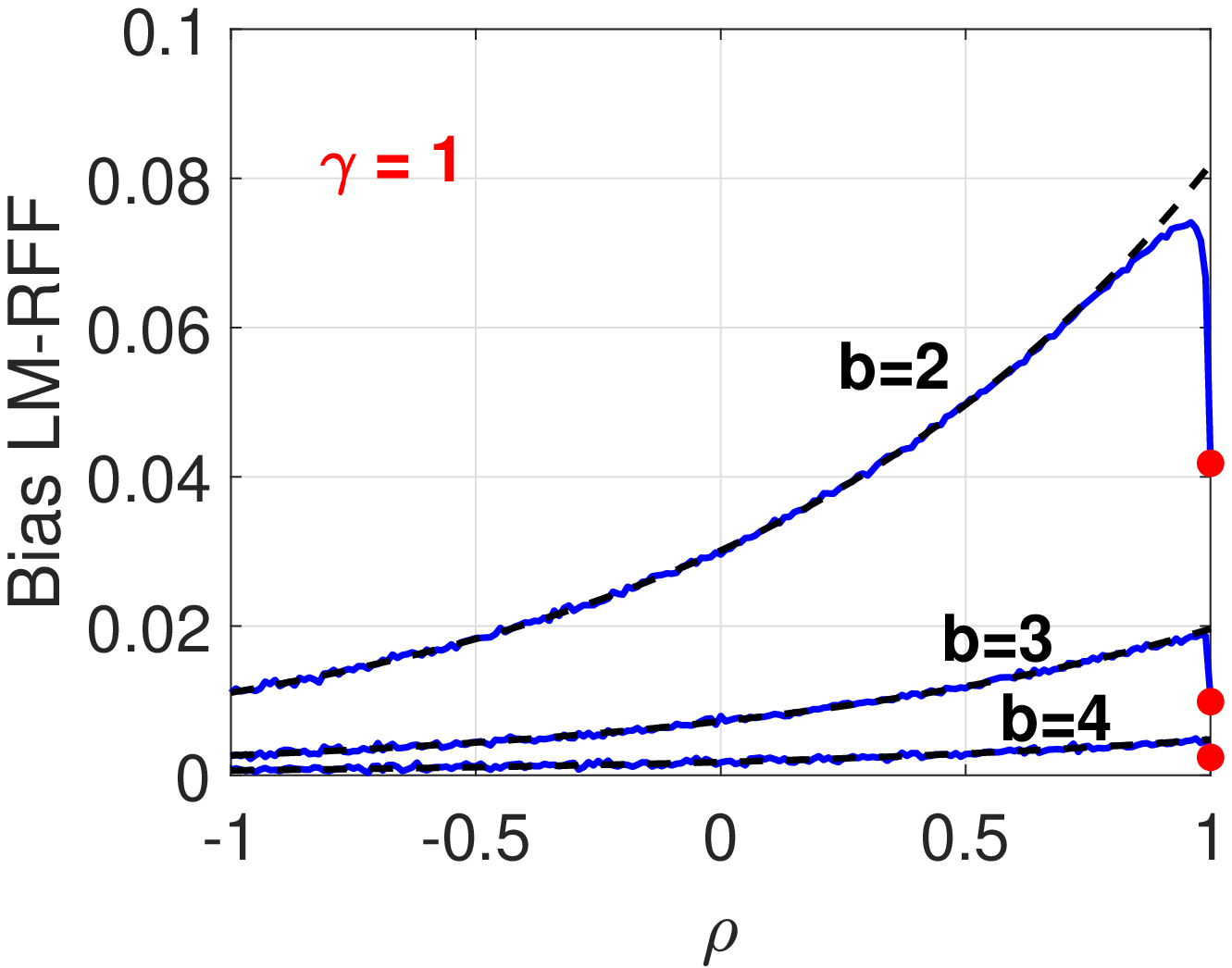}
        \includegraphics[width=2.2in]{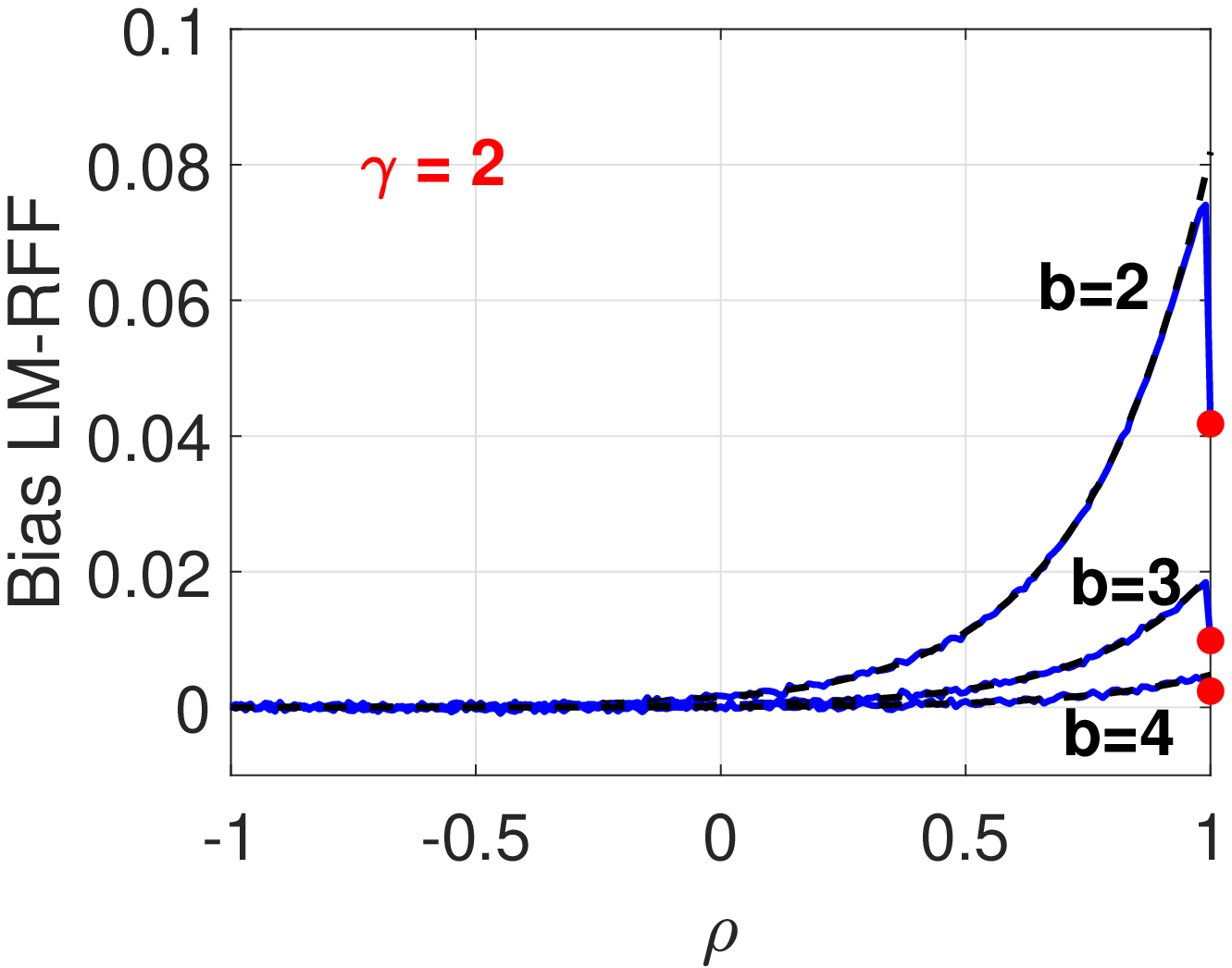}
        }
        \mbox{
        \includegraphics[width=2.2in]{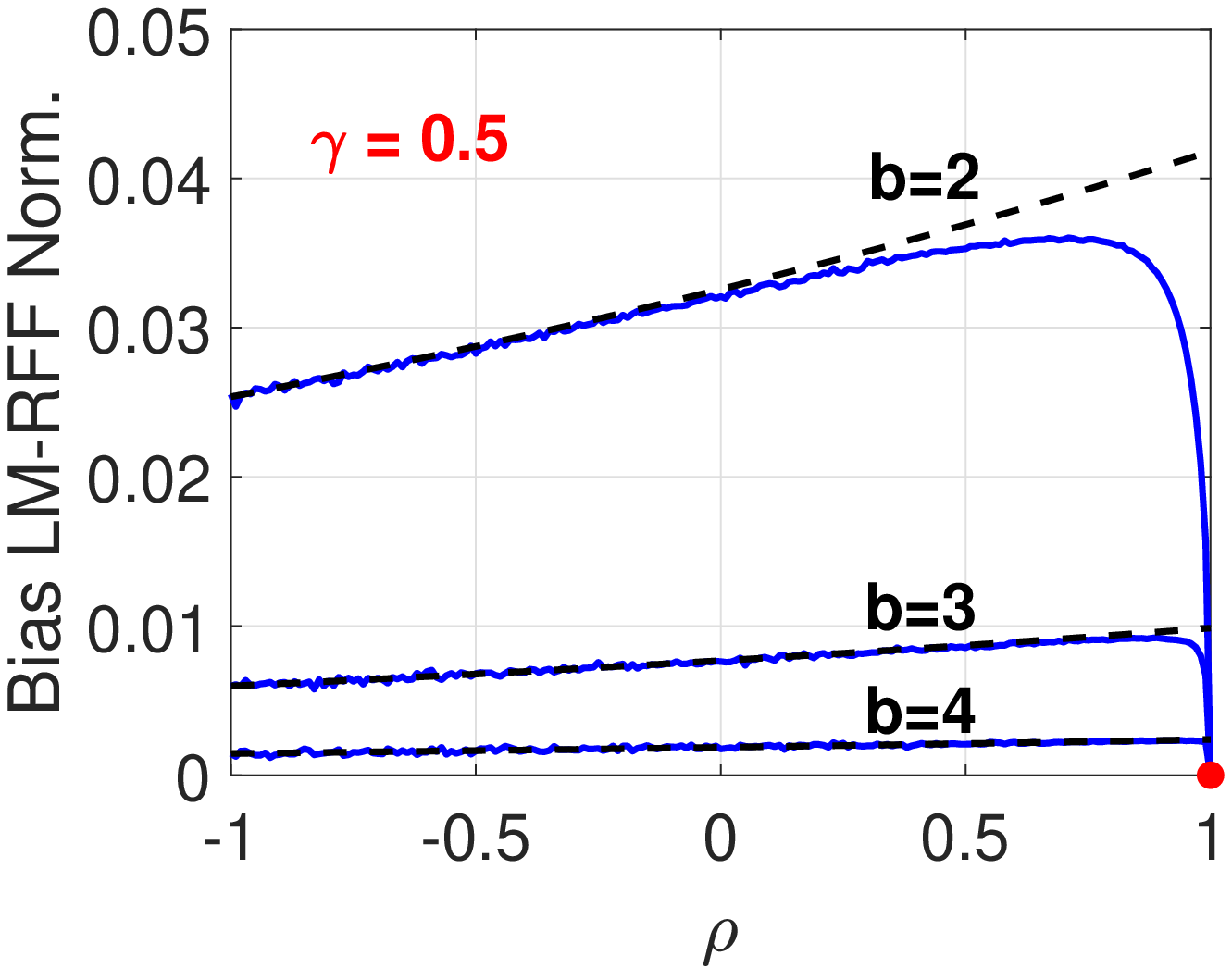}
        \includegraphics[width=2.2in]{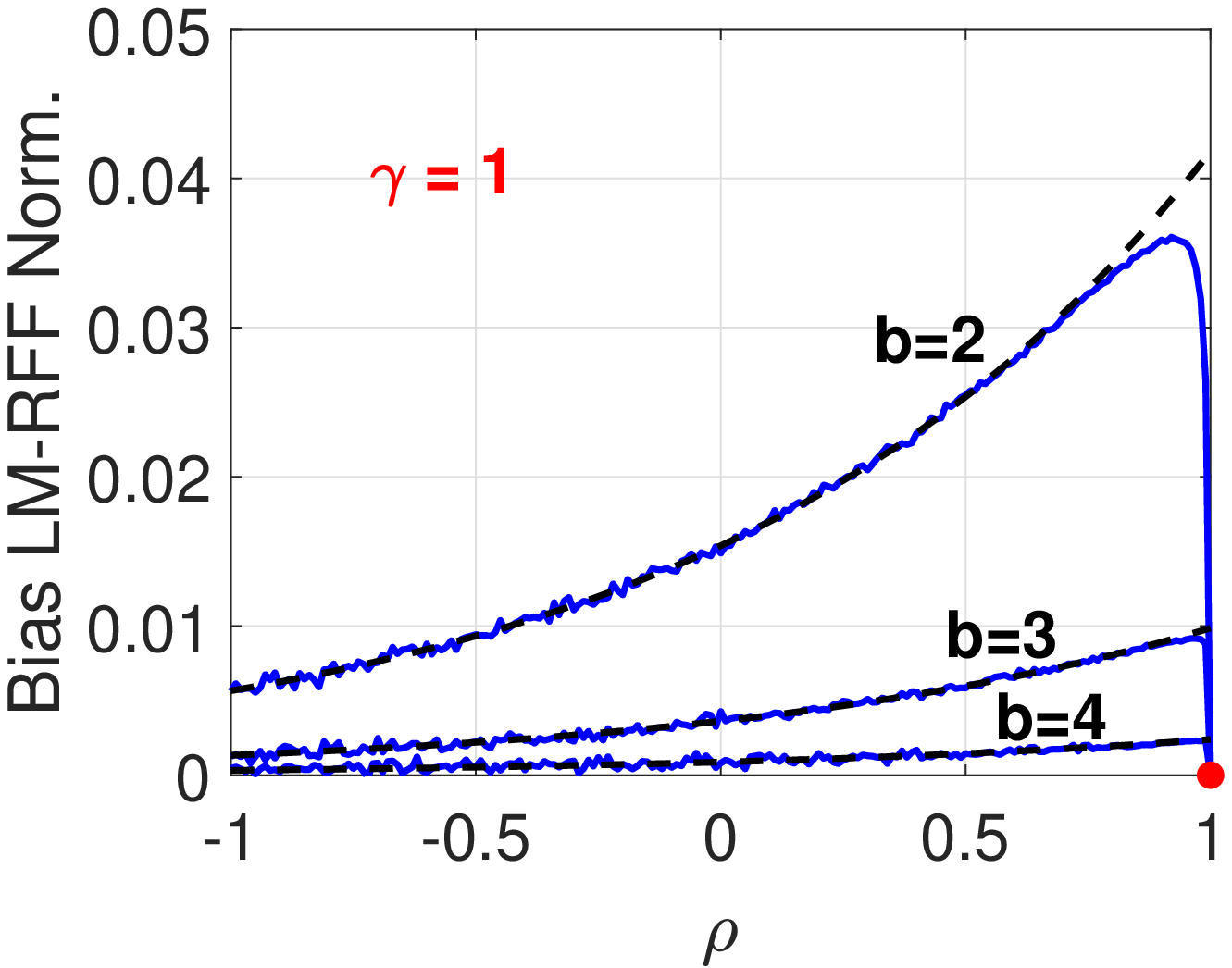}
        \includegraphics[width=2.2in]{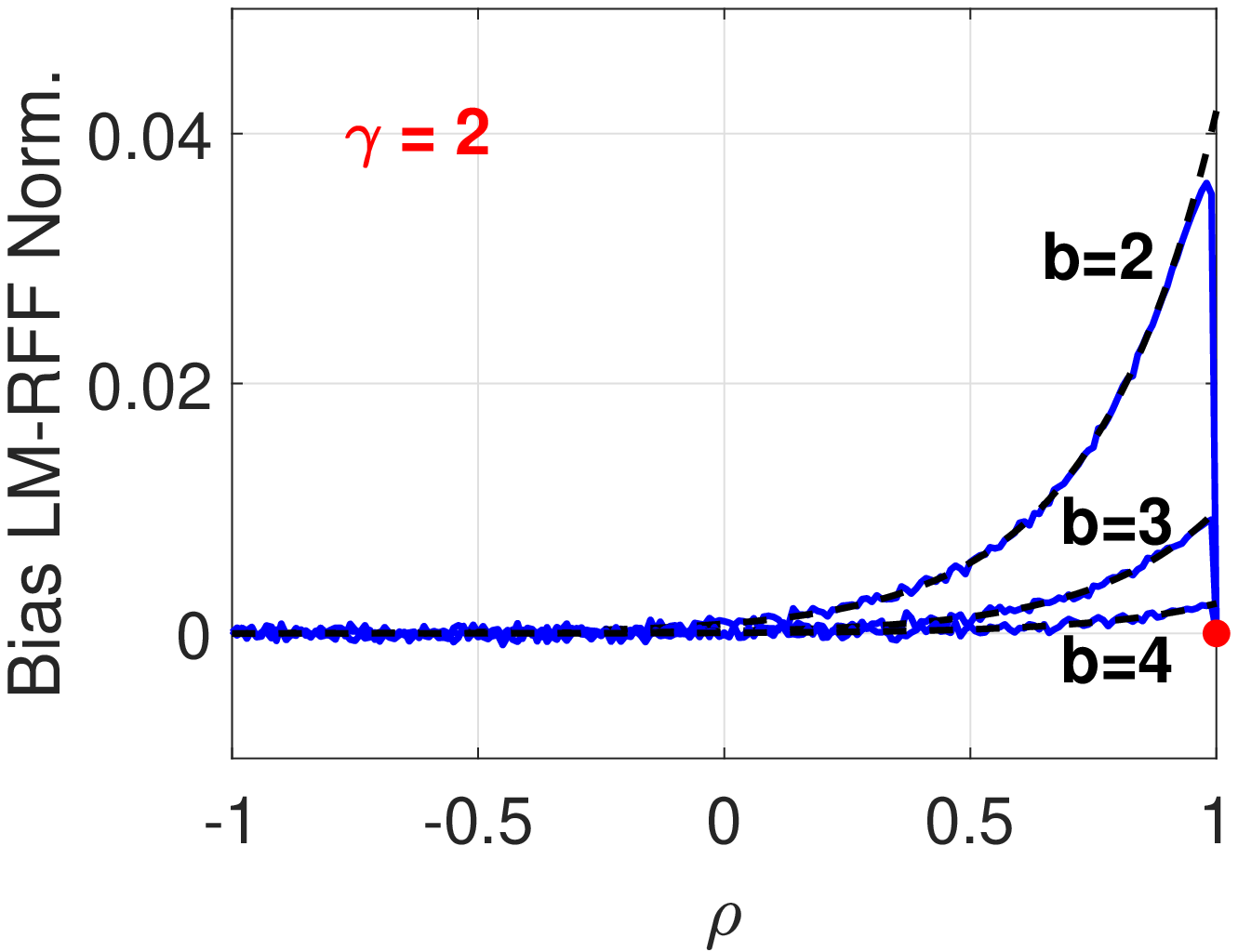}
        }
    \end{center}
    \vspace{-0.3in}
	\caption{Observation~\ref{obv1} and~\ref{obv3} (black dash curves) vs. empirical bias (blue curves) of LM-RFF. Red dots are the biases given in the theorems at specific $\rho$ values.}
	\label{fig_bias}
\end{figure}

\vspace{0.1in}
\noindent\textbf{Validation.}\hspace{0.1in}We plot the empirical bias of LM-RFF against Observations~\ref{obv1} and~\ref{obv3} in Figure~\ref{fig_bias}. As we see, the proposed surrogates for bias align with true biases very well when $\rho$ is not very close to $1$. The biases shrink to $0$ as $b$ increases (e.g., negligibly $\mathcal O(10^{-3})$ with $b=4$). As $\rho\rightarrow 1$, at some "disjoint point" the absolute biases have sharp drops and quickly converge to the theoretical values (red dots) given in Theorem~\ref{theo:mean-var} and~\ref{theo: mean-var-norm}. As $b$ or $\gamma$ increases, the ``disjoint point'' gets closer to $\rho=1$.

\begin{figure}[h!]
    \begin{center}
        \mbox{
        \includegraphics[width=2.2in]{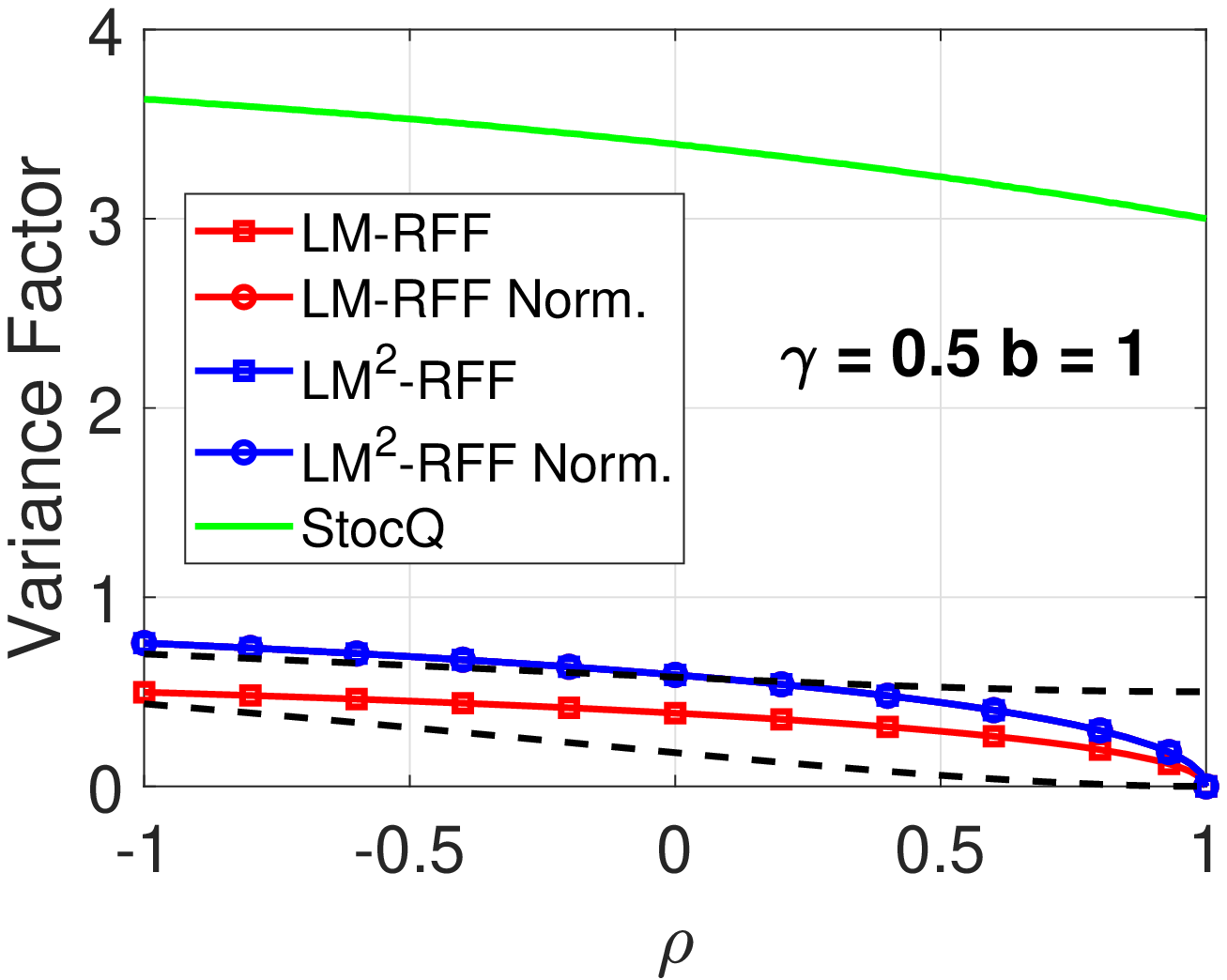}
        \includegraphics[width=2.2in]{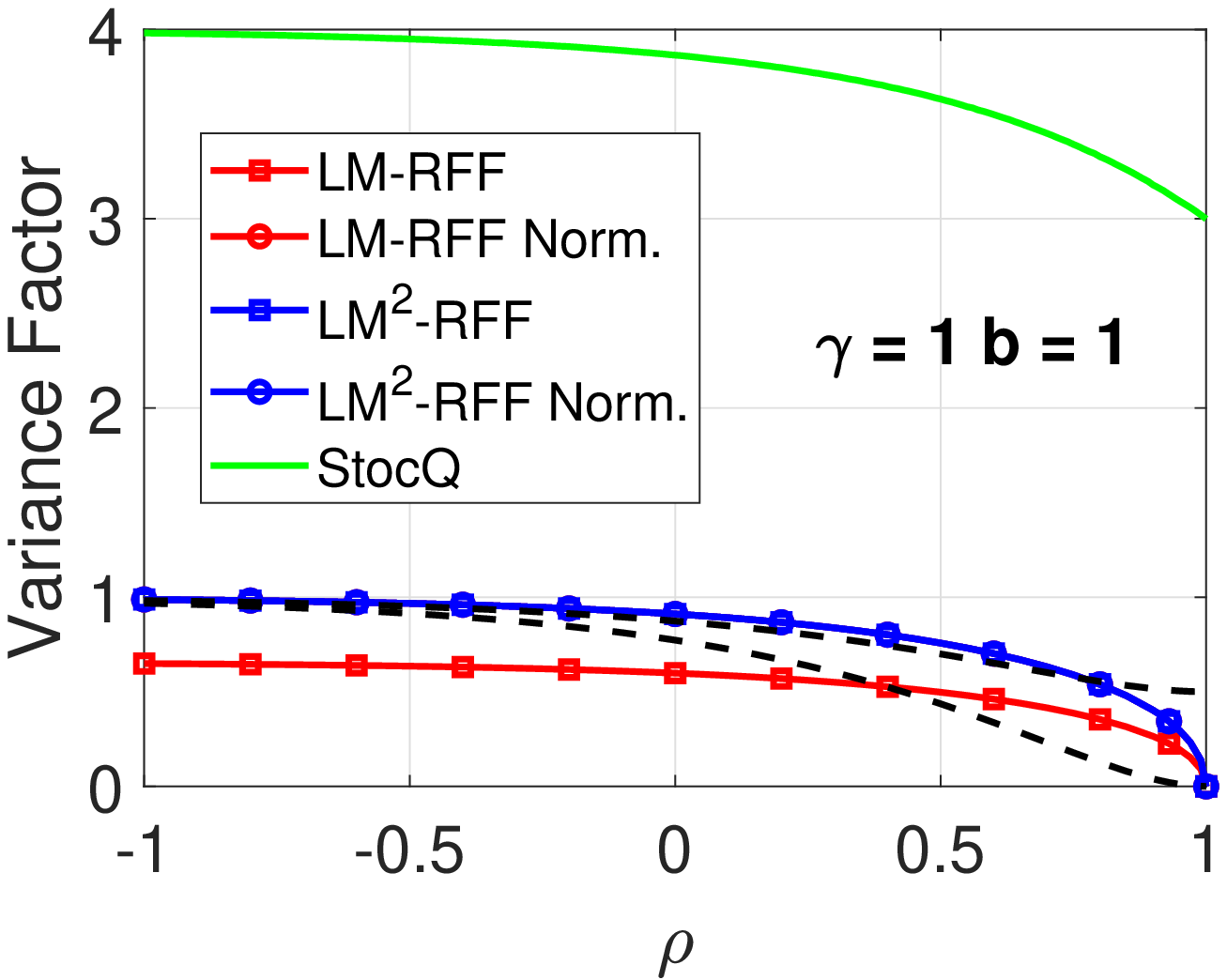}
        \includegraphics[width=2.2in]{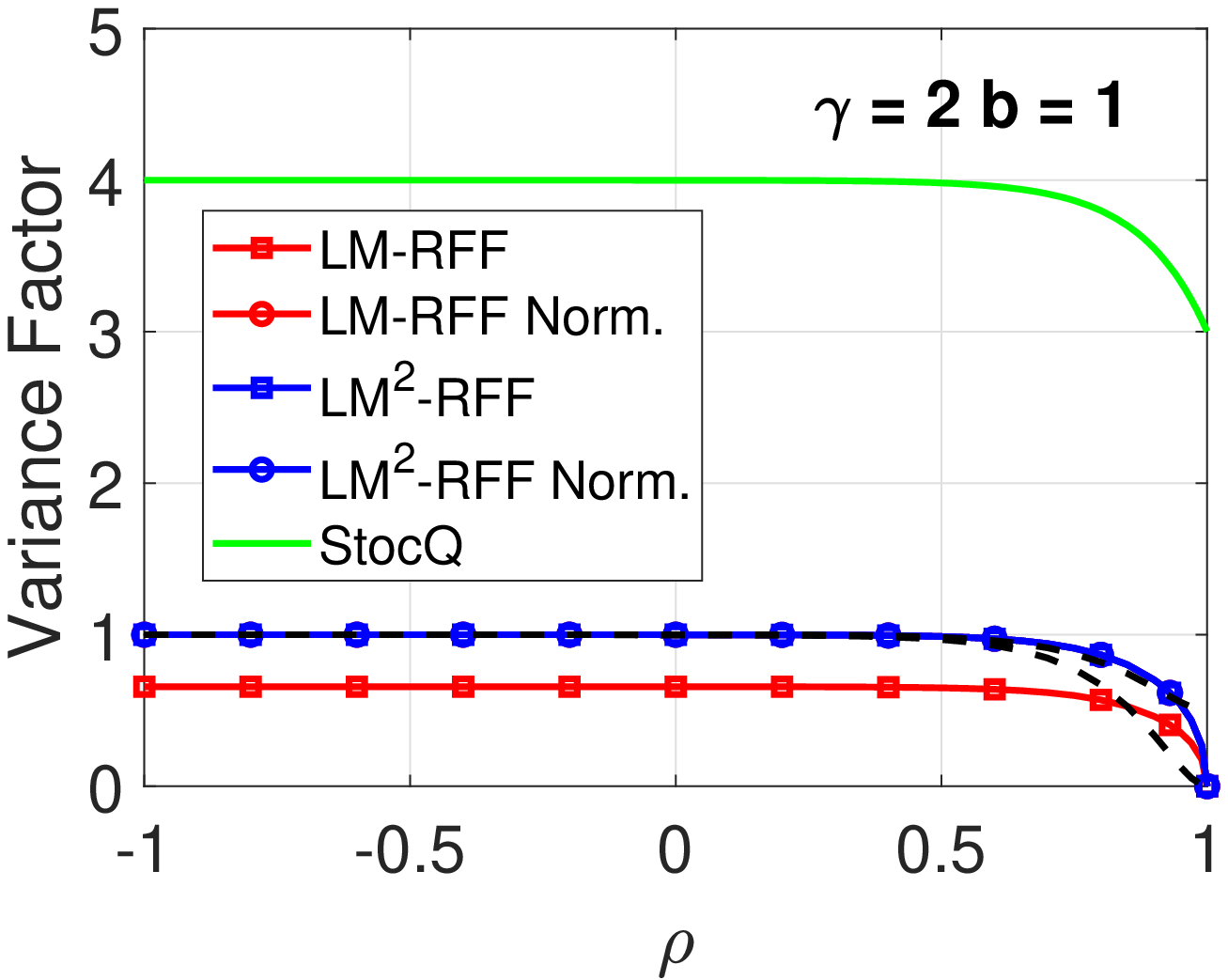}
        }
        \mbox{
        \includegraphics[width=2.2in]{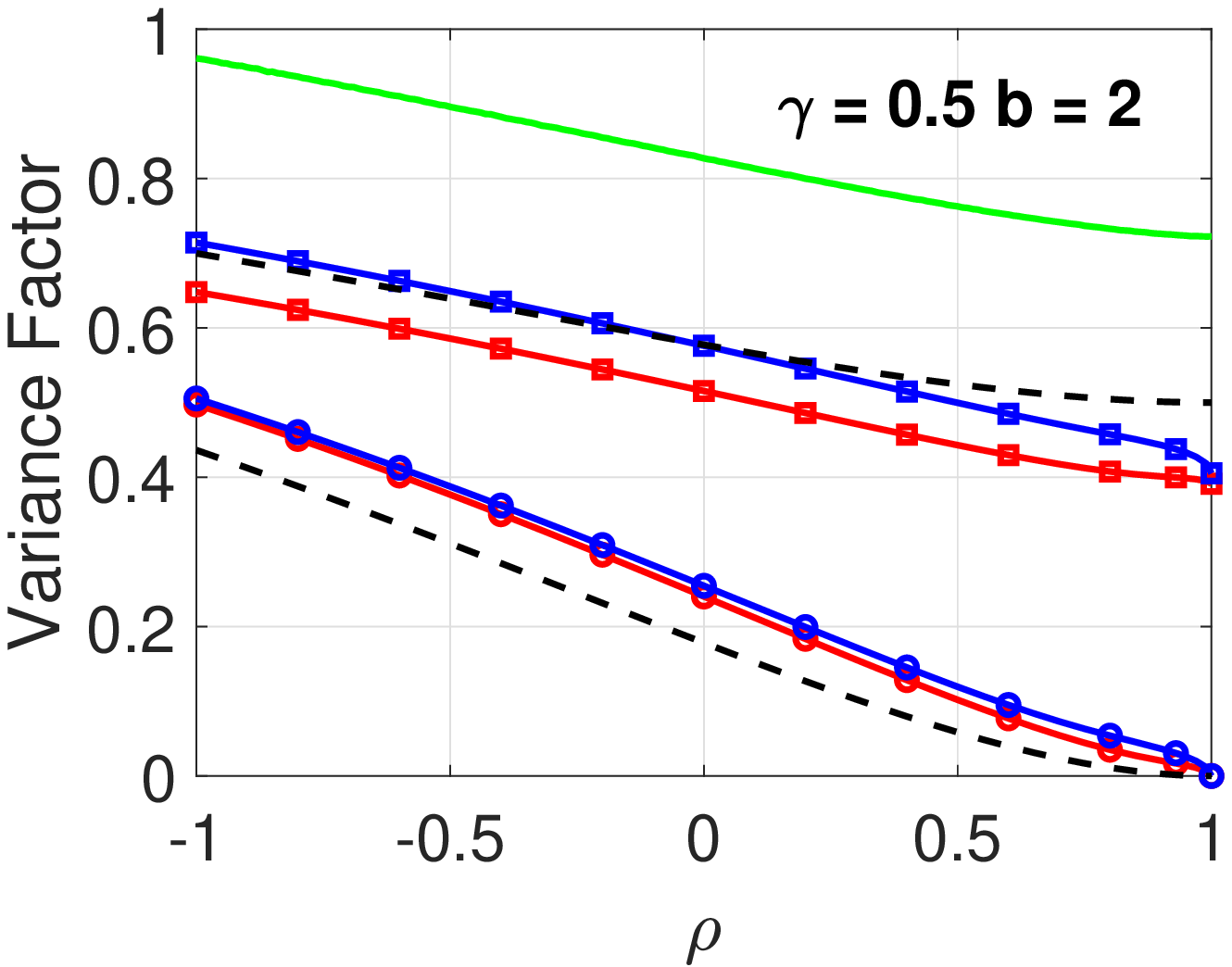}
        \includegraphics[width=2.2in]{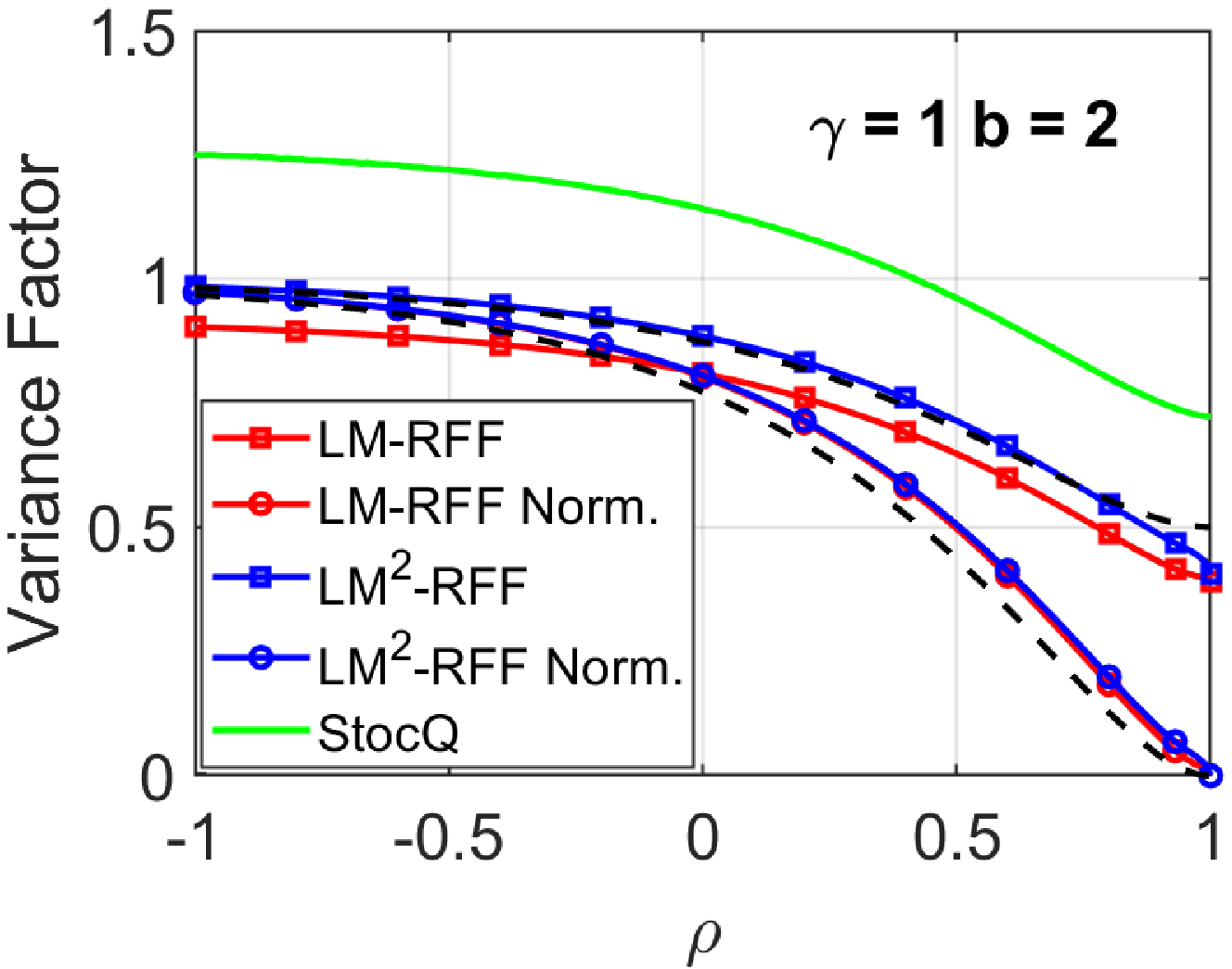}
        \includegraphics[width=2.2in]{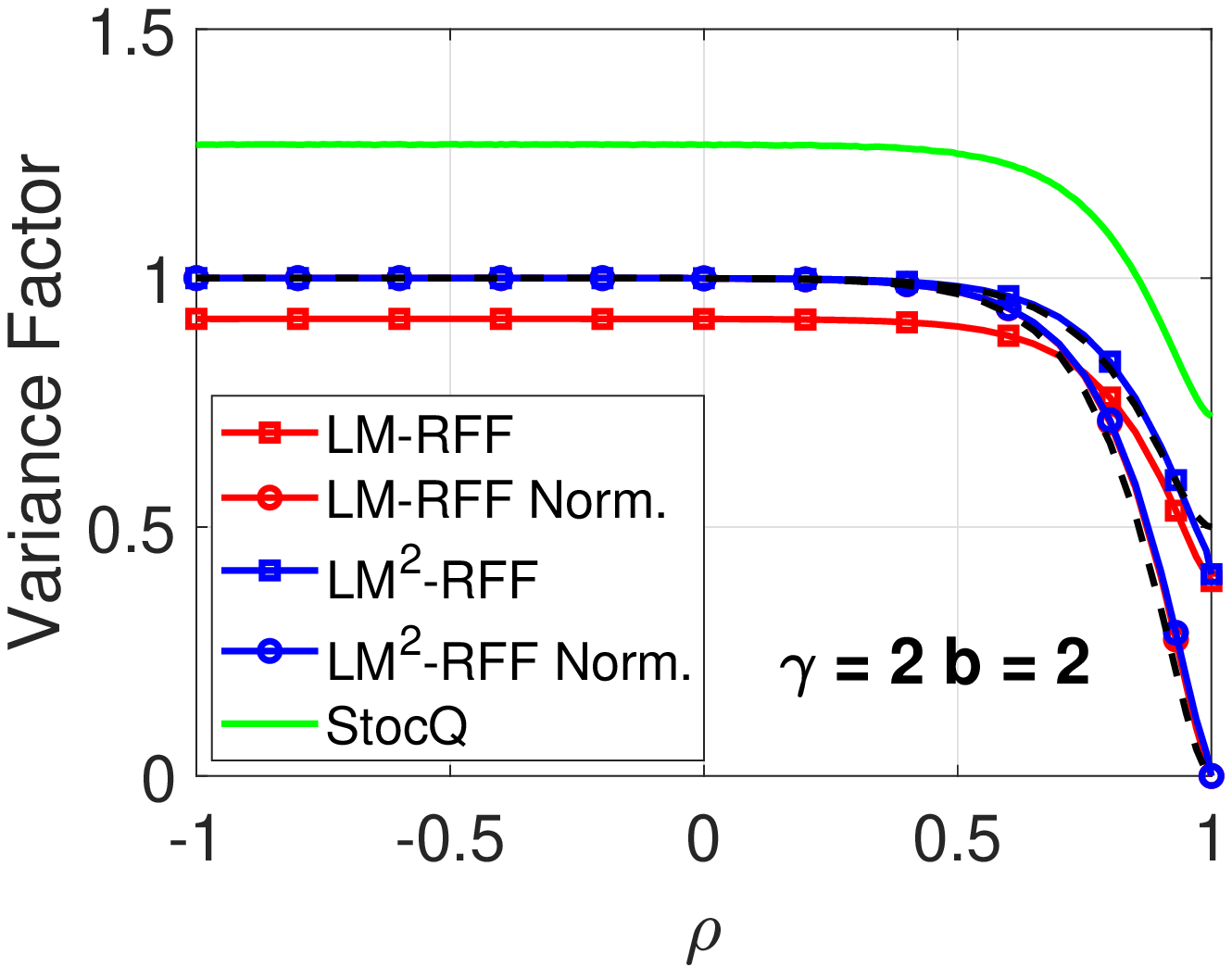}
        }
    \end{center}
    \vspace{-0.25in}
	\caption{Variance (scaled by $m$) of StocQ, LM, and normalized LM kernel estimators, with $\gamma=0.5, 1, 2$. The dashed curves are the variances of corresponding full-precision estimators, to which the variance of quantized estimators converges as $b$ increases.}
	\label{fig:exact_variance}
\end{figure}

\subsection{Variance Comparisons}\label{sec:variance}

Figure~\ref{fig:exact_variance} provides variance comparisons, where full-precision estimator variances are plotted for reference. As $b$ gets larger, the variances of LM-type quantized estimators converges to those of full-precision estimators. The variance of StocQ is significantly larger than RFF and LM quantization, especially when $b=1,2$. This to a good extent explains why StocQ performs poorly in approximate kernel learning with low bits (Section~\ref{sec:experiment}).

\noindent\textbf{Variance of debiased kernel estimates.}\hspace{0.1in} As shown previously, LM estimators are slightly biased which brings theoretical challenges on finding a method to ``properly'' compare their variances. In this paper, we investigate the concept of ``debiased variance'', which refers to the estimator variance after  bias corrections.

\vspace{0.1in}

\begin{definition}[DB-variance] \label{def:debias variance}
For data points $u$ and $v$ with $\rho=u^Tv$, and a kernel estimator $\hat K(u,v;\rho)$ with $\mathbb E[\hat K]=E(\rho)>0$ and $Var[\hat K]=V(\rho)$, the debiased variance of $\hat K(u,v;\rho)$ at $\rho$ is defined as $Var^{db}[\hat K]=V(\rho)\cdot K(\rho)^2/E(\rho)^2$.
\end{definition}

Note that, the debiasing step is only for analytical purpose. Intuitively, Definition~\ref{def:debias variance} is reasonable in that it compares the variation of different estimation procedures given that they have same mean. It is worth mentioning that, DB-variance is invariant of linear scaling, i.e., $c\hat K$ and $\hat K$ have same DB-variance for $c>0$. Classical metrics for estimation quality, such as the Mean Squared Error (MSE), might be largely affected by such simple scaling. Note that, the DB-variance of all 1-bit estimators (both simple and normalized) from fixed quantizers are essentially identical. This can be easily verified by writing every 1-bit quantizer as $Q(z)=sign(z)\cdot C_Q$ for some $C_Q>0$ and substituting it into (\ref{est:Q estimator-I}) and (\ref{est:normalize}). Thus, we will focus on multi-bit quantizers (i.e., $b\geq 2$).

\vspace{0.1in}
\noindent\textbf{LM-RFF v.s. LM$^2$-RFF.}\hspace{0.1in}In Figure~\ref{fig:var-type1 vs square}, we provide the DB-variance ratio of LM$^2$-RFF estimator against that of LM-RFF estimator in the 2-bit case. (The observed pattern is the same for more bits.) For the simple kernel estimator, we see that in general LM-RFF has smaller DB-variance. Yet, the DB-variance of LM$^2$-RFF sharply drops towards 0 and beats LM-RFF as $\rho\rightarrow 1$, i.e., in high similarity region, which meets the goal of LM$^2$-RFF quantizer design (to favor high similarity region). However, for normalized estimators, $\hat K_{n,Q}$ has consistently smaller DB-variance than $\hat K_{n,Q,(2)}$.

\begin{figure}[h]
    \begin{center}
        \mbox{
        \includegraphics[width=2.7in]{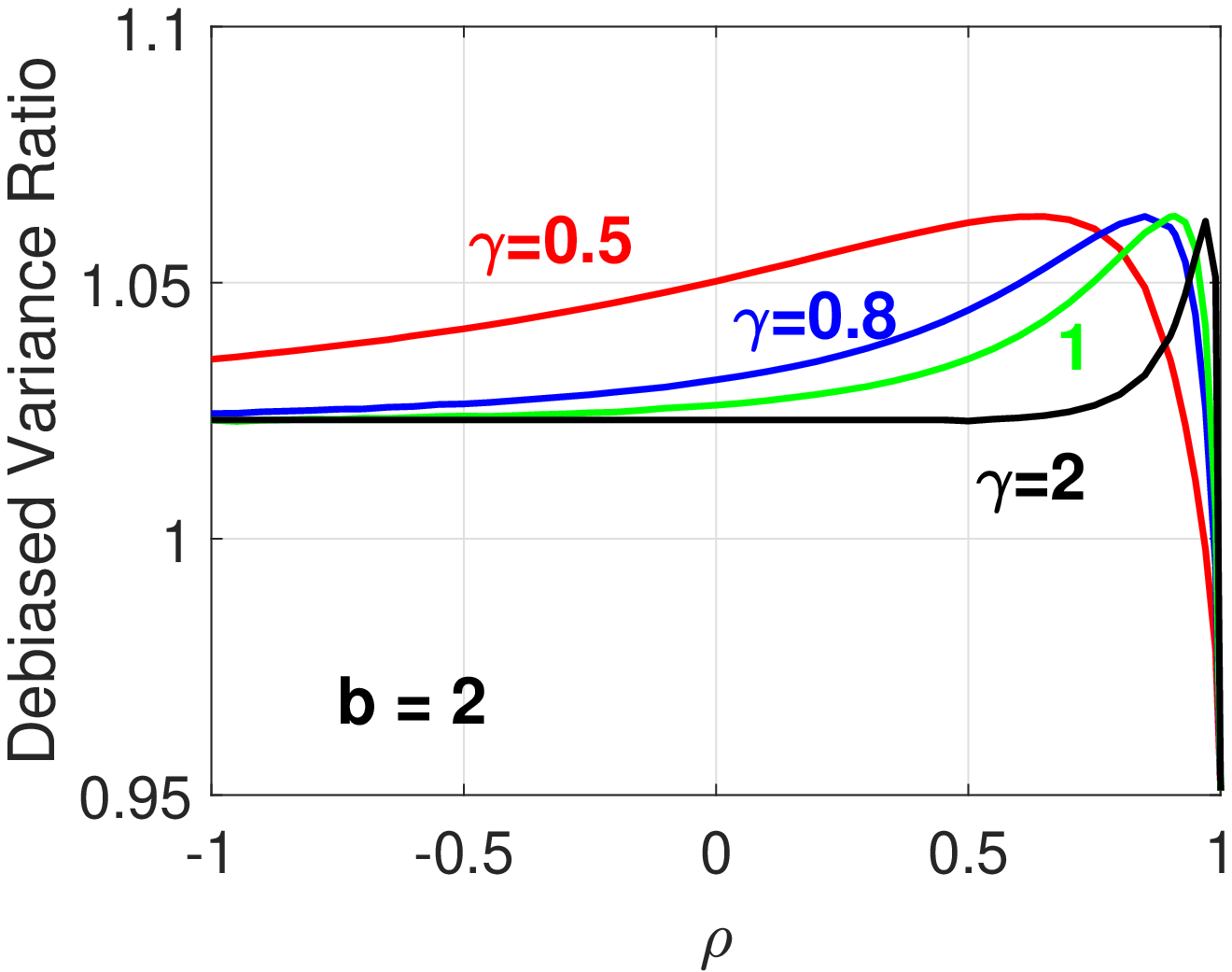}
        \includegraphics[width=2.7in]{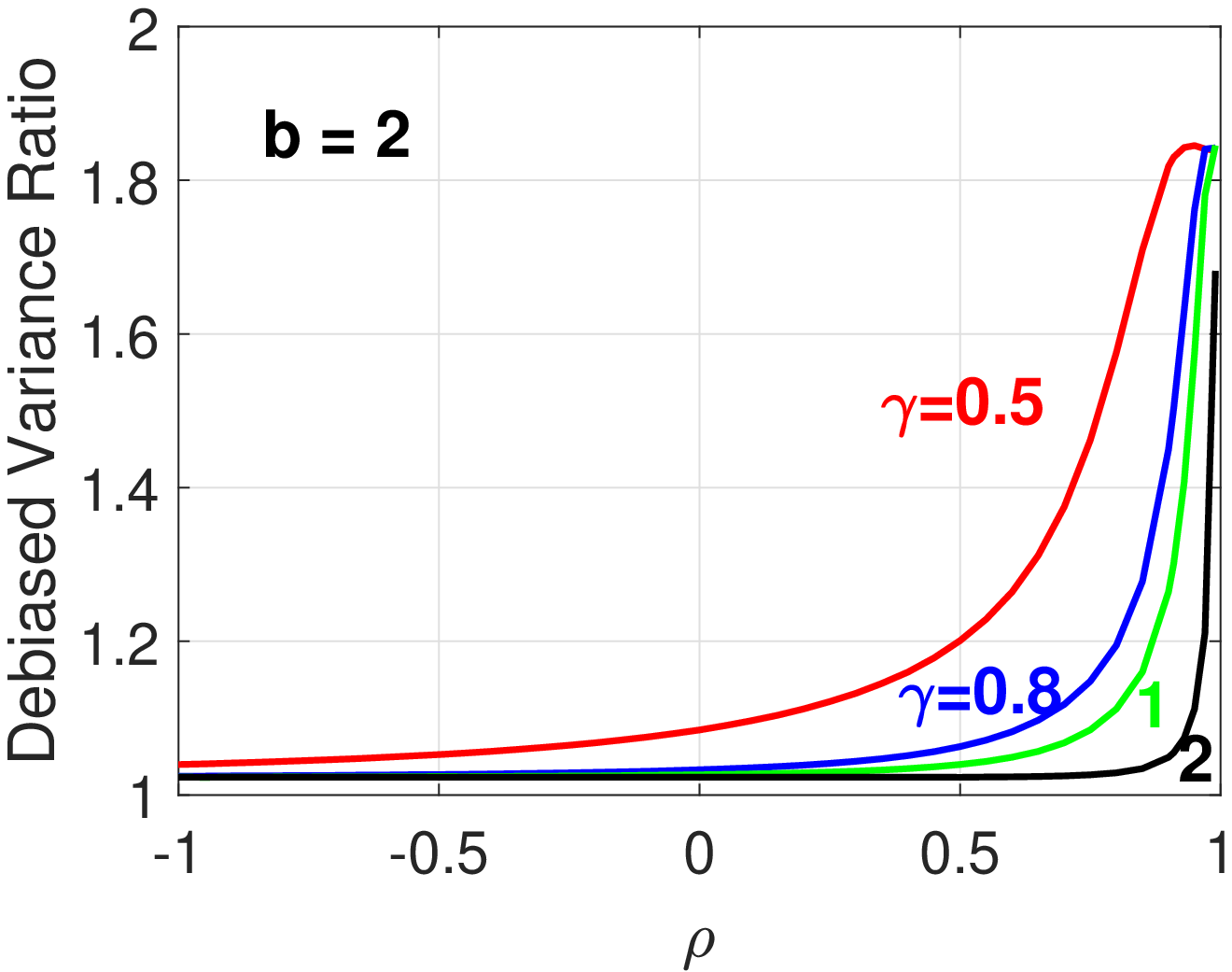}
        }
    \end{center}
    \vspace{-0.2in}
	\caption{Debiased variance ratio of LM$^2$-RFF over LM-RFF, at various $\gamma$, $b=2$. \textbf{Left panel:} vanilla estimator $\frac{Var^{db}[\hat K_{Q,(2)}]}{Var^{db}[\hat K_{Q}]}$. \textbf{Right panel:} normalized estimator $\frac{Var^{db}[\hat K_{n,Q,(2)}]}{Var^{db}[\hat K_{n,Q}]}$.}
	\label{fig:var-type1 vs square}
	\vspace{-0.05in}
\end{figure}

\begin{figure}[h]
    \begin{center}
        \mbox{
        \includegraphics[width=2.7in]{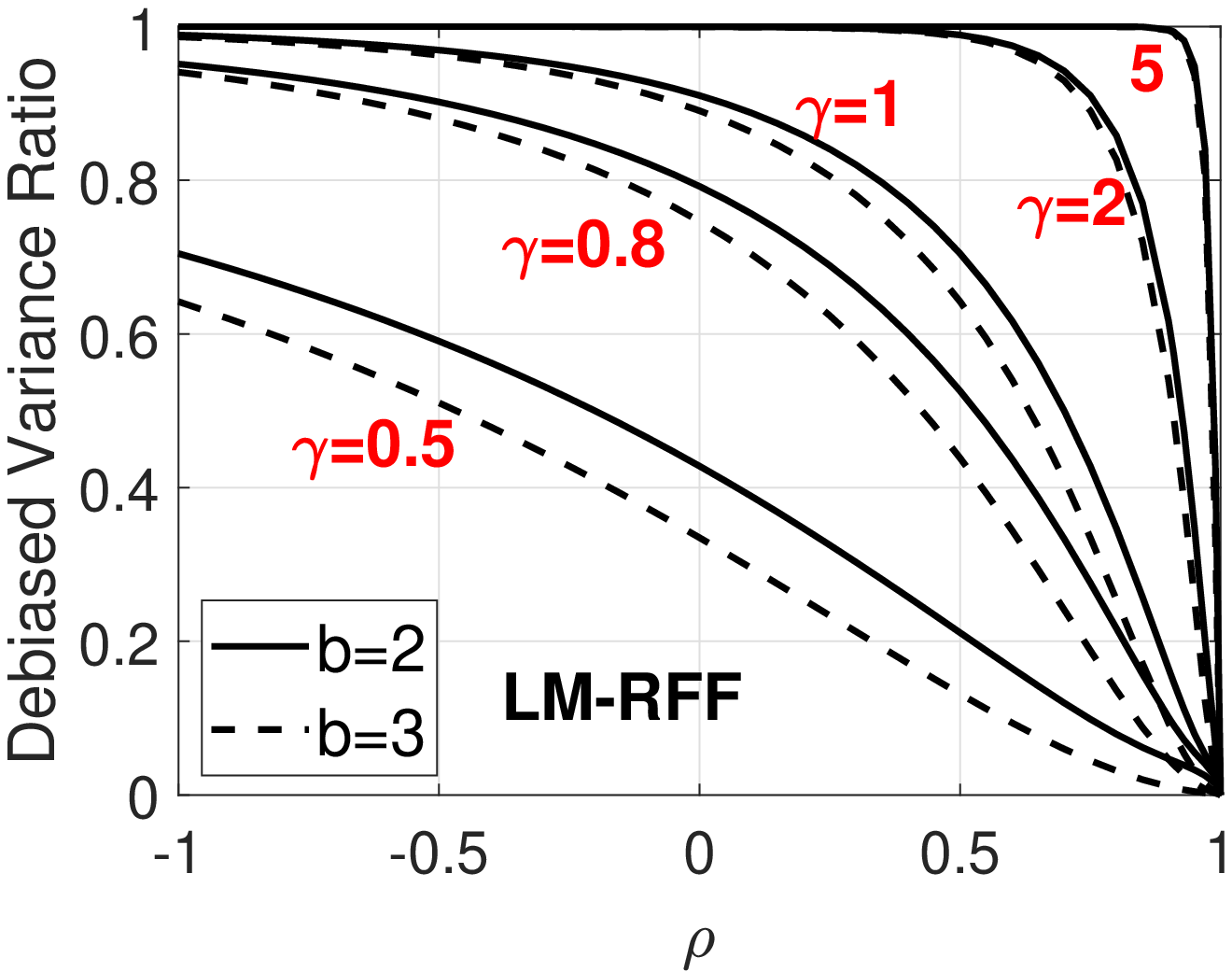}
        \includegraphics[width=2.7in]{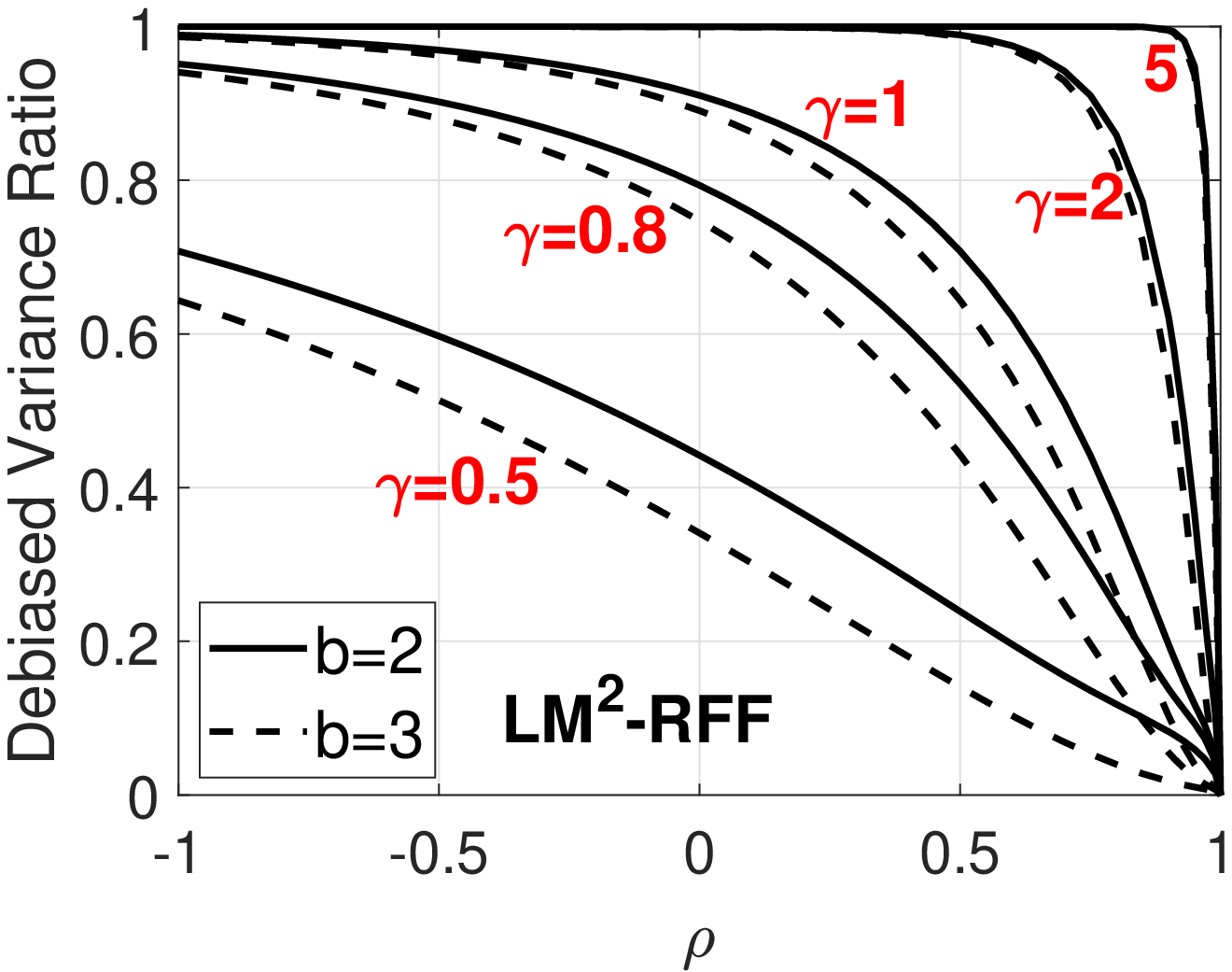}
        }
    \end{center}
    \vspace{-0.25in}
	\caption{Debiased variance ratio of normalized LM estimators against simple LM estimators, $b=2,3$. \textbf{Left panel:} LM-RFF, $\frac{Var^{db}[\hat K_{n,Q}]}{Var^{db}[\hat K_{Q}]}$. \textbf{Right panel:} LM$^2$-RFF, $\frac{Var^{db}[\hat K_{n,Q,(2)}]}{Var^{db}[\hat K_{Q,(2)}]}$}
	\label{fig:norm_variance_ratio}\vspace{-0.05in}
\end{figure}

\vspace{0.1in}
\noindent\textbf{Benefit of normalization.}\hspace{0.1in}Next we prove the theoretical merit of normalizing RFFs, in terms of DB-variance.

\begin{theorem}  \label{theo:BD-variance}
Suppose $u,v$ are two samples with correlation $\rho$. Let the simple and normalized kernel estimator, $\hat K_Q$ and $\hat K_{n,Q}$, be defined as~(\ref{est:Q estimator-I}) and (\ref{est:normalize}), respectively, where $Q$ is any LM-type quantizer. Assume $\gamma\leq \pi/\sqrt 2$. Then, $Var^{db}[\hat K_{n,Q}]\leq Var^{db}[\hat K_{Q}]$ on $\rho\in[0,1]$ as $m\rightarrow\infty$.
\end{theorem}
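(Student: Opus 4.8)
The plan is to collapse the statement to a single inequality among the cross-moments $\zeta_{s,t}$ of the quantized coordinates, and then to derive that inequality from the positive-association property of the joint RFF law, Proposition~\ref{prop:joint inequality}. The role of the hypothesis $\gamma\le\pi/\sqrt2$ is exactly to guarantee $\sqrt{2(1-\rho)}\gamma\le\pi$ for every $\rho\in[0,1]$, so that Proposition~\ref{prop:joint inequality} is in force throughout the range considered.

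First I would set up the reduction. From $\hat K_Q=\frac2m\sum_i Q(z_{u,i})Q(z_{v,i})$ one has exactly $\mathbb E[\hat K_Q]=2\zeta_{1,1}$ and $Var[\hat K_Q]=\frac4m(\zeta_{2,2}-\zeta_{1,1}^2)$ (the variance already appears in Theorem~\ref{theo:mean-var}), while Theorem~\ref{theo: mean-var-norm} supplies $\mathbb E[\hat K_{n,Q}]=\zeta_{1,1}/\zeta_{2,0}+\mathcal O(m^{-1})$ and $Var[\hat K_{n,Q}]=V_n/m+\mathcal O(m^{-2})$. Since $Var^{db}$ is scale invariant (Definition~\ref{def:debias variance}), I may compare $\hat K_{n,Q}$ with $\hat K_Q/(2\zeta_{2,0})$, which has the same leading mean $\zeta_{1,1}/\zeta_{2,0}$; with matching means the DB-variance comparison becomes the plain variance comparison $V_n\le(\zeta_{2,2}-\zeta_{1,1}^2)/\zeta_{2,0}^2$ to leading order. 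Substituting $V_n$ and clearing the positive denominators (legitimate once $\zeta_{1,1}>0$ on $[0,1]$ is known), this is equivalent to
\begin{align*}
\zeta_{1,1}\big(\zeta_{4,0}+\zeta_{2,2}+2\zeta_{2,0}^2\big)\ \le\ 4\,\zeta_{2,0}\,\zeta_{3,1}.
\end{align*}
Setting $a=Q(z_x)$, $b=Q(z_y)$ (exchangeable, $\mathbb E a=0$), $S=a^2+b^2$, $R=ab$ (so $|R|\le S/2$, i.e.\ $(a\pm b)^2\ge0$), exchangeability gives $\mathbb E S=2\zeta_{2,0}$, $\mathbb E R=\zeta_{1,1}$, $\mathbb E[SR]=2\zeta_{3,1}$, $\mathbb E[S^2]=2(\zeta_{4,0}+\zeta_{2,2})$, and the inequality becomes the clean statement
\begin{align*}
2\,\mathbb E[S]\,Cov(S,R)\ \ge\ \mathbb E[R]\,Var(S),
\end{align*}
which is what I would prove directly.

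The positive-association input enters here. With $\sqrt{2(1-\rho)}\gamma\le\pi$, Proposition~\ref{prop:joint inequality} gives $f(z_x,z_y)>f(z_x,-z_y)$ on the same-sign quadrants; combined with $Q$ odd and nondecreasing (with nonzero reconstruction levels), a quadrant-symmetrization --- flip the sign of $z_y$ on the $(+,-)$ quadrant and use $f(z_x,z_y)=f(-z_x,-z_y)$ --- rewrites the expectation of any function odd in $z_y$ as $2\int_{(0,1]^2}(\cdot)\,[f(z_x,z_y)-f(z_x,-z_y)]\,dz_x\,dz_y$, which immediately yields $\zeta_{1,1}=\mathbb E[ab]>0$ and $\zeta_{3,1}=\mathbb E[a^3b]>0$, in particular clearing the division above. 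For the quantitative bound I would condition on $z_x$: with $\mu(z_x)=\mathbb E[b\mid z_x]$ and $\nu(z_x)=\mathbb E[b^2\mid z_x]$, Proposition~\ref{prop:joint inequality} together with monotonicity of $Q$ forces $\mu$ to be odd and nondecreasing (so $Q(z_x)\mu(z_x)\ge0$) and $Q(z_x)^2,\nu(z_x)$ to be nondecreasing in $|z_x|$; Chebyshev's sum inequality against the (arcsine) marginal of $z_x$ from Theorem~\ref{theo:density of RFF1} then gives $\zeta_{3,1}\ge\zeta_{2,0}\zeta_{1,1}$ and $\zeta_{2,2}\ge\zeta_{2,0}^2$, and --- after expressing $\mathbb E[SR]$, $Var(S)$ etc.\ as moments of $|a|$ reweighted by $\mu,\nu$ (using $\mathrm{sign}(a)\perp|a|$, a consequence of the symmetric marginal and oddness of $Q$) and comparing the $|a|^2$- and $|a|^4$-tilted laws of $|a|$ by stochastic dominance, together with $|R|\le S/2$ and Jensen's $(\mathbb E S)^2\le\mathbb E[S^2]$ --- the target $2\,\mathbb E[S]\,Cov(S,R)\ge\mathbb E[R]\,Var(S)$. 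The endpoints are direct: at $\rho=1$, $b=a$ a.s.\ and the inequality degenerates to $\zeta_{2,0}^2\le\zeta_{4,0}$ (Jensen); at $\rho=0$ the hypothesis reads $\sqrt2\gamma\le\pi$, so Proposition~\ref{prop:joint inequality} still applies and $\zeta_{1,1}>0$; and when $b=1$ both sides of the moment inequality coincide, matching the earlier observation that all $1$-bit DB-variances are equal.

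\textbf{Main obstacle.} The entire difficulty is in the last step: upgrading the qualitative sign inequality of Proposition~\ref{prop:joint inequality} to the quantitative covariance bound with the exact constants. The constants are genuinely tight --- if $R$ were anti-aligned with $S$, which only positive association (i.e.\ $\gamma\le\pi/\sqrt2$) excludes, the inequality is false --- so the proof must use the density inequality essentially rather than mere boundedness of $Q$; the delicate point is verifying that the $\mu,\nu$-weighted moment comparison of $|a|$ closes with the correct numerical factors and not merely up to a constant.
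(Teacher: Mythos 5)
Your reduction is sound and coincides with the first half of the paper's own argument: using Theorem~\ref{theo:mean-var}, Theorem~\ref{theo: mean-var-norm} and the scale invariance in Definition~\ref{def:debias variance}, the claim collapses, to leading order in $1/m$, to the moment inequality $\zeta_{1,1}(\zeta_{4,0}+\zeta_{2,2}+2\zeta_{2,0}^2)\le 4\zeta_{2,0}\zeta_{3,1}$, and your endpoint checks at $\rho=0,1$ and your reading of the hypothesis $\gamma\le\pi/\sqrt 2$ (it guarantees $\sqrt{2(1-\rho)}\gamma\le\pi$ for all $\rho\in[0,1]$ so that Proposition~\ref{prop:joint inequality} applies) are correct. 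The genuine gap is that the proof stops exactly at the crux: the quantitative inequality, equivalently your $2\,\mathbb E[S]\,Cov(S,R)\ge \mathbb E[R]\,Var(S)$, is never established. The route you sketch for it relies on claims that do not follow from what you cite: Proposition~\ref{prop:joint inequality} is only the pointwise comparison $f(z_x,z_y)>f(z_x,-z_y)$ on same-sign quadrants, and by itself it does not make $\mu(z_x)=\mathbb E[Q(z_y)\mid z_x]$ nondecreasing, nor $\nu(z_x)=\mathbb E[Q(z_y)^2\mid z_x]$ nondecreasing in $|z_x|$ (these are likelihood-ratio/MTP$_2$-type properties nowhere proved in the paper), and the final step — that the $|a|^2$- and $|a|^4$-tilted comparisons ``close with the correct numerical factors'' — is asserted rather than argued. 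You flag this yourself as the main obstacle, so by your own account this is a plan, not a proof.

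For comparison, the paper closes the argument by a different mechanism: after expressing both debiased variances through the $\zeta_{s,t}$, it discards the nonnegative term $\zeta_{4,0}-\zeta_{2,0}^2=Var[Q(z_x)^2]$ and then invokes the monotonicity-in-$\rho$ machinery (Theorem~\ref{theo:discrete monotone}, built on Lemma~\ref{lemma:continuous monotone} and Proposition~\ref{prop:joint inequality}): under $\sqrt{2(1-\rho)}\gamma\le\pi$, the quantities $\zeta_{1,1}$, $\zeta_{2,2}$, $\zeta_{3,1}$ are increasing on $[0,1]$, and combined with their values at $\rho=0$ (namely $\zeta_{1,1}=\zeta_{3,1}=0$, $\zeta_{2,2}=\zeta_{2,0}^2$) the remaining expression is shown nonnegative. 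I note, incidentally, that your form of the difference carries the opposite sign on $\zeta_{1,1}(\zeta_{4,0}+\zeta_{2,2})$ from the line displayed in the paper's proof; your algebra is the one consistent with the stated $V_n$, which makes the target inequality tighter than the surrogate the paper bounds (indeed it degenerates to Jensen, with equality for the $1$-bit quantizer, at $\rho=1$), so simply importing the paper's crude bound will not rescue your sketch either. If you want a complete argument along your lines, you need to actually derive monotone conditional-moment (or MTP$_2$) structure for the joint RFF law, or else route the positivity of the needed $\zeta$-combination through monotonicity in $\rho$ as the paper does; as written, the decisive covariance bound remains unproven.
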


Theorem~\ref{theo:BD-variance} says that when $\gamma\leq \pi/\sqrt 2\approx 2.2$, normalization is guaranteed to reduce the DB-variance at any $\rho\in[0,1]$. In Figure~\ref{fig:norm_variance_ratio}, we plot the DB-variance ratio of $\frac{Var^{db}[\hat K_{n,Q}(x,y)]}{Var^{db}[\hat K_{Q}(x,y)]}$ at multiple $\gamma$ and $b$, for LM-RFF and LM$^2$-RFF respectively. We corroborate the advantage of normalized estimates over simple estimators in terms of DB-variance (ratio always $<1$), especially with large $\rho$.

\subsection{Monotonicity of Mean Kernel Estimation} \label{sec:monotone}

For a kernel estimator $\hat K(\rho)$ (written as a function of $\rho$), the monotonicity of its mean estimation $\mathbb E[\hat K(\rho)]$ against $\rho$ is important to ensure its ``correctness''. It guarantees that asymptotically ($m\rightarrow\infty$), the comparison of estimated kernel distances is always correct, i.e., $\hat K(u,v_1)>\hat K(u,v_2)$ if $K(u,v_1)>K(u,v_2)$ for data points $u,v_1,v_2$. Otherwise (say, $\mathbb E[\hat K]$ decreasing in $\rho$ on $[s,t]$), the comparison of estimated kernel would be wrong for $\rho\in [s,t]$ even with infinite much data. By Theorem~\ref{theo:StocQ}, StocQ estimator is unbiased with $\mathbb E[\hat K_{StocQ}]=e^{-\gamma^2(1-\rho)}$ strictly increasing in $\rho$. Hence, we will focus on the fixed LM quantization.

\begin{lemma} \label{lemma:continuous monotone}
Suppose $X,Y,\tau$ are same as Theorem~\ref{theo:mean-var}, and denote $s_x=\gamma X+\tau$, $s_y=\gamma Y+\tau$, such that $z_x=\cos(s_x)$ and $z_y=\cos(s_y)$ are RFFs. Assume $g_1,g_2:[-1,1]\mapsto \mathbb R$ are twice differentiable and bounded functions. Then,
$$\frac{\partial \mathbb E[g_1(z_x)g_2(z_y)]}{\partial \rho}=\gamma^2\mathbb E[g_1'(z_x)\sin(s_x) g_2'(z_y)\sin(s_y)].$$
Furthermore, when $\sqrt{2(1-\rho)}\gamma \leq \pi$, if $g_1$ and $g_2$ are both increasing odd functions or non-constant even functions, then the mean is increasing in $\rho$, i.e., $\frac{\partial \mathbb E[g_1(z_x)g_2(z_y)]}{\partial \rho}>0$.
\end{lemma}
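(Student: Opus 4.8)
The plan is to (i) establish the derivative identity via Price's theorem applied conditionally on $\tau$, and (ii) deduce positivity by rewriting the resulting expectation in the $(z_x,z_y)$ coordinates so that the sign structure of the joint density (Theorem~\ref{theo:joint_RFF1}, Proposition~\ref{prop:joint inequality}) can be used directly. For part (i): fix $\tau$; the map $X\mapsto g_1(\cos(\gamma X+\tau))$ is bounded with bounded first and second derivatives (as $g_1$ is bounded, twice differentiable, and $\cos,\sin$ have bounded derivatives), and likewise for $Y\mapsto g_2(\cos(\gamma Y+\tau))$, so Price's theorem for the unit-variance bivariate normal $(X,Y)$ with correlation $\rho$ gives $\partial_\rho\mathbb E_{X,Y}[a(X)b(Y)]=\mathbb E_{X,Y}[a'(X)b'(Y)]$. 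Taking $a(X)=g_1(z_x)$, $b(Y)=g_2(z_y)$ and applying the chain rule, $a'(X)=-\gamma g_1'(z_x)\sin(s_x)$ and $b'(Y)=-\gamma g_2'(z_y)\sin(s_y)$, hence $a'(X)b'(Y)=\gamma^2 g_1'(z_x)\sin(s_x)g_2'(z_y)\sin(s_y)$. Averaging over $\tau$ and interchanging $\partial_\rho$ with $\mathbb E_\tau$ (justified by dominated convergence, the integrand and its $\rho$-derivative being bounded uniformly in $\tau$) yields the stated identity.

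For part (ii) it remains to show $I:=\mathbb E[g_1'(z_x)\sin(s_x)\,g_2'(z_y)\sin(s_y)]>0$ when $\sigma:=\sqrt{2(1-\rho)}\gamma\le\pi$. The reduction I have in mind: writing $s_x,s_y$ modulo $2\pi$, the pair $(s_x\bmod 2\pi,\ s_y\bmod 2\pi)$ has density $\tfrac{1}{2\pi}\Theta_\sigma(a_x-a_y)$ on $[0,2\pi)^2$, where $\Theta_\sigma(t):=\sum_{k\in\mathbb Z}\phi_\sigma(t+2k\pi)$ is the wrapped Gaussian, because $s_y-s_x=\gamma(Y-X)\sim N(0,\sigma^2)$ while $s_x\bmod 2\pi$ is uniform given $(X,Y)$; summing this over the four preimages of a generic $(z_x,z_y)\in(-1,1)^2$ under $(\cos,\cos)$ reproduces exactly the joint density of Theorem~\ref{theo:joint_RFF1}, which I would verify first. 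Since $|\sin(s_x)|=\sqrt{1-z_x^2}$ and the signs of $\sin(s_x),\sin(s_y)$ are determined by the chosen preimage, conditioning on $(z_x,z_y)$ and doing the four-preimage bookkeeping (the aligned sign patterns carry conditional weight $\propto\Theta_\sigma(a_x^*-a_y^*)$, the anti-aligned ones $\propto\Theta_\sigma(a_x^*+a_y^*)$) gives
\[
I=\frac{1}{\pi}\iint_{(-1,1)^2} g_1'(z_x)\,g_2'(z_y)\,\Big[\Theta_\sigma(a_x^*-a_y^*)-\Theta_\sigma(a_x^*+a_y^*)\Big]\,dz_x\,dz_y,\qquad a_x^*=\cos^{-1}z_x,\ a_y^*=\cos^{-1}z_y,
\]
i.e.\ $I$ is an integral of $g_1'g_2'$ against a \emph{signed} variant of the Theorem~\ref{theo:joint_RFF1} kernel.

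To conclude: for increasing odd $g_1,g_2$ one has $g_1',g_2'\ge 0$ (and not a.e.\ zero), so it suffices that the bracketed kernel is positive a.e.; since $a_x^*,a_y^*\in[0,\pi]$ one checks $|a_x^*-a_y^*|\le\min\{a_x^*+a_y^*,\ 2\pi-(a_x^*+a_y^*)\}$ with equality only on a null set, so this is equivalent to $\Theta_\sigma$ being strictly decreasing on $(0,\pi)$ — precisely the regime governed by $\sigma\le\pi$ and captured by Proposition~\ref{prop:joint inequality} (under this hypothesis the wrapped Gaussian places strictly more mass on aligned angle pairs than on anti-aligned ones). Pinning down this sign/monotonicity statement for $\Theta_\sigma$ under $\sigma\le\pi$, and phrasing it so that Proposition~\ref{prop:joint inequality} applies on the nose, is the main obstacle. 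For non-constant even $g_1,g_2$ the derivatives $g_i'$ are odd, so $g_1'(z_x)g_2'(z_y)$ and the bracket are both invariant under $(z_x,z_y)\mapsto(-z_x,-z_y)$; folding the integral to $\{z_x>0\}$ reduces to the same type of integral with $g_i'$ restricted to $(0,1)$ (where, in the kernel-estimator applications, $g_1=g_2$ so the derivatives have a fixed sign), and the identical positivity argument then closes the case.
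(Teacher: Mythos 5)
Your part (i) is fine and is essentially the paper's argument in disguise: invoking Price's theorem conditionally on $\tau$ and averaging is exactly what the paper reconstructs by hand via the interpolation $Y=\gamma\rho X+\gamma\sqrt{1-\rho^2}Z$ and Stein's lemma, so the derivative identity is secured.

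The gap is in part (ii), and you have flagged it yourself: the positivity of the signed kernel $\Theta_\sigma(a_x^*-a_y^*)-\Theta_\sigma(a_x^*+a_y^*)$ is the entire content of the monotonicity claim, and it is neither proved in your proposal nor obtainable ``on the nose'' from Proposition~\ref{prop:joint inequality}. In the wrapped-Gaussian language, Proposition~\ref{prop:joint inequality} (written through the density of Theorem~\ref{theo:joint_RFF1}, with $d=|a_x^*-a_y^*|$, $s=a_x^*+a_y^*$) is the \emph{four-term} inequality $\Theta_\sigma(d)+\Theta_\sigma(s)>\Theta_\sigma(\pi-d)+\Theta_\sigma(\pi-s)$ on the region where both cosines are positive; your reduction instead needs the \emph{two-term} comparison $\Theta_\sigma(d)>\Theta_\sigma(s)$ a.e.\ on the whole square, i.e.\ strict decrease of the wrapped Gaussian on $(0,\pi)$. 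Neither statement implies the other by a formal manipulation, so the decisive step of your proof is missing (it is in fact true, e.g.\ via the Jacobi triple product each factor $1+2q^{2n-1}\cos t+q^{4n-2}$ is decreasing on $(0,\pi)$, and it holds for every $\sigma>0$, which would even dispense with the hypothesis $\sqrt{2(1-\rho)}\gamma\le\pi$ — but that is a separate lemma you would have to supply, of comparable difficulty to the paper's proof of Proposition~\ref{prop:joint inequality}). The paper avoids this obstacle by a different bookkeeping: it rewrites the derivative in the \emph{sine} coordinates $(\sin s_x,\sin s_y)$, whose joint density coincides with $f$ of Theorem~\ref{theo:joint_RFF1}, folds by symmetry to the integrand $z_xz_y\,g_1'(\sqrt{1-z_x^2})g_2'(\sqrt{1-z_y^2})\,[f(z_x,z_y)-f(z_x,-z_y)]$ over $(0,1]^2$, and then Proposition~\ref{prop:joint inequality} applies verbatim. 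If you want to keep your cosine-coordinate reduction (which is otherwise correct and arguably cleaner), you must prove the wrapped-Gaussian monotonicity; otherwise, switch to the sine-coordinate folding so that the proposition you cite actually matches the inequality you need. A minor additional point: your handling of the even case restricts to $g_1=g_2$ with sign-definite derivative on $(0,1)$, which covers the quantizer applications but not the lemma as stated (the paper is admittedly loose on the same point).
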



\begin{figure}[h]
\vspace{-0.18in}
    \begin{center}
        \mbox{
        \includegraphics[width=2.2in]{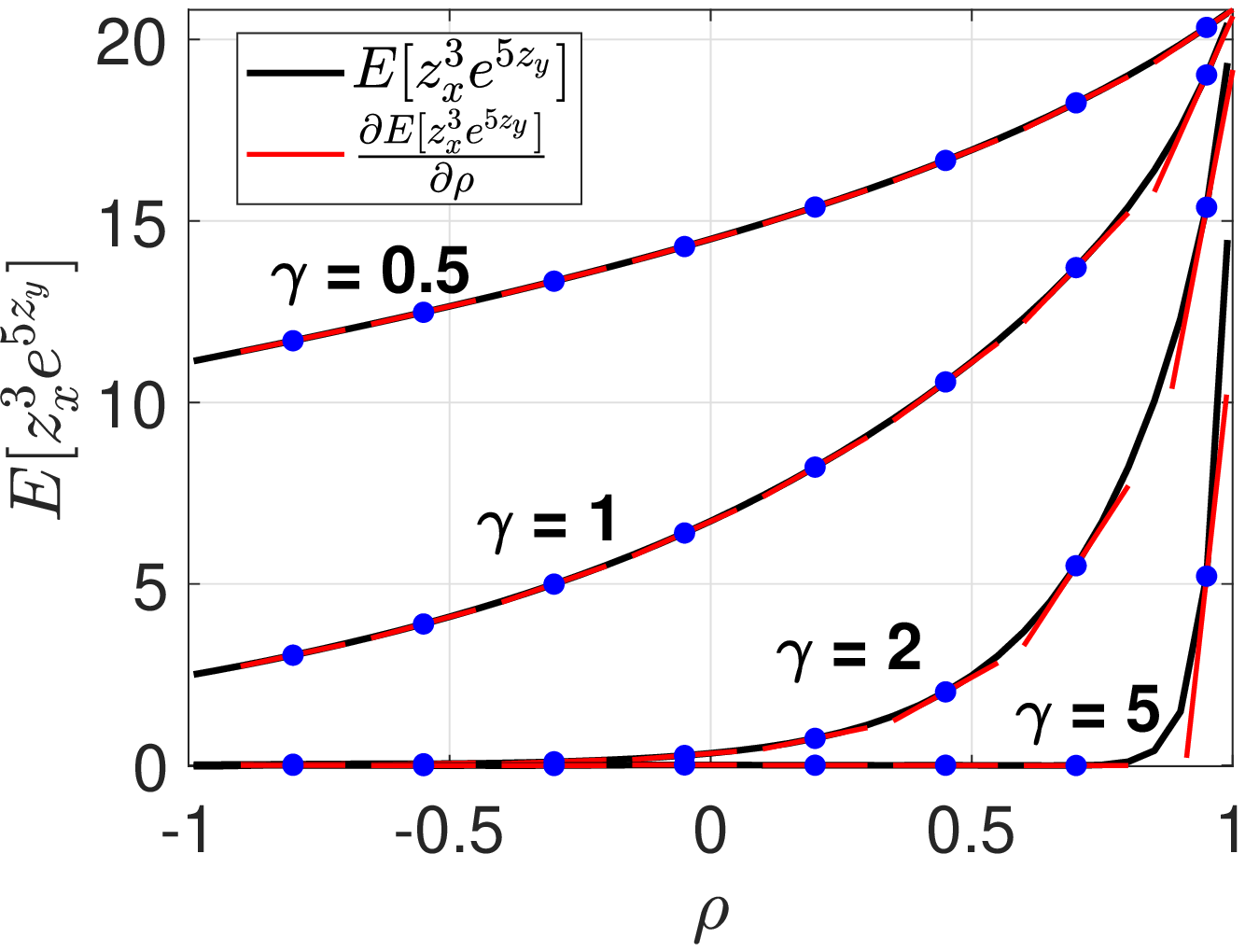}
        \includegraphics[width=2.2in]{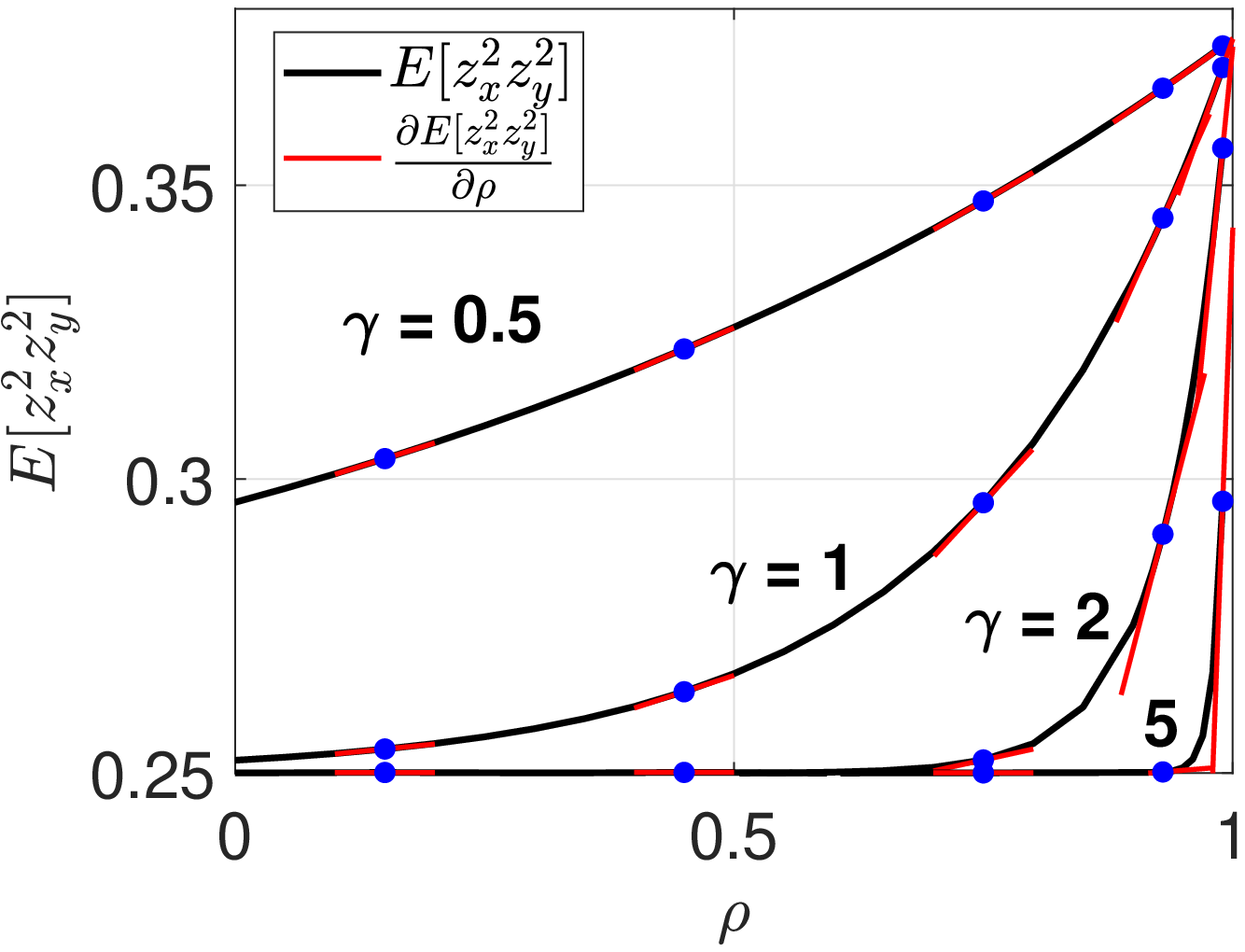}
        \includegraphics[width=2.2in]{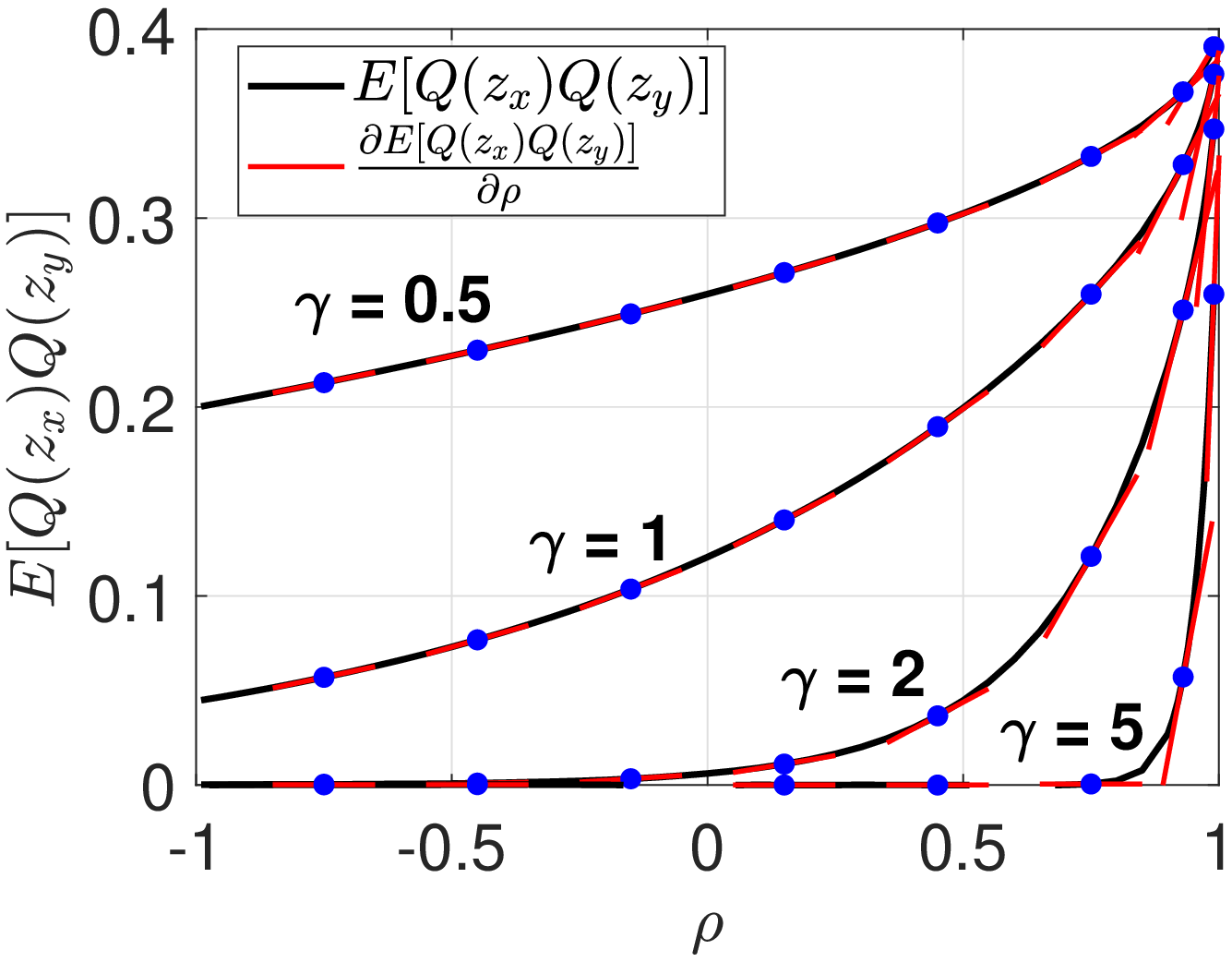}
        }
    \end{center}
    \vspace{-0.25in}
	\caption{The derivatives obtained by Lemma~\ref{lemma:continuous monotone} and Theorem~\ref{theo:discrete monotone} with different $g_1$ and $g_2$ functions, at multiple $\gamma$. Black curves are the function value, and red lines are the theoretical derivatives at multiple points. \textbf{Left panel:} $g_1(x)=x^3$, $g_2(x)=e^{5x}$, increasing functions. \textbf{Middle panel:} $g_1(x)=g_2(x)=x^2$, even functions. The region $\rho\geq 0$ is magnified for clarity. \textbf{Right panel:} $g_1(x)=g_2(x)=Q$, where $Q$ is the 1-bit LM-RFF quantizer. Here the derivative is approximated by the continuous approximation $\tilde g(x)=\mu_2 sign(x)(1-e^{-50|x|})$, where $\mu_2$ is the positive reconstruction level of $Q$.}
	\label{fig:monotone}
\end{figure}

\vspace{0.1in}

Lemma~\ref{lemma:continuous monotone} is a general result for the monotonicity when RFFs are processed by continuous functions. In the left and mid panels of Figure~\ref{fig:monotone}, we plot two examples of $\mathbb E[g_1(s_x)g_2(s_y)]$ against $\rho$, with continuously increasing functions $g_1(x)=x^3$ and $g_2(x)=e^{5x}$ and even functions $g_1(x)=g_2(x)=x^2$, respectively. As we can see, the expectation is increasing in $\rho$ with true derivatives given by Lemma~\ref{lemma:continuous monotone}.

Next, in the following theorem, we extend the above result to discrete functions, which include our proposed LM quantizers as special cases.

\begin{theorem} \label{theo:discrete monotone}
Suppose $Q_1$ and $Q_2$ are bounded, discrete, and non-decreasing odd functions or non-constant even functions, with finite many discontinuities. Let $z_x$ and $z_y$ be defined as Lemma~\ref{lemma:continuous monotone}. If $\sqrt{2(1-\rho)}\gamma\leq\pi$, then $\mathbb E[Q_1(z_x)Q_2(z_y)]$ is increasing in $\rho$.
\end{theorem}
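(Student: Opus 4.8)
The plan is to derive Theorem~\ref{theo:discrete monotone} from Lemma~\ref{lemma:continuous monotone} by a mollification (smoothing) argument, mirroring the continuous approximation $\tilde g$ already used in Figure~\ref{fig:monotone}. Fix $\gamma$ and $\rho$ with $\sqrt{2(1-\rho)}\gamma\le\pi$. For each $Q_i$ ($i=1,2$), which by assumption has only finitely many discontinuities, I would first extend it to all of $\mathbb R$ in a way that preserves monotonicity and parity (e.g.\ constant equal to $Q_i(1)$ for arguments $>1$, and equal to $\pm Q_i(1)$ for arguments $<-1$ in the odd/even case), then convolve with a smooth, nonnegative, even bump supported in $[-1/n,1/n]$. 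The resulting $Q_i^{(n)}$ is $C^\infty$, uniformly bounded by $\|Q_i\|_\infty$, non-decreasing (convolution of a non-decreasing function against a nonnegative kernel), and of the same parity as $Q_i$ (odd $\ast$ even $=$ odd, even $\ast$ even $=$ even); in the even case it is non-constant once $n$ is large. If a strictly increasing odd approximant is wanted one may add $\varepsilon_n x$ with $\varepsilon_n\downarrow0$, which keeps all these properties. Standard mollifier theory gives $Q_i^{(n)}(z)\to Q_i(z)$ at every continuity point of $Q_i$.

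For each fixed $n$, the pair $Q_1^{(n)},Q_2^{(n)}$ satisfies the hypotheses of Lemma~\ref{lemma:continuous monotone}, so $\rho\mapsto\mathbb E[Q_1^{(n)}(z_x)Q_2^{(n)}(z_y)]$ is increasing on $\{\rho:\sqrt{2(1-\rho)}\gamma\le\pi\}$. It then remains to let $n\to\infty$. Since the joint law of $(z_x,z_y)$ is absolutely continuous by Theorem~\ref{theo:joint_RFF1}, the finitely many discontinuity points of $Q_1$ and $Q_2$ have probability zero under it, so $Q_1^{(n)}(z_x)Q_2^{(n)}(z_y)\to Q_1(z_x)Q_2(z_y)$ almost surely; uniform boundedness and the bounded convergence theorem then give convergence of the expectations for every $\rho$. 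A pointwise limit of increasing functions is non-decreasing, which already establishes the claim in the non-strict sense.

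To obtain strict monotonicity one cannot merely pass to the limit, so I would instead reprove the sign of the derivative directly at the level of the step functions. Writing $h(\rho)=\mathbb E[Q_1(z_x)Q_2(z_y)]=\int_{[-1,1]^2}Q_1(z_x)Q_2(z_y)f(z_x,z_y)\,dz_x\,dz_y$ with $f$ the density of Theorem~\ref{theo:joint_RFF1} (which is $C^1$ in $\rho$ for $\rho$ bounded away from $1$, the endpoint $\rho=1$ being handled by the explicit formulas and continuity), one may differentiate under the integral using boundedness of the $Q_i$ and the exponential decay of the $\phi$-series in $f$. Then, decomposing each non-decreasing odd $Q_i$ into a nonnegative combination of the elementary monotone blocks $sign(z)$ and $sign(z-t)+sign(z+t)$ (and handling the non-constant even case by the same mechanism as in Lemma~\ref{lemma:continuous monotone}), the sign of $h'(\rho)$ reduces to the quadrant comparison $f(z_x,z_y)>f(z_x,-z_y)$ on $(0,1]^2$ and $[-1,0)^2$, which is exactly what Proposition~\ref{prop:joint inequality} supplies under $\sqrt{2(1-\rho)}\gamma\le\pi$; this forces $h'(\rho)>0$.

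I expect two main obstacles. The first is building the smooth approximants so that boundedness, monotonicity, and parity all hold at once: mollification does all three simultaneously, but the behaviour near $\pm1$ must be fixed by extending $Q_i$ before smoothing, and one must verify non-constancy in the even case. The second, more delicate, is the upgrade from non-strict to strict monotonicity, since the clean limiting argument loses strictness; it requires the differentiation-under-the-integral computation above, with care taken near $z_x,z_y=\pm1$ where $f$ is unbounded, together with Proposition~\ref{prop:joint inequality}. If ``increasing'' in the statement is read non-strictly, the first two paragraphs already suffice.
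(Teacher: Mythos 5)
Your proposal is correct in outline and follows essentially the same strategy as the paper: approximate the discrete quantizers by continuous increasing functions of the right parity, invoke Lemma~\ref{lemma:continuous monotone} for the approximants, and pass to the limit (you build the approximants by mollification, the paper by Baire's characterization theorem --- a cosmetic difference). Where you genuinely diverge is in how the limit is taken, and there your treatment is the more careful one. The paper's proof exchanges $\partial/\partial\rho$ with the pointwise limit of $g_{1,n},g_{2,n}$ by appealing to dominated convergence and then keeps the strict inequality $>0$ in the limit; neither step is justified as written (DCT does not by itself license swapping a derivative with a pointwise limit, and a limit of strictly positive derivatives need only be $\ge 0$). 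You instead pass to the limit at the level of the expectations --- a.e.\ convergence because the finitely many discontinuities are null under the absolutely continuous joint law of $(z_x,z_y)$, plus bounded convergence --- and correctly observe that this only yields non-strict monotonicity, which is all the clean limiting argument can give. Your proposed repair for strictness (differentiate the $\rho$-dependence of the density $f$ under the integral for the step functions themselves, decompose the quantizers into sign-type blocks, and reduce the sign of the derivative to the quadrant comparison of Proposition~\ref{prop:joint inequality}) is the right idea and in fact addresses a gap the paper's own proof glosses over; note, however, that this part is only sketched in your write-up: the decomposition and the even case need to be written out, and the unboundedness of $f$ near $z_x,z_y=\pm 1$ must be controlled when differentiating under the integral. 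If ``increasing'' is read non-strictly, your first two paragraphs already constitute a complete and rigorous proof.
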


\begin{remark}
The condition $\sqrt{2(1-\rho)}\gamma\leq\pi$ in Theorem~\ref{theo:discrete monotone} implies that $\mathbb E[Q(z_x)Q(z_y)]$ for any LM quantizer increases in $\rho\in[\max(-1,1-\frac{\pi^2}{2\gamma^2}),1]$. Thus, larger $\gamma$ typically requires higher $\rho$ for this condition to hold. For example, when $\gamma=1$ and $\gamma=5$, monotonicity is ensured for $\rho\in[-1,1]$ and $\rho\geq 0.8$, respectively, which is consistent with numerical observations, e.g., see Figure~\ref{fig:monotone}.
\end{remark}

In the right panel of Figure~\ref{fig:monotone}, we plot the mean kernel estimation $\mathbb E[Q(z_x)Q(z_y)]$ (with $z_x,z_y$ being the RFFs) of 1-bit LM-RFF quantization, along with the derivatives predicted by Lemma~\ref{lemma:continuous monotone}. Here we use the continuous approximation $\tilde g(x)=\mu_2 sign(x)(1-e^{-50|x|})$, where $\mu_2$ is the reconstruction level of $Q$, as a surrogate to compute $Q'(x)$. We see that the mean estimation increases in $\rho$, and our theory on the derivative  well aligns with the numerical result.

\section{Numerical Experiments} \label{sec:experiment}

We conduct experiments with compressed RFFs on approximate kernel SVM (KSVM) classification and kernel ridge regression (KRR) tasks. Our results illustrate the effectiveness of large-scale kernel learning with highly compressed RFFs,  highlighting the superior advantage of the proposed LM-RFF quantization. Moreover, we also propose and evaluate robust kernel approximation error metrics to consolidate our claims.

\begin{figure}[h]
        \mbox{\hspace{-0.12in}
        \includegraphics[width=1.72in]{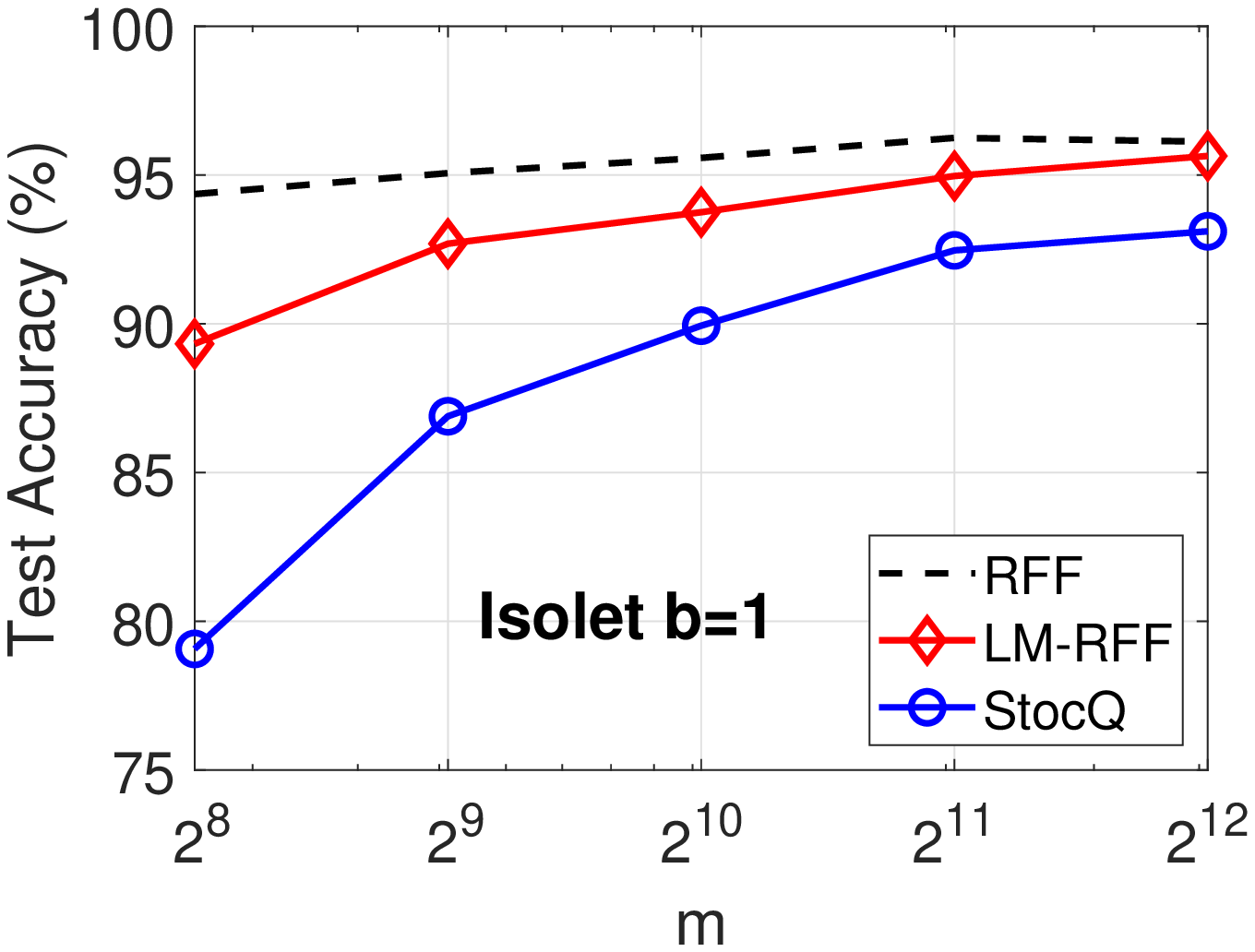}\hspace{-0.1in}
        \includegraphics[width=1.72in]{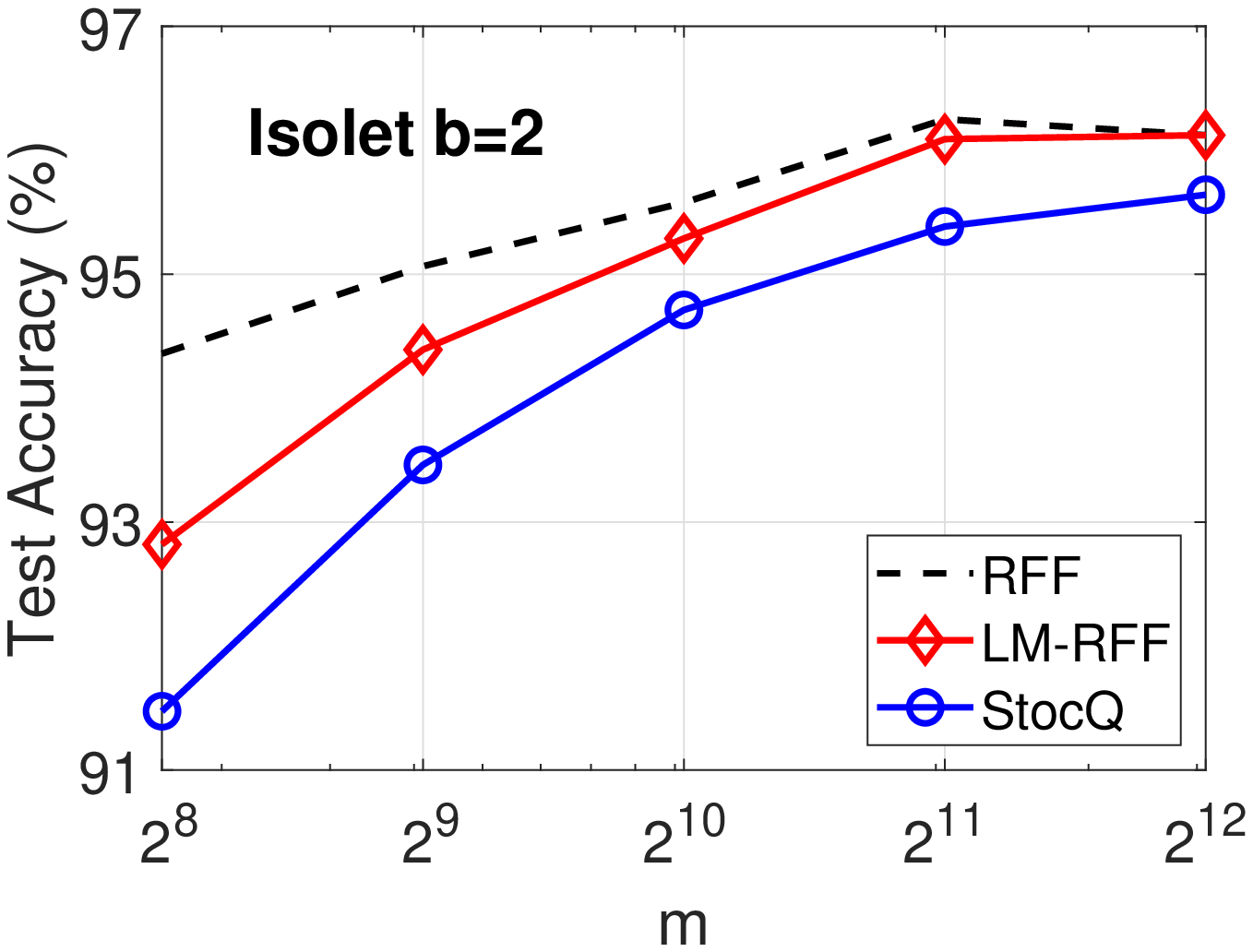}\hspace{-0.1in}
        \includegraphics[width=1.72in]{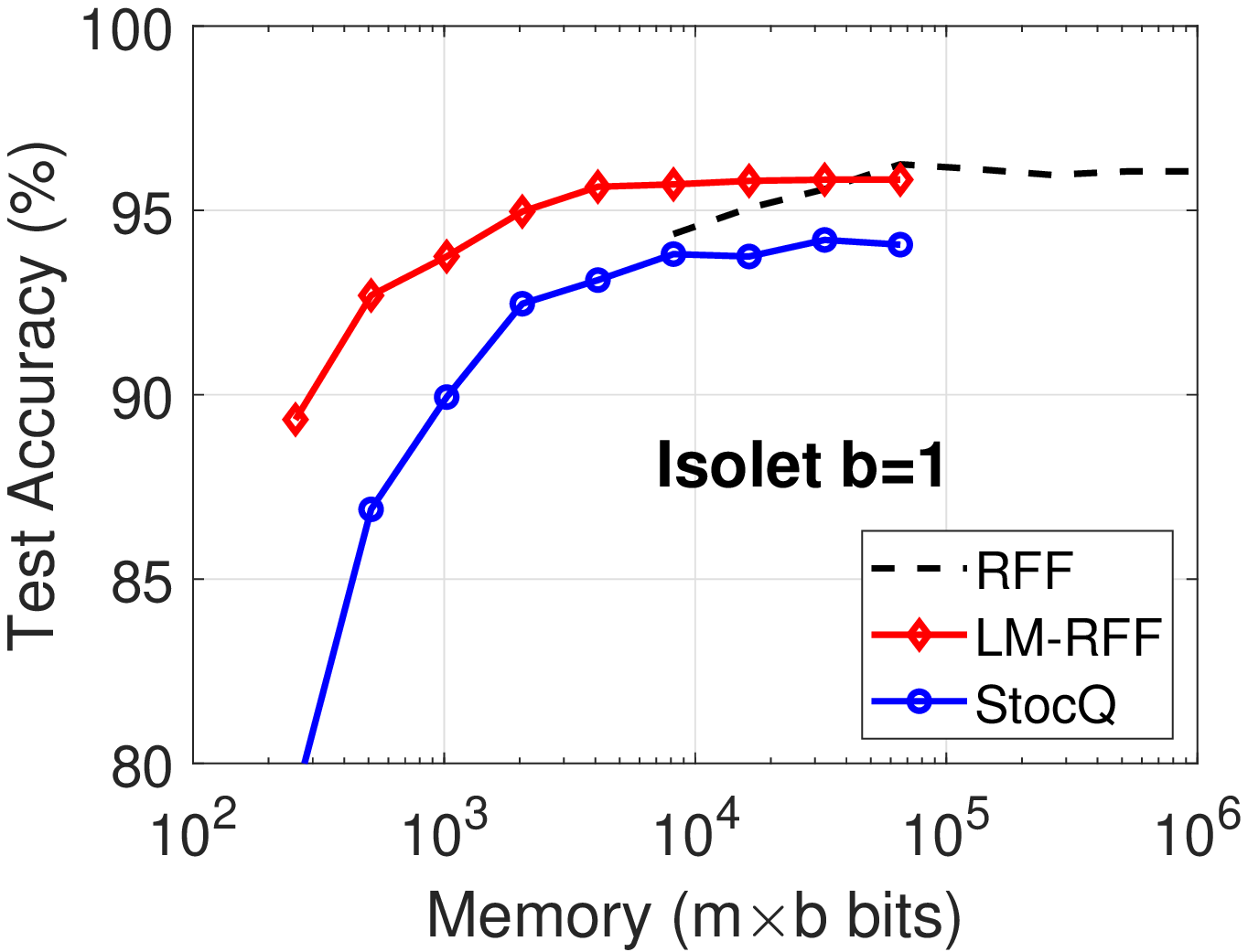}\hspace{-0.1in}
        \includegraphics[width=1.72in]{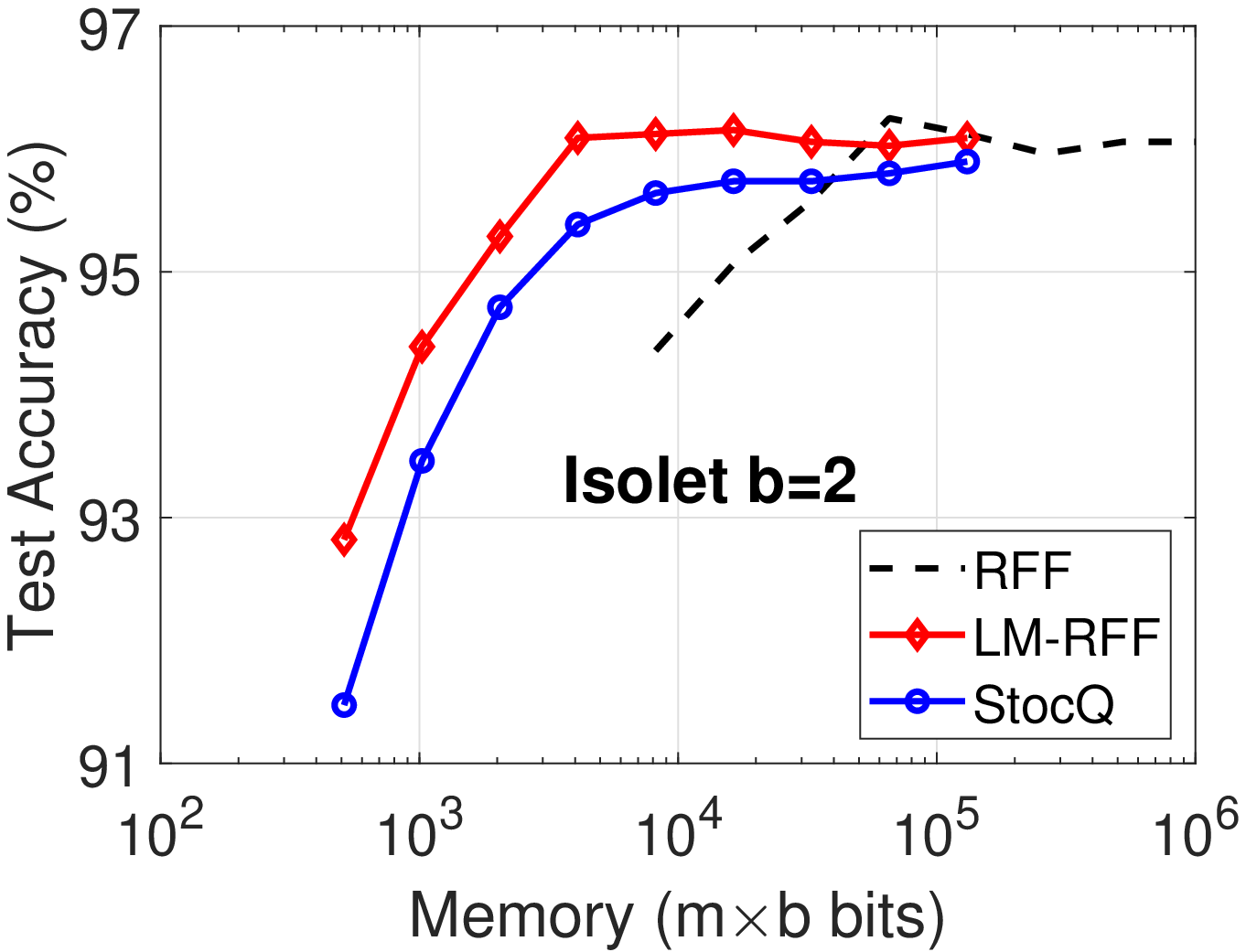}
        }
        \mbox{\hspace{-0.12in}
        \includegraphics[width=1.72in]{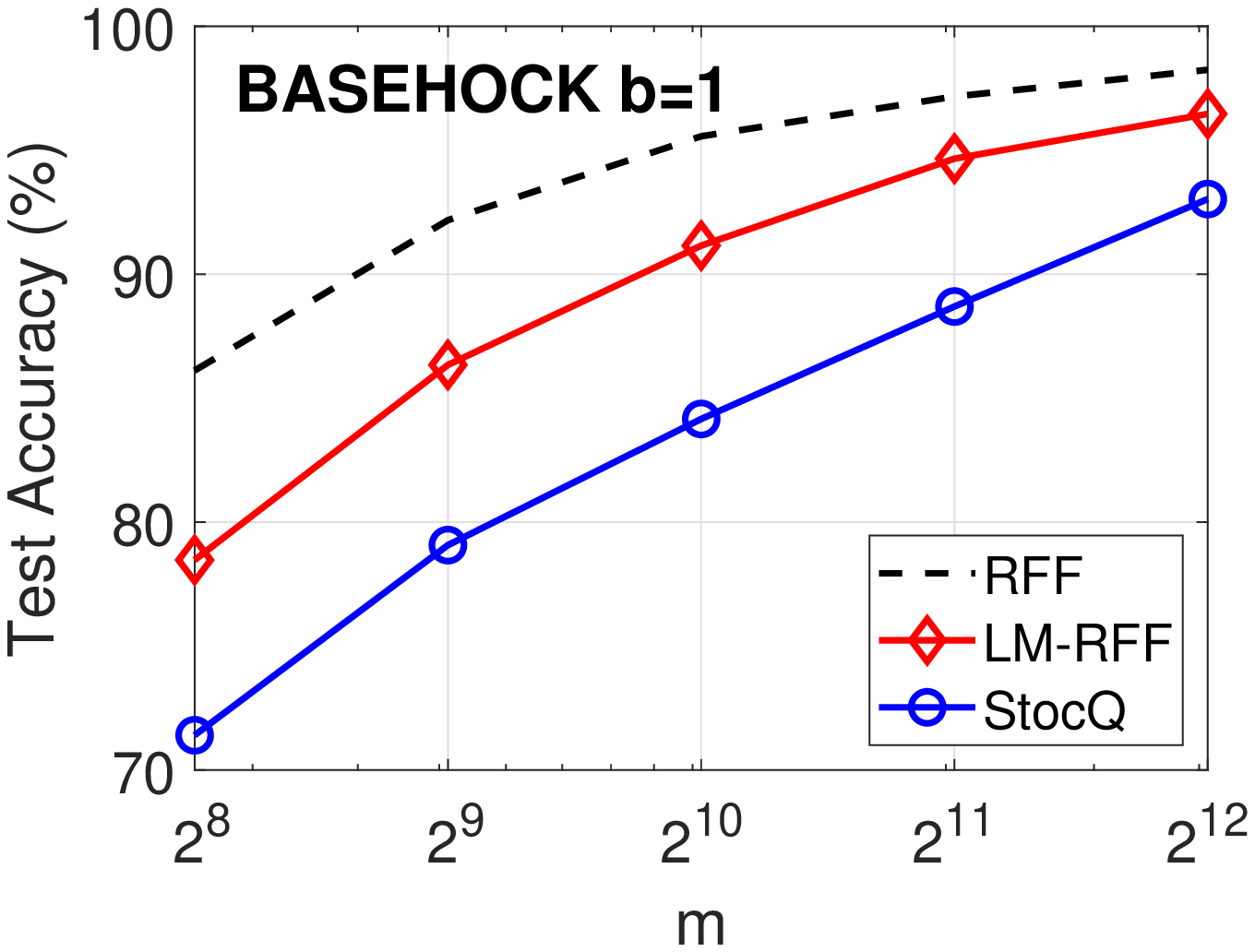}\hspace{-0.1in}
        \includegraphics[width=1.72in]{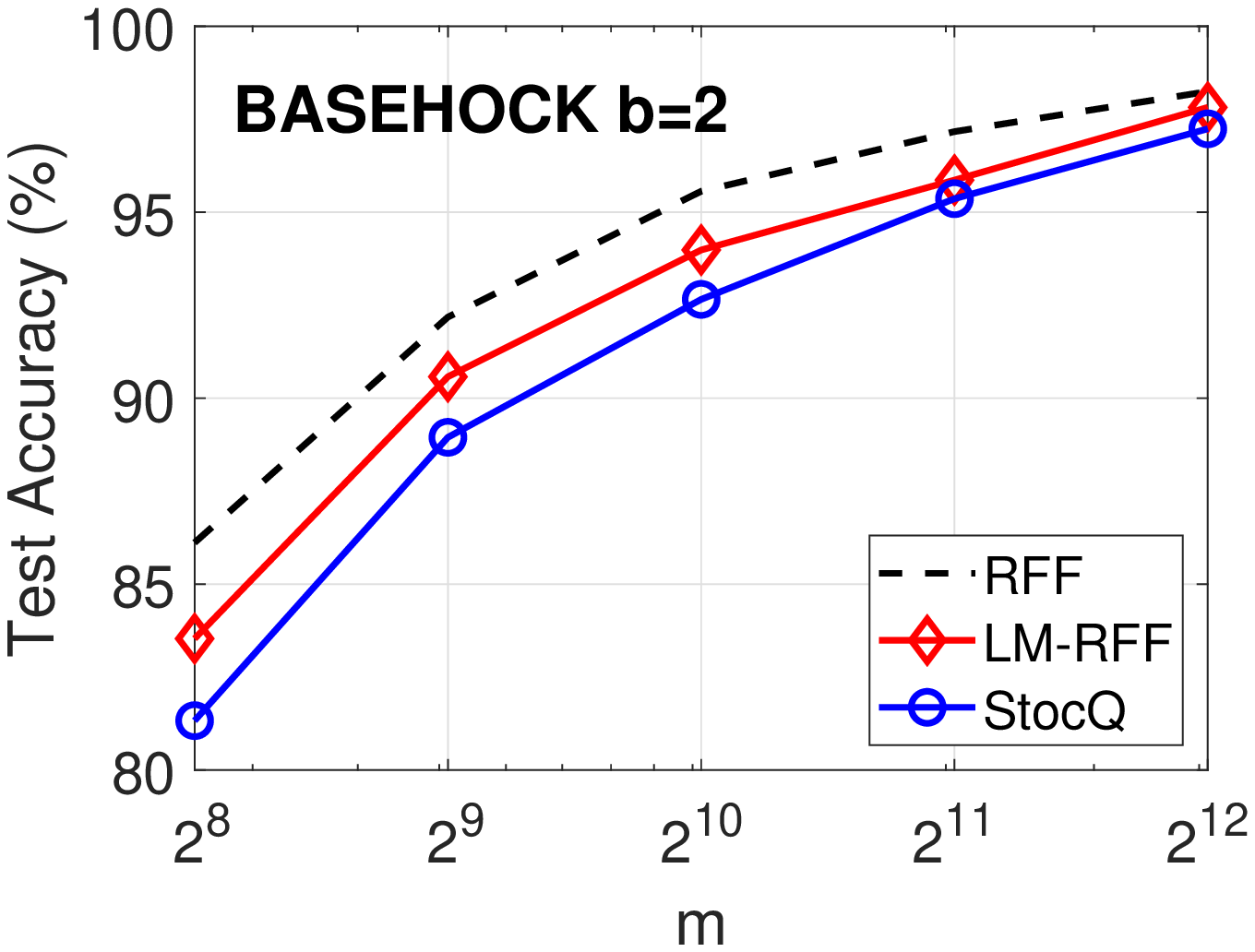}\hspace{-0.1in}
        \includegraphics[width=1.72in]{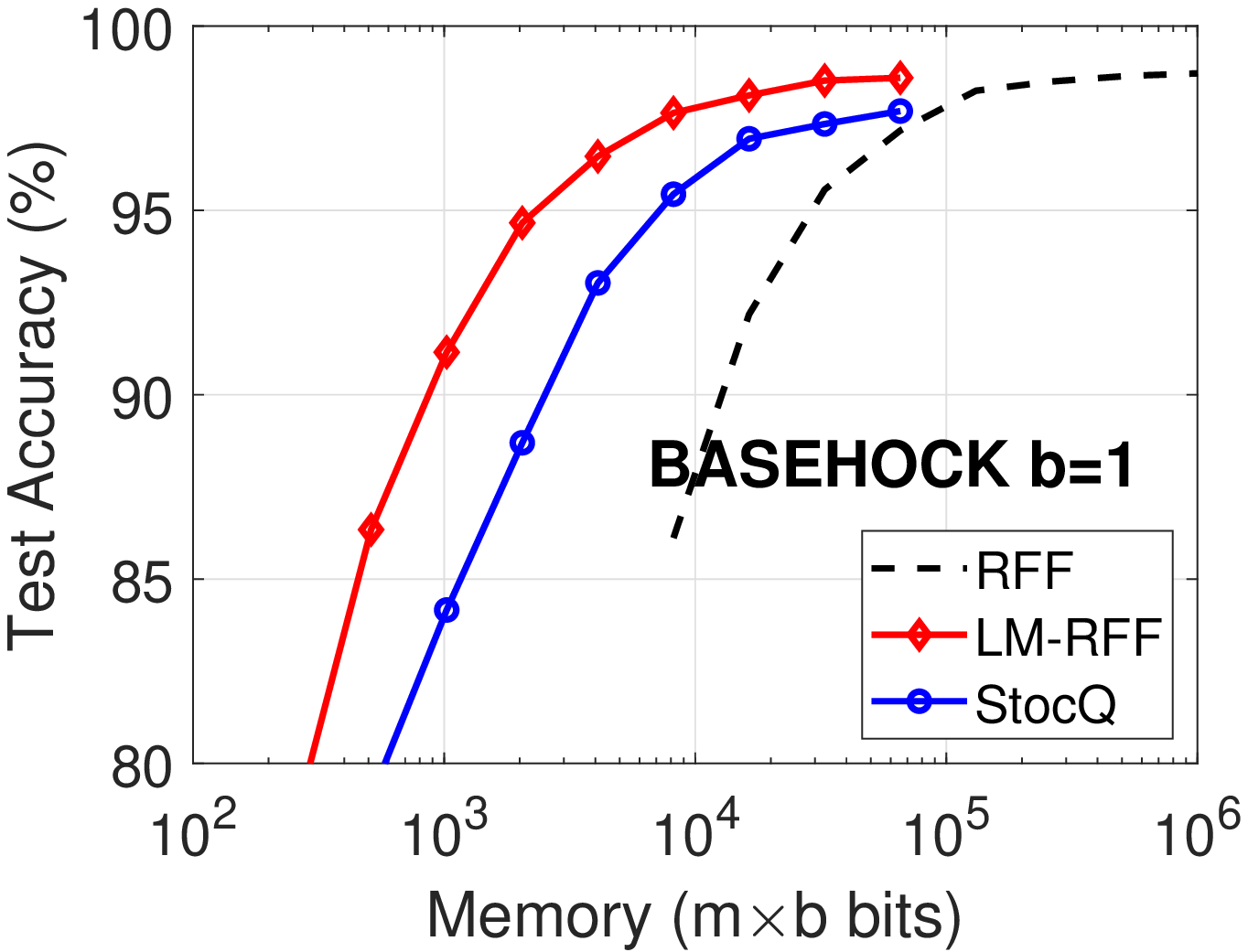}\hspace{-0.1in}
        \includegraphics[width=1.72in]{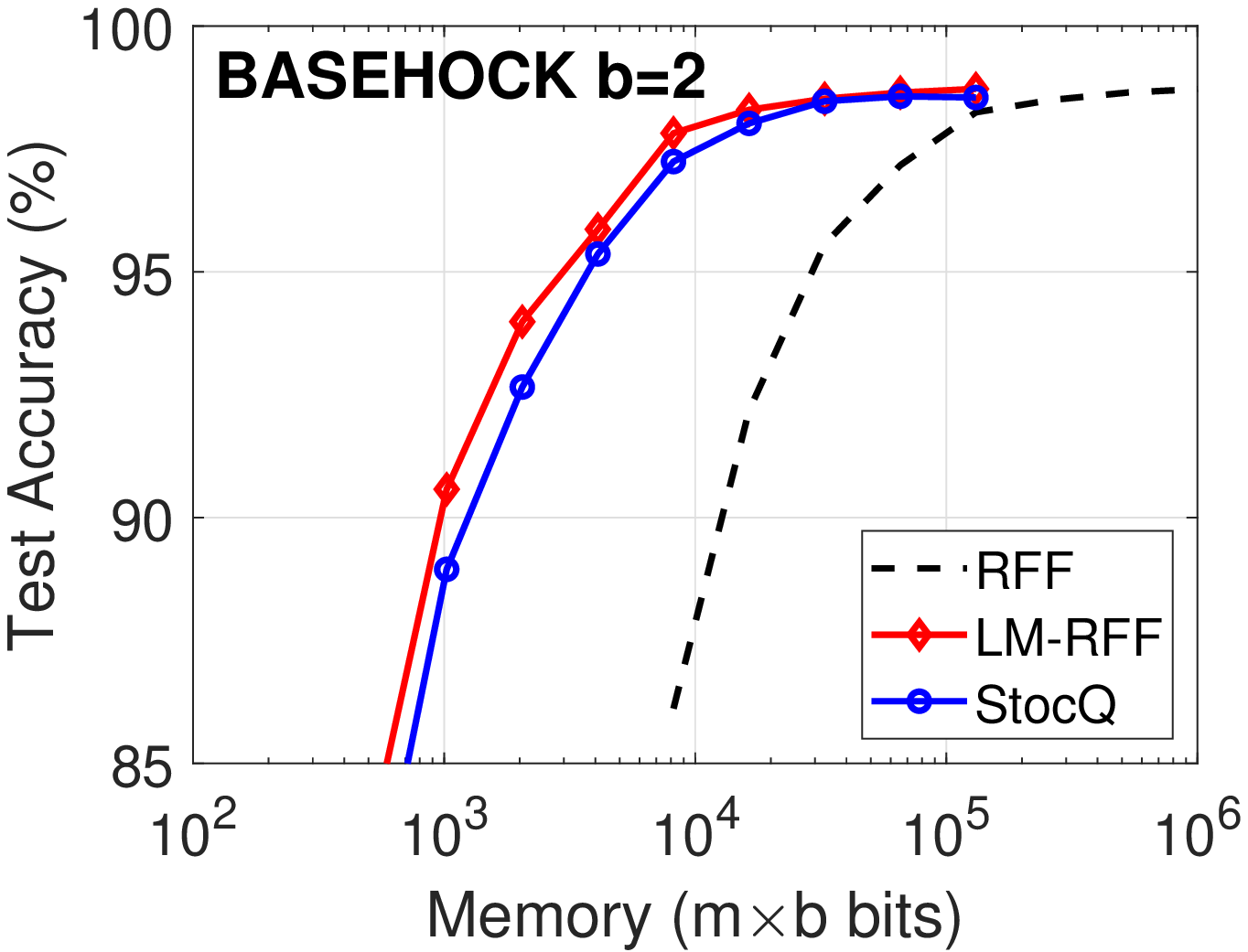}
        }
        \mbox{\hspace{-0.12in}
        \includegraphics[width=1.72in]{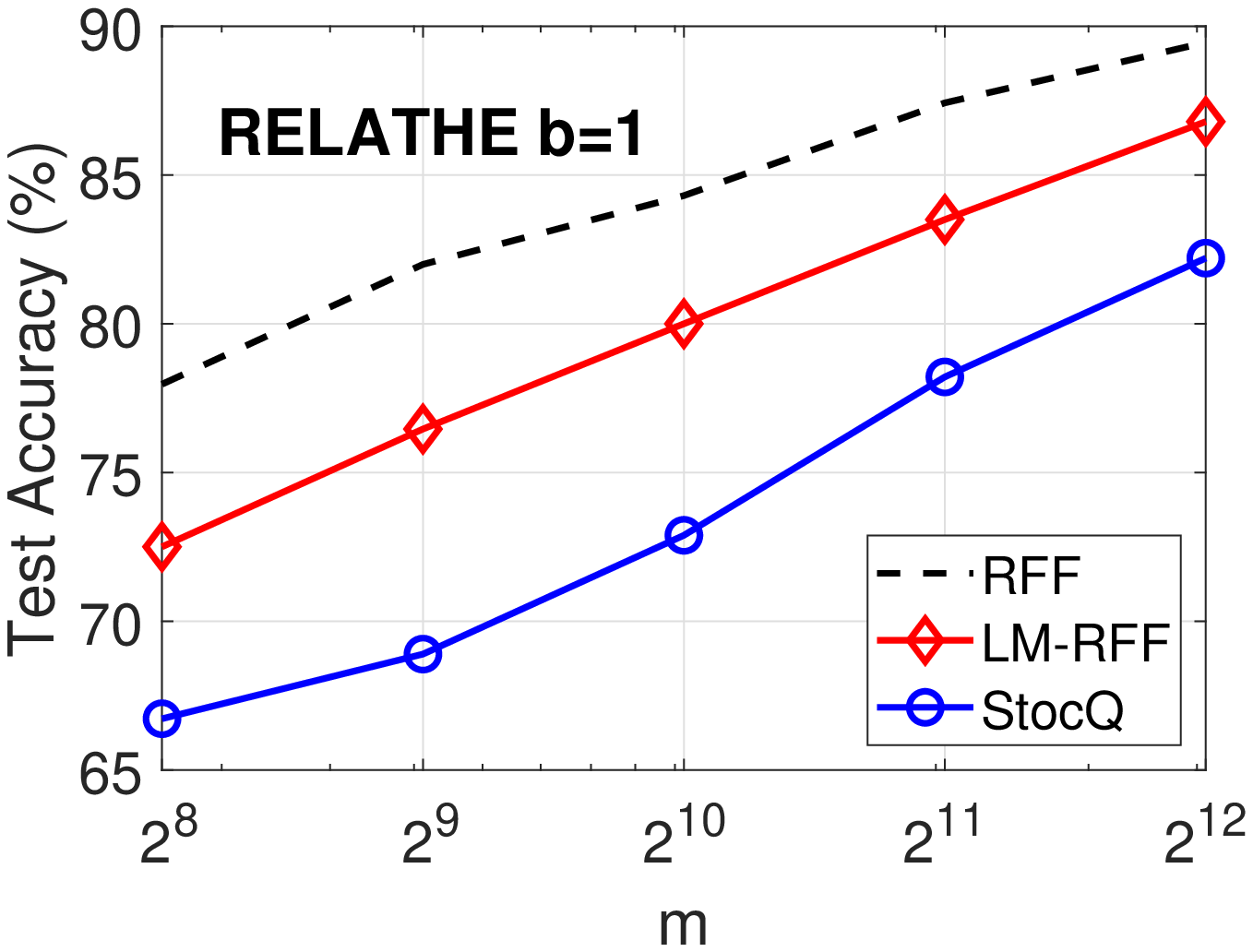}\hspace{-0.1in}
        \includegraphics[width=1.72in]{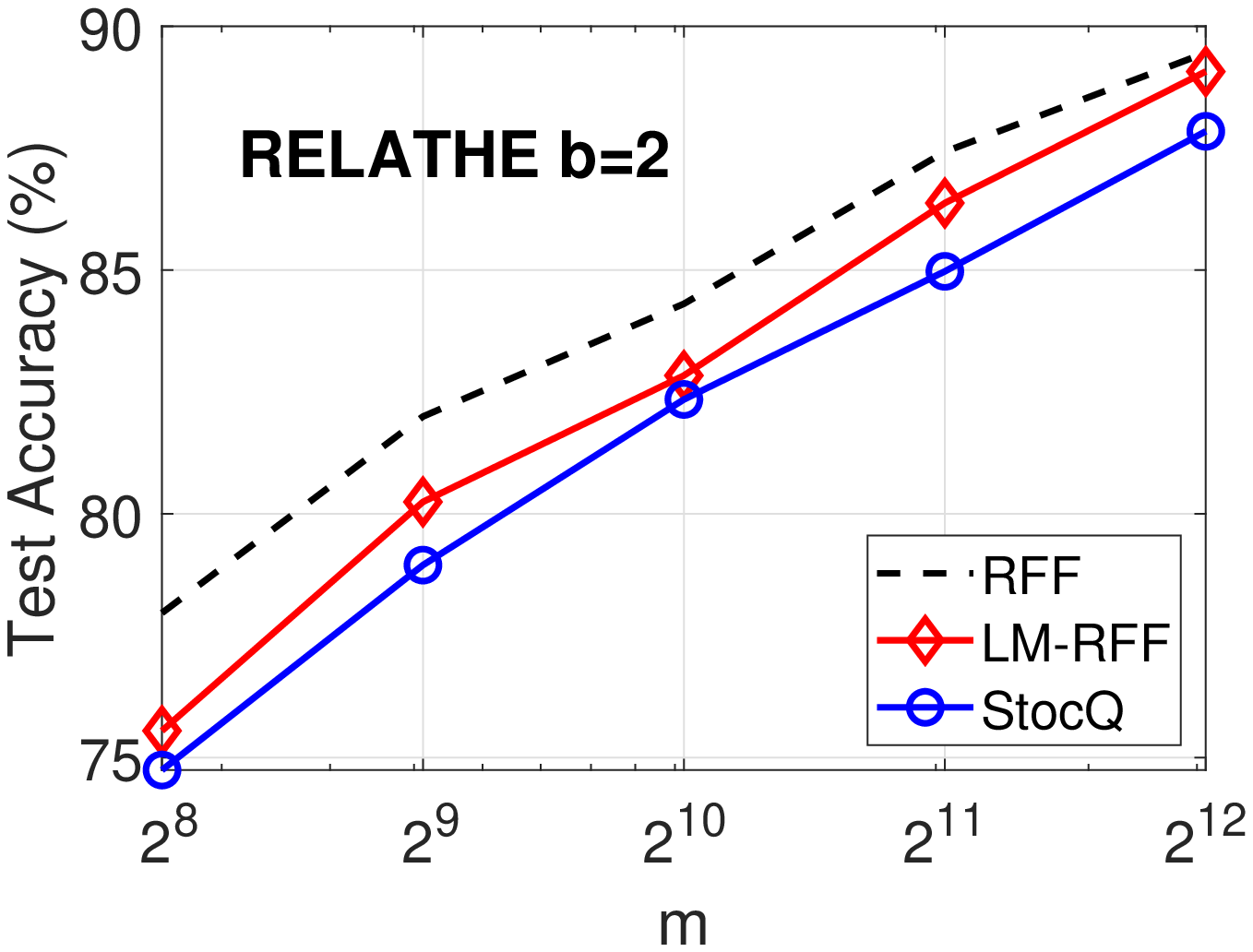}\hspace{-0.1in}
        \includegraphics[width=1.72in]{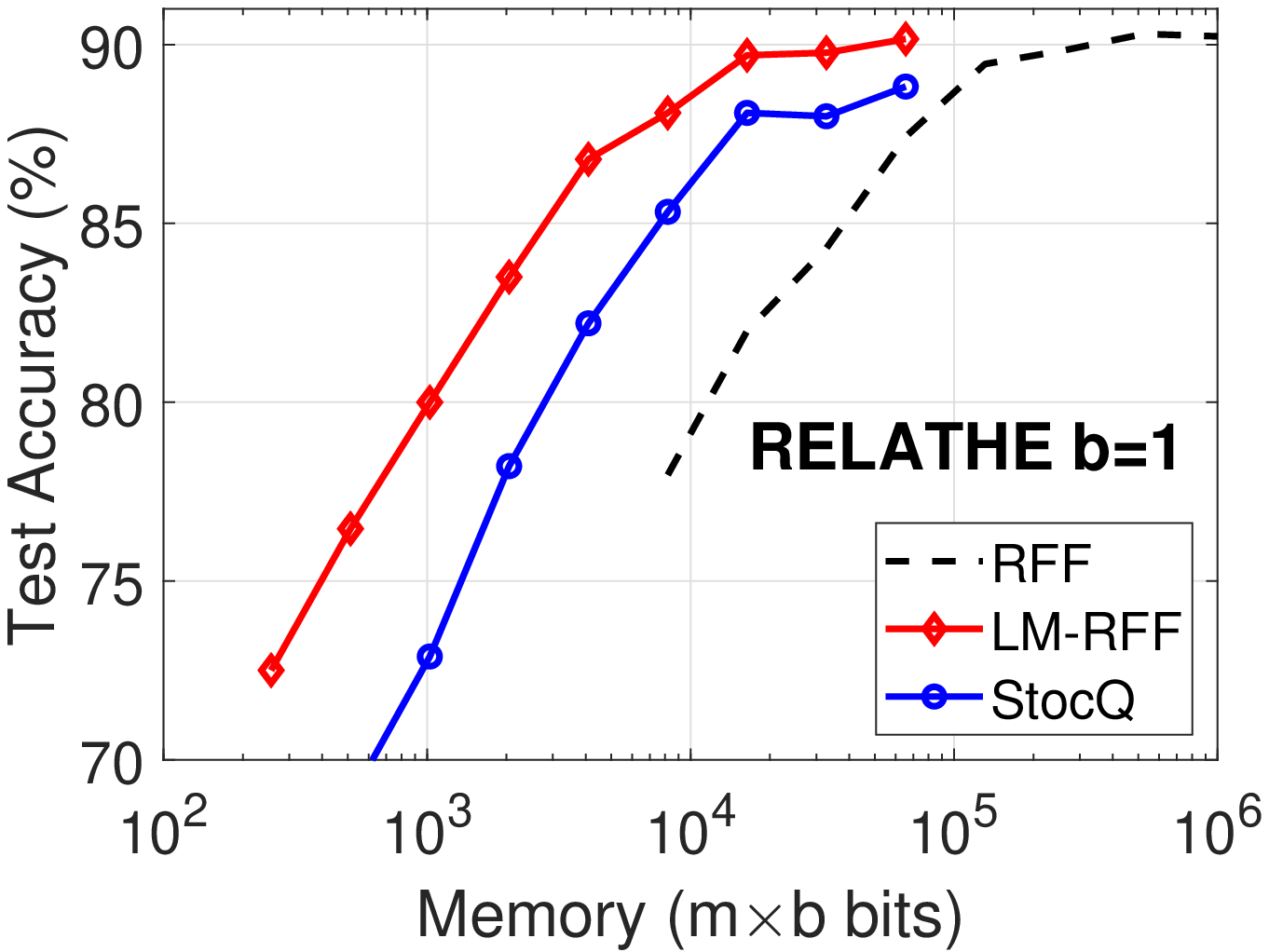}\hspace{-0.1in}
        \includegraphics[width=1.72in]{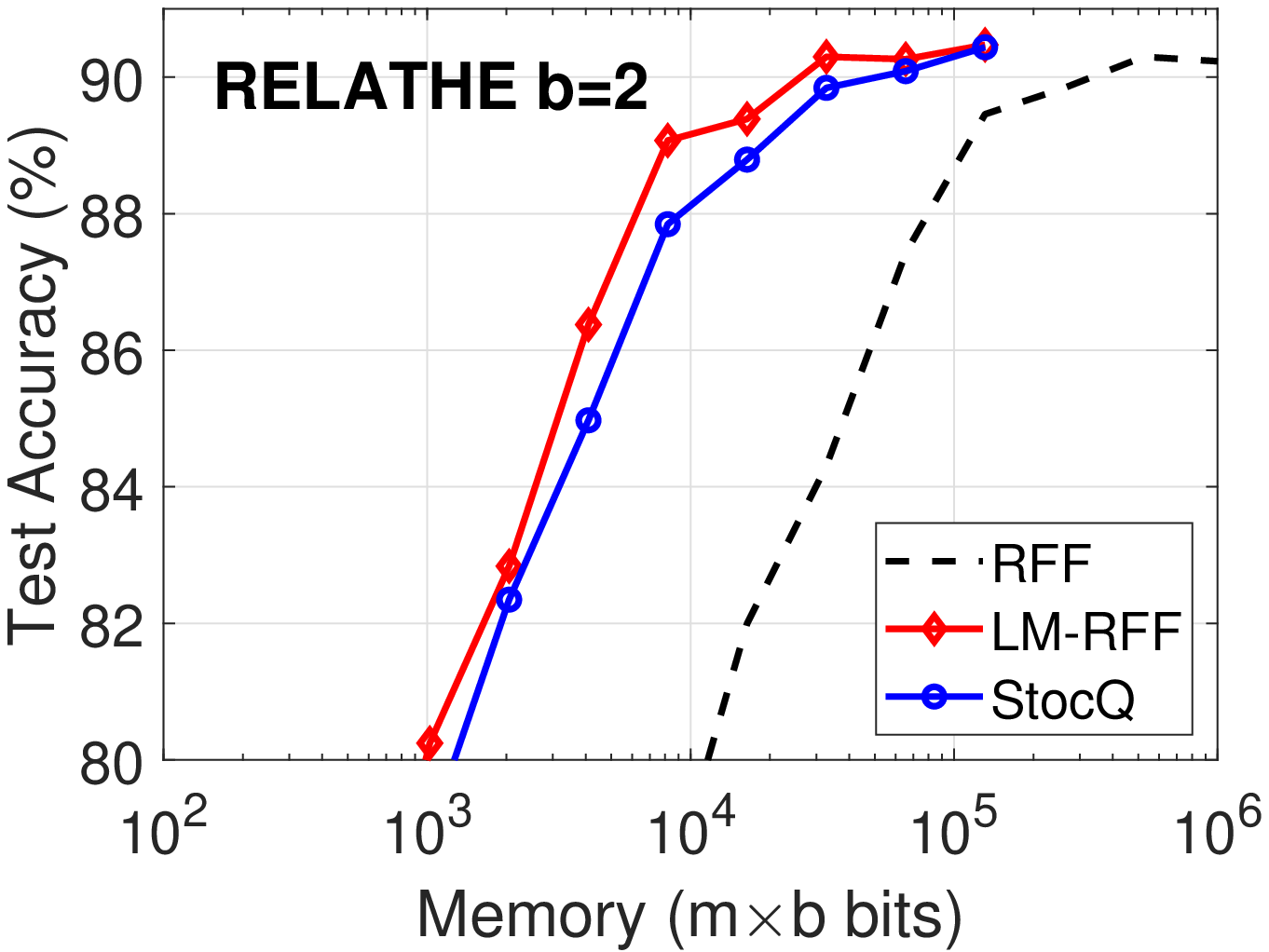}
        }
        \mbox{\hspace{-0.12in}
        \includegraphics[width=1.72in]{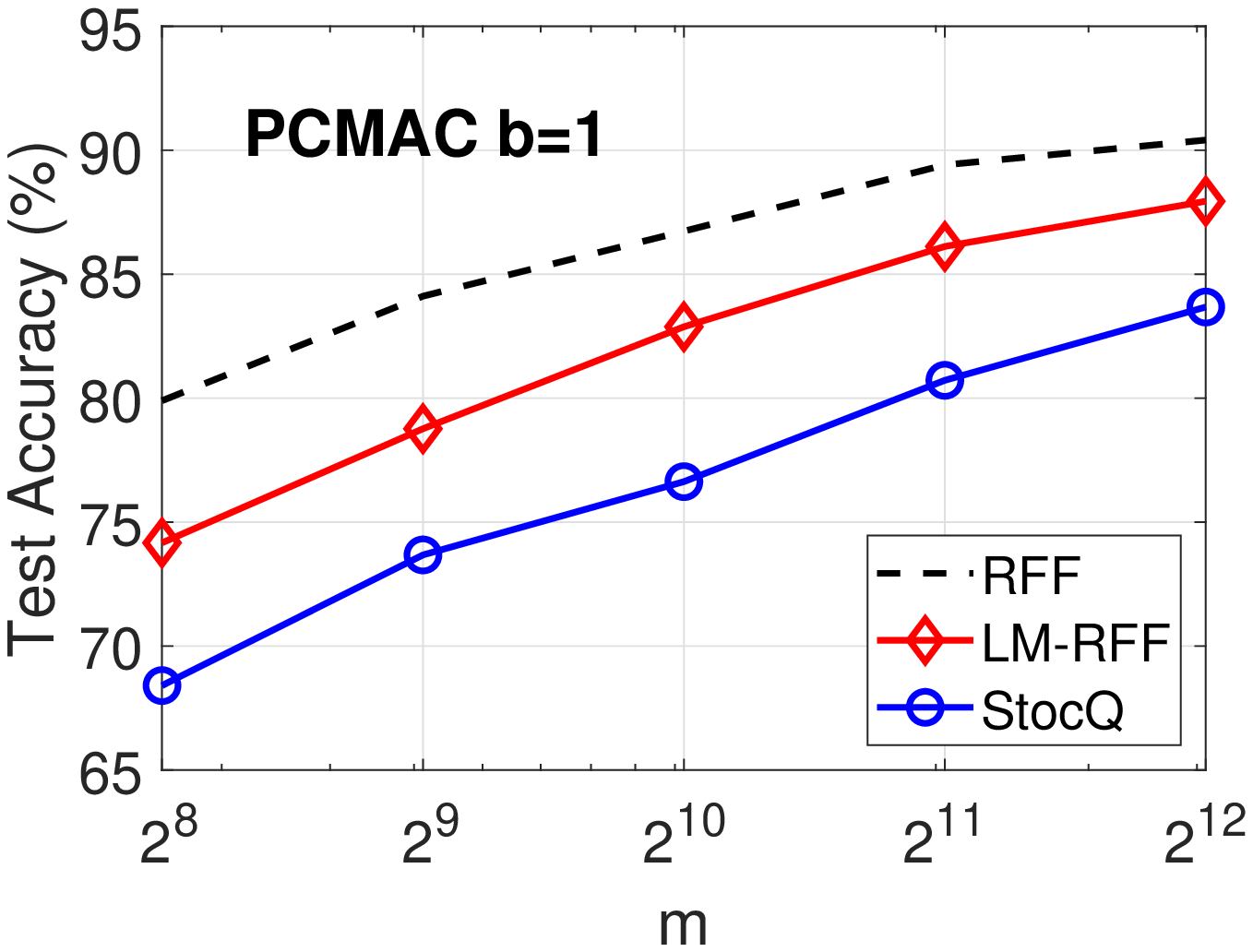}\hspace{-0.1in}
        \includegraphics[width=1.72in]{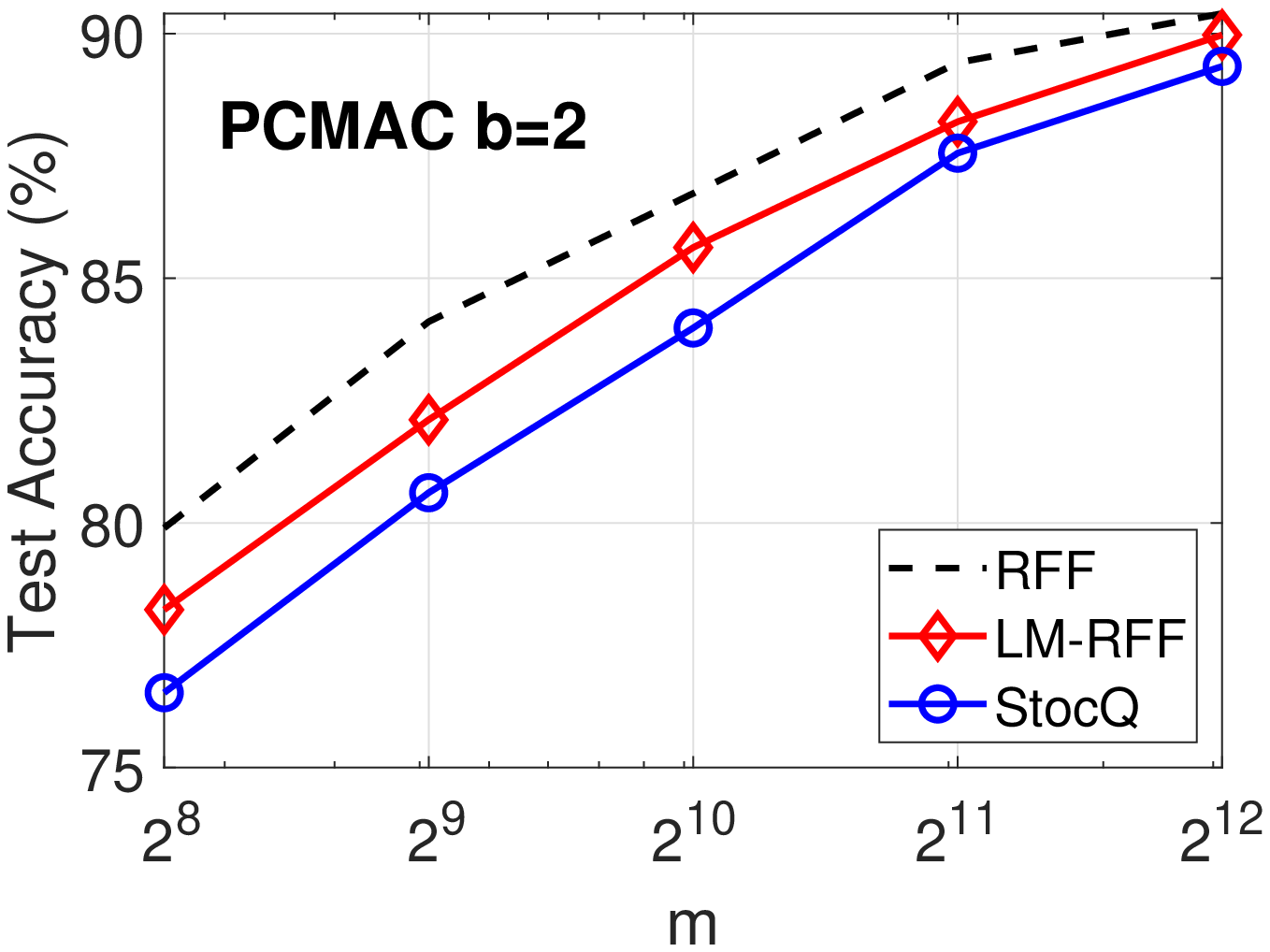}\hspace{-0.1in}
        \includegraphics[width=1.72in]{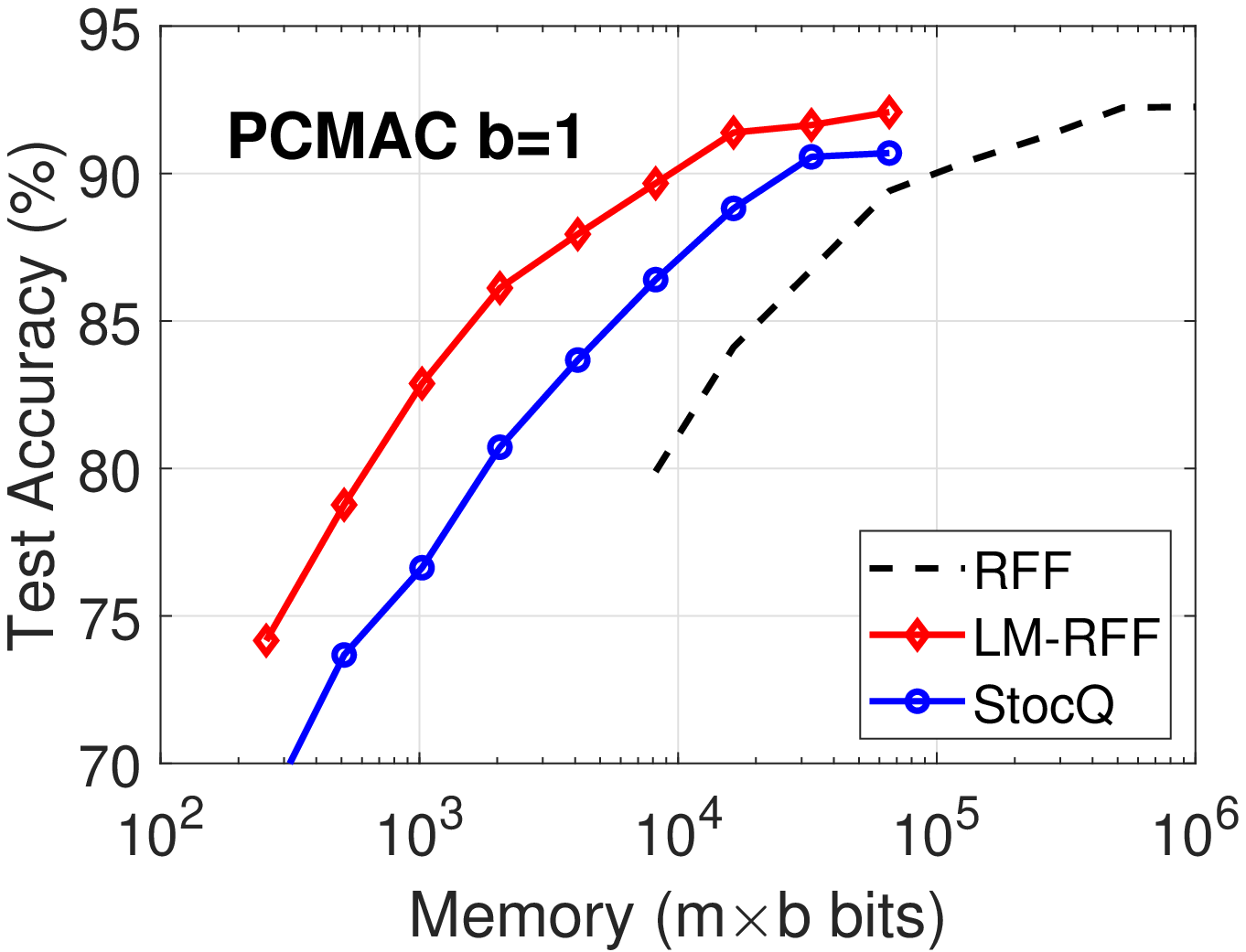}\hspace{-0.1in}
        \includegraphics[width=1.72in]{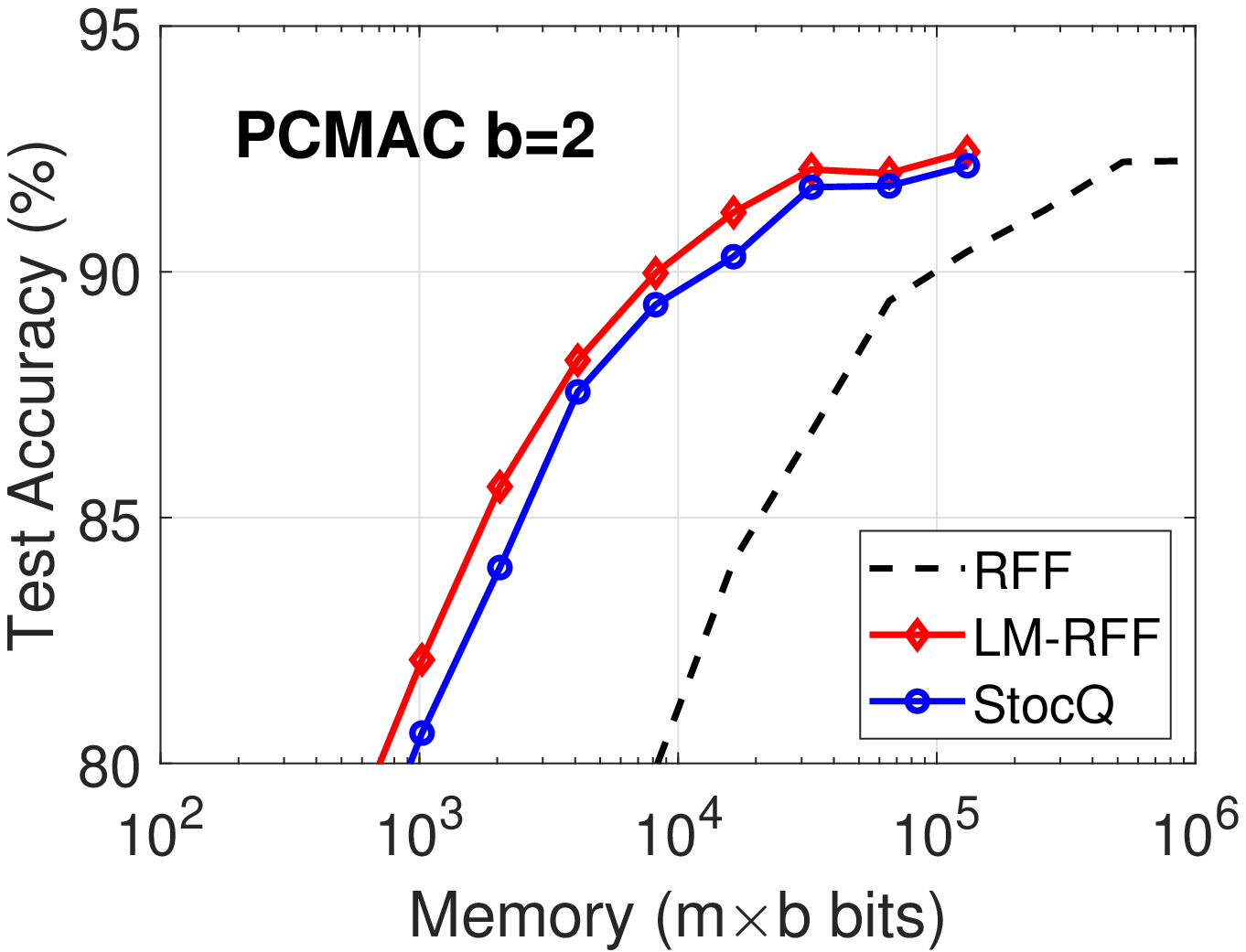}
        }
    \vspace{-0.2in}
	\caption{\textbf{Left two columns:} Test accuracy of kernel SVM using different compression schemes of RFFs vs. number of random features $m$. \textbf{Right two columns:} Test accuracy vs. memory per sample ($m\times b$).}
	\label{fig:SVM_all}
\end{figure}

\subsection{Kernel SVM (KSVM)} \label{sec:KSVM}

For this task, we use four popular public datasets from UCI repository\footnote{\url{https://archive.ics.uci.edu/ml/index.php}}~\citep{UCI} and ASU database\footnote{\url{https://jundongl.github.io/scikit-feature/datasets.html}}~\citep{ASU}. All the data samples are pre-processed by instance normalization, and we randomly split each dataset into 60\% for training and 40\% for testing. For each task and each quantization method, the best test tuned accuracy is reported, averaged over 10 independent runs.

To compare the learning power of different compression schemes, we provide the test accuracy vs. number of RFFs in the left two columns of Figure~\ref{fig:SVM_all}, with $b=1,2$. We observe: 1) LM-RFF substantially outperforms StocQ on all datasets with low bits. In particular, 1-bit StocQ performs very poorly, while 1-bit LM-RFF achieves similar accuracy as full-precision RFF; 2) On all datasets, LM-RFF with $b=2$ already approaches the accuracy of full-precision RFF with moderate $m\approx 4000$, indicating the superior learning capacity of LM-RFF under deep feature compression. As  expected, with larger $b$, the performance of StocQ approaches that of LM-RFF. In particular, when $b=4$, LM-RFF and StocQ perform  similarly on those~datasets.

To characterize the memory efficiency\footnote{For simplicity, we mainly consider the memory cost for (quantized) RFF storage, which dominates in large-scale learning.}, note that under $b$-bit compression, each data sample requires $m\times b$ bits in total for storage. If we assume that each full-precision RFF  is represented by 32 bits, then the storage cost per sample for full-precision RFF is $32m$. This allows us to plot the test accuracy against the total memory cost per sample, as shown in the right two columns of Figure~\ref{fig:SVM_all}. A curve near upper-left corner is more desirable, which means that the method requires less memory to achieve some certain test accuracy.
\begin{itemize}
    \item We observe significant advantage of LM-RFF over full-precision RFF in terms of memory efficiency. For example, to achieve $95\%$ accuracy on \texttt{Isolet}, LM-RFF (both 1-bit and 2-bit) requires $\approx 2000$ bits per sample, while RFF needs $\approx 18000$ bits, leading to a 9x compression ratio. Similar comparison holds for all datasets, and in general the compression ratio of LM-RFF is around 10x.

    \item When compared with StocQ, we see consistently advantage of LM-RFF in memory cost. In general, LM-RFF can further improve the compression ratio of StocQ by 2x$\sim$4x. Additionally, LM-RFF typically requires fewer-bit quantizers (smaller $b$) than StocQ to achieve satisfactory accuracy.
\end{itemize}

\subsection{Kernel Ridge Regression (KRR)}  \label{sec:KRR}

For kernel ridge regression (KRR), we use a synthetic dataset admitting high non-linearity. Precisely, each data sample $u\in \mathbb R^{10}$ is drawn from i.i.d. $N(0,1)$. We generate the response by $y_i=\sum_{p=1}^3 \beta_p x^p+\epsilon$, where $\beta_1=[1,2,...,10]$, $\beta_2=[1,1,...,1]$, $\beta_3$ and $\epsilon$ also follow i.i.d. $N(0,1)$. We simulate $40,000$ independent samples for training and $10,000$ for testing.

We summarize KRR results in Figure~\ref{fig:KRR}. Again, with same $b$ and number of RFFs, LM-RFF consistently beats StocQ especially with low bits. We see that 1-bit LM-RFF even outperforms 2-bit StocQ, and when $b=4$, we still observe considerable advantage of LM-RFF over StocQ. In the second row, we present the memory efficiency comparison. Note that, due to high-order terms in the true model, the test MSE of linear kernel is $20.8$, while learning with full-precision RFF significantly reduces it to $3.5$. With largest memory budget that is tested, 1-bit and 2-bit LM-RFF yield $5.9$ and $4.1$ test MSE respectively, which are already quite close to 3.5, while for 1-bit and 2-bit StocQ, the test losses are $14.5$ and $5.0$ respectively, much worse than those of LM-RFF. We again see significant storage saving of LM-RFF. For instance, to reach the same test MSE (e.g., 10), the compression ratio is about 5x for $b=4$ compared with full-precision RFF. Moreover, the advantage of LM-RFF over StocQ is also significant for this regression problem.

\begin{figure}[t]
    \begin{center}
        \mbox{
        \includegraphics[width=2.2in]{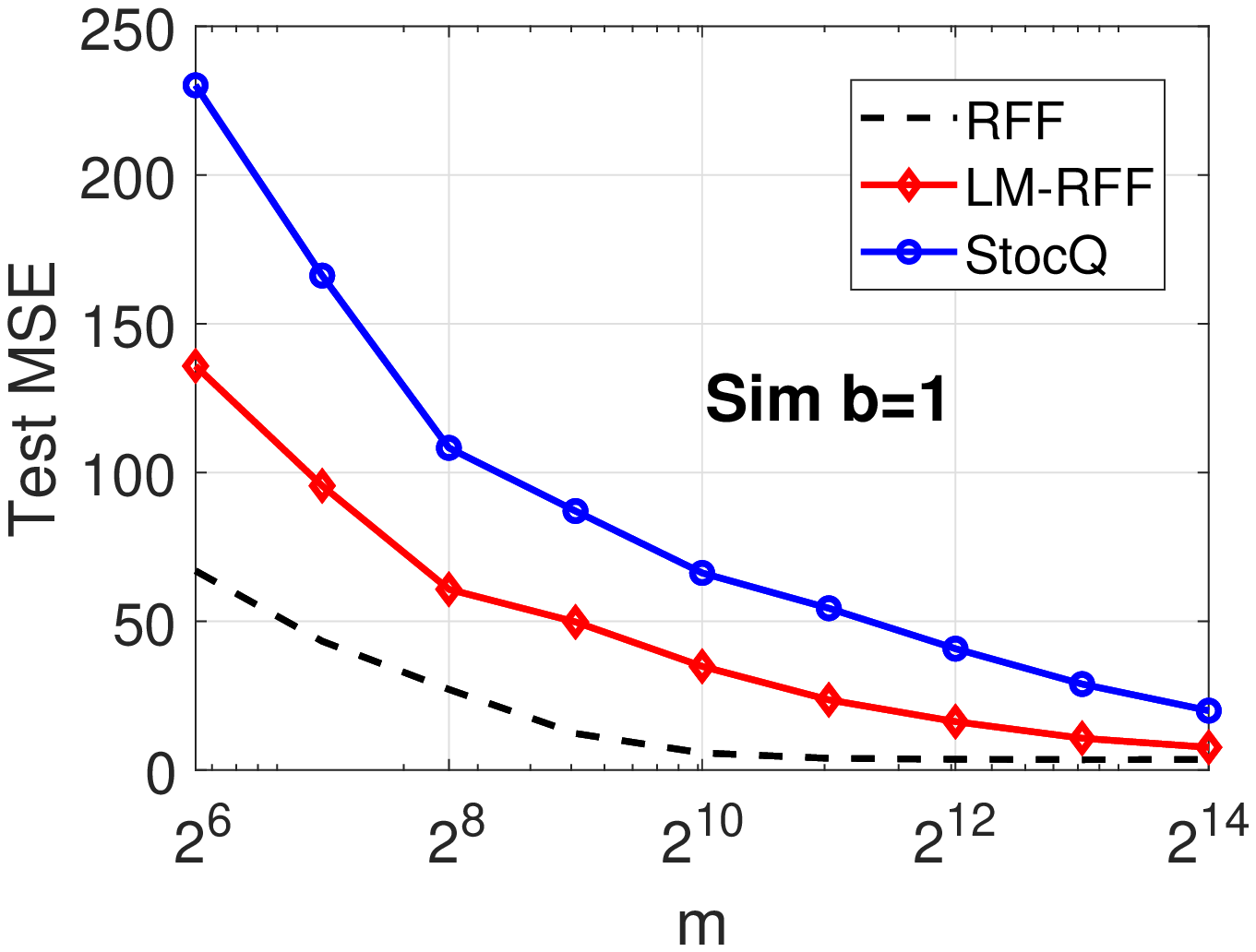}
        \includegraphics[width=2.2in]{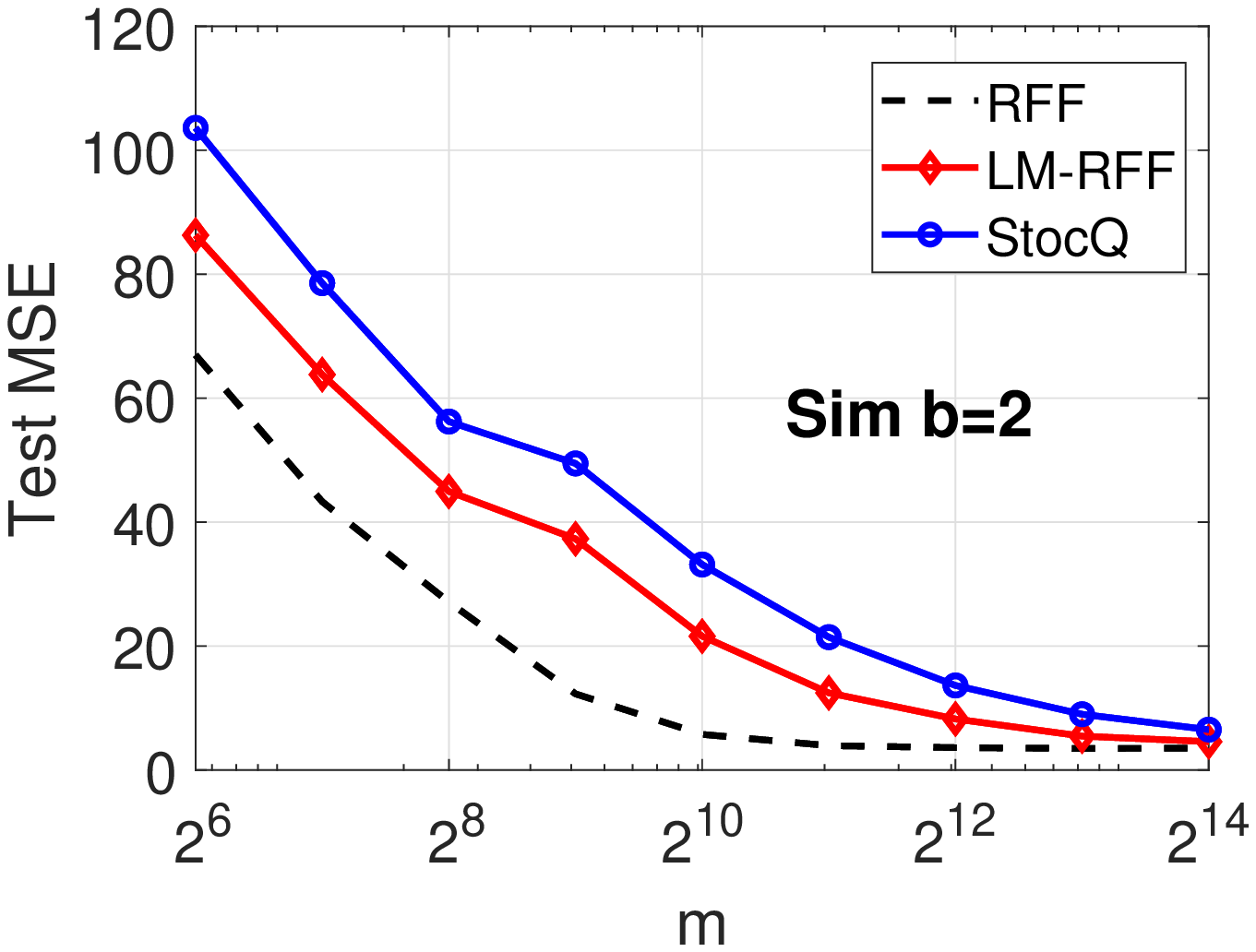}
        \includegraphics[width=2.2in]{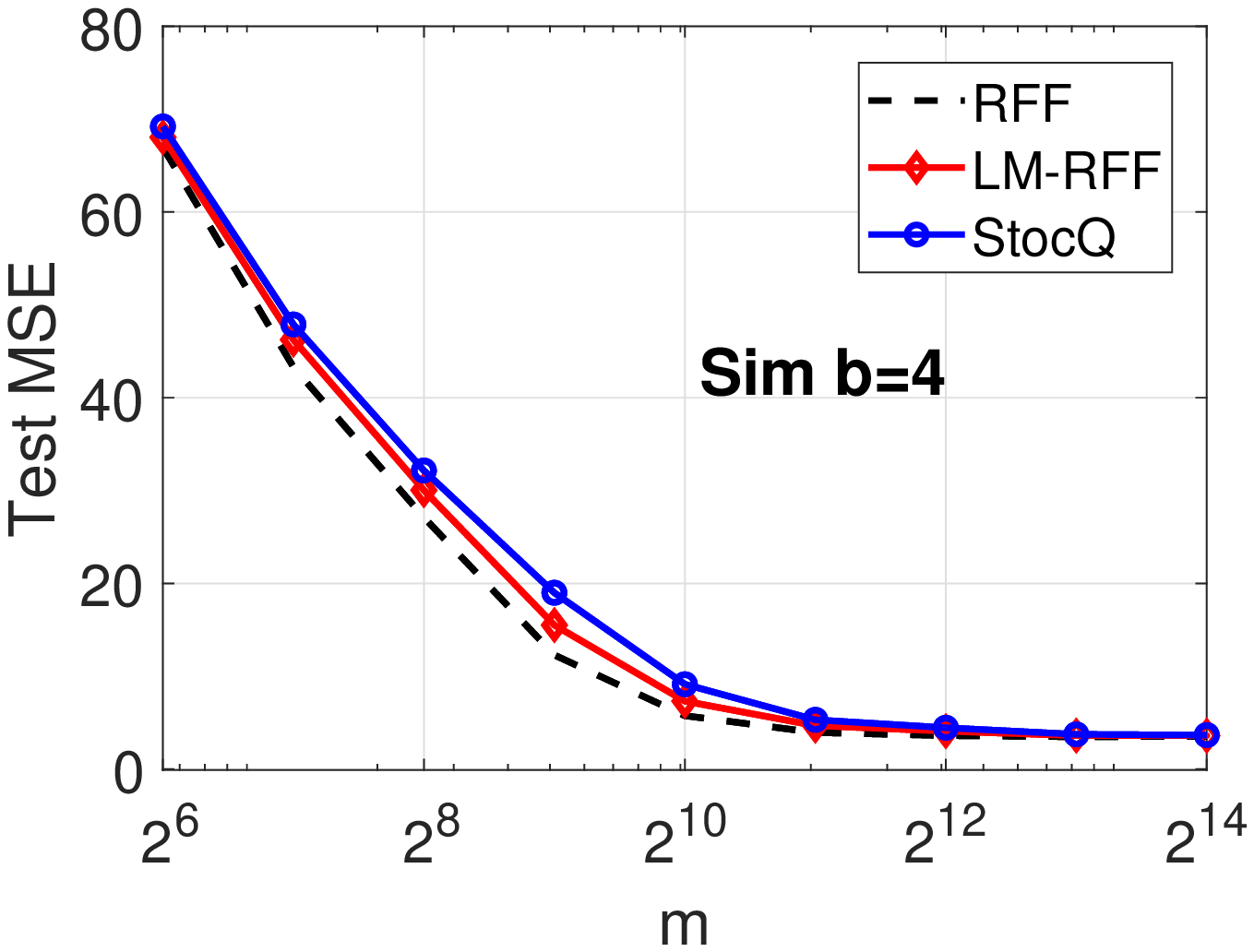}
        }
        \mbox{
        \includegraphics[width=2.2in]{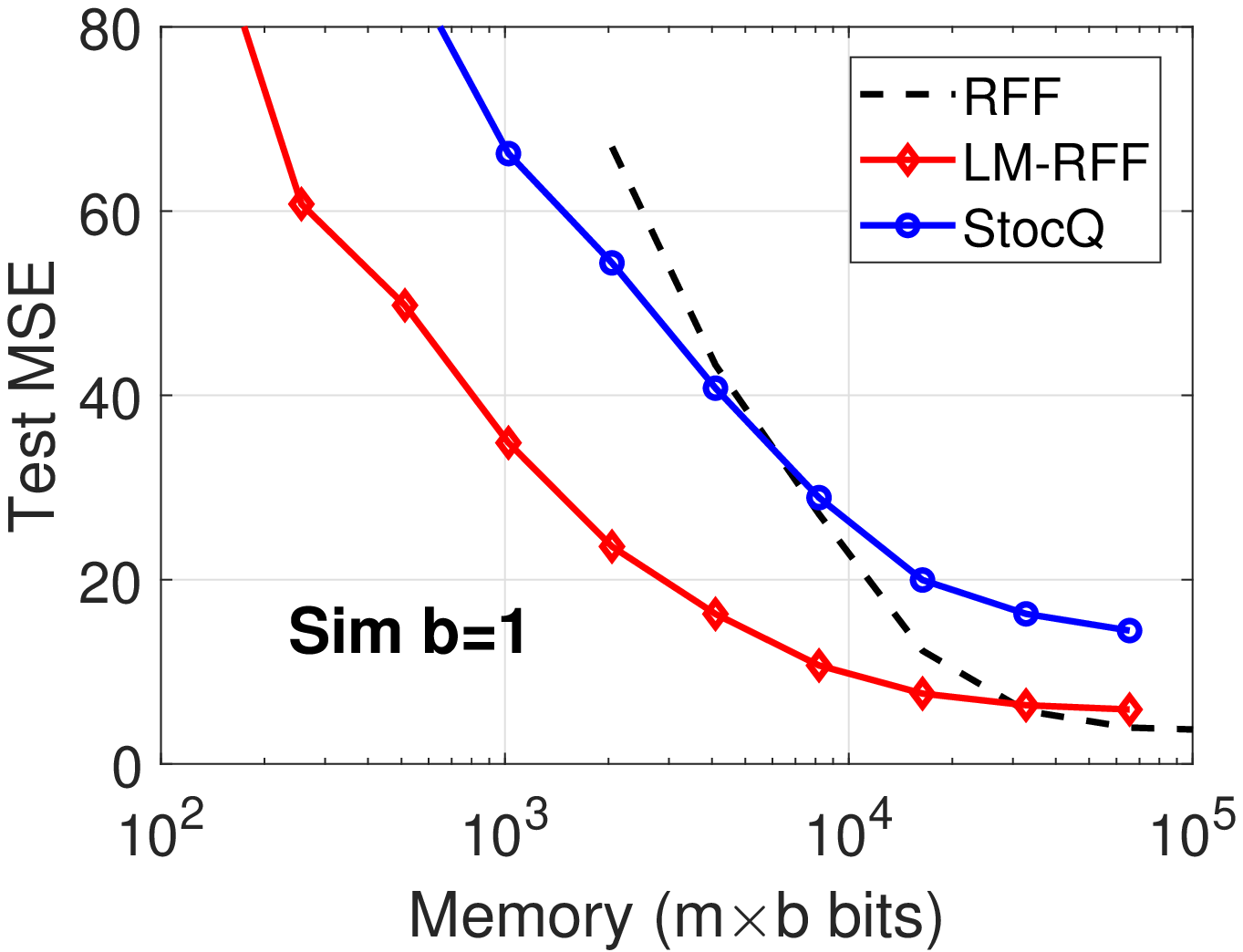}
        \includegraphics[width=2.2in]{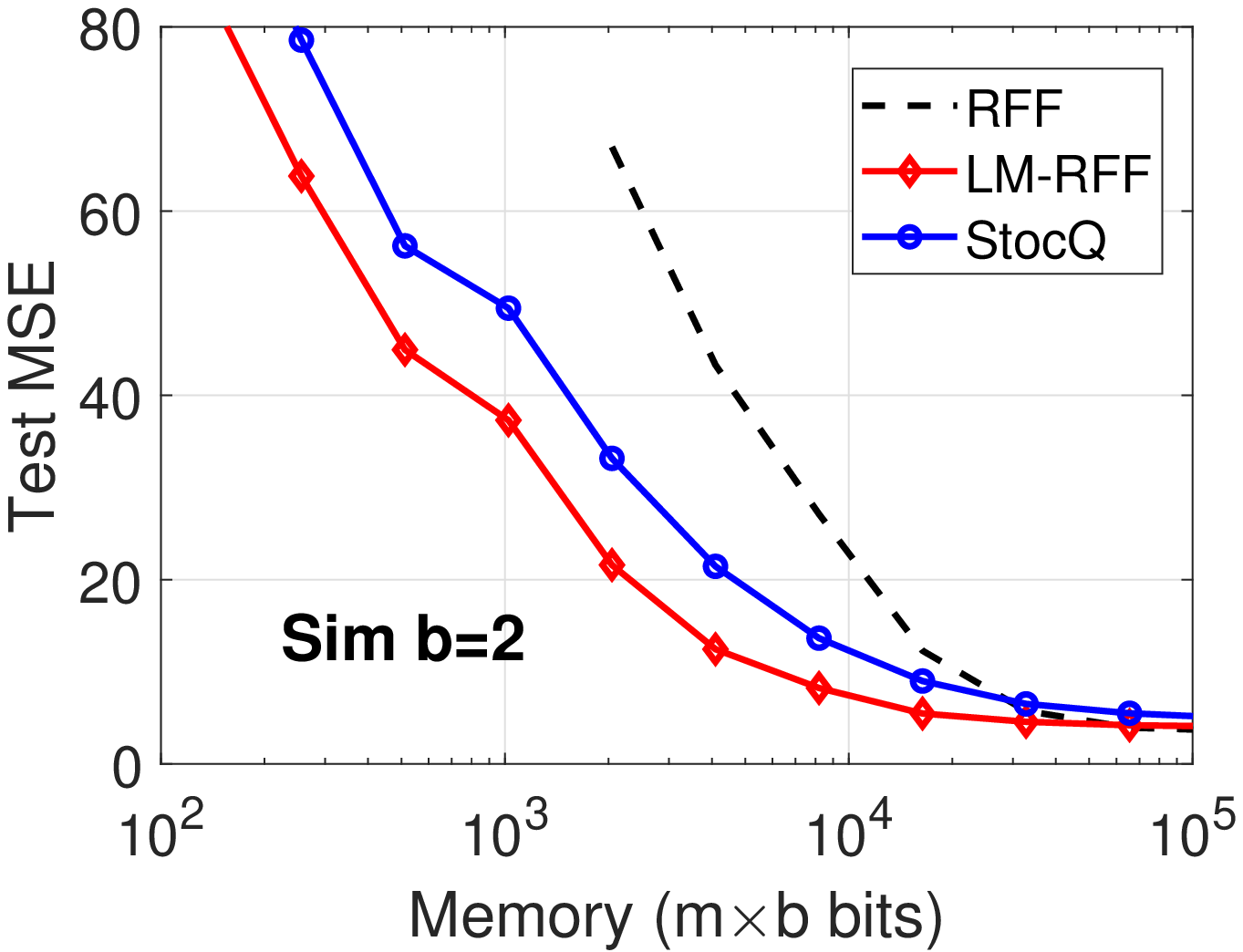}
        \includegraphics[width=2.2in]{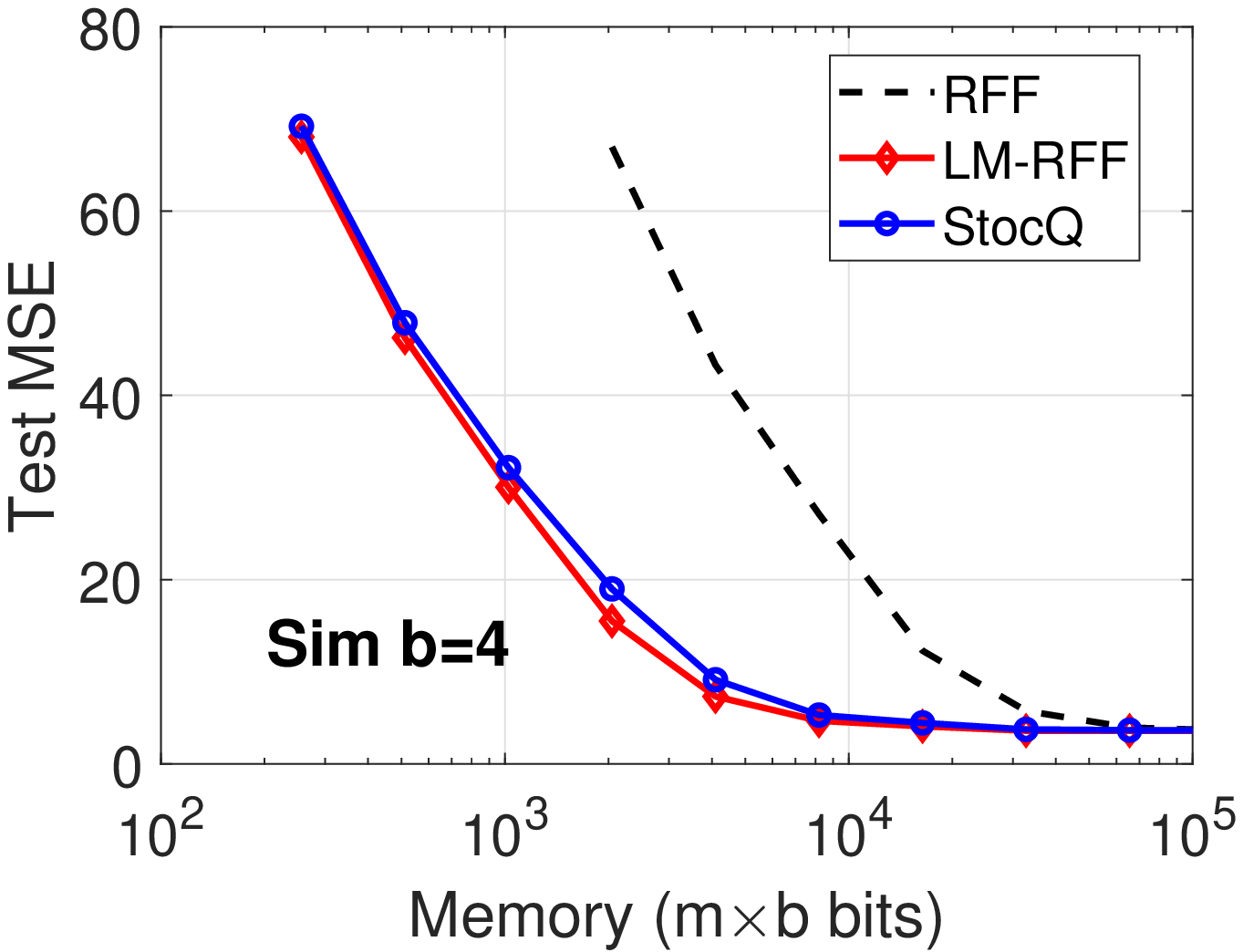}
        }
    \end{center}
	\caption{Kernel ridge regression: test MSE of KRR, StocQ vs. LM-RFF. \textbf{Upper panels:} Test MSE vs. number of random features $m$. \textbf{bottom panels} Test MSE vs. memory per sample ($m\times b$ bits).}
	\label{fig:KRR}
\end{figure}

\subsection{Scale-invariant Kernel Approximation Error} \label{sec:approximation error}

Recall the notation $U=[u_1,...,u_n]^T$ as the data matrix. Let $\mathcal K$ be the $n\times n$ Gaussian kernel matrix, with $\mathcal K_{ij}=K(u_i,u_j)$. Denote $\hat{\mathcal K}$ as the estimated kernel matrix by an approximation algorithm. Kernel Approximation Error (KAE) has been shown to play an important role in the generalization of learning with random features, including the norms~\citep{Proc:Cortes_AISTATS10,Proc:Gitten_ICML13,Proc:Sutherland_UAI15} of $\hat{\mathcal K}-\mathcal K$ and spectral approximations~\citep{Proc:Bach_COLT13,Proc:Alaoui_NIPS15,Proc:Avron_ICML17,Proc:Zhang_AISTATS19}. We investigate the KAEs to better justify the impressive generalization ability of LM-RFF from a theoretical aspect.

\vspace{0.05in}

However, existing KAE metrics are not robust to bias. Consider $\mathbb E[\hat{\mathcal K}]=\beta\mathcal K$ with some $\beta>0$. Obviously, learning with $\beta\mathcal K$ is equivalent to learning with $\mathcal K$ for kernel-distance based models like KSVM and KRR, since with proper scaling of model parameters, the objective functions/predictions are invariant of multiplying the input kernel matrix with a scalar. However, traditional KAEs do not generalize to this case. For example, when $\beta=0.1$, the 2-norm error $\Vert 0.1\hat{\mathcal K}-\mathcal K \Vert_2$ would be very large. To make the KAE metrics more robust, we define the scale-invariant KAE metrics as follows.

\vspace{0.1in}

\begin{definition}[Scale-Invariant KAE] \label{def:error_metric}
Let $\mathcal K$ be a kernel matrix and $\hat{\mathcal K}$ be its randomized approximation. We define
\begin{align*}
    &\Vert \hat{\mathcal K}-\mathcal K \Vert^*_2 = \min_{\beta>0} \Vert \beta\hat{\mathcal K}-\mathcal K \Vert_2,\ \ \Vert \hat{\mathcal K}-\mathcal K \Vert^*_F = \min_{\beta>0} \Vert \beta\hat{\mathcal K}-\mathcal K \Vert_F.
\end{align*}
Denote the minimizers as $\beta_2^*$ and $\beta_F^*$, respectively. Define
\begin{align*}
    &(\delta_1^*,\delta_2^*)= \inf_{\substack {\beta\in\{\beta_2^*,\beta_F^*\}\\(\delta_1,\delta_2)\geq 0}}\  \big\{\delta_1,\delta_2:(1-\delta_1)\mathcal K\preccurlyeq \beta \hat{\mathcal K} \preccurlyeq (1+\delta_2)\mathcal K \big\}.
\end{align*}
\end{definition}

\vspace{0.1in}

Our new KAE metrics are more general, adapted to the best scaling factor $\beta_2^*$ or $\beta_F^*$ of the estimated kernel. Since LM-RFF estimators are slightly biased (recall Observations~\ref{obv1} and~\ref{obv3}), Definition~\ref{def:error_metric} is important for appropriately evaluating our proposed LM-RFF kernel estimation approach. In Figure~\ref{fig_approx_error}, we provide scale-invariant $\Vert\cdot \Vert_2^*$, $\Vert\cdot \Vert_F^*$ and $\delta_2^*$ metrics\footnote{\cite{Proc:Zhang_AISTATS19} found that for kernel approximation methods, $\delta_2$ is fairly predictive of the generalisation performance.} on \texttt{Isolet} and \texttt{BASEHOCK} dataset as representatives. As we can see, LM-RFF always has smaller KAEs than StocQ with equal bits. In particular, with extreme 1-bit compression, StocQ has exceedingly large loss due to its large variance, while in many cases the KAEs of 1-bit LM-RFF are already quite small. The KAE comparison well aligns with, and to an extent explains, our experimental results in Section~\ref{sec:KSVM} and Section~\ref{sec:KRR} that 1) LM-RFF consistently outperforms StocQ, and 2) 1-bit StocQ generalizes very poorly. Thus, it provides a general justification of the superior effectiveness of LM-RFF in machine learning.

\begin{figure}[t]
    \begin{center}
        \mbox{
        \includegraphics[width=2.2in]{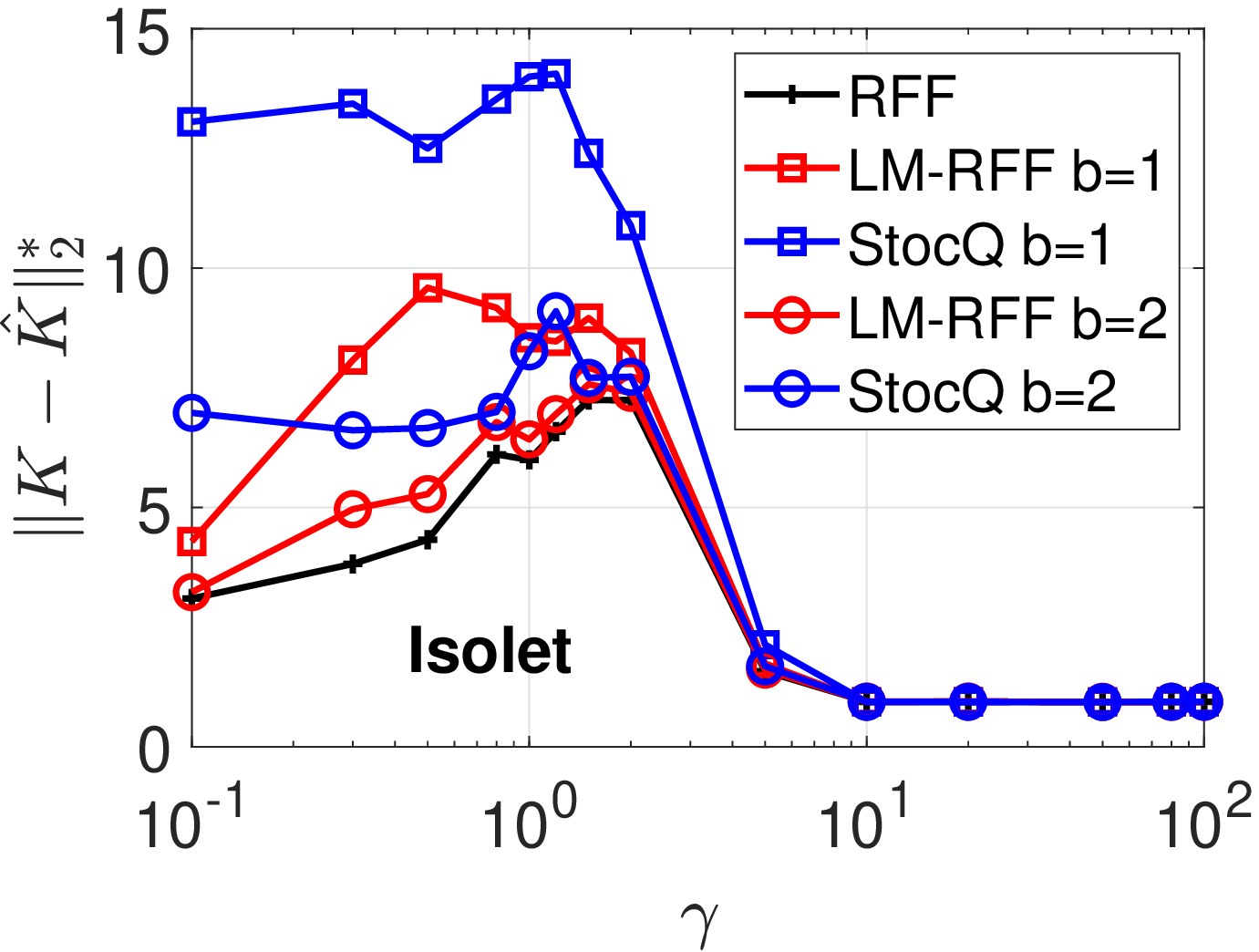}
        \includegraphics[width=2.2in]{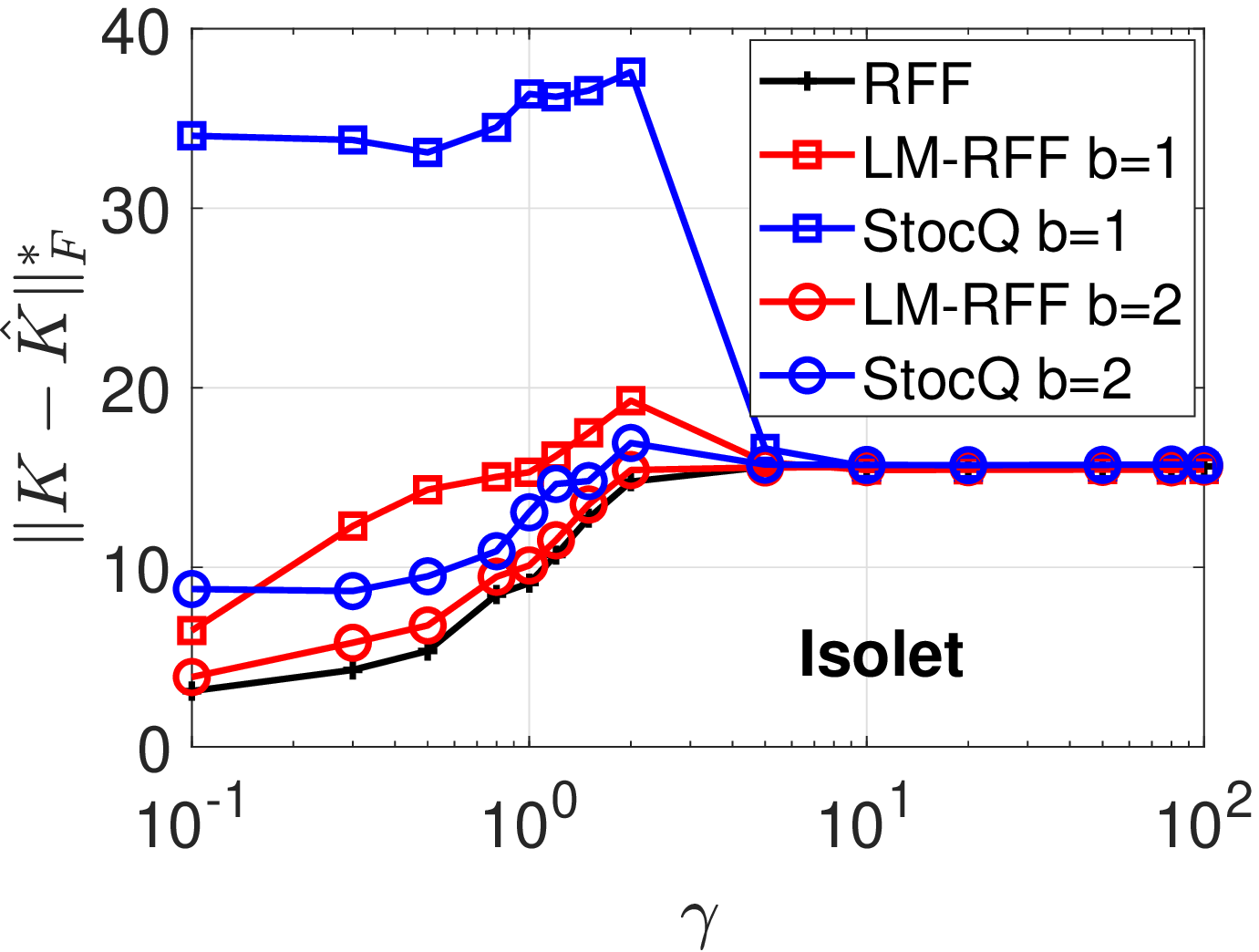}
        \includegraphics[width=2.2in]{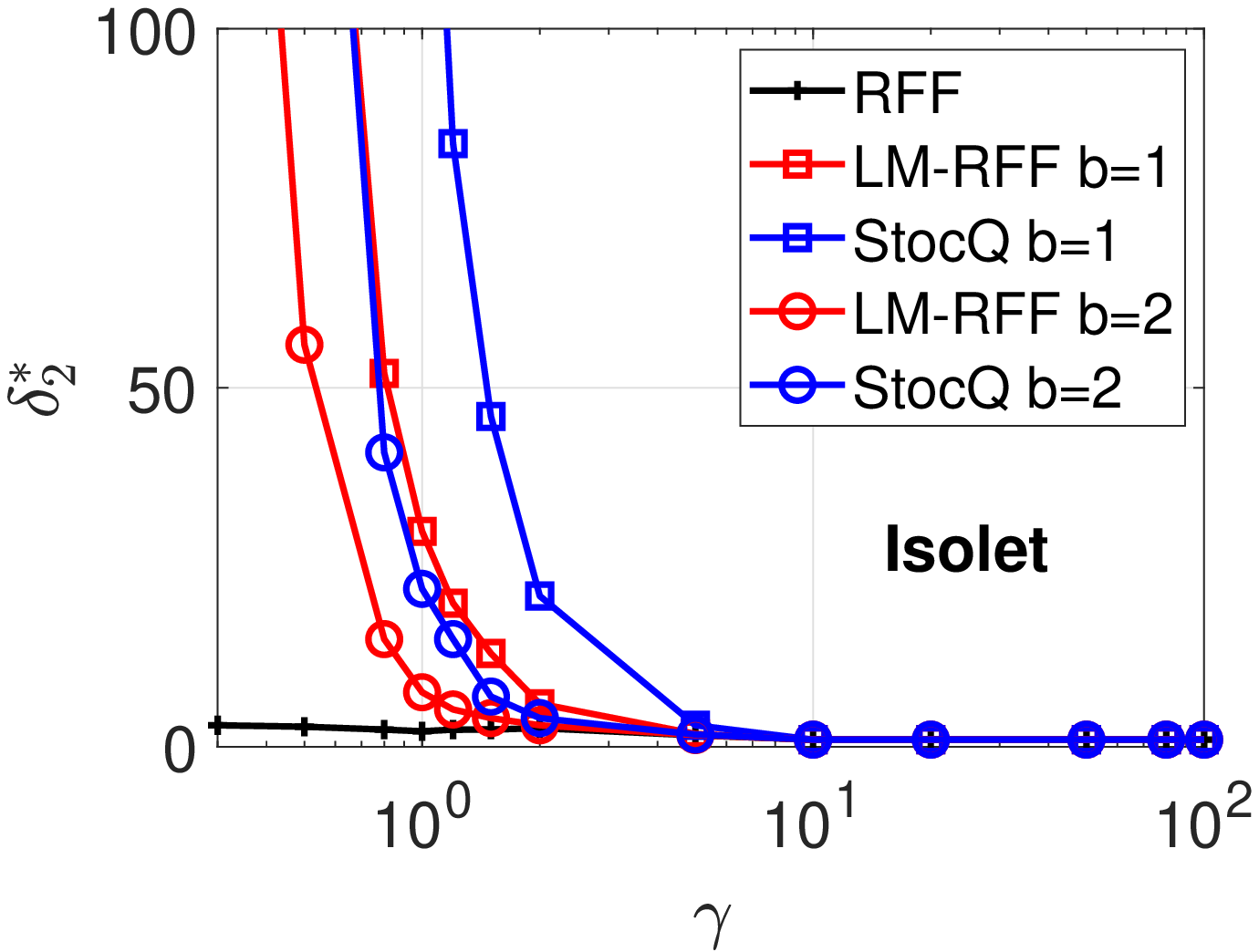}
        }
        \mbox{
        \includegraphics[width=2.2in]{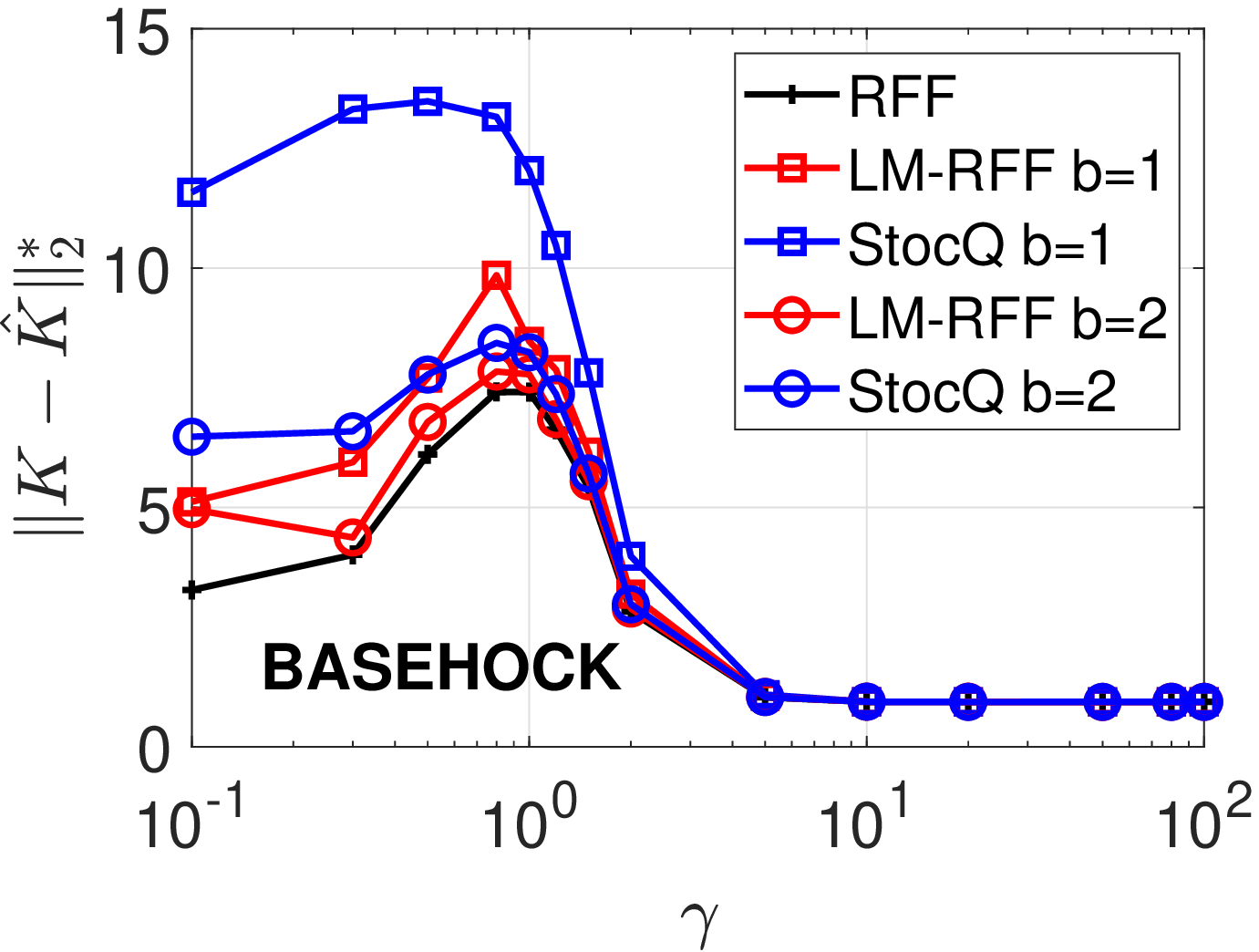}
        \includegraphics[width=2.2in]{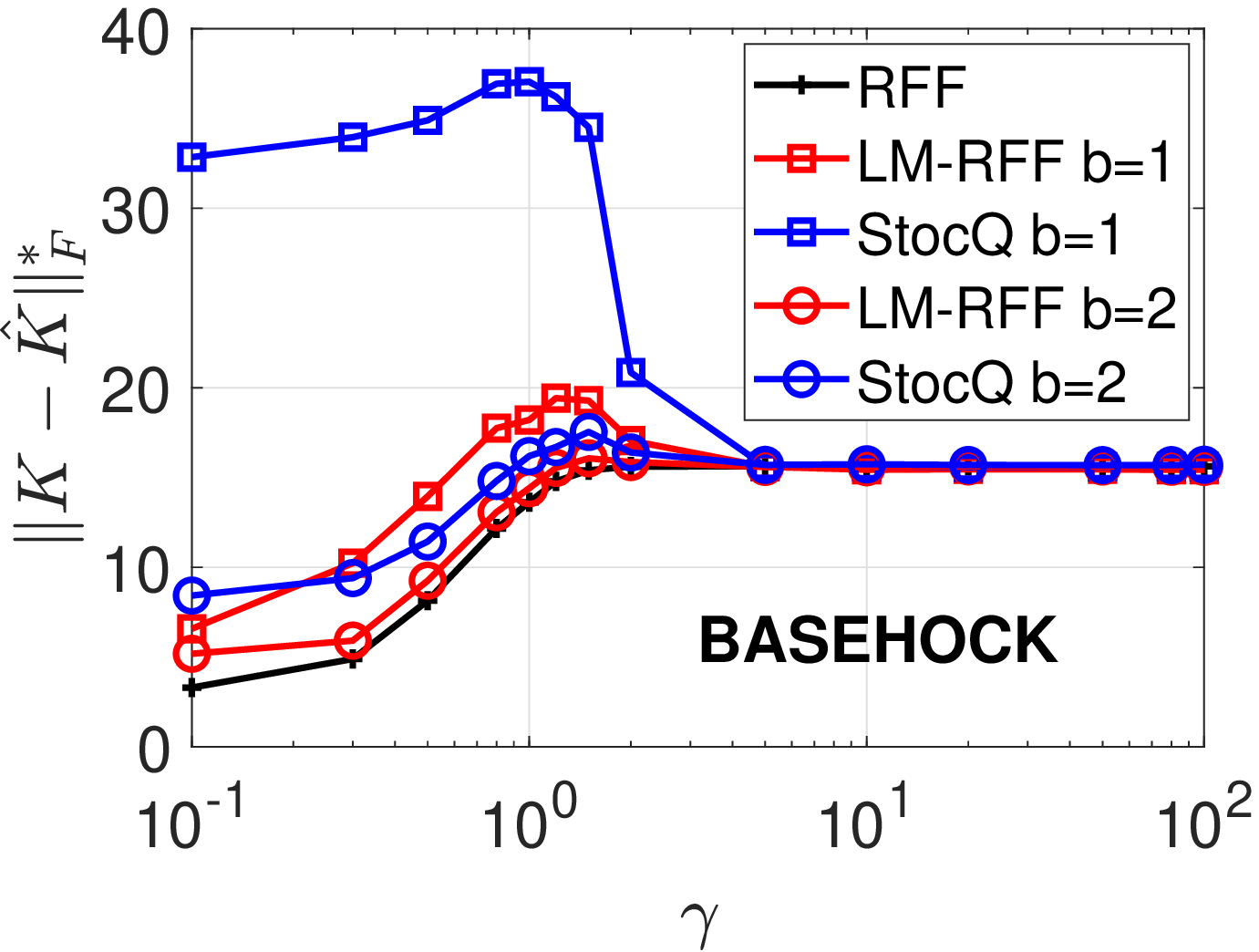}
        \includegraphics[width=2.2in]{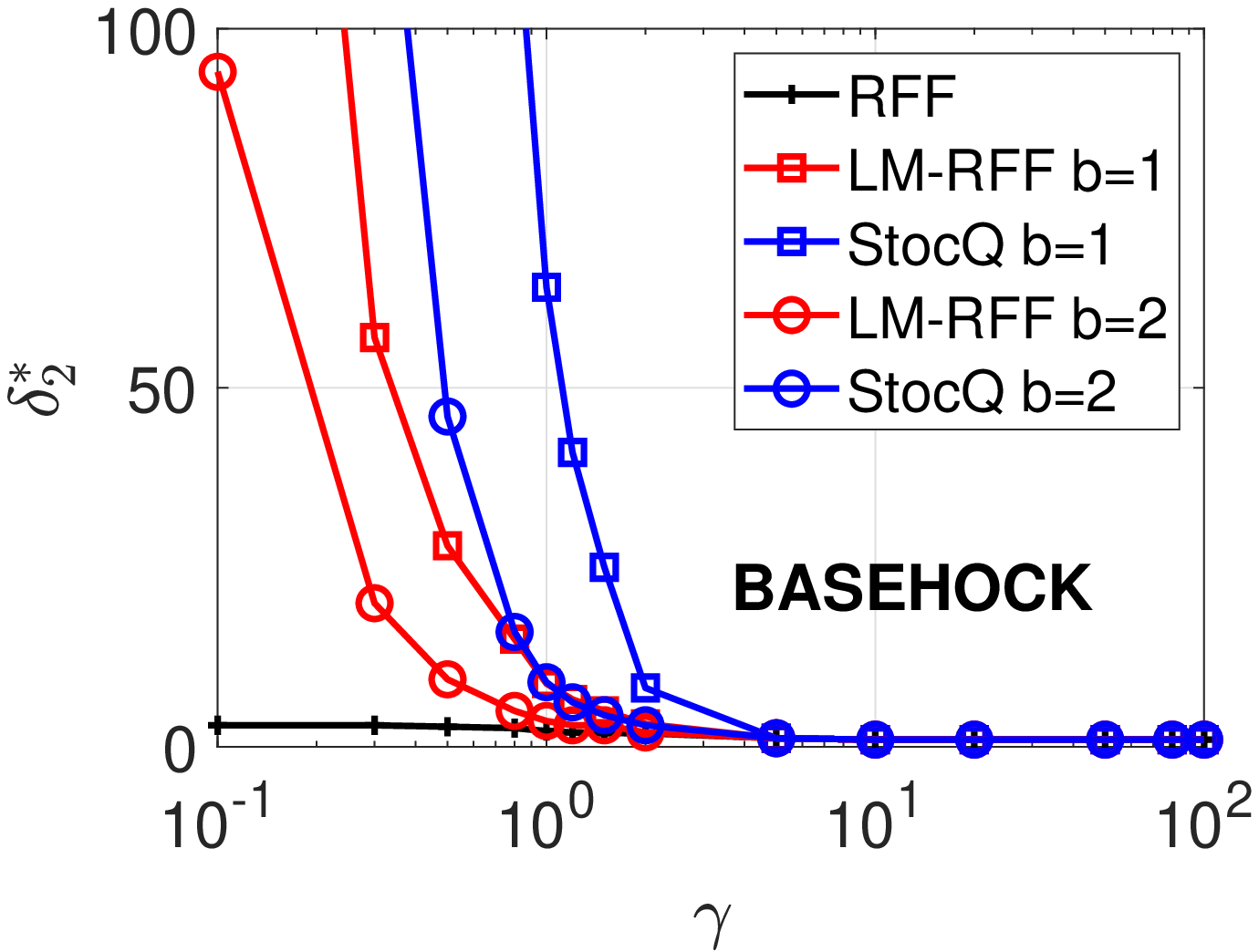}
        }
    \end{center}
    \vspace{-0.2in}
	\caption{Scale-invariant kernel approximation error (Definition~\ref{def:error_metric}) of LM-RFF vs. StocQ, $m=2^{10}$. Black curves are full-precision RFF, blue curves are StocQ, and curves in red represent our proposed LM-RFF. For all metrics, the smaller the better.}
	\label{fig_approx_error}
\end{figure}

\section{Conclusion}

The technique of random Fourier features (RFF) is a popular method to solve the computational bottleneck in large-scale (Gaussian) kernel learning tasks. In this paper, we study quantization methods to compress RFFs for substantial memory savings and efficient computations. In particular, we focus on developing quantization algorithms based on the Lloyd-Max (LM) framework and propose two methods named LM-RFF and LM$^2$-RFF. In addition, we also analyze a method based on stochastic rounding (StocQ). Both theoretically and empirically, LM-RFF significantly outperforms StocQ on many tasks, especially when the number of bits is not large. Compared to full-precision (e.g., 32- or 64-bit) RFFs, the experiments imply that often a 2-bit LM-RFF quantizer achieve comparable performance with full-precision, at a substantial (e.g., 10x) saving in memory cost, which would be highly beneficial in practical applications.

\newpage

\bibliography{standard}

\begin{thebibliography}{62}
\providecommand{\natexlab}[1]{#1}
\providecommand{\url}[1]{\texttt{#1}}
\expandafter\ifx\csname urlstyle\endcsname\relax
  \providecommand{\doi}[1]{doi: #1}\else
  \providecommand{\doi}{doi: \begingroup \urlstyle{rm}\Url}\fi

\bibitem[Achlioptas(2003)]{Article:Achlioptas_JCSS03}
Dimitris Achlioptas.
\newblock Database-friendly random projections: Johnson-lindenstrauss with
  binary coins.
\newblock \emph{J. Comput. Syst. Sci.}, 66\penalty0 (4):\penalty0 671--687,
  2003.

\bibitem[Affandi et~al.(2013)Affandi, Fox, and Taskar]{Proc:Affandi_NIPS13}
Raja~Hafiz Affandi, Emily~B. Fox, and Ben Taskar.
\newblock Approximate inference in continuous determinantal processes.
\newblock In \emph{Advances in Neural Information Processing Systems (NIPS)},
  pages 1430--1438, Lake Tahoe, NV, 2013.

\bibitem[Alaoui and Mahoney(2015)]{Proc:Alaoui_NIPS15}
Ahmed~El Alaoui and Michael~W. Mahoney.
\newblock Fast randomized kernel ridge regression with statistical guarantees.
\newblock In \emph{Advances in Neural Information Processing Systems (NIPS)},
  pages 775--783, Montreal, Canada, 2015.

\bibitem[Avron et~al.(2017)Avron, Kapralov, Musco, Musco, Velingker, and
  Zandieh]{Proc:Avron_ICML17}
Haim Avron, Michael Kapralov, Cameron Musco, Christopher Musco, Ameya
  Velingker, and Amir Zandieh.
\newblock Random fourier features for kernel ridge regression: Approximation
  bounds and statistical guarantees.
\newblock In \emph{Proceedings of the 34th International Conference on Machine
  Learning (ICML)}, pages 253--262, Sydney, Australia, 2017.

\bibitem[Bach(2013)]{Proc:Bach_COLT13}
Francis~R. Bach.
\newblock Sharp analysis of low-rank kernel matrix approximations.
\newblock In \emph{Proceedings of the 26th Annual Conference on Learning Theory
  (COLT)}, pages 185--209, Princeton University, NJ, 2013.

\bibitem[Bingham and Mannila(2001)]{Proc:Bingham_KDD01}
Ella Bingham and Heikki Mannila.
\newblock Random projection in dimensionality reduction: Applications to image
  and text data.
\newblock In \emph{Proceedings of the Seventh {ACM} {SIGKDD} International
  Conference on Knowledge Discovery and Data Mining (KDD)}, pages 245--250, San
  Francisco, CA, 2001.

\bibitem[Borwein and Erd{\'e}lyi(1995)]{1995polynomials}
Peter Borwein and Tam{\'a}s Erd{\'e}lyi.
\newblock \emph{Polynomials and polynomial inequalities}, volume 161.
\newblock Springer Science \& Business Media, 1995.

\bibitem[Bottou et~al.(2007)Bottou, Chapelle, DeCoste, and
  Weston]{Book:Bottou_07}
L\'eon Bottou, Olivier Chapelle, Dennis DeCoste, and Jason Weston, editors.
\newblock \emph{Large-Scale Kernel Machines}.
\newblock The MIT Press, Cambridge, MA, 2007.

\bibitem[Buhler(2001)]{Article:Buher_01}
Jeremy Buhler.
\newblock Efficient large-scale sequence comparison by locality-sensitive
  hashing.
\newblock \emph{Bioinformatics}, 17\penalty0 (5):\penalty0 419--428, 2001.

\bibitem[Cand{\`{e}}s et~al.(2006)Cand{\`{e}}s, Romberg, and
  Tao]{Article:Candes_IT06}
Emmanuel~J. Cand{\`{e}}s, Justin~K. Romberg, and Terence Tao.
\newblock Robust uncertainty principles: exact signal reconstruction from
  highly incomplete frequency information.
\newblock \emph{{IEEE} Trans. Inf. Theory}, 52\penalty0 (2):\penalty0 489--509,
  2006.

\bibitem[Charikar(2002)]{Proc:Charikar_STOC02}
Moses~S. Charikar.
\newblock Similarity estimation techniques from rounding algorithms.
\newblock In \emph{Proceedings on 34th Annual {ACM} Symposium on Theory of
  Computing (STOC)}, pages 380--388, Montreal, Canada, 2002.

\bibitem[Chwialkowski et~al.(2015)Chwialkowski, Ramdas, Sejdinovic, and
  Gretton]{Chwialkowski_NIPS15}
Kacper Chwialkowski, Aaditya Ramdas, Dino Sejdinovic, and Arthur Gretton.
\newblock Fast two-sample testing with analytic representations of probability
  measures.
\newblock In \emph{Advances in Neural Information Processing Systems (NIPS)},
  pages 1981--1989, Montreal, Canada, 2015.

\bibitem[Cortes et~al.(2010)Cortes, Mohri, and
  Talwalkar]{Proc:Cortes_AISTATS10}
Corinna Cortes, Mehryar Mohri, and Ameet Talwalkar.
\newblock On the impact of kernel approximation on learning accuracy.
\newblock In \emph{Proceedings of the Thirteenth International Conference on
  Artificial Intelligence and Statistics (AISTATS)}, pages 113--120, Chia
  Laguna Resort, Sardinia, Italy, 2010.

\bibitem[Dai et~al.(2014)Dai, Xie, He, Liang, Raj, Balcan, and
  Song]{Proc:Dai_NIPS14}
Bo~Dai, Bo~Xie, Niao He, Yingyu Liang, Anant Raj, Maria{-}Florina Balcan, and
  Le~Song.
\newblock Scalable kernel methods via doubly stochastic gradients.
\newblock In \emph{Advances in Neural Information Processing Systems (NIPS)},
  pages 3041--3049, Montreal, Canada, 2014.

\bibitem[Dasgupta(2000)]{Proc:Dasgupta_UAI00}
Sanjoy Dasgupta.
\newblock Experiments with random projection.
\newblock In \emph{Proceedings of the 16th Conference in Uncertainty in
  Artificial Intelligence (UAI)}, pages 143--151, Stanford, CA, 2000.

\bibitem[Datar et~al.(2004)Datar, Immorlica, Indyk, and
  Mirrokn]{Proc:Datar_SCG04}
Mayur Datar, Nicole Immorlica, Piotr Indyk, and Vahab~S. Mirrokn.
\newblock Locality-sensitive hashing scheme based on p-stable distributions.
\newblock In \emph{Proceedings of the 20th {ACM} Symposium on Computational
  Geometr (SCG)}, pages 253 -- 262, Brooklyn, NY, 2004.

\bibitem[Donoho(2006)]{Article:Donoho_IT06}
David~L. Donoho.
\newblock Compressed sensing.
\newblock \emph{{IEEE} Trans. Inf. Theory}, 52\penalty0 (4):\penalty0
  1289--1306, 2006.

\bibitem[Dua and Graff(2017)]{UCI}
Dheeru Dua and Casey Graff.
\newblock {UCI} machine learning repository, 2017.
\newblock URL \url{http://archive.ics.uci.edu/ml}.

\bibitem[Fan et~al.(2008)Fan, Chang, Hsieh, Wang, and Lin]{Article:Fan_JMLR08}
Rong{-}En Fan, Kai{-}Wei Chang, Cho{-}Jui Hsieh, Xiang{-}Rui Wang, and
  Chih{-}Jen Lin.
\newblock {LIBLINEAR:} {A} library for large linear classification.
\newblock \emph{J. Mach. Learn. Res.}, 9:\penalty0 1871--1874, 2008.

\bibitem[Fern and Brodley(2003)]{Proc:Fern_ICML03}
Xiaoli~Zhang Fern and Carla~E. Brodley.
\newblock Random projection for high dimensional data clustering: A cluster
  ensemble approach.
\newblock In \emph{Proceedings of the Twentieth International Conference
  (ICML)}, pages 186--193, Washington, DC, 2003.

\bibitem[Freund et~al.(2007)Freund, Dasgupta, Kabra, and
  Verma]{Proc:Frund_NIPS07}
Yoav Freund, Sanjoy Dasgupta, Mayank Kabra, and Nakul Verma.
\newblock Learning the structure of manifolds using random projections.
\newblock In \emph{Advances in Neural Information Processing Systems (NIPS)},
  pages 473--480, Vancouver, Canada, 2007.

\bibitem[Gittens and Mahoney(2013)]{Proc:Gitten_ICML13}
Alex Gittens and Michael~W. Mahoney.
\newblock Revisiting the nystr\"om method for improved large-scale machine
  learning.
\newblock In \emph{Proceedings of the 30th International Conference on Machine
  Learning (ICML)}, pages 567--575, Atlanta, GA, 2013.

\bibitem[Goemans and Williamson(1995)]{Article:Goemans_JACM95}
Michel~X. Goemans and David~P. Williamson.
\newblock Improved approximation algorithms for maximum cut and satisfiability
  problems using semidefinite programming.
\newblock \emph{Journal of ACM}, 42\penalty0 (6):\penalty0 1115--1145, 1995.

\bibitem[Hastie et~al.(2001)Hastie, Tibshirani, and
  Friedman]{Book:Hastie_Tib_Friedman}
Trevor~J. Hastie, Robert Tibshirani, and Jerome~H. Friedman.
\newblock \emph{The Elements of Statistical Learning:Data Mining, Inference,
  and Prediction}.
\newblock Springer, New York, NY, 2001.

\bibitem[Hern{\'{a}}ndez{-}Lobato et~al.(2014)Hern{\'{a}}ndez{-}Lobato,
  Hoffman, and Ghahramani]{Proc:Hern_NIPS14}
Jos{\'{e}}~Miguel Hern{\'{a}}ndez{-}Lobato, Matthew~W. Hoffman, and Zoubin
  Ghahramani.
\newblock Predictive entropy search for efficient global optimization of
  black-box functions.
\newblock In \emph{Advances in Neural Information Processing Systems (NIPS)},
  pages 918--926, Montreal, Canada, 2014.

\bibitem[Hsieh et~al.(2014)Hsieh, Si, and Dhillon]{Proc:Hsieh_NIPS14}
Cho{-}Jui Hsieh, Si~Si, and Inderjit~S. Dhillon.
\newblock Fast prediction for large-scale kernel machines.
\newblock In \emph{Advances in Neural Information Processing Systems (NIPS)},
  pages 3689--3697, Montreal, Canada, 2014.

\bibitem[Indyk and Motwani(1998)]{Proc:Indyk_Motwani_STOC98}
Piotr Indyk and Rajeev Motwani.
\newblock Approximate nearest neighbors: Towards removing the curse of
  dimensionality.
\newblock In \emph{Proceedings of the Thirtieth Annual {ACM} Symposium on the
  Theory of Computing (STOC)}, pages 604--613, Dallas, TX, 1998.

\bibitem[Jacques et~al.(2013)Jacques, Laska, Boufounos, and
  Baraniuk]{Article:Jacques_JIT13}
Laurent Jacques, Jason~N. Laska, Petros~T. Boufounos, and Richard~G. Baraniuk.
\newblock Robust 1-bit compressive sensing via binary stable embeddings of
  sparse vectors.
\newblock \emph{IEEE Transactions on Information Theory}, 59\penalty0
  (4):\penalty0 2082--2102, 2013.

\bibitem[Joachims(2006)]{Proc:Joachims_KDD06}
Thorsten Joachims.
\newblock Training linear svms in linear time.
\newblock In \emph{Proceedings of the Twelfth {ACM} {SIGKDD} International
  Conference on Knowledge Discovery and Data Mining (KDD)}, pages 217--226,
  Philadelphia, PA, 2006.

\bibitem[Johnson and Lindenstrauss(1984)]{Article:JL84}
William~B. Johnson and Joram Lindenstrauss.
\newblock Extensions of \text{Lipschitz} mapping into \text{Hilbert} space.
\newblock \emph{Contemporary Mathematics}, 26:\penalty0 189--206, 1984.

\bibitem[Leng et~al.(2014)Leng, Cheng, and Lu]{Proc:Leng_SIGIR14}
Cong Leng, Jian Cheng, and Hanqing Lu.
\newblock Random subspace for binary codes learning in large scale image
  retrieval.
\newblock In \emph{Proceedings of the 37th International {ACM} {SIGIR}
  Conference on Research and Development in Information Retrieval (SIGIR)},
  pages 1031--1034, Gold Coast, Australia, 2014.

\bibitem[Li et~al.(2016)Li, Cheng, Wang, Morstatter, Robert, Tang, and
  Liu]{ASU}
Jundong Li, Kewei Cheng, Suhang Wang, Fred Morstatter, Trevino Robert, Jiliang
  Tang, and Huan Liu.
\newblock Feature selection: A data perspective.
\newblock \emph{arXiv:1601.07996}, 2016.

\bibitem[Li(2007)]{Proc:Li_KDD07}
Ping Li.
\newblock Very sparse stable random projections for dimension reduction in
  $l_\alpha$ {($0<\alpha\leq2$) norm}.
\newblock In \emph{Proceedings of the 13th {ACM} {SIGKDD} International
  Conference on Knowledge Discovery and Data Mining (KDD)}, pages 440--449, San
  Jose, CA, 2007.

\bibitem[Li(2017{\natexlab{a}})]{Proc:Li_AISTATS17}
Ping Li.
\newblock Binary and multi-bit coding for stable random projections.
\newblock In \emph{Proceedings of the 20th International Conference on
  Artificial Intelligence and Statistics (AISTATS)}, pages 1430--1438, Fort
  Lauderdale, FL, 2017{\natexlab{a}}.

\bibitem[Li(2017{\natexlab{b}})]{Proc:Li_KDD17}
Ping Li.
\newblock Linearized {GMM} kernels and normalized random {Fourier} features.
\newblock In \emph{Proceedings of the 23rd {ACM} {SIGKDD} International
  Conference on Knowledge Discovery and Data Mining (KDD)}, pages 315--324,
  2017{\natexlab{b}}.

\bibitem[Li and Slawski(2017)]{Proc:Li_NIPS17}
Ping Li and Martin Slawski.
\newblock Simple strategies for recovering inner products from coarsely
  quantized random projections.
\newblock In \emph{Advances in Neural Information Processing Systems (NIPS)},
  pages 4567--4576, Long Beach, CA, 2017.

\bibitem[Li et~al.(2006)Li, Hastie, and Church]{Proc:Li_Hastie_Church_COLT06}
Ping Li, Trevor~J. Hastie, and Kenneth~W. Church.
\newblock Improving random projections using marginal information.
\newblock In \emph{Proceedings of the 19th Annual Conference on Learning Theory
  (COLT)}, pages 635--649, Pittsburgh, PA, 2006.

\bibitem[Li et~al.(2013)Li, Samorodnitsky, and Hopcroft]{Proc:Li_NIPS13}
Ping Li, Gennady Samorodnitsky, and John~E. Hopcroft.
\newblock Sign cauchy projections and chi-square kernel.
\newblock In \emph{Advances in Neural Information Processing Systems (NIPS)},
  pages 2571--2579, Lake Tahoe, NV, 2013.

\bibitem[Li et~al.(2014)Li, Mitzenmacher, and Shrivastava]{Proc:Li_ICML14}
Ping Li, Michael Mitzenmacher, and Anshumali Shrivastava.
\newblock Coding for random projections.
\newblock In \emph{Proceedings of the 31th International Conference on Machine
  Learning (ICML)}, pages 676--684, Beijing, China, 2014.

\bibitem[Li and Li(2019{\natexlab{a}})]{Proc:Li_NIPS19}
Xiaoyun Li and Ping Li.
\newblock Generalization error analysis of quantized compressive learning.
\newblock In \emph{Advances in Neural Information Processing Systems
  (NeurIPS)}, Vancouver, Canada, 2019{\natexlab{a}}.

\bibitem[Li and Li(2019{\natexlab{b}})]{Proc:Li_NeurIPS19_asymmetric}
Xiaoyun Li and Ping Li.
\newblock Random projections with asymmetric quantization.
\newblock In \emph{Advances in Neural Information Processing Systems
  (NeurIPS)}, Vancouver, Canada, 2019{\natexlab{b}}.

\bibitem[Li et~al.(2020)Li, Gui, and Li]{Proc:Li_ECAI20}
Xiaoyun Li, Jie Gui, and Ping Li.
\newblock Randomized kernel multi-view discriminant analysis.
\newblock In \emph{Proceedings of the 24th European Conference on Artificial
  Intelligence (ECAI)}, pages 1276--1284, Santiago de Compostela, Spain, 2020.

\bibitem[Lloyd(1982)]{Article:Lloyd_JIT82}
Stuart~P. Lloyd.
\newblock Least squares quantization in {PCM}.
\newblock \emph{{IEEE} Trans. Information Theory}, 28\penalty0 (2):\penalty0
  129--136, 1982.

\bibitem[Max(1960)]{Article:Max60}
Joel Max.
\newblock Quantizing for minimum distortion.
\newblock \emph{{IRE} Trans. Information Theory}, 6\penalty0 (1):\penalty0
  7--12, 1960.

\bibitem[Platt(1998)]{Proc:Platt_NIPS98}
John~C. Platt.
\newblock Using analytic \text{QP} and sparseness to speed training of support
  vector machines.
\newblock In \emph{NIPS}, pages 557--563, Vancouver, BC, Canada, 1998.

\bibitem[Raginsky and Lazebnik(2009)]{Proc:Raginsky_NIPS09}
Maxim Raginsky and Svetlana Lazebnik.
\newblock Locality-sensitive binary codes from shift-invariant kernels.
\newblock In \emph{Advances in Neural Information Processing Systems (NIPS)},
  pages 1509--1517, Vancouver, Canada, 2009.

\bibitem[Rahimi and Recht(2007)]{Proc:Rahimi_NIPS07}
Ali Rahimi and Benjamin Recht.
\newblock Random features for large-scale kernel machines.
\newblock In \emph{Advances in Neural Information Processing Systems (NIPS)},
  pages 1177--1184, Vancouver, Canada, 2007.

\bibitem[Richard et~al.(2015)Richard, Goetz, and Chichilnisky]{Richard_NIPS15}
Emile Richard, Georges Goetz, and E.~J. Chichilnisky.
\newblock Recognizing retinal ganglion cells in the dark.
\newblock In \emph{Advances in Neural Information Processing Systems (NIPS)},
  pages 2476--2484, Montreal, Canada, 2015.

\bibitem[Rudin(1990)]{Book:Rudin_90}
Walter Rudin.
\newblock \emph{Fourier Analysis on Groups}.
\newblock John Wiley \& Sons, New York, NY, 1990.

\bibitem[Sch\"olkopf and Smola(2002)]{Book:Scholkopf_02}
Bernhard Sch\"olkopf and Alexander~J. Smola.
\newblock \emph{Learning with Kernels}.
\newblock The MIT Press, Cambridge, MA, 2002.

\bibitem[Shah and Ghahramani(2015)]{Proc:Shah_NIPS15}
Amar Shah and Zoubin Ghahramani.
\newblock Parallel predictive entropy search for batch global optimization of
  expensive objective functions.
\newblock In \emph{Advances in Neural Information Processing Systems (NIPS)},
  pages 3330--3338, Montreal, Canada, 2015.

\bibitem[Shalev{-}Shwartz et~al.(2011)Shalev{-}Shwartz, Singer, Srebro, and
  Cotter]{Article:Shalev-Shwartz_MP11}
Shai Shalev{-}Shwartz, Yoram Singer, Nathan Srebro, and Andrew Cotter.
\newblock Pegasos: primal estimated sub-gradient solver for {SVM}.
\newblock \emph{Math. Program.}, 127\penalty0 (1):\penalty0 3--30, 2011.

\bibitem[Slawski and Li(2018)]{Article:Slawski_IT18}
Martin Slawski and Ping Li.
\newblock On the trade-off between bit depth and number of samples for a basic
  approach to structured signal recovery from b-bit quantized linear
  measurements.
\newblock \emph{{IEEE} Trans. Inf. Theory}, 64\penalty0 (6):\penalty0
  4159--4178, 2018.

\bibitem[Sun et~al.(2018)Sun, Gilbert, and Tewari]{Proc:Sun_NeurIPS18}
Yitong Sun, Anna~C. Gilbert, and Ambuj Tewari.
\newblock But how does it work in theory? linear {SVM} with random features.
\newblock In \emph{Advances in Neural Information Processing Systems
  (NeurIPS)}, pages 3383--3392, Montr{\'{e}}al, Canada, 2018.

\bibitem[Sutherland and Schneider(2015)]{Proc:Sutherland_UAI15}
Danica~J. Sutherland and Jeff~G. Schneider.
\newblock On the error of random fourier features.
\newblock In \emph{Proceedings of the Thirty-First Conference on Uncertainty in
  Artificial Intelligence (UAI)}, pages 862--871, Amsterdam, The Netherlands,
  2015.

\bibitem[Tompkins and Ramos(2018)]{Proc:Tompkins_aaai18}
Anthony Tompkins and Fabio Ramos.
\newblock Fourier feature approximations for periodic kernels in time-series
  modelling.
\newblock In \emph{Proceedings of the Thirty-Second {AAAI} Conference on
  Artificial Intelligence (AAAI)}, pages 4155--4162, New Orleans, LA, 2018.

\bibitem[Widrow and Koll{\'a}r(2008)]{Book:Widrow_2008}
Bernard Widrow and Istv{\'a}n Koll{\'a}r.
\newblock Quantization noise.
\newblock \emph{Cambridge University Press}, 2008.

\bibitem[Wu(1992)]{Article:Wu_IT92}
Xiaolin Wu.
\newblock On convergence of lloyd's method {I}.
\newblock \emph{{IEEE} Trans. Inf. Theory}, 38\penalty0 (1):\penalty0 171--174,
  1992.

\bibitem[Yang et~al.(2012)Yang, Li, Mahdavi, Jin, and Zhou]{Proc:Yang_NIPS12}
Tianbao Yang, Yu{-}Feng Li, Mehrdad Mahdavi, Rong Jin, and Zhi{-}Hua Zhou.
\newblock Nystr{\"{o}}m method vs random fourier features: {A} theoretical and
  empirical comparison.
\newblock In \emph{Advances in Neural Information Processing Systems (NIPS)},
  pages 485--493, Lake Tahoe, NV, 2012.

\bibitem[Yen et~al.(2014)Yen, Lin, Lin, Ravikumar, and
  Dhillon]{Proc:Yen_NIPS14}
Ian~En{-}Hsu Yen, Ting{-}Wei Lin, Shou{-}De Lin, Pradeep Ravikumar, and
  Inderjit~S. Dhillon.
\newblock Sparse random feature algorithm as coordinate descent in hilbert
  space.
\newblock In \emph{Advances in Neural Information Processing Systems (NIPS)},
  pages 2456--2464, Montreal, Canada, 2014.

\bibitem[Zhang et~al.(2019)Zhang, May, Dao, and R{\'{e}}]{Proc:Zhang_AISTATS19}
Jian Zhang, Avner May, Tri Dao, and Christopher R{\'{e}}.
\newblock Low-precision random fourier features for memory-constrained kernel
  approximation.
\newblock In \emph{Proceedings of the 22nd International Conference on
  Artificial Intelligence and Statistics (AISTATS)}, pages 1264--1274, Naha,
  Okinawa, Japan, 2019.

\bibitem[Zymnis et~al.(2010)Zymnis, Boyd, and
  Cand{\`{e}}s]{Article:zymnis_SPL10}
Argyrios Zymnis, Stephen~P. Boyd, and Emmanuel~J. Cand{\`{e}}s.
\newblock Compressed sensing with quantized measurements.
\newblock \emph{{IEEE} Signal Process. Lett.}, 17\penalty0 (2):\penalty0
  149--152, 2010.

\end{thebibliography}
\bibliographystyle{plainnat}

\appendix

\onecolumn
\newpage
\clearpage
\onecolumn
\renewcommand\thesection{\Alph{section}}
\setcounter{section}{0}

\newpage

\section{Lloyd-Max (LM) Quantization: Derivation and Properties} \label{append sec:LM-intro}

We provide a detailed derivation of Lloyd-Max (LM) quantization scheme and its properties, which would be useful to our analysis. Recall that our proposed LM-RFF quantizers minimize the distortion defined~as
\begin{align*}
    D_Q=\int_{\mathcal S} (Q(z)-z)^2 f(z) dz,
\end{align*}
where $f(z)$ is the signal distribution. Also, our $b$-bit fixed quantizer $Q$ has borders $t_0<...<t_{M}$ and reconstruction levels $\mu_1<...<\mu_M$, with $M=2^b$. Since the sine and cosine function are bounded within $[-1,1]$, we have $t_0=-1$ and $t_M=1$. Thus the distortion is
\begin{align*}
    D_Q=\sum_{i=1}^M \int_{t_{i-1}}^{t_i} (z-\mu_i)^2 f(z)dz.
\end{align*}
Lloyd's algorithm finds a stationary point of above system. By setting the derivative of $D_Q$ w.r.t. $\mu_i$ to 0
\begin{align*}
    \frac{\partial D_Q}{\partial \mu_i}=-2\int_{t_{i-1}}^{t_i} (z-\mu_i)f(z) dz=0,
\end{align*}
we obtain
\begin{align*}
    \mu_i=\frac{\int_{t_{i-1}}^{t_i} zf(z) dz}{\int_{t_{t-1}}^{t_i} f(z)dz}.
\end{align*}
We do the same thing for $t_i$ (i.e., setting $\frac{\partial D_Q}{\partial t_i}=0$) and get
\begin{align*}
    t_i=\frac{\mu_i+\mu_{i+1}}{2}.
\end{align*}
The following two useful properties hold for LM quantizers.\\

\begin{property}
$\mathbb E[z]=\mathbb E[Q(z)]$.
\end{property}
\begin{property}
$\mathbb E[Q(z)z]=\mathbb E[Q(z)^2]$.
\end{property}
\begin{proof}
For Property 1, we have
\begin{align}
    \mathbb E[Q(z)]&=\sum_{i=1}^{M}\int_{t_{i-1}}^{t_i} \frac{\int_{t_{i-1}}^{t_i} zf(z)dx}{\int_{t_{i-1}}^{t_i} f(z)dz} f(z) dz  \nonumber\\
    &=\sum_{i=1}^{M}\int_{t_{i-1}}^{t_i} \int_{t_{i-1}}^{t_i} zf(z)dz=\mathbb E[z].
\end{align}
For Property 2, similarly we have
\begin{align*}
    \mathbb E[Q(z)z]&=\sum_{i=1}^{M}\int_{t_{i-1}}^{t_i} \frac{\int_{t_{i-1}}^{t_i} zf(z)dx}{\int_{t_{i-1}}^{t_i} f(z)dz} z f(z) dz  \nonumber\\
    &=\sum_{i=1}^{M}\int_{t_{i-1}}^{t_i} \frac{(\int_{t_{i-1}}^{t_i} zf(z)dx)^2}{(\int_{t_{i-1}}^{t_i} f(z)dz)^2} f(z) dz=\mathbb E[Q(z)^2].
\end{align*}
\end{proof}

\clearpage\newpage

\section{More Analytical Figures in Section 4} \label{append sec:variance}

In Figure~\ref{append:fig_bias}, we present more figures on the bias of LM quantized estimators, corresponding to Theorem~\ref{theo:mean-var}, Theorem~\ref{theo: mean-var-norm}. Same as in the main paper, we see that the proposed surrogates (Observations~\ref{obv1} and~\ref{obv3}) align well with true biases. As $b$ increases, the bias vanishes towards $0$.

\begin{figure}[H]
    \begin{center}
        \mbox{\hspace{-0.1in}
        \includegraphics[width=2.2in]{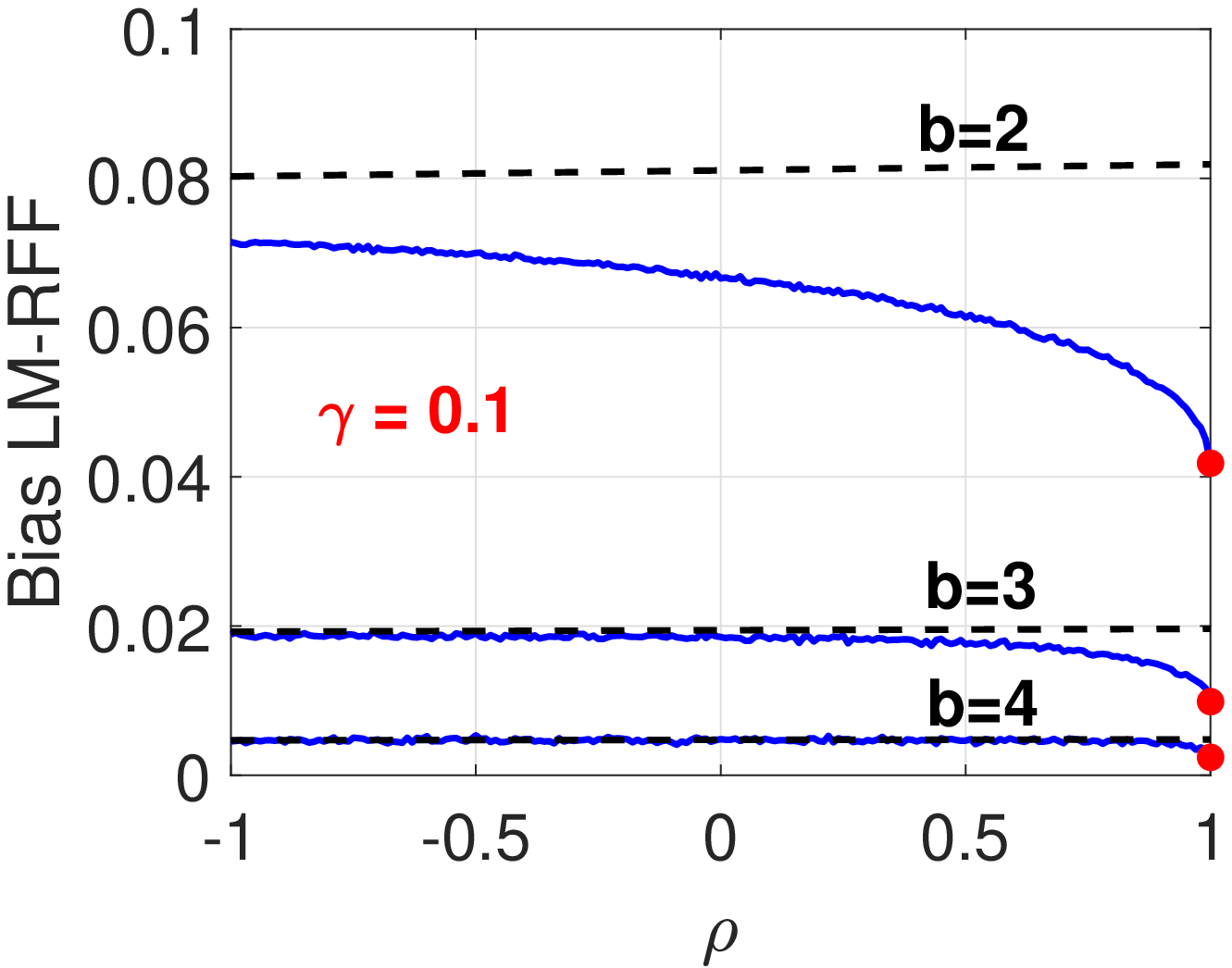}\hspace{-0.1in}
        \includegraphics[width=2.2in]{figure/theory/mean_type1_s05.eps}\hspace{-0.1in}
        \includegraphics[width=2.2in]{figure/theory/mean_type1_s1.eps}
        }
        \mbox{\hspace{-0.1in}
        \includegraphics[width=2.2in]{figure/theory/mean_type1_s2.eps}\hspace{-0.1in}
        \includegraphics[width=2.2in]{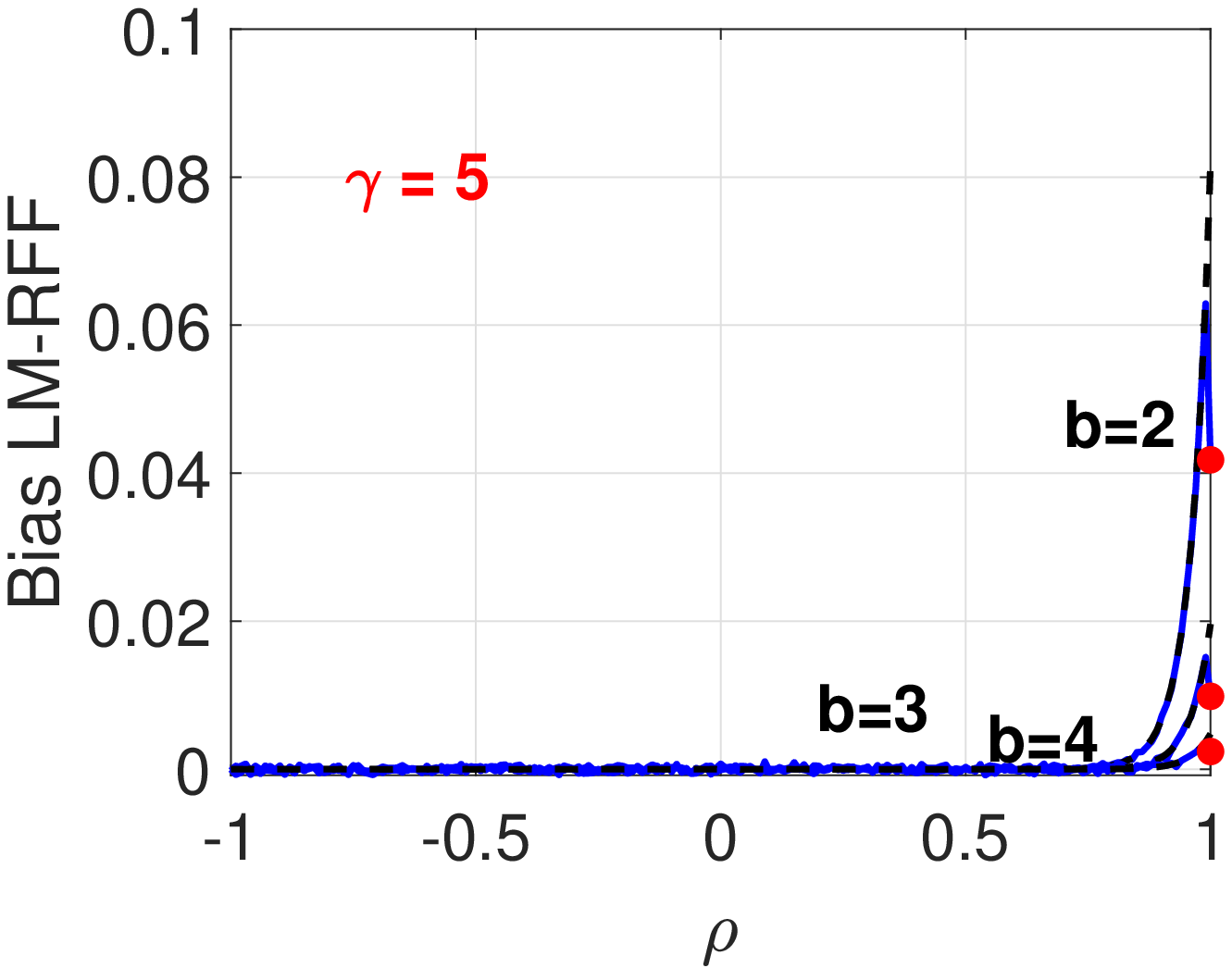}\hspace{-0.1in}
        \includegraphics[width=2.2in]{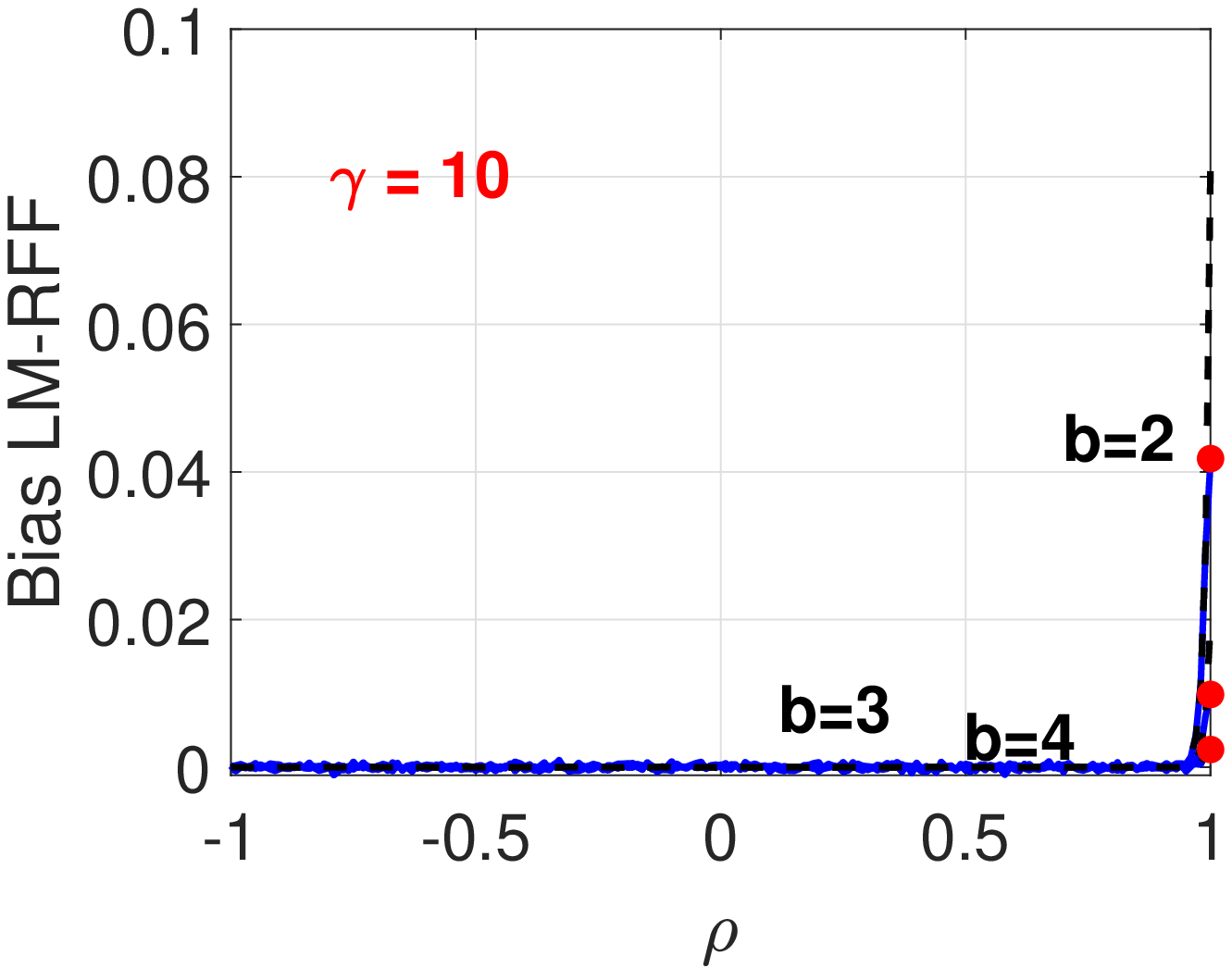}
        }
        \mbox{\hspace{-0.1in}
        \includegraphics[width=2.2in]{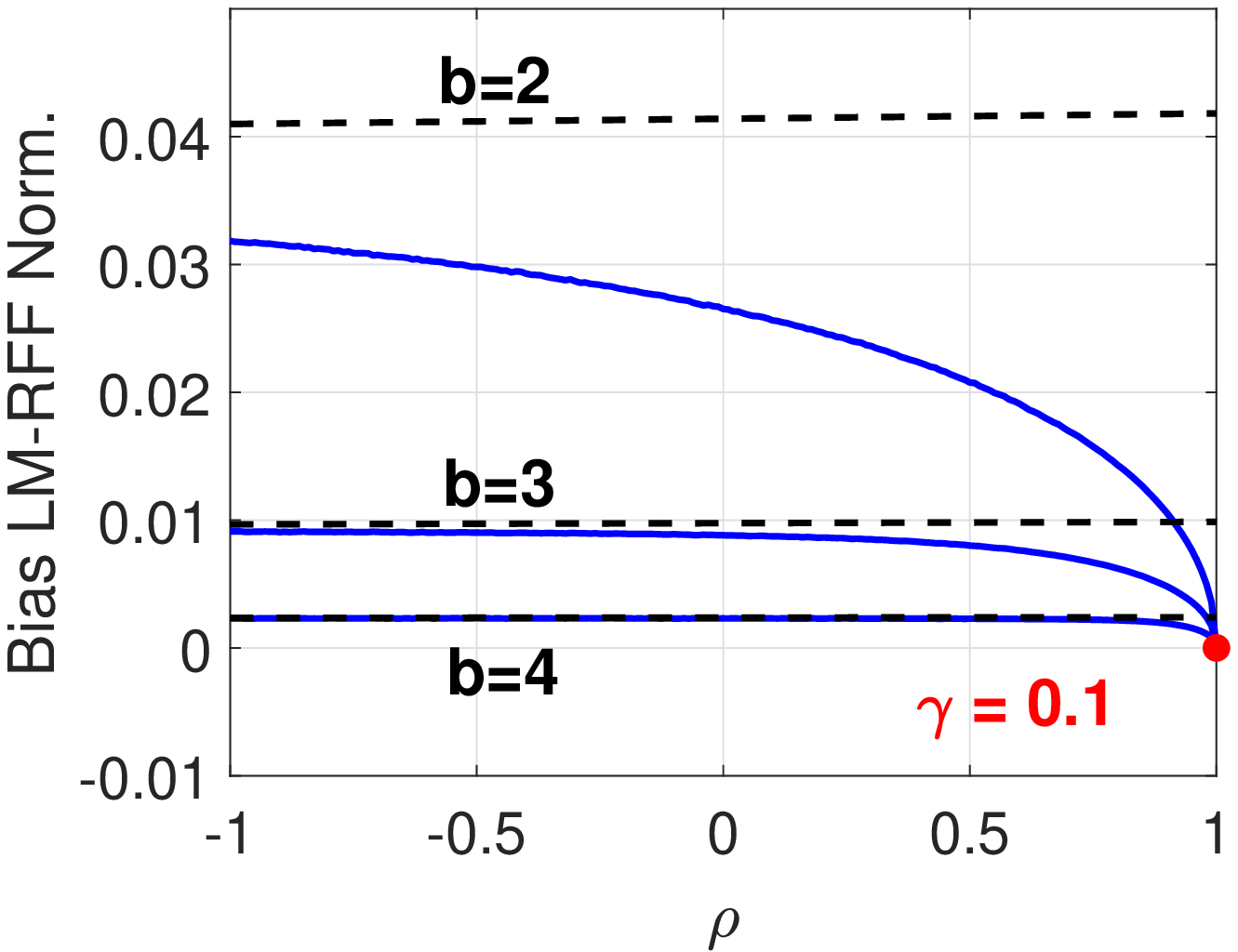}\hspace{-0.1in}
        \includegraphics[width=2.2in]{figure/theory/mean_type1_norm_s05.eps}\hspace{-0.1in}
        \includegraphics[width=2.2in]{figure/theory/mean_type1_norm_s1.eps}
        }
        \mbox{\hspace{-0.1in}
        \includegraphics[width=2.2in]{figure/theory/mean_type1_norm_s2.eps}\hspace{-0.1in}
        \includegraphics[width=2.2in]{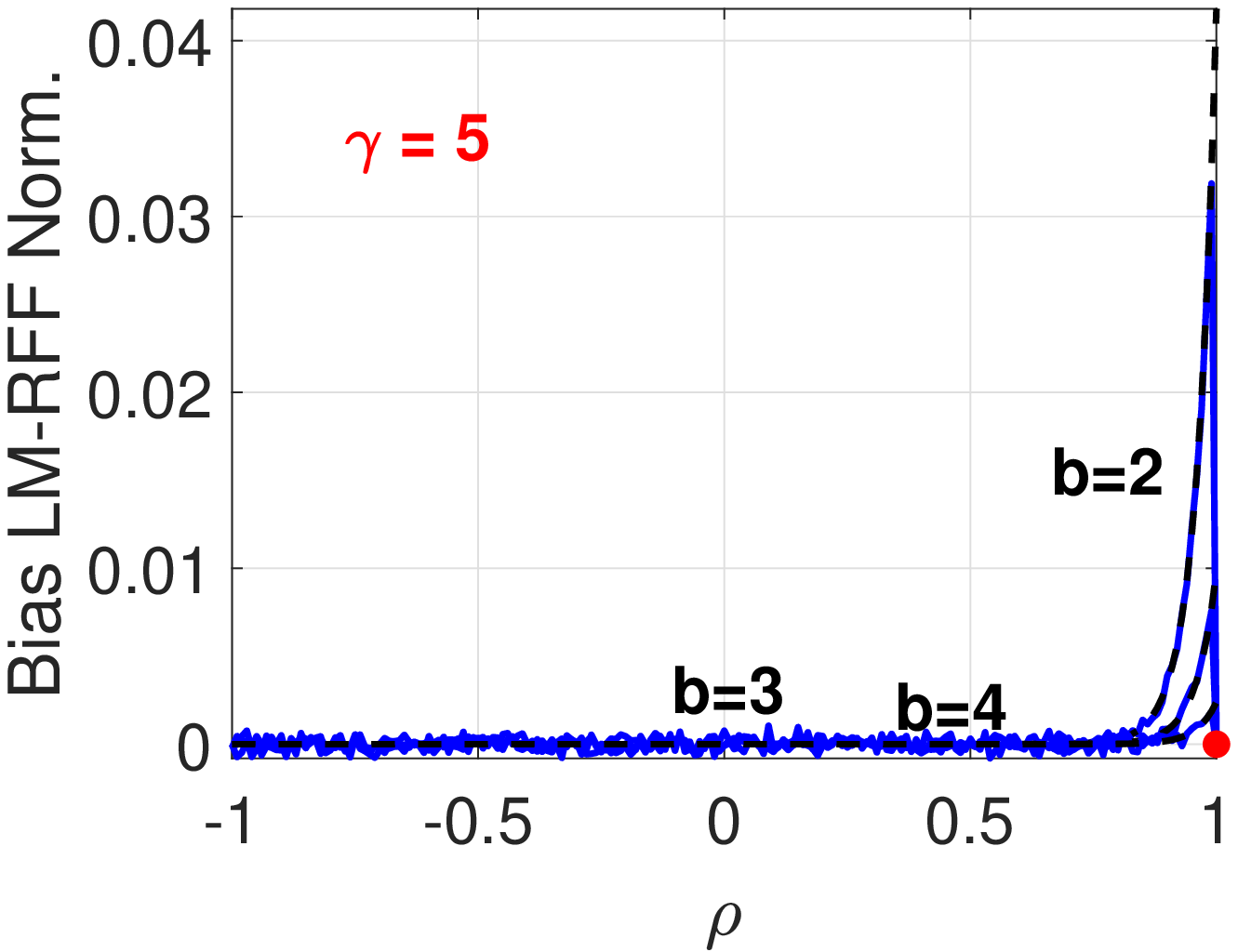}\hspace{-0.1in}
        \includegraphics[width=2.2in]{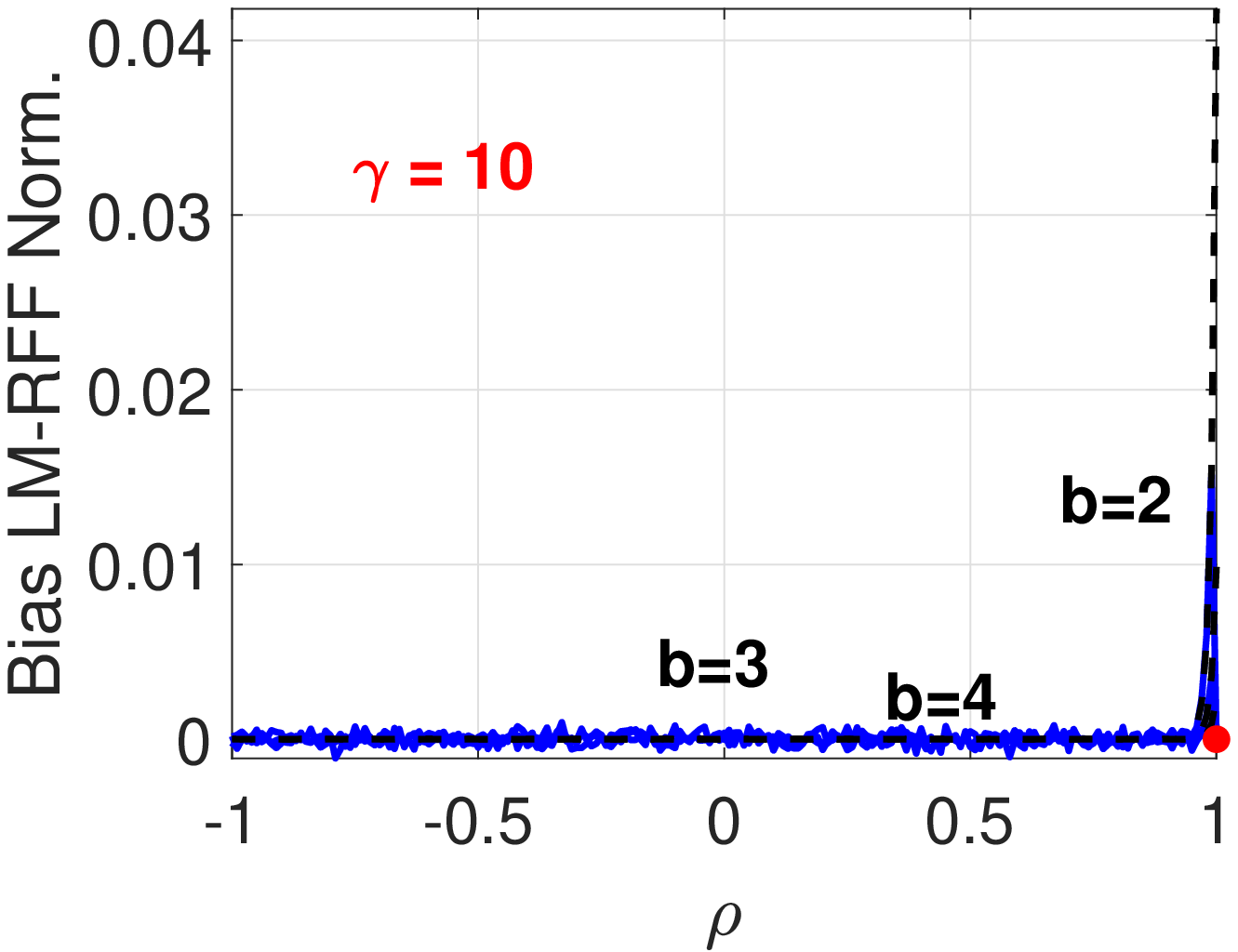}
        }
    \end{center}
    \vspace{-0.25in}
	\caption{Observation~\ref{obv1} and Observation~\ref{obv3} (black dash curves) vs. empirical bias (blue curves) of LM-RFF. Red dots are the biases given in the theorems at specific $\rho$ values.}
	\label{append:fig_bias}
\end{figure}

\newpage
In Figure~\ref{append:fig:variance-different b}, we provide more plots on variance of proposed LM-RFF estimators at more $\gamma$ levels. As we expect, the variances of LM-RFF quantized estimators converge to the corresponding full-precision estimators as the number of bits $b$ increases, i.e., $Var[\hat K_{Q}]\rightarrow Var[\hat K]$, $Var[\hat K_{n,Q}]\rightarrow Var[\hat K_{n}]$, as $b\rightarrow \infty$.

\begin{figure}[H]
    \begin{center}
        \mbox{
        \includegraphics[width=2.2in]{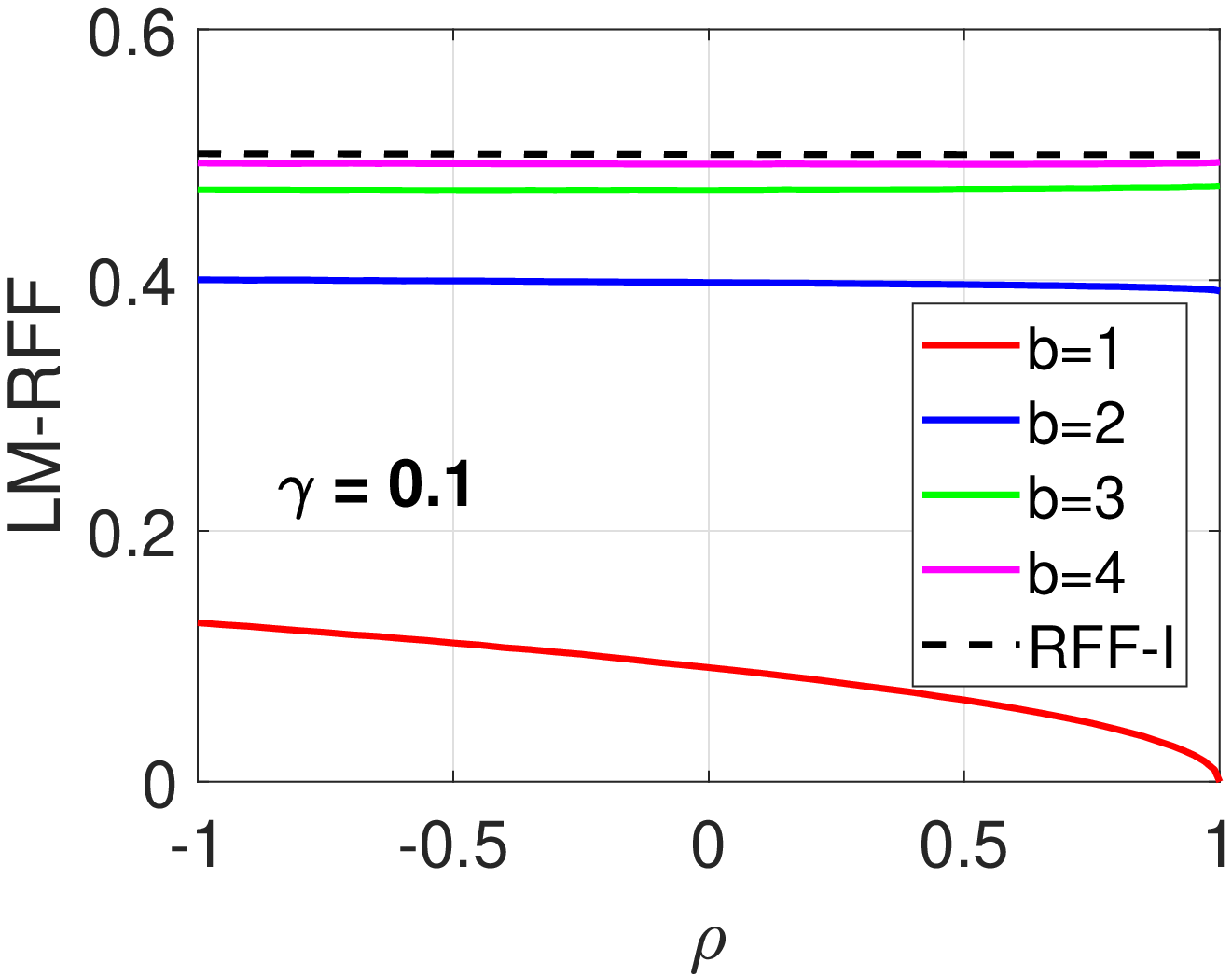}
        \includegraphics[width=2.2in]{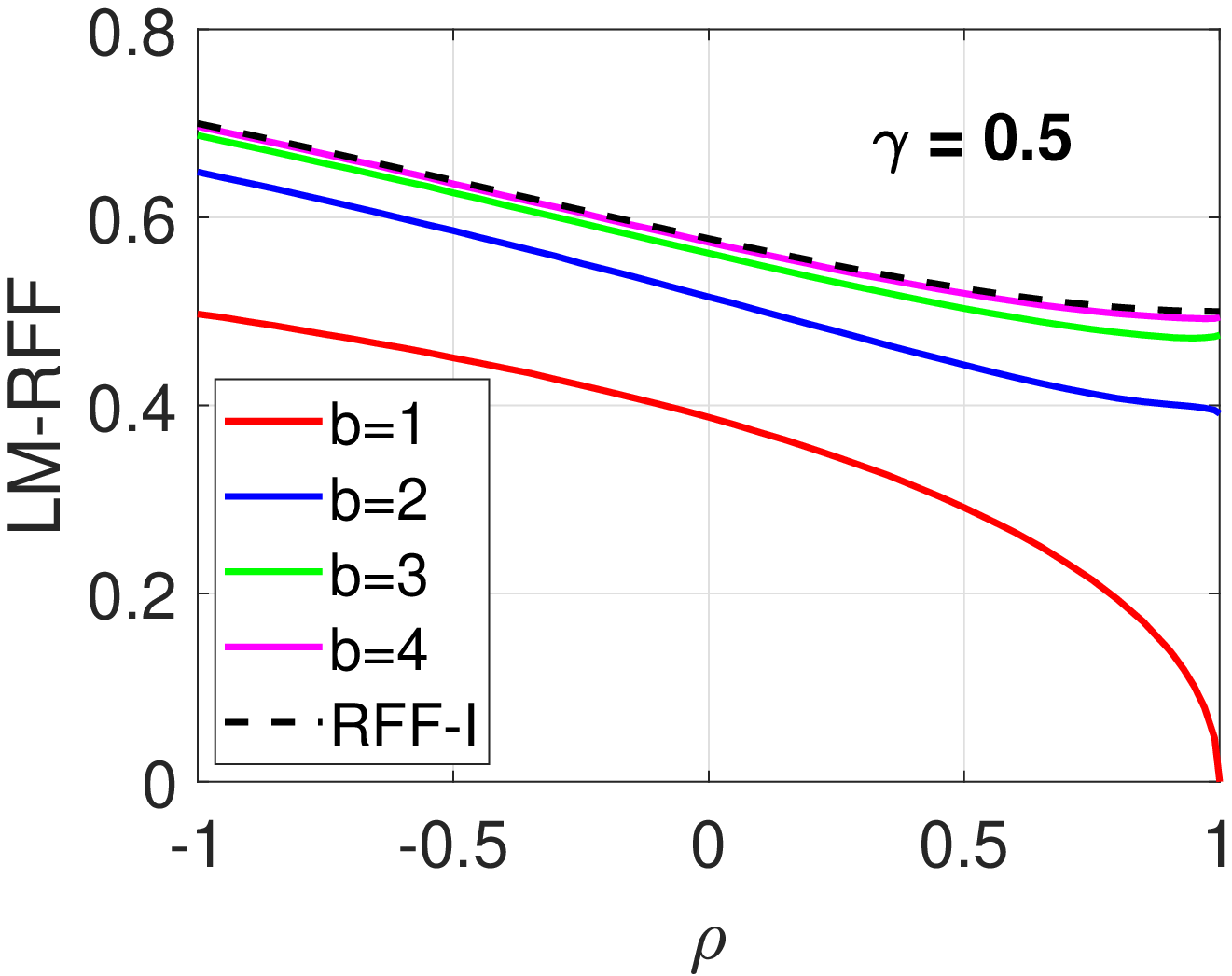}
        \includegraphics[width=2.2in]{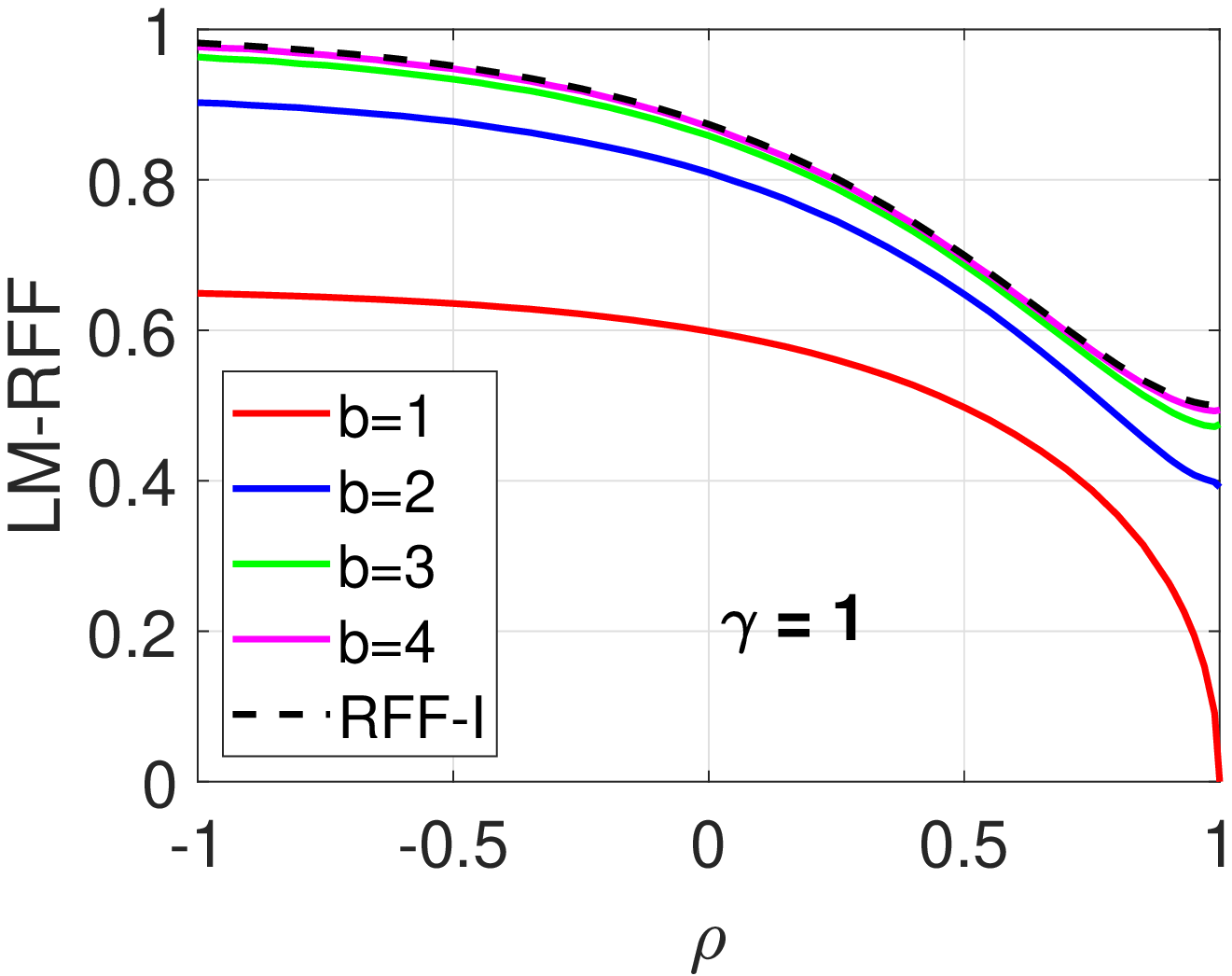}
        }
        \mbox{
        \includegraphics[width=2.2in]{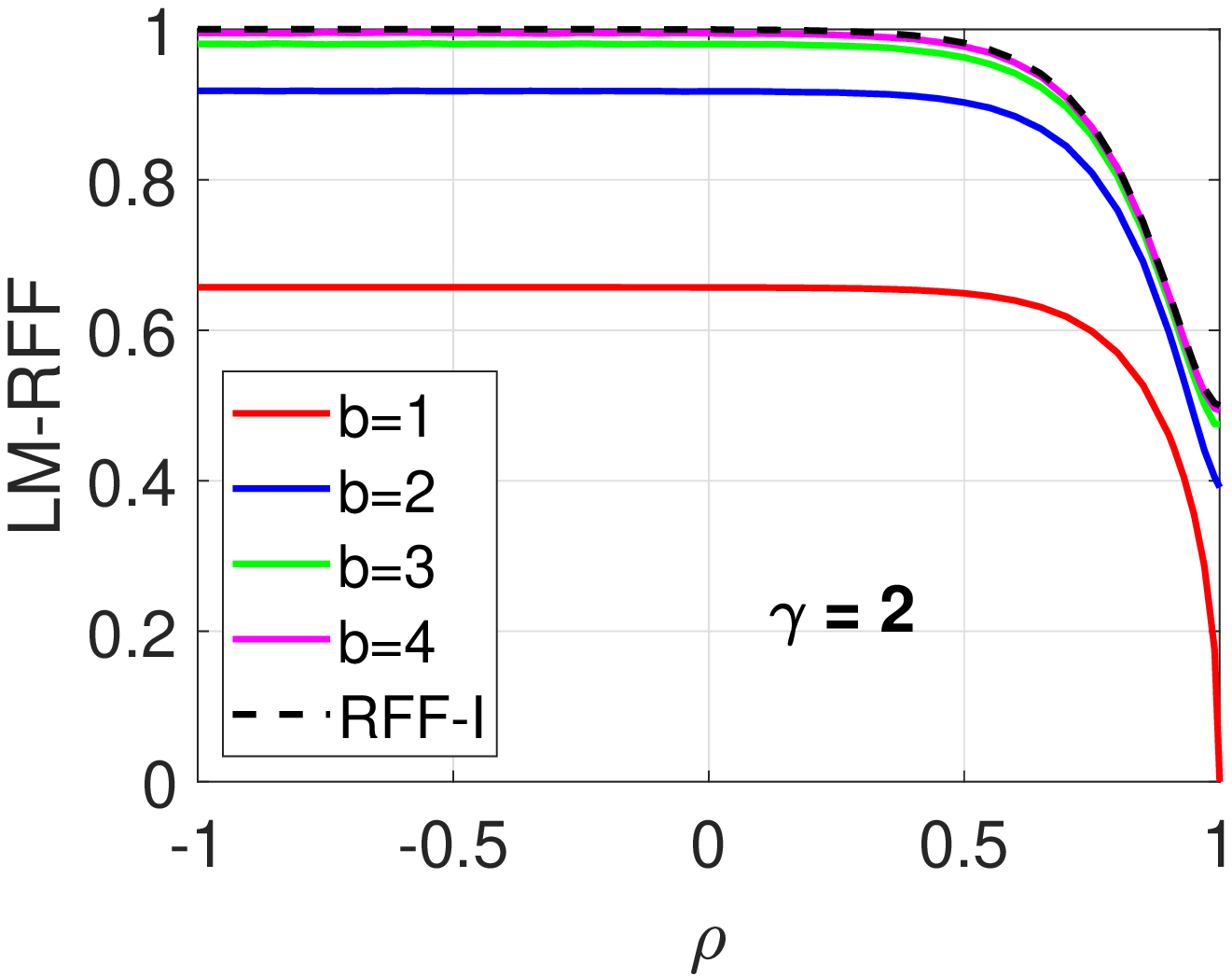}
        \includegraphics[width=2.2in]{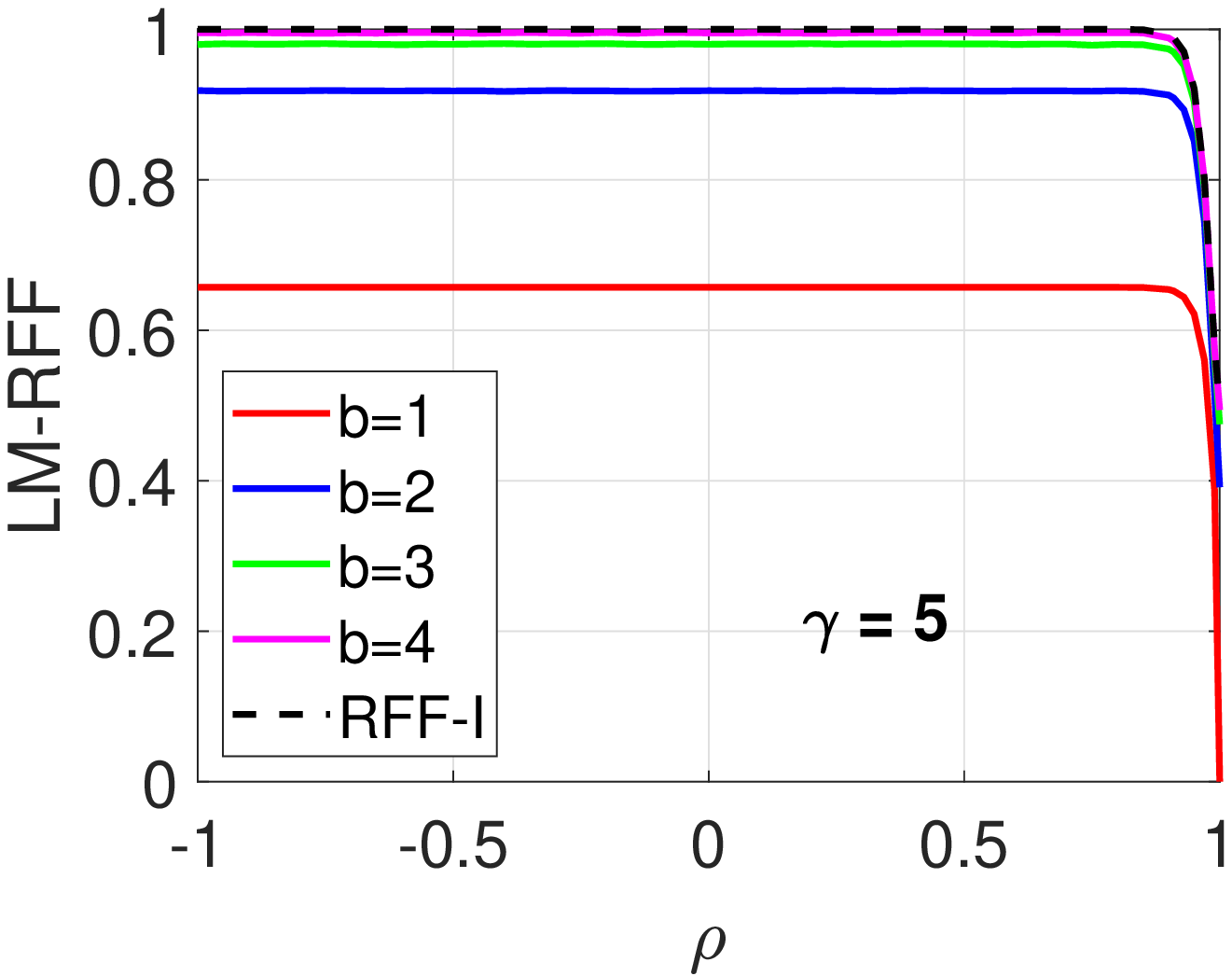}
        \includegraphics[width=2.2in]{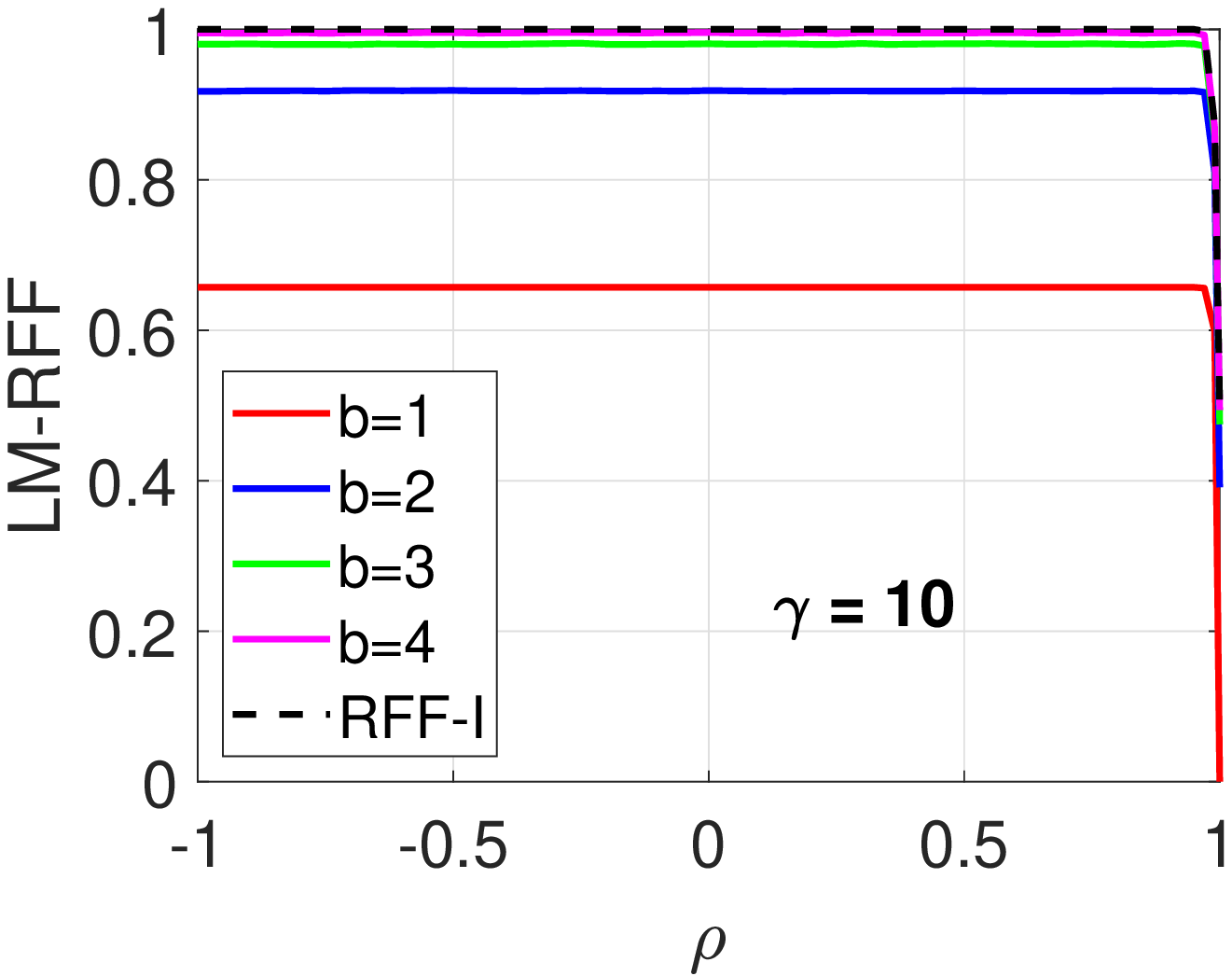}
        }
        \mbox{
        \includegraphics[width=2.2in]{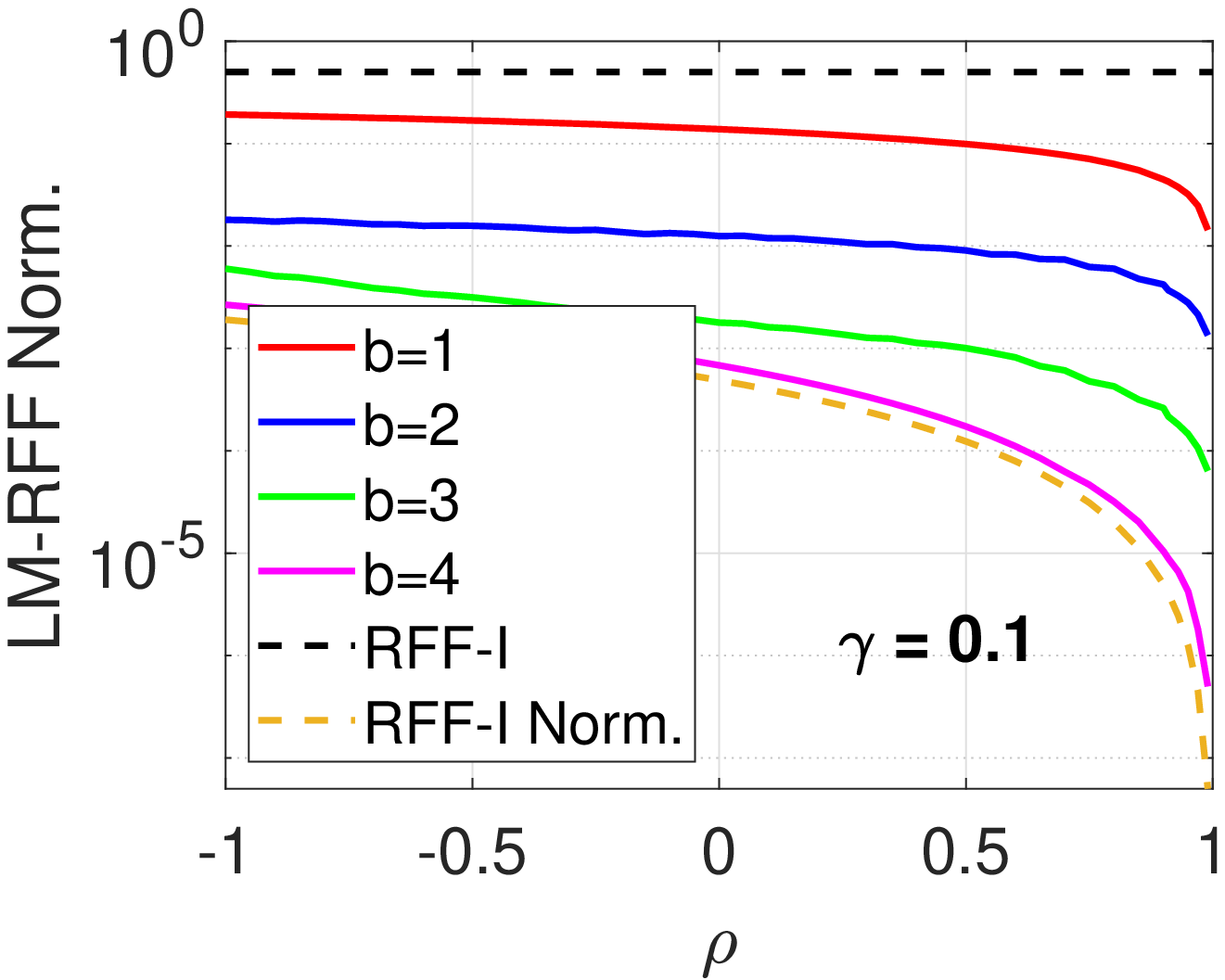}
        \includegraphics[width=2.2in]{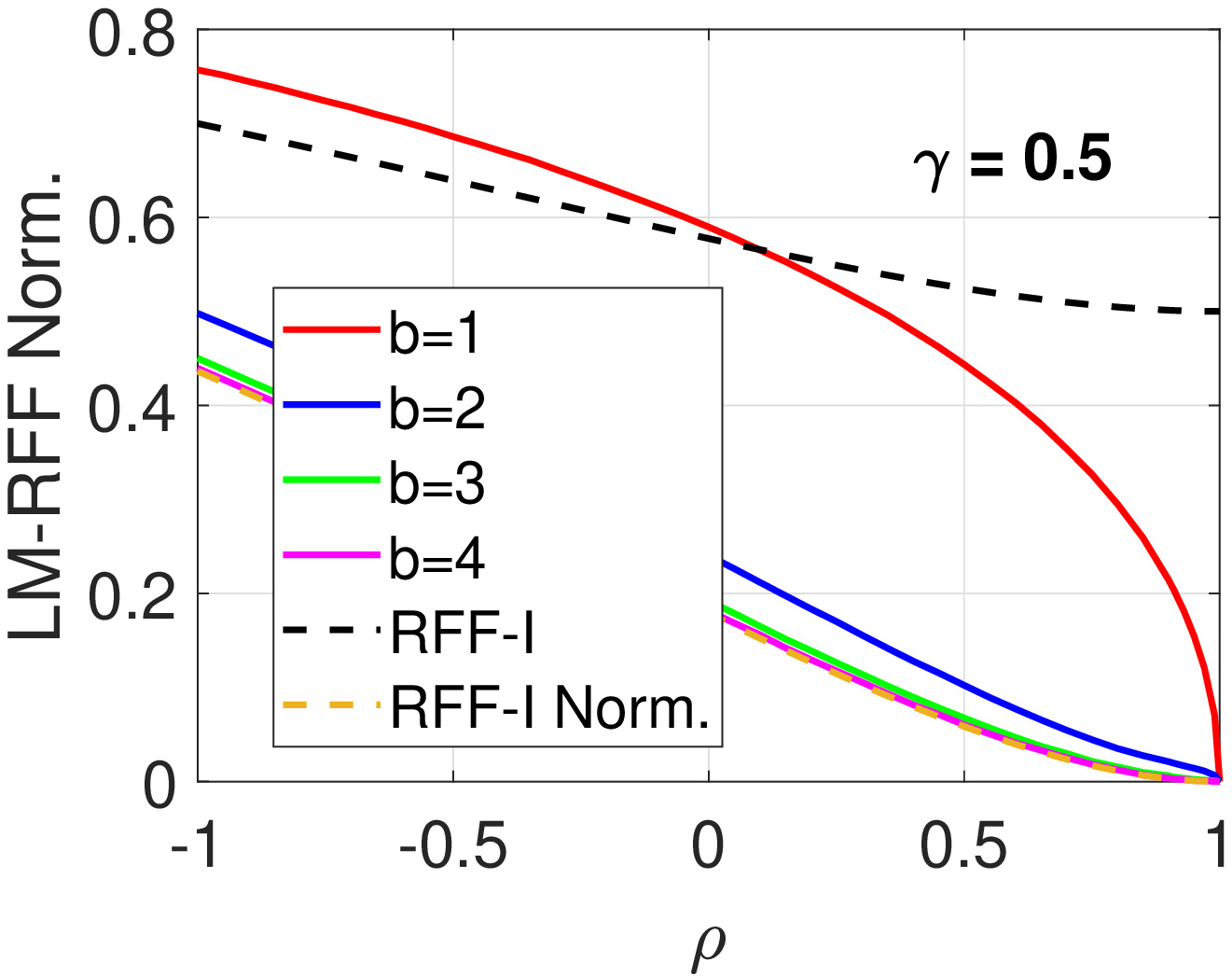}
        \includegraphics[width=2.2in]{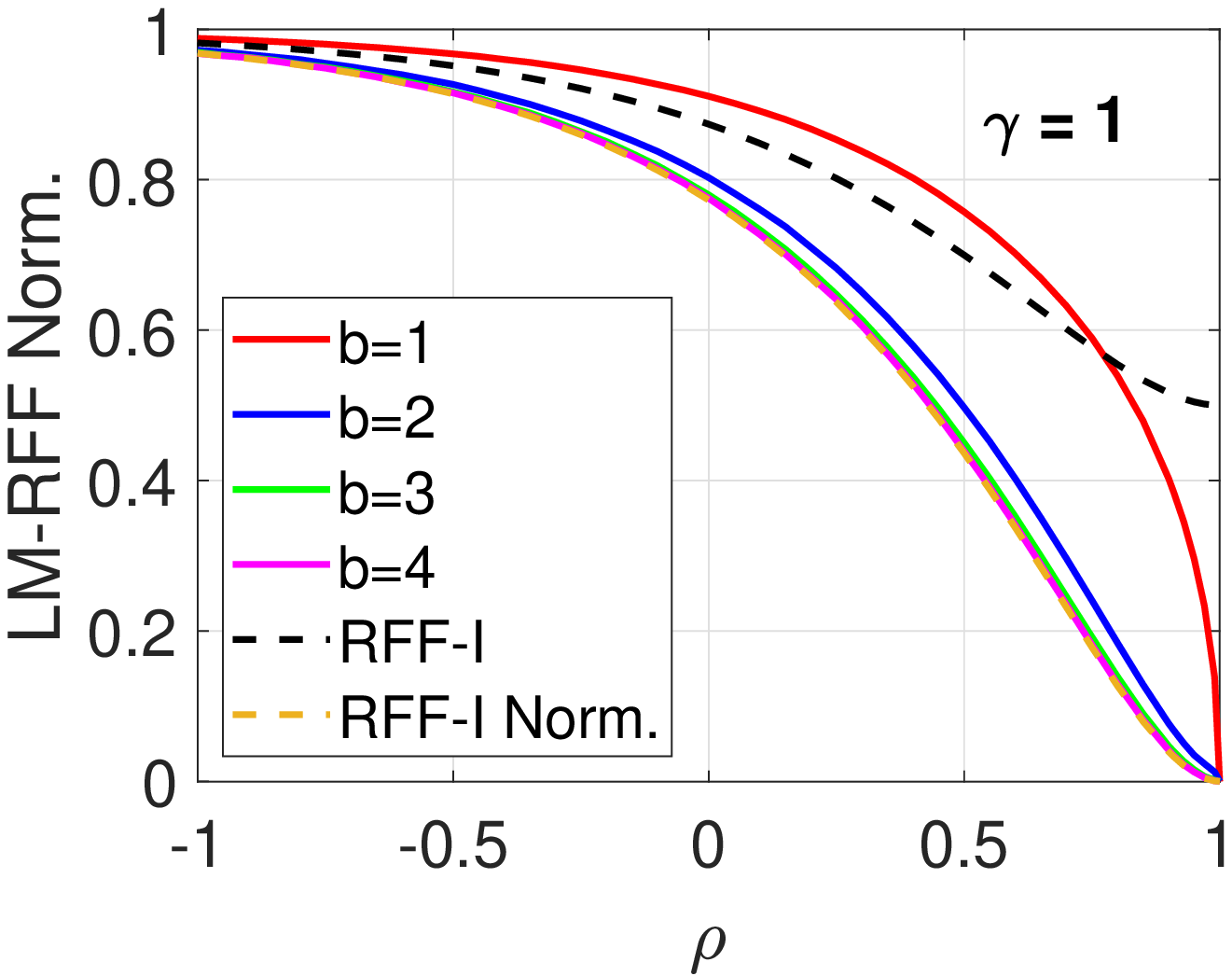}
        }
        \mbox{
        \includegraphics[width=2.2in]{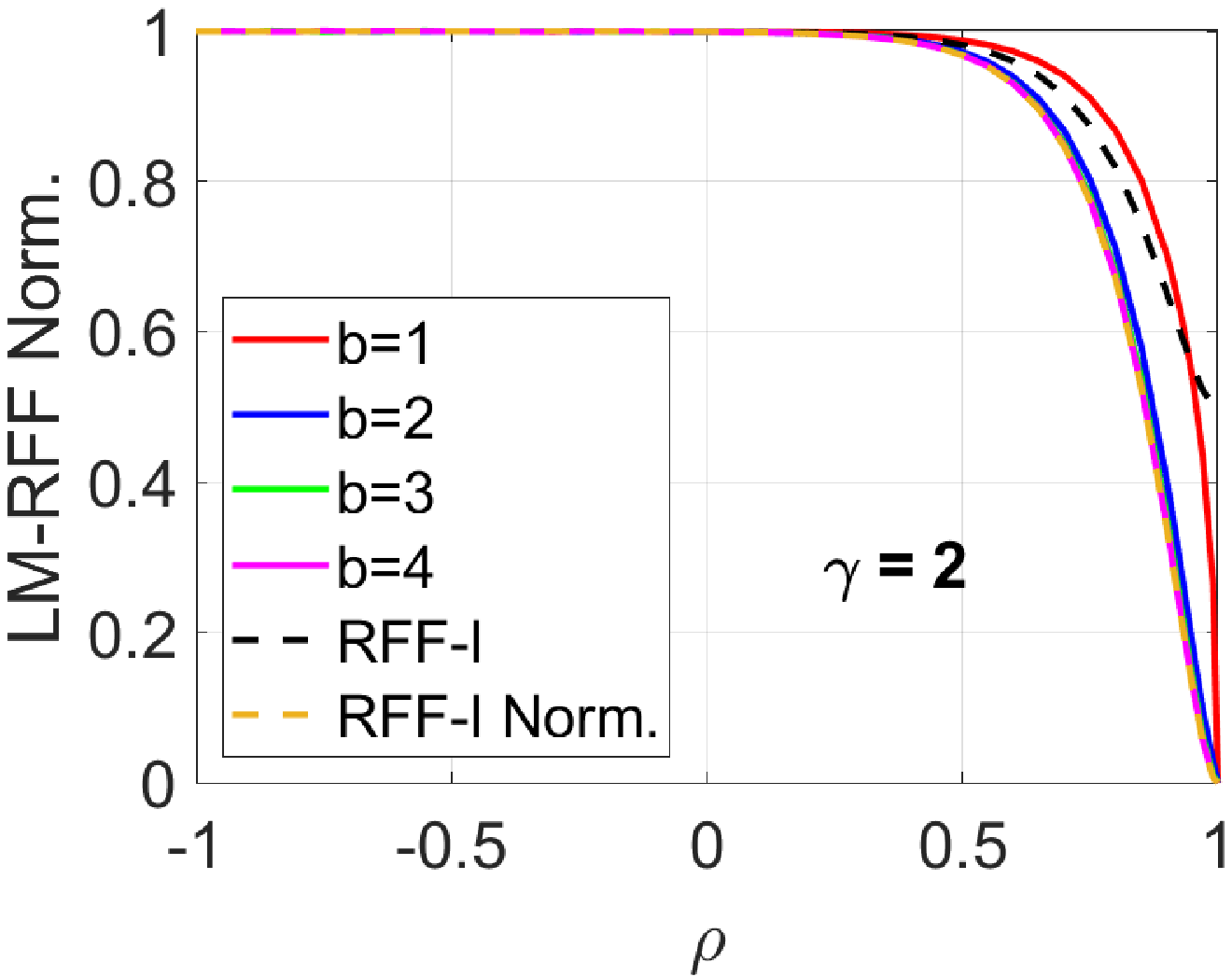}
        \includegraphics[width=2.2in]{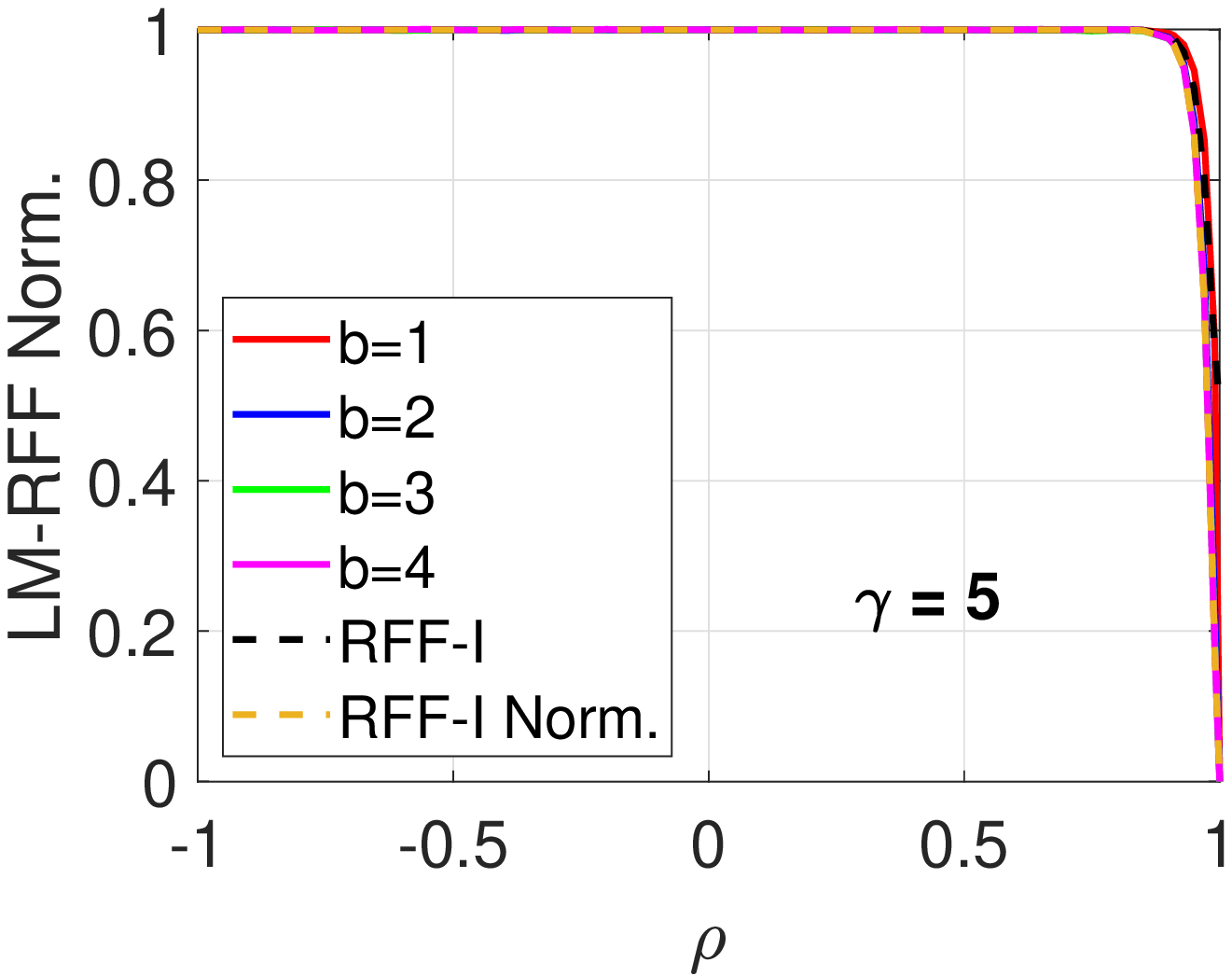}
        \includegraphics[width=2.2in]{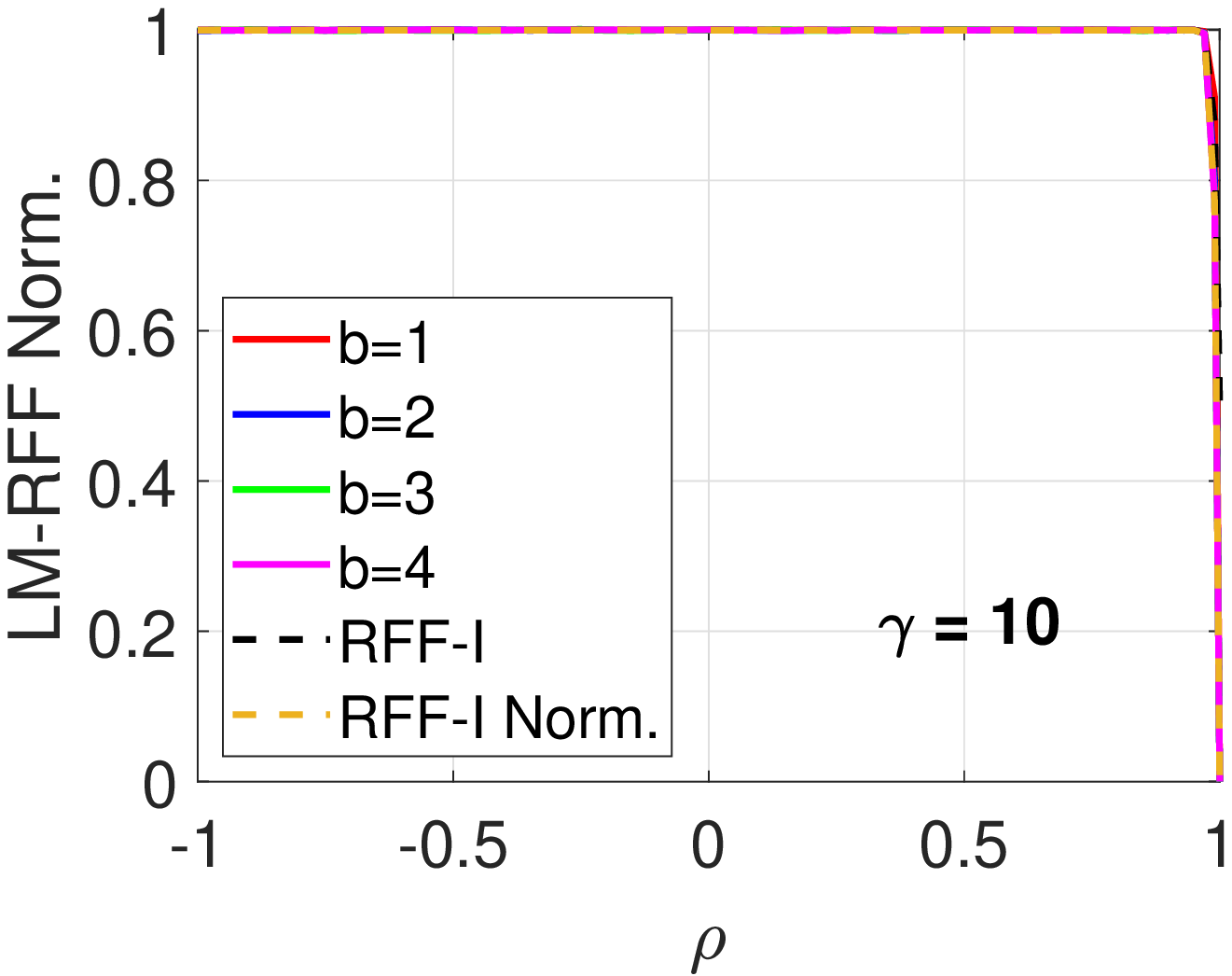}
        }
    \end{center}
    \vspace{-0.25in}
	\caption{Variance (scaled by $m$) of LM-RFF and LM-RFF Norm. estimators with different $\gamma$ and bits $b$. The dashed curves are the variances of full-precision counterparts.}
	\label{append:fig:variance-different b}
\end{figure}

\newpage
\section{Proofs}  \label{append sec:proof}

\subsection{Lemma~\ref{lemma1} \& Theorem~\ref{theo:density of RFF1}}

\begin{proof}(of Lemma~\ref{lemma1})
We have the convolution of uniform and Gaussian distribution as
\begin{align*}
    f_Y(y)&=\int_{-\infty}^\infty P(b=u, \gamma X=y-u)du\\
    &=\frac{1}{2\pi} \int_0^{2\pi} \frac{1}{\sqrt{2\pi}\gamma}e^{-\frac{(y-u)^2}{2\gamma^2}}du\\
    &=\frac{1}{2\pi}\left[ \Phi(\frac{2\pi-y}{\gamma})-\Phi(-\frac{y}{\gamma}) \right].
\end{align*}
\end{proof}

\begin{proof}(of Theorem~\ref{theo:density of RFF1})
Denote $Y=\gamma X+\tau$. We have
\begin{align*}
    P\Big(Z\leq z\Big)&=\sum_{k=-\infty}^\infty P\Big(2k\pi+\cos^{-1} z\leq Y\leq 2(k+1)\pi-\cos^{-1} z\Big) \nonumber\\
    &=\sum_{k=-\infty}^\infty \int_{2k\pi+\cos^{-1} z}^{2(k+1)\pi-\cos^{-1} z} f_Y(y)dy,
\end{align*}
where $f(y)$ is given by Lemma~\ref{lemma1}. Let the density of Z be $g_Z$, and denote $t^*=\cos^{-1} z$. It follows that
\begin{align}
    g_Z(z)&=\sum_{k=-\infty}^\infty \frac{1}{\sqrt{1-z^2}}\Big[  f_Y(2(k+1)\pi-t^*)+f_Y(2k\pi+t^*) \Big] \nonumber\\
    &=\frac{1}{2\pi\sqrt{1-z^2}} \sum_{k=-\infty}^\infty \Big[ \Phi(\frac{t^*-2k\pi}{\gamma})-\Phi(\frac{t^*-2(k+1)\pi}{\gamma}) \nonumber\\
    & \hspace{1.4in} +\Phi(\frac{-t^*-2(k-1)\pi}{\gamma})-\Phi(\frac{-t^*-2k\pi}{\gamma}) \Big] \nonumber\\
    &=\frac{1}{\pi\sqrt{1-z^2}}. \label{eqn:f(z)}
\end{align}
To prove the last line, denote the term in the bracket as $\alpha_k$. By cancellation, for any $k_1,k_2$, we have
\begin{align*}
    &\sum_{k=k_1}^{k_2} \alpha_k=\Big[ \Phi(\frac{t^*-2k_1\pi}{\gamma})+\Phi(\frac{-t^*-2(k_1 -1)\pi}{\gamma}) -\Phi(\frac{t^*-2(k_2 +1)\pi}{\gamma})-\Phi(\frac{-t^*-2k_2\pi}{\gamma}) \Big],
\end{align*}
which equals to $2$ in the limit $k_1\rightarrow -\infty,k_2\rightarrow \infty$. Using a similar approach, we can show that Eq. (\ref{eqn:f(z)}) is exactly the density of the cosine of a uniform random variable on $[0,2\pi]$. For $Z_2=\cos(\gamma X+\tau)=Z^2$, we have
\begin{align*}
    P[Z_2\leq z]&=P[|Z|\leq z]\\
    &=\frac{1}{\pi}\int_{\sqrt{z}}^{\sqrt{z}} \frac{1}{\sqrt{1-z^2}}dz\\
    &=\frac{1}{\pi} (\sin^{-1}(\sqrt{z})-\sin^{-1}(-\sqrt{z}))\\
    &=\frac{2}{\pi}\sin^{-1}(\sqrt{z}).
\end{align*}
Taking the derivative we get the p.d.f. as
\begin{align*}
    f_{Z_2}(z)=\frac{1}{\pi\sqrt{z-z^2}}.
\end{align*}
The proof is now complete.
\end{proof}

\subsection{Lemma~\ref{lemma2} \& Theorem~\ref{theo:joint_RFF1}}

\begin{proof}(of Lemma~\ref{lemma2})
Similar to the proof of Lemma~\ref{lemma1}, we have
\begin{align*}
    f(t_x,t_y)&=\frac{1}{2\pi}\int_0^{2\pi} P(\gamma x=t_x-u,\gamma y=t_y-u)du\\
    &=\frac{1}{4\pi^2\gamma^2\sqrt{1-\rho^2}}\int_0^{2\pi} e^{-\frac{(t_x-u)^2-2\rho(t_x-u)(t_y-u)+(t_y-u)^2}{2(1-\rho^2)\gamma^2}}du\\
    &=\frac{1}{4\pi^2\gamma^2\sqrt{1-\rho^2}}\int_0^{2\pi} e^{-\frac{2(1-\rho)(u^2-u(t_x+t_y))+t_x^2+t_y^2-2\rho t_xt_y}{2(1-\rho^2)\gamma^2}}du\\
    &=\frac{1}{4\pi^2\gamma^2\sqrt{1-\rho^2}}\int_0^{2\pi} e^{-\frac{2(1-\rho)(u-\frac{t_x+t_y}{2})^2+\frac{1+\rho}{2}(t_x-t_y)^2}{2(1-\rho^2)\gamma^2}}du\\
    &=\frac{1}{4\pi^2\gamma^2\sqrt{1-\rho^2}}e^{-\frac{(t_x-t_y)^2}{4(1-\rho)\gamma^2}}\int_0^{2\pi} e^{-\frac{(u-\frac{t_x+t_y}{2})^2}{(1+\rho)\gamma^2}}du\\
    &=\frac{1}{2\pi}\phi_{\sqrt{2(1-\rho)}\gamma}(t_x-t_y) \Big[\Phi(\frac{4\pi-(t_x+t_y)}{\gamma\sqrt{2(1+\rho)}})-\Phi(-\frac{t_x+t_y}{\gamma\sqrt{2(1+\rho)}}) \Big],
\end{align*}
where $\phi_{\sqrt{2(1-\rho)}\gamma}$ is the density of $N(0,2(1-\rho)\gamma^2)$.
\end{proof}

\begin{proof}(of Theorem~\ref{theo:joint_RFF1})
Denote $Z_x=\cos(t_x),Z_y=\cos(t_y)$. Let $a_x^*=\cos^{-1}(z_x),a_y^*=\cos^{-1}(z_y)$. Denote $\phi=\phi_{\sqrt{2(1-\rho)}\gamma}$ for simplicity. We have
\begin{align*}
    &P(Z_x\leq z_x,Z_y\leq z_y)=\sum_{k_x=-\infty}^\infty \sum_{k_y=-\infty}^\infty \int_{2k_x\pi+a_x^*}^{2(k_x+1)\pi-a_x^*}\int_{2k_y\pi+a_y^*}^{2(k_y+1)\pi-a_y^*} f(t_x,t_y)dt_x dt_y.
\end{align*}
By Lemma~\ref{lemma2}, it follows that
{\small
\begin{align*}
    &f(z_x,z_y)\\
    &=\frac{1}{2\pi}\sum_{k_x=-\infty}^\infty\sum_{k_y=-\infty}^\infty \int_{2k_x\pi+a_x^*}^{2(k_x+1)\pi-a_x^*}\frac{1}{\sqrt{1-z_y^2}}\Bigg\{ \phi(t_x-2(k_y+1)\pi+a_y^*)\Big[ \Phi(\frac{4\pi-(t_x+2(k_y+1)\pi-a_y^*)}{\gamma\sqrt{2(1+\rho)}})\\
    &\hspace{0.2in} -\Phi(-\frac{t_x+2(k_y+1)\pi-a_y^*}{\gamma\sqrt{2(1+\rho)}}) \Big]+\phi(t_x-2k_y\pi-a_y^*)\Big[ \Phi(\frac{4\pi-(t_x+2k_y\pi+a_y^*)}{\gamma\sqrt{2(1+\rho)}})-\Phi(-\frac{t_x+2k_y\pi+a_y^*}{\gamma\sqrt{2(1+\rho)}}) \Big] \Bigg\} dt_x\\
    &=\frac{1}{2\pi\sqrt{1-z_x^2}\sqrt{1-z_y^2}}\sum_{k_x}\sum_{k_y} \Bigg\{ \phi(-a_x^*+a_y^*+2(k_x-k_y)\pi)\Big[ \Phi(\frac{a_x^*+a_y^*-2(k_x+k_y)\pi)}{\gamma\sqrt{2(1+\rho)}})-\Phi(\frac{a_x^*+a_y^*-2(k_x+k_y+2)\pi}{\gamma\sqrt{2(1+\rho)}}) \Big]\\
    &\hspace{1.2in} +\phi(-a_x^*-a_y^*+2(k_x-k_y+1)\pi)\Big[ \Phi(\frac{a_x^*-a_y^*-2(k_x+k_y-1)\pi)}{\gamma\sqrt{2(1+\rho)}})-\Phi(\frac{a_x^*-a_y^*-2(k_x+k_y+1)\pi}{\gamma\sqrt{2(1+\rho)}}) \Big]\\
    &\hspace{1.2in} +\phi(a_x^*+a_y^*+2(k_x-k_y-1)\pi)\Big[ \Phi(\frac{-a_x^*+a_y^*-2(k_x+k_y-1)\pi)}{\gamma\sqrt{2(1+\rho)}})-\Phi(\frac{-a_x^*+a_y^*-2(k_x+k_y+1)\pi}{\gamma\sqrt{2(1+\rho)}}) \Big]\\
    &\hspace{1.2in} +\phi(a_x^*-a_y^*+2(k_x-k_y)\pi)\Big[ \Phi(\frac{-a_x^*-a_y^*-2(k_x+k_y-2)\pi)}{\gamma\sqrt{2(1+\rho)}})-\Phi(\frac{-a_x^*-a_y^*-2(k_x+k_y)\pi}{\gamma\sqrt{2(1+\rho)}}) \Big] \Bigg\}\\
    &\overset{(a)}{=}\frac{1}{2\pi\sqrt{1-z_x^2}\sqrt{1-z_y^2}}\sum_{k=-\infty}^\infty \Big[ \phi(-a_x^*+a_y^*+2k\pi)+\phi(-a_x^*-a_y^*+2k\pi)+\phi(a_x^*+a_y^*+2k\pi)+\phi(a_x^*-a_y^*+2k\pi) \Big]\\
    &=\frac{1}{\pi\sqrt{1-z_x^2}\sqrt{1-z_y^2}}\sum_{k=-\infty}^\infty \Big[ \phi(a_x^*-a_y^*+2k\pi)+\phi(a_x^*+a_y^*+2k\pi) \Big],
\end{align*}}
where (a) is derived by writing the summations $\sum_{k_x=-\infty}^\infty \sum_{k_y=-\infty}^\infty \{\cdot\}$ into $\sum_{l=-\infty}^\infty \sum_{k_x=-\infty}^\infty\{\cdot\}$  with $l=k_x-k_y$ and canceling terms, along with the symmetry of $\phi(\cdot)$. This gives the joint density of $z_x$ and $z_y$.

For the sine counterpart, with some abuse of notation, let us denote $z_x=\sin(t_x)$ and $z_y=\sin(t_y)$ from now on. Using similar argument, we have
\begin{align*}
    P(Z_x\leq z_x,Z_y\leq z_y)&=\sum_{k_x=-\infty}^\infty \sum_{k_y=-\infty}^\infty \int_{(2k_x+1)\pi-\sin^{-1}(z_x)}^{2(k_x+1)\pi+\sin^{-1}(z_x)}\int_{(2k_y+1)\pi-\sin^{-1}(z_y)}^{2(k_y+1)\pi+\sin^{-1}(z_y)} f(t_x,t_y)dt_x dt_y.
\end{align*}
After simplification, we finally arrive at
\begin{align}
    f(z_x,z_y)&=\frac{1}{\pi\sqrt{1-z_x^2}\sqrt{1-z_y^2}}\sum_{k=-\infty}^\infty \Big[ \phi(\sin^{-1}(z_x)-\sin^{-1}(z_y)+2k\pi)+\phi(\sin^{-1}(z_x)+\sin^{-1}(z_y)+(2k+1)\pi) \Big].  \label{eqn:joint-sin}
\end{align}
Considering $Z_x=\sin(t_x), Z_y=\sin(t_y)$. Since $\sin^{-1}(x)=\frac{\pi}{2}-\cos^{-1}(x)$, we can substitute into the density to derive
\begin{align*}
    f(z_x,z_y)&=\frac{1}{\pi\sqrt{1-z_x^2}\sqrt{1-z_y^2}}\sum_{k=-\infty}^\infty \Big[ \phi(\cos^{-1}(z_x)-\cos^{-1}(z_y)+2k\pi)+\phi(\cos^{-1}(z_x)+\cos^{-1}(z_y)+(2k+2)\pi) \Big]  \\
    &==\frac{1}{\pi\sqrt{1-z_x^2}\sqrt{1-z_y^2}}\sum_{k=-\infty}^\infty \Big[ \phi(a_x^*-a_y^*+2k\pi)+\phi(a_x^*+a_y^*+2k\pi) \Big],
\end{align*}
which is the same as the previous cosine transformation. This completes the proof.
\end{proof}

\subsection{Proposition~\ref{prop:joint inequality}}
\begin{proof}
Let us denote $\sigma=\sqrt{2(1-\rho)}\gamma$ for simplicity. By symmetry and exchangeability of $f$, to prove the desired result, it suffices to consider the case where both $z_x$ and $z_y$ are positive, i.e., $(z_x,z_y)\in (0,1]^2$. Define the notation $a_x^*=\sin^{-1}(z_x)\geq 0,a_y^*=\sin^{-1}(z_y)\geq 0$. From (\ref{eqn:joint-sin}), we deduct
\begin{align}
    &f(z_x,z_y)-f(z_x,-z_y) \nonumber\\
    &\propto \sum_{k=-\infty}^\infty \Big[ \phi_\sigma(a_x^*-a_y^*+2k\pi)+\phi_\sigma(a_x^*+a_y^*+(2k+1)\pi) \nonumber \\
    &\hspace{0.8in} -\phi_\sigma(a_x^*+a_y^*+2k\pi)-\phi_\sigma(a_x^*-a_y^*+(2k+1)\pi) \Big]  \nonumber\\
    &= \sum_{k=0}^{\infty} (-1)^k\Big[ \phi_\sigma(k\pi+d)-\phi_\sigma(k\pi+s)+\phi_\sigma((k+1)\pi-s)-\phi_\sigma((k+1)\pi-d) \Big],  \nonumber\\
    &= \phi_\sigma(d)-\phi_\sigma(s)+\sum_{k=1}^\infty \Big[ \phi_\sigma(k\pi-s)-\phi_\sigma(k\pi-d)-\phi_\sigma(k\pi+d)+\phi_\sigma(k\pi+s) \Big],  \nonumber \\
    &\triangleq \phi_\sigma(d)-\phi_\sigma(s)+\sum_{k=1}^\infty M_k, \label{eqn1}
\end{align}
where we let $d=a_x^*-a_y^*$ and $d=a_x^*+a_y^*$, and we use the fact that $\phi_\sigma(-x)=\phi_\sigma(x)$. Note that, we consider $z_y>0$ so that $d\neq s$, since when $z_y=0$ we trivially have $f(z_x,0)=f(z_x,0)$. For now, we assume that $z_x\geq z_y>0$, such that $d$ and $s$ are defined on the domain $0<s\leq \pi$ and $0\leq d< \min\{s,\pi-s\}$. Since
\begin{align*}
    \phi'_\sigma(x)=-\frac{x}{\sqrt{2\pi}\sigma^3}e^{-\frac{x^2}{2\sigma^2}}, \quad
    \phi''_\sigma(x)=-\frac{x^2-\sigma^2}{\sqrt{2\pi}\sigma^5}e^{-\frac{x^2}{2\sigma^2}},
\end{align*}
we know that $\phi_\sigma$ is piecewise concave on $(0,\sigma)$ and piecewise convex on $(\sigma,\infty)$. Thus,
\begin{align}
    \phi_\sigma(a)-\phi_\sigma(a+g)\geq \phi_\sigma(c)-\phi_\sigma(c+g) \label{eqn2}
\end{align}
for any $\sigma\leq a\leq c$ and $g\geq 0$. The equality holds only when $a=c$ or $g=0$. Consequently, under the assumption that $\sigma\leq \pi$, $M_k\geq 0$ for $k\geq 2$ since $2\pi-s\geq \sigma$, where the equality holds only when $d=s$, i.e., $z_y=0$. Furthermore, the piecewise convexity of $\phi_\sigma(\cdot)$ and (\ref{eqn2}) imply that for $\sigma\leq a< c$,
\begin{align}
    \frac{(c-a)c}{\sigma^2}e^{-\frac{c^2}{2\sigma^2}}< e^{-\frac{a^2}{2\sigma^2}}-e^{-\frac{c^2}{2\sigma^2}}< \frac{(c-a)a}{\sigma^2}e^{-\frac{a^2}{2\sigma^2}}. \label{eqn3}
\end{align}
Also note that the function $e^{-x}$ is convex on the real line, which gives for $\forall a<c$,
\begin{align}
    \frac{(c-a)(c+a)}{2\sigma^2}e^{-\frac{c^2}{2\sigma^2}}< e^{-\frac{a^2}{2\sigma^2}}-e^{-\frac{c^2}{2\sigma^2}}< \frac{(c-a)(c+a)}{2\sigma^2}e^{-\frac{a^2}{2\sigma^2}}. \label{eqn4}
\end{align}
Now that $M_k>0$ for $k\geq 2$, evaluating (\ref{eqn1}) we obtain
\begin{align*}
    (\ref{eqn1})& > \phi_\sigma(d)-\phi_\sigma(s)+ M_1\\
    & \overset{(a)}{>} \frac{1}{\sqrt{2\pi}\sigma^3}\Big[\frac{(s-d)(s+d)}{2}e^{-\frac{s^2}{2\sigma^2}}+\frac{(s-d)(2\pi-s-d)}{2}e^{-\frac{(\pi-d)^2}{2\sigma^2}}-(s-d)(\pi+d)e^{-\frac{(\pi+d)^2}{2\sigma^2}} \Big]\\
    &\overset{(b)}{\geq} \frac{s-d}{\sqrt{2\pi}\sigma^3}\Big[ \pi e^{-\frac{(\pi-d)^2}{2\sigma^2}}-(\pi+d)e^{-\frac{(\pi+d)^2}{2\sigma^2}} \Big],
\end{align*}
where (a) uses (\ref{eqn3}) and (\ref{eqn4}), and (b) is because $s\leq \pi-d$. It is easy to verify that the ratio
\begin{align*}
    \left. \Big( \pi e^{-\frac{(\pi-d)^2}{2\sigma^2}} \Big)\middle/ \Big( (\pi+d)e^{-\frac{(\pi+d)^2}{2\sigma^2}} \Big)=\frac{\pi}{\pi+d}e^{\frac{2\pi d}{\sigma^2}}\geq 1 \right.
\end{align*}
for $\sigma\leq \pi$ and $0\leq d< \min\{s,\pi-s\}< \frac{\pi}{2}$. Therefore, we have proved that $f(z_x,z_y)>f(z_x,-z_y)$, for $z_x\geq z_y>0$. Now, by exchangeability and symmetry of $f$, we have
\begin{align*}
    f(z_y,z_x)=f(z_x,z_y)>f(z_x,-z_y)=f(-z_x,z_y)=f(z_y,-z_x).
\end{align*}
Therefore, our result also holds for $z_y\geq z_x>0$. The proof is now complete.

\end{proof}

\subsection{Theorem~\ref{theo:StocQ}}
\begin{proof}
Denote the StocQ quantizer as $Q$. For each RFF $z$, assume $z\in [t_{i-1},t_i]$ for some $i$. We can then write $Q(z)=z+\epsilon$, where
$$\mathbb E[\epsilon]=t_i\frac{z-t_{i-1}}{t_i-t_{i-1}}+t_{i-1}\frac{t_i-z}{t_i-t_{i-1}}-z=0.$$
Thus, it follows that
\begin{align*}
    Var[\epsilon]=\mathbb E[\epsilon^2]
    &=t_i^2\frac{z-t_{i-1}}{t_i-t_{i-1}}+t_{i-1}^2\frac{t_i-z}{t_i-t_{i-1}}-z^2\\
    &=(t_i-z)(z-t_{i-1}).
\end{align*}
For two data vectors $u,v$, let $F^{StocQ}(u)=\sqrt 2Q(z_u)$ and $F^{StocQ}(v)=\sqrt 2Q(z_v)$, where $z_u=\cos(w^Tu+\tau)$ and $z_v=\cos(w^Tv+\tau)$ follows the distribution $f$ given by Theorem~\ref{theo:joint_RFF1}. We can write $Q(z_u)=z_u+\epsilon_u$, $Q(z_v)=z_v+\epsilon_v$ where $\epsilon_u$ and $\epsilon_v$ are independent. Let $\hat =F^{StocQ}(u)F^{StocQ}(v)$. We have
\begin{align*}
    \mathbb E[\hat K_{StocQ}]&=2\mathbb E[(z_u+\epsilon_u)(z_v+\epsilon_v)]\\
    &=2\mathbb E[z_uz_v]=K(u,v),
\end{align*}
implying that StocQ estimate is unbiased. The variance factor can be computed as
\begin{align}
    Var[\hat K_{StocQ}]&=4\mathbb E[(z_u+\epsilon_u)^2(z_v+\epsilon_v)^2]-K(u,v)^2  \nonumber\\
    &=4\mathbb E[z_u^2\epsilon_v^2+z_v^2\epsilon_u^2+\epsilon_u^2\epsilon_v^2]+Var[\hat K]\triangleq A+Var[\hat K],  \label{stoc-var0}
\end{align}
where $Var[\hat K]$ is the variance of full-precision RFF kernel estimator. Obviously, $A>0$, thus StocQ estimator always has larger variance than full-precision RFF. Continuing our analysis,
\begin{align*}
    \mathbb E[z_u^2\epsilon_v^2]&=\mathbb E_{z_u,z_v}z_u^2 \mathbb E[\epsilon_v^2|z_v]\\
    &=\int_{-1}^1dz_u \Big(\sum_{j=1}^{2^b-1}\int_{t_{j-1}}^{t_j}(t_j-z_v)(z_v-t_{j-1})z_u^2 f(z_u,z_v)dz_v\Big)\\
    &=\sum_{i=1}^{2^b-1}\sum_{j=1}^{2^b-1}\int_{t_{j-1}}^{t_j}\int_{t_{i-1}}^{t_i} \Big((t_{j-1}+t_j)z_vz_u^2-z_v^2z_u^2-t_{j-1}t_jz_u^2 \Big) f(z_u,z_v) dz_udz_v.
\end{align*}
By symmetry of density function $f$, we know that $\mathbb E[z_v^2\epsilon_u^2]=\mathbb E[z_u^2\epsilon_v^2]$. It remains to compute $\mathbb E[\epsilon_u^2\epsilon_v^2]$. By similar reasoning, we have
\begin{align*}
    \mathbb E[\epsilon_u^2\epsilon_v^2]&=\sum_{i=1}^{2^b-1}\sum_{j=1}^{2^b-1}\int_{t_{j-1}}^{t_j}\int_{t_{i-1}}^{t_i} (t_i-z_u)(z_u-t_{i-1})(t_j-z_v)(z_v-t_{j-1}) f(z_u,z_v) dz_udz_v\\
    &=\sum_{i=1}^{2^b-1}\sum_{j=1}^{2^b-1}\int_{t_{j-1}}^{t_j}\int_{t_{i-1}}^{t_i} \Big( (t_{i-1}+t_i)(t_{j-1}+t_j)z_uz_v- (t_{i-1}+t_i)z_uz_v^2-(t_{j-1}+t_j)z_vz_u^2+z_u^2z_v^2\\
    &\hspace{0.3in}  -(t_{i-1}+t_i)t_{j-1}t_jz_u-(t_{j-1}+t_j)t_{i-1}t_iz_v+t_{j-1}t_jz_u^2+t_{i-1}t_iz_v^2+t_{i-1}t_it_{j-1}t_j \Big) f(z_u,z_v) dz_udz_v\\
    &\overset{(a)}{=}\sum_{i=1}^{2^b-1}\sum_{j=1}^{2^b-1}\int_{t_{j-1}}^{t_j}\int_{t_{i-1}}^{t_i} \Big( (t_{i-1}+t_i)(t_{j-1}+t_j)z_uz_v-2(t_{j-1}+t_j)z_vz_u^2+z_u^2z_v^2\\
    &\hspace{3in} +2 t_{j-1}t_jz_u^2+t_{i-1}t_it_{j-1}t_j \Big) f(z_u,z_v) dz_udz_v,
\end{align*}
where equation $(a)$ is due to the symmetry of density $f$ and the borders $t_0<...<t_{2^b-1}$. Substituting above expressions into (\ref{stoc-var0}) and cancelling terms, we obtain
\begin{align*}
    A&=4 \sum_{i=1}^{2^b-1}\sum_{j=1}^{2^b-1}\int_{t_{j-1}}^{t_j}\int_{t_{i-1}}^{t_i} \Big( (t_{i-1}+t_i)(t_{j-1}+t_j)z_uz_v+ t_{i-1}t_it_{j-1}t_j-z_u^2z_v^2 \Big) f(z_u,z_v) dz_udz_v\\
    &=4 \sum_{i=1}^{2^b-1}\sum_{j=1}^{2^b-1} \Big[ (t_{i-1}+t_i)(t_{j-1}+t_j)\kappa_{i,j} + t_{i-1}t_it_{j-1}t_jp_{i,j} \Big]-4\mathbb E[z_u^2z_v^2].
\end{align*}
Therefore,
\begin{align*}
    Var[\hat K_{StocQ}]&=4 \sum_{i=1}^{2^b-1}\sum_{j=1}^{2^b-1} \Big[ (t_{i-1}+t_i)(t_{j-1}+t_j)\kappa_{i,j} + t_{i-1}t_it_{j-1}t_jp_{i,j} \Big]-K(u,v)^2.
\end{align*}
The proof is completed by noting that StocQ estimator is the average of i.i.d. Bernoulli random variables.
\end{proof}

\subsection{Theorem~\ref{theo:mean-var}}
\begin{proof}
For simplicity, we prove the result specifically for LM-RFF quantization. Similar arguments holds for general quantizers. The \textit{Chebyshev polynomials}~\citep{1995polynomials} of the first kind are defined through trigonometric identities
\begin{align*}
    T_n(\cos(x))=\cos(n\cos(x)),
\end{align*}
where admit the following recursion,
\begin{align*}
    &T_0(x)=1,\quad T_1(x)=x,\\
    &T_{i+1}(x)=2xT_i(x)-T_{i-1}(x),\quad i\geq 2.
\end{align*}
$\{T_0,T_1,...\}$ forms an orthogonal basis of the function space on $[-1,1]$ with finite number of discontinuities. Precisely, define the inner product w.r.t. measure $\frac{1}{\sqrt{1-x^2}}$ as
\begin{align*}
    \langle f(x),g(x) \rangle=\int_{-1}^1 f(x)g(x)\frac{1}{\sqrt{1-x^2}}dx.
\end{align*}
Then orthogonality holds:
\begin{align*}
    \int_{-1}^1 T_i(x) T_j(x)\frac{1}{\sqrt{1-x^2}}dx=\begin{cases}
    0, & i\neq j,\\
    \pi,& i=j=0,\\
    \frac{\pi}{2},&i=j\neq 0.
    \end{cases}
\end{align*}
By Chebyshev functional decomposition, our   LM quantizer can be written as
\begin{align*}
    Q(x)=\sum_{k=0}^\infty \alpha_k T_k(x),
\end{align*}
where $\alpha_k$ are computed through the inner products,
\begin{align*}
    &\alpha_0=\frac{2}{\pi}\int_{-1}^1 Q(x)T_0(x)\frac{dx}{\sqrt{1-x^2}}=0,\\
    &\alpha_1=\frac{2}{\pi}\int_{-1}^1 Q(x)T_1(x)\frac{dx}{\sqrt{1-x^2}}=1-2D,\\
    &\alpha_2=\frac{2}{\pi}\int_{-1}^1 Q(x)T_2(x)\frac{dx}{\sqrt{1-x^2}}=0,\\
    &\alpha_3=\frac{2}{\pi}\int_{-1}^1 Q(x)T_3(x)\frac{dx}{\sqrt{1-x^2}},\\
    &\hspace{0.3in} ...
\end{align*}
with $D$ the distortion of $Q$ given in equation~(7) of the main paper. Firstly, it is easy to show that $\vert\mathbb E[T_i(z_x)T_j(z_y)]\vert\leq \mathbb E[T_i(z_x)^2]=\frac{1}{2}$. Note that $\alpha_k=0$ when $k$ is even because $T_k(x)$ is even function and $Q(x)$ is odd. Recall $u,v$ are two normalized data vectors with correlation $\rho$. Denote $z_x=\cos(\gamma x+\tau)$ and $z_y=\cos(\gamma y+\tau)$ with distribution $f(z_x,z_y)$, where $(x,y)\sim N\big(0,\begin{pmatrix}
1 & \rho \\
\rho & 1
\end{pmatrix} \big)$, $\tau\sim uniform(0,2\pi)$. It follows that
\begin{align}
    \mathbb E[\sqrt 2Q(z_x)\sqrt 2Q(z_y)]&=2\int_{-1}^1 \int_{-1}^1 Q(z_x)Q(z_y)f(z_x,z_y) dz_xdz_y \nonumber\\
    &=2\int_{-1}^1 \int_{-1}^1 (\sum_{i=1,odd}^\infty \alpha_i T_i(z_x))(\sum_{j=1,odd}^\infty \alpha_j T_j(z_y))f(z_x,z_y) dz_xdz_y \nonumber\\
    &=(1-2D)^2K(u,v)+2\sum_{i=1,odd}^\infty \sum_{j=3,odd}^\infty \alpha_i\alpha_j \int_{-1}^1 \int_{-1}^1 T_i(z_x)T_j(z_y)f(z_x,z_y) dz_xdz_y. \label{eqn:dither-general}
\end{align}
This proves the first part. There is an intrinsic constraint on $\alpha_i$, $i=3,5,...$. First, we can compute the cosine of $Q(x)$ and each $T_i(x)$ as
\begin{align*}
    c_i&=\frac{\int_{-1}^1 Q(x)T_i(x)\frac{dx}{\sqrt{1-x^2}}}{\sqrt{\int_{-1}^1 Q(x)^2\frac{dx}{\sqrt{1-x^2}}\int_{-1}^1 T_i(x)^2\frac{dx}{\sqrt{1-x^2}}}}\\
    &=\frac{\frac{\pi}{2}\alpha_i}{\sqrt{(\frac{1}{2}-D)\pi}\sqrt{\frac{\pi}{2}}}\\
    &=\frac{\alpha_i}{\sqrt{1-2D}}.
\end{align*}
Since the Chebyshev polynomials form an orthogonal basis of function space on $[-1,1]$, it holds that $\sum_{i=0}^\infty c_i^2=1$. Therefore, we have $\sum_{i=0}^\infty \alpha_i^2=1-2D$. Now that $\alpha_i=0$ when $i$ is even, and $\alpha_1=1-2D$, we then have $\sum_{i=3,odd}^\infty \alpha_i^2=1-2D-(1-2D)^2=2D(1-2D)$.

When $\rho=0$, from (\ref{eqn:dither-general}), it is easy to see that all the integrals would be zero by independence. Thus, the estimated kernel $\mathbb E[\sqrt 2Q(z_x)\sqrt 2Q(z_y)]=(1-2D)^2 K(u,v)$.

When $\rho=1$ ($K(u,v)=1$), we have $\int_{-1}^1 T_i(z_x)T_j(z_x) f(z_x)dz_x=0$ for $i\neq j$ by orthogonality of Chebyshev polynomials, where $f(z_x)$ is the marginal distribution of $z_x$. It follows that
\begin{align*}
    \mathbb E[\sqrt 2Q(z_x)\sqrt 2Q(z_y)]&=(1-2D)^2+\sum_{i=3,odd}^\infty \alpha_i^2\\
    &=(1-2D)^2+2D(1-2D)\\
    &=1-2D.
\end{align*}
This completes the proof of the theorem.
\end{proof}

\subsection{Theorem~\ref{theo: mean-var-norm}}
\begin{proof}
Denote $\bm {w}=\cos(\gamma \bm x+\tau)$, $\bm{z}=\cos(\gamma \bm{y}+\tau)$, with $(\bm{x,y})$ are random vectors with i.i.d. entries from $N(0,\begin{pmatrix}
1 & \rho \\
\rho & 1
\end{pmatrix})$, and $\tau\sim uniform(0,2\pi)$ is also a vector with i.i.d. entries. Recall the notation $\zeta_{s,t}=\mathbb E[Q(w_1)^s Q(z_1)^t]$, where $Q$ is our LM-RFF   quantizer. By Taylor expansion at the expectations, we have as $m\rightarrow \infty$,
\begin{align*}
    \mathbb E[\hat K_{n,Q}]&=\frac{\mathbb E[\frac{1}{m}\sum_{i=1}^m Q(w_i) Q(z_i)]}{\mathbb E[\sqrt{\frac{1}{m^2}\Vert Q(\bm w)\Vert^2 \Vert Q(\bm z)\Vert^2}]}+\mathcal O(\frac{1}{m})\\
    &\triangleq \frac{\zeta_{1,1}}{\mathbb E[\sqrt \Lambda]}+\mathcal O(\frac{1}{m}).
\end{align*}
Applying Taylor expansion again,
\begin{align*}
    \mathbb E[\sqrt \Lambda]&=\mathbb E\Big[\sqrt{\mathbb E[\Lambda]} +\frac{\Lambda-\mathbb E[\Lambda]}{2\sqrt{\mathbb E[\Lambda]}}+\mathcal O((\Lambda-\mathbb E[\Lambda])^2)\Big]\\
    &=\mathbb E[\Lambda]+\mathcal O(\frac{1}{m}),\quad m\rightarrow
    \infty.
\end{align*}
Furthermore, we have the expectation of $\Lambda$ as
\begin{align*}
    \mathbb E[\Lambda]&=\frac{1}{m^2}\mathbb E\Big[ \big(\sum_{i=1}^m Q(w_i)^2\big) \big(\sum_{i=1}^m Q(z_i)^2\big) \Big]\\
    &=\frac{1}{m^2}\Big[ \sum_{i\neq j}Q(w_i)^2Q(z_j)^2 +\sum_{i=1}^m Q(w_i)^2Q(z_i)^2 \Big]\\
    &=\frac{m-1}{m}\mathbb E[Q(w_1)^2Q(z_2)^2]+\frac{1}{m}\mathbb E[Q(w_1)^2Q(z_1)^2]\\
    &=\zeta_{2,0}^2,\quad m\rightarrow \infty.
\end{align*}
Consequently, we obtain
\begin{align*}
    \mathbb E[\hat K_{n,Q}]=\frac{\zeta_{1,1}}{\zeta_{2,0}},\quad m\rightarrow\infty.
\end{align*}
This completes the proof for asymptotic mean. With a little abuse of notation, let $\hat K_{n,Q}=\frac{a}{\sqrt{bc}}$, with
\begin{align*}
    a=\frac{\langle Q(\bm w),Q(\bm z) \rangle}{k},\ b=\frac{\Vert Q(\bm w) \Vert^2}{k},\ c=\frac{\Vert Q(\bm z) \Vert^2}{k}.
\end{align*}
We have
\begin{align*}
    \mathbb E[a]&=\zeta_{1,1},\quad Var[a]=\frac{\zeta_{2,2}-\zeta_{1,1}^2}{m},\\
    \mathbb E[b]&=\zeta_{2,0}=\mathbb E[c],\quad Var[b]=\frac{\zeta_{4,0}-\zeta_{2,0}^2}{m}=Var[c],\\
    Cov(a,b)&=\frac{1}{m^2}\mathbb E[(\sum_{i=1}^m Q(w_i)Q(z_i))(\sum_{i=1}^m Q(w_i)^2)]-\zeta_{1,1}\zeta_{2,0}\\
    &=\frac{m\zeta_{3,1}+m(m-1)\zeta_{1,1}\zeta_{2,0}}{m^2}-\zeta_{1,1}\zeta_{2,0}\\
    &=\frac{\zeta_{3,1}-\zeta_{1,1}\zeta_{2,0}}{m}=Cov(a,c),\\
    Cov(b,c)&=\frac{\zeta_{2,2}-\zeta_{2,0}^2}{m}.
\end{align*}
We can formulate the covariance matrix
\begin{align*}
    Cov(a,b,c)=\frac{1}{m}\begin{pmatrix}
    \zeta_{2,2}-\zeta_{1,1}^2 & \zeta_{3,1}-\zeta_{1,1}\zeta_{2,0} & \zeta_{3,1}-\zeta_{1,1}\zeta_{2,0}\\
    \zeta_{3,1}-\zeta_{1,1}\zeta_{2,0} & \zeta_{4,0}-\zeta_{2,0}^2 & \zeta_{2,2}-\zeta_{2,0}^2 \\
    \zeta_{3,1}-\zeta_{1,1}\zeta_{2,0} & \zeta_{2,2}-\zeta_{2,0}^2 & \zeta_{4,0}-\zeta_{2,0}^2
    \end{pmatrix}.
\end{align*}
The gradient vector at the expectations is
\begin{align*}
    \nabla \hat K_{n,Q}(\mathbb E[a],\mathbb E[b],\mathbb E[c])=(\frac{1}{\zeta_{2,0}},-\frac{\zeta_{1,1}}{2\zeta_{2,0}^2},-\frac{\zeta_{1,1}}{2\zeta_{2,0}^2}).
\end{align*}
By Taylor expansion, it holds that
\begin{align*}
    Var[\hat K_{n,Q}]=\nabla \hat K_{n,Q}(\mathbb E[a],\mathbb E[b],\mathbb E[c])^T Cov(a,b,c) \nabla \hat K_{n,Q}(\mathbb E[a],\mathbb E[b],\mathbb E[c])+\mathcal O(\frac{1}{m^2}).
\end{align*}
The theorem is proved by plugging in the expressions.
\end{proof}

\subsection{Theorem~\ref{theo:BD-variance}}
\begin{proof}
Let $z_x=\cos(\gamma X+\tau)$, $z_y=\cos(\gamma Y+\tau)$ where $(X,Y)\sim N\big(0,\begin{pmatrix}
1 & \rho \\
\rho & 1
\end{pmatrix}\big)$, $\tau\sim uniform(0,2\pi)$. Denote $\zeta_{s,t}=\mathbb E[Q(z_x)^sQ(z_y)^t]$. Recalling Theorem~\ref{theo:mean-var} and Theorem~\ref{theo: mean-var-norm}, we have asymptotically (omitting lower order terms)
\begin{align*}
    &\mathbb E[\hat K_Q]=2\zeta_{1,1},\quad Var[\hat K_Q]=\frac{4}{m}(\zeta_{2,2}-\zeta_{1,1}^2),\\
    &\mathbb E[\hat K_{n,Q}]=\frac{\zeta_{1,1}}{\zeta_{2,0}},\quad Var[\hat K_{n,Q}]=\frac{1}{m}\Big( \frac{\zeta_{2,2}}{\zeta_{2,0}^2}-\frac{2\zeta_{1,1}\zeta_{3,1}}{\zeta_{2,0}^3}+\frac{\zeta_{1,1}^2(\zeta_{4,0}+\zeta_{2,2})}{2\zeta_{2,0}^4} \Big).
\end{align*}
Thus, we can compute the debiased estimator variance as (after simplification)
\begin{align*}
    &Var^{db}[\hat K_Q]=\frac{K(u,v)^2}{m}\Big( \frac{\zeta_{2,2}}{\zeta_{1,1}^2}-1 \Big),\\
    &Var^{db}[\hat K_{n,Q}]=\frac{K(u,v)^2}{m}\Big( \frac{\zeta_{2,2}}{\zeta_{1,1}^2}-\frac{2\zeta_{3,1}}{\zeta_{1,1}\zeta_{2,0}}+\frac{\zeta_{4,0}\zeta_{2,2}}{2\zeta_{2,0}^2} \Big).
\end{align*}
Taking the difference, we obtain
\begin{align*}
    Var^{db}[\hat K_Q]-Var^{db}[\hat K_{n,Q}]&\propto 4\zeta_{2,0}\zeta_{3,1}+\zeta_{1,1}(\zeta_{4,0}+\zeta_{2,2})-2\zeta_{1,1}\zeta_{2,0}^2\\
    &\geq 4\zeta_{2,0}\zeta_{3,1}+\zeta_{1,1}(\zeta_{2,2}-\zeta_{2,0}^2)\triangleq M(\rho),
\end{align*}
where the inequality is due to the fact that $\zeta_{4,0}-\zeta_{2,0}^2=Var[Q^2(z_x)]\geq 0$. Here we denote $M$ as a function of $\rho$. At $\rho=0$, we have
\begin{align*}
    \zeta_{3,1}=0,\quad \zeta_{2,2}=\zeta_{2,0}^2,
\end{align*}
so that $M(0)=0$. At $\rho=1$, it holds that
\begin{align*}
    \zeta_{3,1}=\zeta_{2,2}=\zeta_{4,0},
\end{align*}
hence $M(1)>0$. Notice that $Q(\cdot)$ and $Q^3(\cdot)$ are non-decreasing odd functions, and $Q^2(\cdot)$ is a even function. For  $\rho\in[0,1]$, since $\sqrt{2(1-\rho)}\gamma\leq \sqrt 2\gamma \leq \pi$ by assumption, it follows from Theorem~\ref{theo:discrete monotone} that $\zeta_{1,1}$, $\zeta_{2,2}$ and $\zeta_{3,1}$ are all increasing in $\rho$ on $[0,1]$. Consequently, $M(\rho)>0$ for any $\rho\in[0,1]$. The desired result thus follows.
\end{proof}

\subsection{Lemma~\ref{lemma:continuous monotone}}

\begin{lemma}[Stein's Lemma] \label{stein lemma}
Suppose $X\sim N(\mu,\sigma^2)$, and $g$ is a differentiable function such that $\mathbb E[g(X)(X-\mu)]$ and $\mathbb E[g'(X)]$ exist. Then, $\mathbb E[g(X)(X-\mu)]=\sigma^2\mathbb E[g'(X)]$.
\end{lemma}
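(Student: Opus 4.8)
The plan is to establish Stein's identity by a single integration by parts against the Gaussian density, using the fact that differentiating a Gaussian density reproduces the linear factor $(t-\mu)$. First I would write $\phi_\sigma(t-\mu)$ for the density of $X\sim N(\mu,\sigma^2)$ (with $\phi_\sigma$ as defined in Section~\ref{sec:distribution}) and record the elementary identity $\frac{d}{dt}\phi_\sigma(t-\mu)=-\frac{t-\mu}{\sigma^2}\phi_\sigma(t-\mu)$, equivalently $(t-\mu)\phi_\sigma(t-\mu)=-\sigma^2\frac{d}{dt}\phi_\sigma(t-\mu)$. Substituting this into the definition of the left-hand side gives
\begin{align*}
    \mathbb E[g(X)(X-\mu)]=\int_{-\infty}^\infty g(t)\,(t-\mu)\,\phi_\sigma(t-\mu)\,dt=-\sigma^2\int_{-\infty}^\infty g(t)\,\frac{d}{dt}\phi_\sigma(t-\mu)\,dt.
\end{align*}

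Next I would integrate this last integral by parts, producing $-\sigma^2\big[g(t)\phi_\sigma(t-\mu)\big]_{-\infty}^{\infty}+\sigma^2\int_{-\infty}^\infty g'(t)\phi_\sigma(t-\mu)\,dt$, and the second term is exactly $\sigma^2\,\mathbb E[g'(X)]$. It then remains to show that the boundary term vanishes. This is the step that uses the hypothesis that $\mathbb E[g(X)(X-\mu)]$ and $\mathbb E[g'(X)]$ are finite: writing $g(t)\phi_\sigma(t-\mu)=g(t_0)\phi_\sigma(t_0-\mu)+\int_{t_0}^{t}\frac{d}{ds}\big(g(s)\phi_\sigma(s-\mu)\big)\,ds$ and noting that the integrand is a sum of $g'(s)\phi_\sigma(s-\mu)$ and $-\frac{s-\mu}{\sigma^2}g(s)\phi_\sigma(s-\mu)$, both integrable by assumption, shows that $g(t)\phi_\sigma(t-\mu)$ has finite limits as $t\to\pm\infty$; those limits must be $0$ for $(t-\mu)g(t)\phi_\sigma(t-\mu)$ to be integrable on the tails. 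Alternatively, one truncates to $[-R,R]$, integrates by parts there, and lets $R\to\infty$ using dominated convergence. Chaining the two displays then yields $\mathbb E[g(X)(X-\mu)]=\sigma^2\,\mathbb E[g'(X)]$.

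The main obstacle is purely technical: carefully justifying the integration by parts and the vanishing of the boundary term under the bare integrability hypotheses, rather than a convenient growth bound on $g$. In the use made of this lemma inside Lemma~\ref{lemma:continuous monotone}, the relevant $g$ is built from the bounded, twice-differentiable functions $g_1,g_2$ (hence itself bounded with bounded derivative on the real line after composition with $\cos$), so the Gaussian factor dominates everything and all limiting steps are immediate --- there the identity can simply be quoted. One could also reduce to the standard normal via $X=\mu+\sigma Z$, prove $\mathbb E[h(Z)Z]=\mathbb E[h'(Z)]$, and rescale, but this does not change the analytic content.
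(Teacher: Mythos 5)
The paper itself offers no proof of this lemma: Stein's Lemma is quoted in the appendix as a classical fact and used directly in the proof of Lemma~\ref{lemma:continuous monotone}, so your write-up supplies an argument the paper omits. Your proof is the standard integration-by-parts derivation and it is correct. The only delicate point is exactly the one you identify, the vanishing of the boundary term, and your handling of it is sound provided ``exist'' in the hypothesis is read (as is conventional) as absolute integrability, i.e.\ $\mathbb E|g(X)(X-\mu)|<\infty$ and $\mathbb E|g'(X)|<\infty$: then $\frac{d}{dt}\bigl(g(t)\phi_\sigma(t-\mu)\bigr)=g'(t)\phi_\sigma(t-\mu)-\frac{t-\mu}{\sigma^2}g(t)\phi_\sigma(t-\mu)$ is in $L^1(\mathbb R)$, so $g(t)\phi_\sigma(t-\mu)$ has finite limits at $\pm\infty$, and any nonzero limit would contradict integrability of $(t-\mu)g(t)\phi_\sigma(t-\mu)$; hence both limits are zero and the boundary term drops out. (If the integrals were only assumed to exist in some conditional or principal-value sense, this step would not go through, which is why the absolute-integrability reading matters.) For completeness, Stein's original argument takes a slightly different route that avoids boundary terms altogether: write $\phi_\sigma(t-\mu)=\int_t^{\infty}\frac{s-\mu}{\sigma^2}\phi_\sigma(s-\mu)\,ds$ for $t\ge\mu$ (and the mirror identity for $t\le\mu$) and apply Fubini to $\mathbb E[g'(X)]$; that version needs only absolute continuity of $g$ and $\mathbb E|g'(X)|<\infty$, and delivers finiteness of $\mathbb E[g(X)(X-\mu)]$ as a conclusion rather than a hypothesis. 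Your closing remark is also apt: in the only place the paper uses the lemma, inside Lemma~\ref{lemma:continuous monotone}, the relevant functions are bounded with bounded derivatives after composition with $\cos$ and $\sin$, so all integrability and limiting issues are trivial there.
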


\begin{proof}(of Lemma~\ref{lemma:continuous monotone})
We use the technique of Gaussian interpolation and Stein's Lemma. First, we formulate $Y=\gamma\rho X+\gamma\sqrt{1-\rho^2}Z$ where $Z\sim N(0,1)$ independent of $X$. By continuity and boundedness of $g_1$ and $g_2$, it holds that
\begin{align*}
    &\frac{\partial \mathbb E[g_1(\cos(s_x))g_2(\cos(s_y))]}{\partial \rho}\\
    &=\frac{\partial \mathbb E_{X,Z,\tau}[g_1(\cos(\gamma X+\tau))g_2(\cos(\gamma\rho X+\gamma\sqrt{1-\rho^2}Z+\tau))]}{\partial \rho}\\
    &=-\mathbb E_{X,Z,\tau}\Big[\underbrace{ g_1(\cos(\gamma X+\tau))g_2'(\cos(\gamma\rho X+\gamma\sqrt{1-\rho^2}Z+\tau))\sin(\gamma\rho X+\gamma\sqrt{1-\rho^2}Z+\tau)}_{\Upsilon(X,Z;\rho)}\big( \gamma X-\frac{\gamma\rho Z}{\sqrt{1-\rho^2}} \big) \Big].
\end{align*}
We analyze two parts respectively. By Lemma~\ref{stein lemma} and law of total expectation, we have
\begin{align}
    &\mathbb E_{X,Z,\tau}[\Upsilon(X,Z;\rho)\gamma X]  \nonumber\\
    &=\mathbb E_{Z,\tau}\mathbb E_X\Big[ -\gamma^2 g_1'(\cos(\gamma X+\tau))\sin(\gamma X+\tau)g_2'(\cos(\gamma Y+\tau))\sin(\gamma\rho X+\gamma\sqrt{1-\rho^2}Z+\tau) \nonumber\\
    &\hspace{0.3in} - \gamma^2\rho g_1(\cos(\gamma X+\tau))g_2''(\cos(\gamma\rho X+\gamma\sqrt{1-\rho^2}Z+\tau))\sin^2(\gamma\rho X+\gamma\sqrt{1-\rho^2}Z+\tau)    \nonumber \\
    &\hspace{0.4in} +\gamma^2\rho g_1(\cos(\gamma X+\tau))g_2'(\cos(\gamma\rho X+\gamma\sqrt{1-\rho^2}Z+\tau))\cos(\gamma\rho X+\gamma\sqrt{1-\rho^2}Z+\tau) | Z,\tau \Big], \label{eqn5}
\end{align}
and
\begin{align}
    &\mathbb E_{X,Z,\tau}\big[\Upsilon(X,Z;\rho)\frac{\gamma\rho Z}{\sqrt{1-\rho^2}} \big]  \nonumber\\
    &=\mathbb E_{X,\tau}\mathbb E_Z\Big[ -\gamma^2\rho g_1(\cos(\gamma X+\tau)) g_2''(\cos(\gamma\rho X+\gamma\sqrt{1-\rho^2}Z+\tau))\sin^2(\gamma\rho X+\gamma\sqrt{1-\rho^2}Z+\tau)  \nonumber\\
    &\hspace{0.3in} +\gamma^2\rho g_1(\cos(\gamma X+\tau))g_2'(\cos(\gamma\rho X+\gamma\sqrt{1-\rho^2}Z+\tau))\cos(\gamma\rho X+\gamma\sqrt{1-\rho^2}Z+\tau)|X,\tau \Big].  \label{eqn6}
\end{align}
Combining (\ref{eqn5}) and (\ref{eqn6}), we get
\begin{align*}
    &\frac{\partial \mathbb E[g_1(\cos(s_x))g_2(\cos(s_y))]}{\partial \rho}\\
    &=\mathbb E_{X,Z,\tau}\Big[ \gamma^2 g_1'(\cos(\gamma X+\tau))\sin(\gamma X+\tau)g_2'(\cos(\gamma Y+\tau))\sin(\gamma\rho X+\gamma\sqrt{1-\rho^2}Z+\tau) \Big] \\
    &=\gamma^2\mathbb E_{X,Y,\tau}\big[g_1'(\cos(s_x))\sin(s_x) g_2'(\cos(s_y))\sin(s_y) \big],
\end{align*}
which gives the desired expression.

To prove the monotonicity, suppose that $g_1$ and $g_2$ are increasing odd or non-constant even functions. So, $g_1'(-x)g_2'(-x)=g_1'(x)g_2'(x)>0$, $\forall x\in [-1,1]$. Assume $\sqrt{2(1-\rho)}\gamma\leq\pi$, and denote $f(x,y)$ as the joint density given by Theorem~\ref{fig:joint density}. We can write
\begin{align*}
    \frac{\partial \mathbb E[g_1(\cos(s_x))g_2(\cos(s_y))]}{\partial \rho}
    &=\gamma^2\int_{-1}^1\int_{-1}^1 z_xz_yg_1'(\sqrt{1-z_x^2})g_2'(\sqrt{1-z_y^2})f(z_x,z_y)dz_x dz_y\\
    &\overset{(a)}{=}2\gamma^2\Big(\int_0^1 \int_{0}^1z_xz_yg_1'(\sqrt{1-z_x^2})g_2'(\sqrt{1-z_y^2})f(z_x,z_y)dz_x dz_y\\
    &\hspace{0.7in} + \int_0^1 \int_{-1}^0 z_xz_yg_1'(\sqrt{1-z_x^2})g_2'(\sqrt{1-z_y^2})f(z_x,z_y)dz_x dz_y  \Big)\\
    &=2\gamma^2\int_0^1 \int_0^1 z_xz_yg_1'(\sqrt{1-z_x^2})g_2'(\sqrt{1-z_y^2})[f(z_x,z_y)-f(z_x,-z_y)]dz_x dz_y\\
    &\overset{(b)}{>}0,
\end{align*}
where (a) is due to the symmetry of $f$ and $g$, and (b) is a consequence of Proposition~\ref{prop:joint inequality} that $f(z_x,z_y)>f(z_x,-z_y)$ for all $z_x,z_y\in (0,1]^2$, provided that $\sqrt{2(1-\rho)}\gamma\leq\pi$. The proof is complete.

\end{proof}

\subsection{Theorem~\ref{theo:discrete monotone}}
\begin{proof}
Since $Q_1$ and $Q_2$ both are non-decreasing and have finite number of discontinuities, by Baire's Characterization Theorem we know that each of them is the pointwise limit of a sequence of continuous increasing functions. Suppose that $\{g_{1,n}\}$ and $\{g_{2,n}\}$ are two sequences of continuous increasing functions such that as $n\rightarrow\infty$, $g_{1,n}\rightarrow Q_1$ and $g_{2,n}\rightarrow Q_2$ with pointwise convergence. By dominated convergence theorem, we have
\begin{align*}
    \frac{\partial \mathbb E[Q_1(z_x)Q_2(z_y)]}{\partial \rho}&= \frac{\partial \mathbb E[\displaystyle{\lim_{n\rightarrow \infty}} g_{1,n}(z_x) \displaystyle{\lim_{n\rightarrow \infty}} g_{2,n}(z_y)]}{\partial \rho}\\
    &=\displaystyle{\lim_{n\rightarrow \infty}} \frac{\partial \mathbb E[ g_{1,n}(z_x) g_{2,n}(z_y)]}{\partial \rho}>0, \\
\end{align*}
where Lemma~\ref{lemma:continuous monotone} is adopted for continuous $g_{1,n}$ and $g_{2,n}$ functions.
\end{proof}

\end{document}